\title{Scalable Benchmarking and Robust Learning for Noise-Free Ego-Motion and 3D Reconstruction from Noisy Video}
\author{%
  Xiaohao Xu$^{1}$,  Tianyi Zhang$^{2}$, Shibo Zhao$^{2}$, Xiang Li$^{2}$, Sibo Wang$^{1}$, Yongqi Chen$^{1}$, Ye Li$^{1}$,  \\
 \textbf{Bhiksha Raj}$^{2}$,  \textbf{Matthew Johnson-Roberson}$^{2}$,   \textbf{Sebastian Scherer}$^{2}$,     \textbf{Xiaonan Huang}$^{1}$ \\   {${}^{1}$ University of Michigan, Ann Arbor}, ${}^{2}$ Carnegie Mellon University\\
 }
\newcommand{\xxh}[1]{{\color{black}  #1}}
\theoremstyle{plain}
\newtheorem{theorem}{Theorem}
\definecolor{Gray}{rgb}{0.5, 0.5, 0.5} 
\newcolumntype{g}{>{\columncolor{Gray}}c}
\definecolor{ffe1da}{RGB}{255,225,218}
\definecolor{F7E0D5}{RGB}{247,224,213}
\definecolor{darkF7E0D5}{RGB}{209,154,128}
\definecolor{gold}{rgb}{1.0, 0.84, 0.0}    
\definecolor{silver}{rgb}{0.75, 0.75, 0.75} 
\definecolor{bronze}{rgb}{0.8, 0.5, 0.2}    
\definecolor{gold}{rgb}{1.0, 0.85, 0.0}   
\definecolor{silver}{rgb}{0.75, 0.75, 0.75} 
\definecolor{bronze}{rgb}{0.8, 0.5, 0.2}   
\definecolor{silver}{rgb}{0.9, 0.9, 0.9}   
\definecolor{bronze}{rgb}{0.8, 0.4, 0.0}   
\colorlet{tabfirst}{gold!60}      
\colorlet{tabsecond}{silver!95}  
\colorlet{tabthird}{bronze!50}  
\colorlet{tabfirst}{green!25}
\definecolor{tabthird}{rgb}{1, 0.85, 0.7}
\definecolor{tabsecond}{rgb}{1, 0.96, 0.7}
\definecolor{lightgray}{gray}{0.94}
\definecolor{tabsecond}{rgb}{0.75, 0.65, 0.85} 
\definecolor{tabfirst}{rgb}{0.75, 0.85, 0.95} 
\definecolor{tabthird}{rgb}{0.95, 0.75, 0.75} 
\definecolor{tabsecond}{rgb}{0.88, 0.82, 0.95} 
\definecolor{tabfirst}{rgb}{0.85, 0.92, 0.97} 
\definecolor{tabthird}{rgb}{0.98, 0.88, 0.88} 
\definecolor{tabfirst}{rgb}{0.75, 0.85, 1.0} 
\definecolor{tabthird}{rgb}{1.0, 0.75, 0.75} 
\definecolor{tabsecond}{rgb}{0.85, 0.75, 0.9} 
\definecolor{tabfirst}{rgb}{0.79, 0.92, 1.0} 
\definecolor{tabthird}{rgb}{1.0, 0.86, 0.86} 
\definecolor{tabsecond}{rgb}{0.9, 0.8, 1.0} 
\begin{document}
\maketitle
\begin{abstract}
\textcolor{black}{We aim to redefine robust ego-motion estimation and photorealistic 3D reconstruction by addressing a critical limitation: the reliance on noise-free data in existing models. While such sanitized conditions simplify evaluation, they fail to capture the unpredictable, noisy complexities of real-world environments. Dynamic motion, sensor imperfections, and synchronization perturbations lead to sharp performance declines when these models are deployed in practice, revealing an urgent need for frameworks that embrace and excel under real-world noise.}
To bridge this gap, we tackle \textbf{three core challenges}: \textbf{\textit{scalable data generation}}, \textbf{\textit{comprehensive benchmarking}}, and \textbf{\textit{model robustness enhancement}}. \textbf{First}, we introduce a scalable noisy data synthesis pipeline that generates diverse datasets simulating complex motion, sensor imperfections, and synchronization errors. \textbf{Second}, we leverage this pipeline to create \textit{Robust-Ego3D}, a benchmark rigorously designed to expose noise-induced performance degradation, highlighting the limitations of current learning-based methods in ego-motion accuracy and 3D reconstruction quality. \textbf{Third}, we propose \textit{Correspondence-guided Gaussian Splatting (CorrGS)}, a novel method that progressively refines an internal clean 3D representation by aligning noisy observations with rendered RGB-D frames from clean 3D map, enhancing geometric alignment and appearance restoration through visual correspondence.
Extensive experiments on synthetic and real-world data demonstrate that \textit{CorrGS} consistently outperforms prior state-of-the-art methods, particularly in scenarios involving rapid motion and dynamic illumination.\footnote{Project repo: \url{https://github.com/Xiaohao-Xu/SLAM-under-Perturbation}} 
\end{abstract}
\section{Introduction}
\label{sec:introduction}

The pursuit of reliable ego-motion estimation and photorealistic 3D reconstruction remains a fundamental challenge in computer vision and autonomous systems, particularly in the presence of real-world complexities such as sensor noise and dynamic motion. The current landscape of dense Neural SLAM (Simultaneous Localization and Mapping) research has made significant strides in noise-free, controlled settings using advanced neural representations like Neural Radiance Fields (NeRF)~\citep{rosinol2022nerf} and Gaussian Splatting~\citep{gaussiansplatting}. \textcolor{black}{Despite their promise, the generalization of these methods to noisy, unstructured environments—reflecting realistic deployment conditions—remains largely unexplored~\citep{tosi2024nerfs}, highlighting the critical importance of studying the noise resilience of models to ensure robust performance in real-world scenarios.}

We envision autonomous systems capable of robustly navigating and reconstructing 3D scenes even under adverse conditions, such as sensor degradation, unpredictable disturbances, or rapid camera motion. This vision demands overcoming critical limitations of existing methods that assume clean, synthetic datasets~\citep{straub2019replica, dai2017scannet}, which often ignore practical sensing imperfections. Real-world datasets, while valuable, face issues of scalability, annotation quality, and capturing diverse degraded scenarios~\citep{TUM-RGBD, bad-slam, zhao2024subt}. To bridge this gap, we propose a fundamental shift toward understanding and benchmarking models under generalized noisy conditions, enabling a systematic evaluation of their robustness.

\begin{figure}[t]
\centering 
	\centering
\includegraphics[width=1\textwidth]{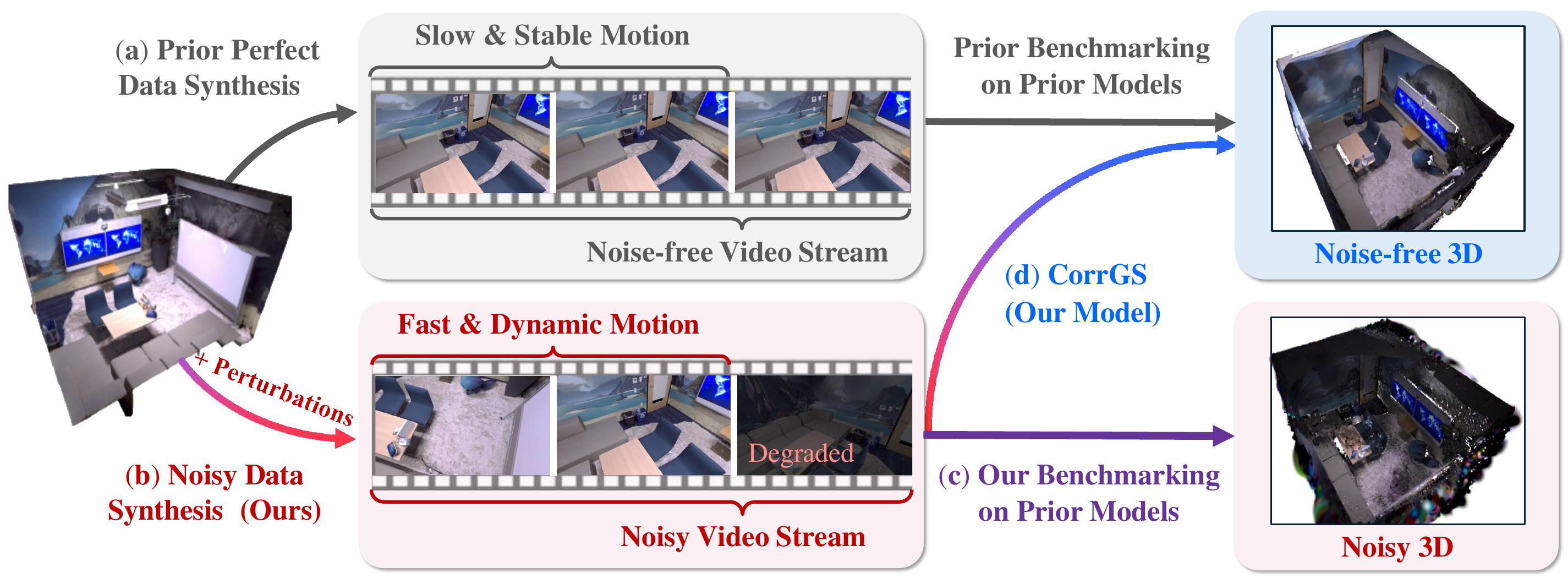}
\caption{\textbf{Towards robust ego-motion and photorealistic 3D reconstruction.} (\textbf{a}) Previous approaches rely on synthetic datasets with \textbf{\textit{perfect conditions}} (noise-free and smooth motion). (\textbf{b}) Real-world data introduce inherent noise and complexities. We present a customizable noisy data synthesis pipeline to evaluate methods under realistic \textbf{\textit{noisy conditions}}. (\textbf{c}) Our \textbf{\textit{Robust-Ego3D}} benchmark reveals that existing methods produce \textbf{\textit{noisy 3D reconstructions}} from \textbf{\textit{noisy, sparse-view videos}}, whereas (\textbf{d}) our proposed \textit{\textbf{CorrGS}} achieves \textbf{\textit{photorealistic, noise-free 3D reconstruction}}.}\vspace{-3mm}
\label{fig:teaser}
\end{figure}

To advance this vision, we pose three key research questions: \textbf{1}) \textbf{How to synthesize noisy data at scale?} \textbf{2}) \textbf{How well do current SOTA models perform under noisy conditions?} \textbf{3}) \textbf{How can we enhance model robustness against complex perturbations}? To address these questions, we present a comprehensive framework for robust ego-motion estimation and photorealistic 3D reconstruction in generalized noisy conditions, with \textbf{the following contributions}:

\textbf{1}) \textbf{Scalable noisy data generation pipeline}. We first present a comprehensive taxonomy of RGB-D sensing perturbations for mobile systems and develop a scalable noisy data synthesis pipeline that transforms clean 3D meshes into challenging datasets (see Fig.~\ref{fig:teaser}b), supporting sensor configurations from single to distributed multi-sensor setups. The datasets offer precise 3D map and trajectory ground-truth, scalability, and customizable perturbations.  Our pipeline supports a more thorough and cost-effective evaluation than noise-free benchmarks, bridging the gap to real-world testing.

\textbf{2}) \textbf{\textit{Robust-Ego3D} benchmark for generalized noisy conditions.} Utilizing our noisy data synthesis pipeline, we instantiate  \textit{Robust-Ego3D}, a large-scale benchmark that supports extensive and customizable RGB-D perturbations, offering 124 perturbed settings for comprehensive evaluation. Unlike existing benchmarks that focus on  noise-free conditions, \textit{Robust-Ego3D} provides a challenging testbed for assessing model robustness. 
\textcolor{black}{Our extensive experiments and theoretical analyses reveal that the most common challenges of existing models are: pose tracking failure under dynamic motion, and degradation of 3D reconstruction fidelity under imaging perturbations due to the lack of restoration mechanisms (see Fig.~\ref{fig:teaser}c).}
Several key insights emerge: \textbf{i)} no model demonstrates consistent robustness across all perturbations, \textbf{ii)} individual perturbations have similar effects in both isolated and mixed settings, and \textbf{iii)} highly correlated perturbations can be leveraged as proxies for efficient benchmarking. We hope \textit{Robust-Ego3D} prompts  researchers to rethink model robustness under generalized noisy conditions and question existing dataset and model assumptions.

\textbf{3})  \textbf{CorrGS: Correspondence-guided Gaussian Splatting.}
To tackle robust pose tracking under complex motion and consistent color reconstruction in noisy conditions identified in our \textit{Robust-Ego3D} benchmark, we propose CorrGS. CorrGS employs a Gaussian Splatting 3D representation, swiftly rendered into RGB-D frames. By comparing these rendered images with noisy observations, it establishes correspondences that enhance geometric alignment for robust pose learning. These correspondences also support online appearance restoration learning, enabling  noise-free 3D reconstruction from noisy videos. CorrGS significantly outperforms prior state-of-the-art methods under fast motion, achieving accurate ego-motion and photorealistic 3D reconstruction from both synthetic (see Fig.~\ref{fig:teaser}d) and real-world noisy videos, even with dynamic illumination and rapid motion.

This work rethinks robustness in dense Neural SLAM by moving beyond ideal-condition accuracy to true resilience under noisy sensing. By challenging existing benchmarks and assumptions, we pave the way for developing adaptive models that consistently perform in diverse, degraded conditions.
\section{Related Work}\label{sec:related_work}

\noindent\textbf{{Dataset  synthesis for robustness benchmarking.}}
Most robustness evaluations rely on synthetic datasets due to the difficulty of controlling real-world perturbations. A pioneering benchmark~\citep{hendrycks2019robustness} assessed image classification under common corruptions, and subsequent research extended this approach to various tasks, such as 2D/3D object detection~\citep{michaelis2019benchmarking,kong2023robo3d}, segmentation~\citep{kamann2020benchmarking,li2023robust,xu2022towards}, and embodied navigation~\citep{RobustNav,yokoyama2022benchmarking}. In ego-motion estimation and photorealistic 3D reconstruction, the challenges go beyond image-level corruptions, also involving temporal variations in sensor noise and pose transformations.

\vspace{1.0mm}\noindent\textbf{{Robustness benchmarking.}}
Robust localization and mapping are crucial for reliable performance in dynamic environments~\citep{slam-survey1}. Real-world datasets~\citep{schubert2018tum,helmberger2022hilti,tian2023resilient,SubT} evaluate classical SLAM under conditions like low illumination and motion blur. While existing benchmarks like TartanAir~\citep{wang2020tartanair} and SLAMBench~\citep{bujanca2021robust} incorporate some noise injection for robustness evaluation, none comprehensively address the full range of RGB-D perturbations encountered by mobile agents.
Our benchmark, \textit{Robust-Ego3D}, introduces diverse, controllable perturbations for systematic evaluation of dense SLAM systems under varied noise, including sensor noise, motion deviations, and synchronization issues. Unlike {{prior work focused on localization accuracy in classical SLAM}}, we assess dense Neural SLAM models considering both localization and photorealistic mapping, pioneering robustness evaluation to drive future research in this evolving field.

\noindent\textbf{{Dense Neural SLAM model}}. 
Classical SLAM models, \textit{e.g.}, ORB-SLAM3~\citep{orbslam3}, excel at estimating ego-motion but produce only sparse maps. In contrast, dense non-neural SLAM models~\citep{kinectfusion,bundlefusion} provide dense reconstructions but struggle to capture fine appearance details and textures. Dense Neural SLAM models overcome these limitations by reconstructing both geometry and appearance, enabling high-fidelity, photorealistic 3D rendering. Implicit neural representations, \textit{e.g.}, NeRF~\citep{rosinol2022nerf}, have been integrated for high-quality textured reconstruction~\citep{imap}, with significant advancements in representation techniques~\citep{coslam,johari2023eslam,niceslam,vox_fusion,point_slam}. Additionally, some methods decouple learning-based pose estimation from dense mapping~\citep{teed2021droid,zhang2023goslam}. Building on the success of 3D Gaussian Splatting~\citep{gaussiansplatting}, recent approaches have adapted this technique to SLAM~\citep{keetha2024splatam,matsuki2024gaussian}, enabling efficient RGB-D rendering from reconstructed maps. Despite these advancements, most Neural models focus primarily on noise-free assumptions, highlighting a gap in pose tracking and 3D scene restoration under challenging conditions.

\section{{Noisy Data Synthesis with Customizable Perturbations}}\label{sec:method}\vspace{-1mm}

\subsection{{Problem Formulation}}\label{subsec:problem-formulation}\vspace{-2mm}

We address the challenge of robust ego-motion  estimation and photorealistic 3D reconstruction from noisy data using a structured approach involving three interconnected components: noisy data synthesis, model estimation, and evaluation with feedback (see Fig.~\ref{fig:full_loop}).

\noindent\textbf{Noisy data synthesis \textcolor{black}{(from clean 3D to noisy video).}} Perturbed observations \(\mathbf{z}_{1:t}\) and trajectories \(\mathbf{x}_{1:t}\) are generated from a clean 3D representation \(\mathbf{m}\), a clean trajectory \(\mathbf{x}_{1:t}^0\), and perturbations \(\mathbf{\xi}_{1:t}\):
\begin{wrapfigure}{r}{6cm}
\vspace{-0.5mm}
\centering 
\includegraphics[width=0.9\linewidth]{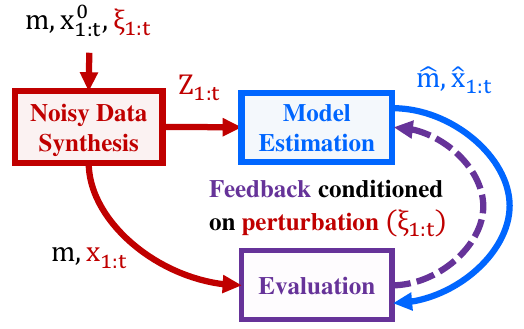}\vspace{-2mm} 
\caption{\textbf{Framework for noisy data synthesis, model estimation, and evaluation.}}
\label{fig:full_loop}\vspace{0mm}
\end{wrapfigure}
\begin{equation}\vspace{-1mm}
p(\mathbf{z}_{1:t},\mathbf{x}_{1:t} \mid \mathbf{m}, \mathbf{x}_{1:t}^0, \mathbf{\xi}_{1:t}).\label{eq:data_gen}
\end{equation}

\noindent\textbf{Model estimation \textcolor{black}{(from noisy video to noise-free ego-motion and 3D reconstruction)}.} From noisy observations \(\mathbf{z}_{1:t}\), the model estimates the 3D scene \(\mathbf{\hat{m}}\) and the trajectory \(\mathbf{\hat{x}}_{1:t}\):
\begin{equation}
p(\mathbf{\hat{m}}, \mathbf{\hat{x}}_{1:t} \mid \mathbf{z}_{1:t}).
\end{equation}

\noindent\textbf{Evaluation and feedback.} The estimates \(\mathbf{\hat{m}}\) and \(\mathbf{\hat{x}}_{1:t}\) are compared with ground truth \(\mathbf{m}\) and \(\mathbf{x}_{1:t}\), revealing perturbation-specific weaknesses caused by \(\mathbf{\xi}_{1:t}\).

\subsection{{{Perturbation Taxonomy for  RGB-D Sensing Systems}}}

\begin{figure*}[t!]
	\vspace{-2mm}\centering
\includegraphics[width=\textwidth]{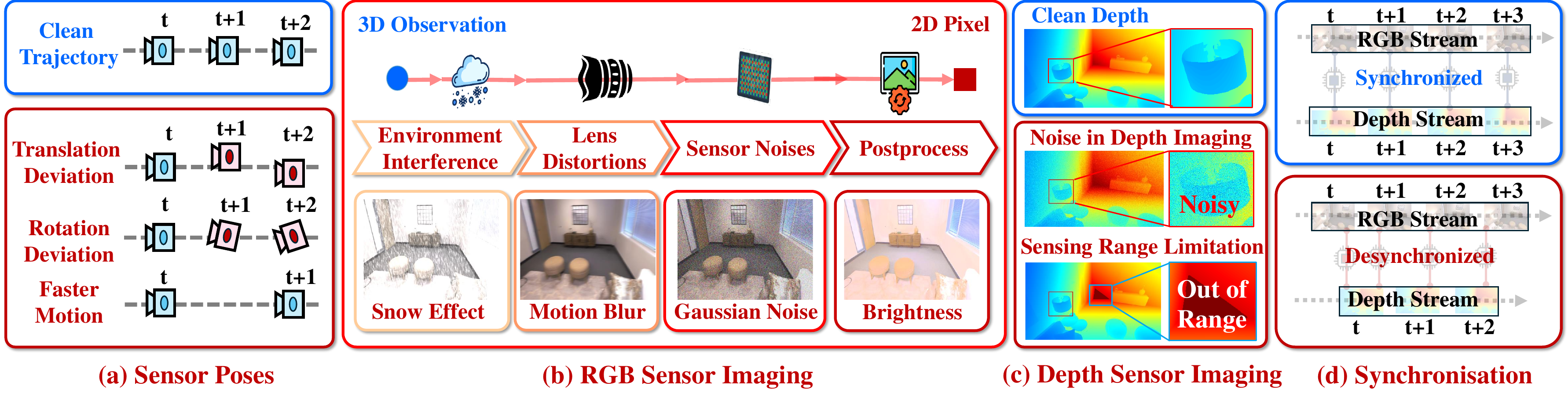} \vspace{-5mm}
\caption{{{\textbf{{Perturbation taxonomy for RGB-D sensing}}}}.}
	\label{fig:all-perturb-taxonomy}\vspace{-3mm}
\end{figure*}

{\noindent\textbf{Perturbation sources.}}  As shown in Fig. \ref{fig:all-perturb-taxonomy}, perturbations for RGB-D sensing  stem from pose deviations, imaging inaccuracies, and sensor desynchronization.  All perturbations are constructed using fundamental physical and kinematic modeling, \textit{e.g.}, motion deviations are derived from rigid-body transformations. We briefly illustrate perturbations here; details are in Appendix Sec.~\ref{sec:perturbation-taxonomy-appendix}.

{{\textbf{(a) Perturbations on sensor pose motion.}}} Real-world mobile platforms can exhibit diverse and dynamic motions that challenge model robustness.
As shown in Fig.~\ref{fig:all-perturb-taxonomy}a, we  categorize motion perturbations into \textit{Motion Deviations} (by applying a combination of rotation $\Delta \mathbf{R} \in {SO}(3)$ and  translation  $\Delta \mathbf{t} \in \mathbb{R}^3$ perturbations) and \textit{Faster Motion Effect }.

{{\textbf{(b) Perturbation on RGB sensor imaging.}}}
Imaging corruptions, such as motion blur and high illumination, are common in real-world data collection, affecting image quality. As shown in Fig.~\ref{fig:all-perturb-taxonomy}b, sixteen RGB imaging perturbations are modeled to mimic error sources that arise throughout the entire imaging procedural—from 3D scene capture to final image output. These perturbations stem from \textit{environmental interference} affecting light transmission, \textit{lens distortions} leading to blurring, \textit{sensor-induced noise}, and artifacts introduced during \textit{post-processing}.

{{{\textbf{(c) Perturbation on depth sensor imaging}}}}. We observed a significant discrepancy in depth distribution between the simulated SLAM benchmark (Replica~\citep{straub2019replica}) and the real-world dataset (TUM-RGBD~\citep{schubert2018tum}), with TUM-RGBD missing ~25\% of depth data compared to just 0.39\% in Replica. To address this, we propose a set of depth perturbation operations (see Fig.~\ref{fig:all-perturb-taxonomy}c):  \textit{Gaussian Noise} to mimic depth noise; \textit{Edge Erosion} and \textit{Random Missing Depth} for missing data; \textit{Range Clipping} for limited depth sensor perception range.

{\textbf{(d) Perturbation on multi-sensor synchronization.}}  
To emulate sensor delays when multiple sensors within an RGB-D sensing system are not well-synchronized (\textit{e.g.,} due to varied signal sampling frequency), we introduce temporal misalignment between multiple sensor streams.

{\noindent\textbf{Perturbation propagation and composition.}} For an RGB-D sensing system, the specific composition is an \textbf{\textit{{ordered transformation}}} derived from the generic formulation (see Eq.~\ref{eq:data_gen}):
\begin{equation}
\mathbf{z}_{1:t}, \mathbf{x}_{1:t} = \left( \mathcal{F}_{\text{desync}} \circ \mathcal{F}_{\text{imaging}} \circ \mathcal{F}_{\text{render}} \circ \mathcal{F}_{\text{motion}} \right) (\mathbf{m}, \mathbf{x}_{1:t}^0, \mathbf{\xi}_{1:t}),
\end{equation}
where \(\mathcal{F}_{\text{motion}}\) introduces motion perturbations, \(\mathcal{F}_{\text{render}}\) generates clean observations via photorealistic rendering, \(\mathcal{F}_{\text{imaging}}\) adds imaging perturbations, and \(\mathcal{F}_{\text{desync}}\) simulates multi-sensor desyncronization.

{\noindent\textbf{Perturbation mode and severity.}} 
Perturbations are examined in two modes: static, with constant severity, and dynamic, with frame-to-frame variations. Severity levels are based on real-world data (\textit{e.g.}, real depth distribution) or physics-based models~\citep{kong2023robo3d,hendrycks2019robustness}, and can be adjusted to progressively evaluate model performance under varying difficulties.

\subsection{{{Scalable Noisy Data Synthesis Pipeline}}}
We present a scalable noisy data synthesis pipeline integrating customizable RGB-D perturbations (see Fig.~\ref{fig:overview-simulator-pipeline}). The key innovation lies in its \textbf{\textit{{customizability}}} and \textbf{\textit{{physically-based modeling}}}, enabling the generation of photorealistic noisy data with controllable perturbations. The initial phase involves
  configuring the sensor setup, original trajectory, and 3D scene. Next, the clean and stable trajectories are processed by the motion perturbation composer to introduce deviations and faster motion effects. The render generates clean but unstable sensor  data  streams
  \begin{wrapfigure}{r}{0.55\textwidth}
\centering 
\includegraphics[width=\linewidth]{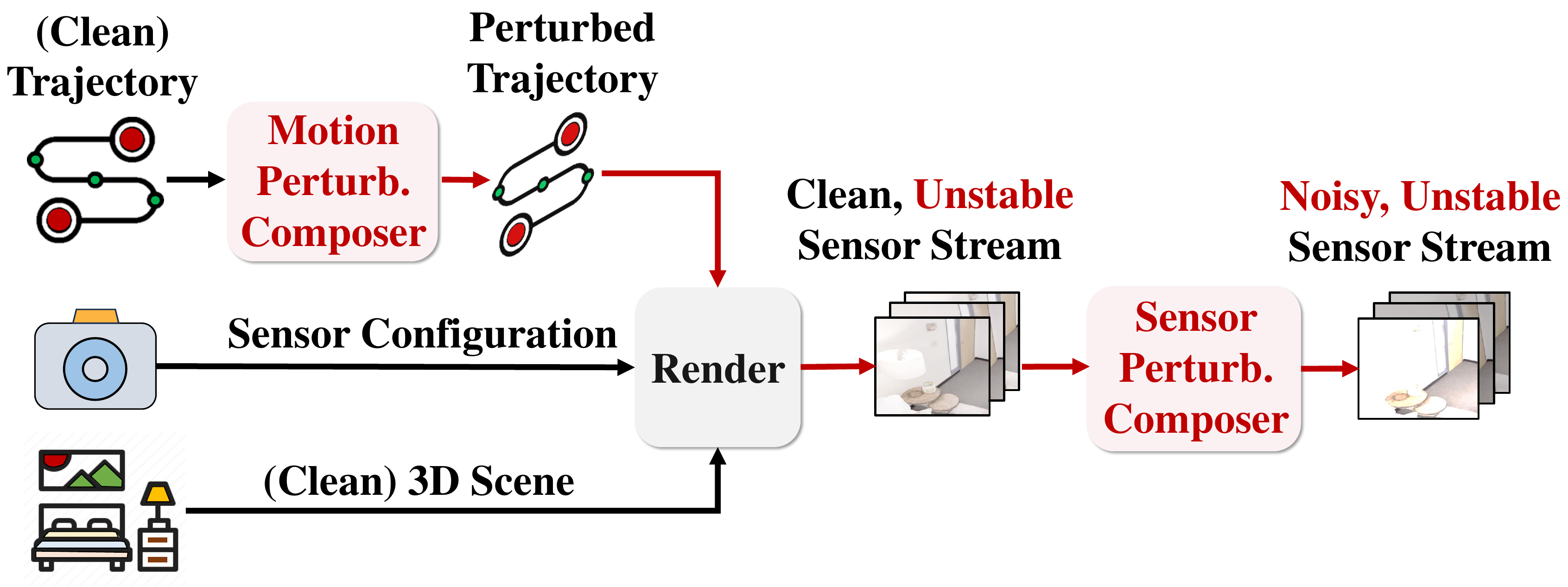} \vspace{-4mm}
    \caption{{{{\textbf{Noisy data synthesis pipeline.}}}} }
    \label{fig:overview-simulator-pipeline}\vspace{-5mm}
\end{wrapfigure}
     from the 3D scene mesh, perturbed trajectory, and sensor configurations. Subsequently, sensor-related perturbations, including imaging corruptions and multi-sensor desynchronization, are composed, resulting in noisy and unstable sensor streams.
This noisy data, with perturbed sensor streams as inputs and clean 3D  and perturbed trajectories as ground truth, allows for perturbation-conditioned model performance evaluation.

\vspace{-2mm}
\section{{ Rethinking Model Robustness  under Perturbations}}\label{sec:experiment}\vspace{-2mm}

\begin{wraptable}{r}{0.55\textwidth}
\vspace{-4mm}
\caption{{Synthetic SLAM benchmark comparison.}}\vspace{-2mm}
\centering 
\label{tab:datasets}
\setlength{\tabcolsep}{3pt}
\resizebox{0.55\textwidth}{!}
{
\begin{tabular}{l|r|r|cccc|c}
\toprule

\multirow{2}{*}{{\textbf{Benchmark}}} &
  \multirow{2}{*}{{\textbf{\#Seq}}}   &\multirow{2}{*}{\begin{tabular}[c]{@{}c@{}}\textbf{\#Perturb}\\ \textbf{Setting}\end{tabular} }    & \multicolumn{4}{c|}{{\textbf{Perturbation Category}}}  &  \multirow{2}{*}{\begin{tabular}[c]{@{}c@{}}\textbf{Editable}\end{tabular} }  
 
 \\ \cmidrule{4-7} %
 &
   &     &  
  {RGB}  &
  {Motion} &
  {Depth} &
  {Sync.} &
  {} 
 \\  %
 \midrule
Replica & 8  & 0   &   \textcolor[rgb]{0.75,0.0,0.0}{\ding{55}} &   \textcolor[rgb]{0.75,0.0,0.0}{\ding{55}} &   \textcolor[rgb]{0.75,0.0,0.0}{\ding{55}} &   \textcolor[rgb]{0.75,0.0,0.0}{\ding{55}} &   \textcolor[rgb]{0.75,0.0,0.0}{\ding{55}}\\ 
TartanAir & 30   &   \begin{tabular}[c]{@{}c@{}}8\end{tabular}   &  \textcolor[rgb]{0.4, 0.2, 0.6}{\ding{51}} &   \textcolor[rgb]{0.4, 0.2, 0.6}{\ding{51}}&   \textcolor[rgb]{0.75,0.0,0.0}{\ding{55}} &   \textcolor[rgb]{0.75,0.0,0.0}{\ding{55}} &   \textcolor[rgb]{0.75,0.0,0.0}{\ding{55}} \\ 
\begin{tabular}[c]{@{}c@{}}{\textbf{\textit{{Robust-Ego3D}}}} \end{tabular} & {\textbf{{1,000}}}   & \begin{tabular}[c]{@{}c@{}}\textbf{{124}} \end{tabular}    &   \textcolor[rgb]{0.4, 0.2, 0.6}{\ding{51}} &   \textcolor[rgb]{0.4, 0.2, 0.6}{\ding{51}} &   \textcolor[rgb]{0.4, 0.2, 0.6}{\ding{51}} &   \textcolor[rgb]{0.4, 0.2, 0.6}{\ding{51}} &   \textcolor[rgb]{0.4, 0.2, 0.6}{{\ding{51}}} 
  \\ \bottomrule
\end{tabular}}\vspace{-4mm}
\end{wraptable}

Leveraging our noisy data synthesis pipeline, we instantiate the \textit{\textbf{{\textit{Robust-Ego3D}}}} benchmark, designed as the \textit{\textbf{{first comprehensive benchmark}}} that supports model evaluation under diverse noisy sensing conditions. \textit{{Robust-Ego3D}} surpasses existing synthetic benchmarks in both diversity and scope, offering unprecedented granularity (see Table~\ref{tab:datasets}). It offers \textit{\textbf{{editable perturbation capabilities}}} and  covers 124 distinct RGB-D perturbation settings across 1,000 videos. This level of granularity in perturbation settings is exceptional  and opens the door to more systematic evaluations. See Appendix Sec.~\ref{sec:benchmark-setup} for detailed \textit{Robust-Ego3D} benchmark setup, hyperparameters, and statistics.

Next, we re-evaluate representative and top dense Neural SLAM models—including iMAP~\citep{imap}, Nice-SLAM~\citep{niceslam}, CO-SLAM~\citep{coslam}, GO-SLAM~\citep{zhang2023goslam}, and SplaTAM~\citep{keetha2024splatam}—under perturbations (detailed in Appendix Sec.~\ref{subsec:methods}). While we focus on dense Neural SLAM methods which support  photorealistic 3D reconstruction, we include ORB-SLAM3~\citep{orbslam3}, a strong ORB-feature-based model, for reference (see Appendix Fig.~\ref{fig:ORBSLAM-all}). The primary evaluation metric is Absolute Trajectory Error (ATE)~\citep{prokhorov2019measuring}, supplemented by PSNR, Depth L1 Loss, and qualitative comparisons on 3D mesh and rendered RGB-D for textured 3D quality evaluation. To ensure robustness, each experiment was repeated three times across eight scenes, averaging results over 24 trials per perturbation.\vspace{-1mm}

\subsection{{{Benchmarking under Individual Perturbations}}}\label{sec:results}
\vspace{-2mm}

\vspace{1.0mm}\noindent{\textbf{{Performance under RGB imaging corruptions}}} (Table~\ref{tab:image_perturb}). \textcolor[rgb]{0.4, 0.2, 0.6}{\textbf{1})} \textit{\textbf{Clean vs. perturbed.}} The models evaluated generally show
robustness to most RGB imaging corruptions. 
Even for iMAP, which uses a simple fully-connected network to implicitly encode the scene, the  trajectory estimation performance under most perturbations is comparable to that under clean conditions, though it still degrades.  \textcolor[rgb]{0.4, 0.2, 0.6}{\textbf{2})} 
\textbf{\textit{Neural models' robustness partly stems from the local denoising property of neural representation}}. For example, iMAP, with NeRF-based representation,  exhibits {{{{an {{intriguing local denoising property}}}}}} under \textit{Spatter Effect}, producing RGB images and textured mesh nearly free of spatter noise (please see Sec.~\ref{sec:qualitative_results} for details).
\begin{wrapfigure}{r}{5.5cm}
	\centering \vspace{-3mm}
 \includegraphics[width=\linewidth]{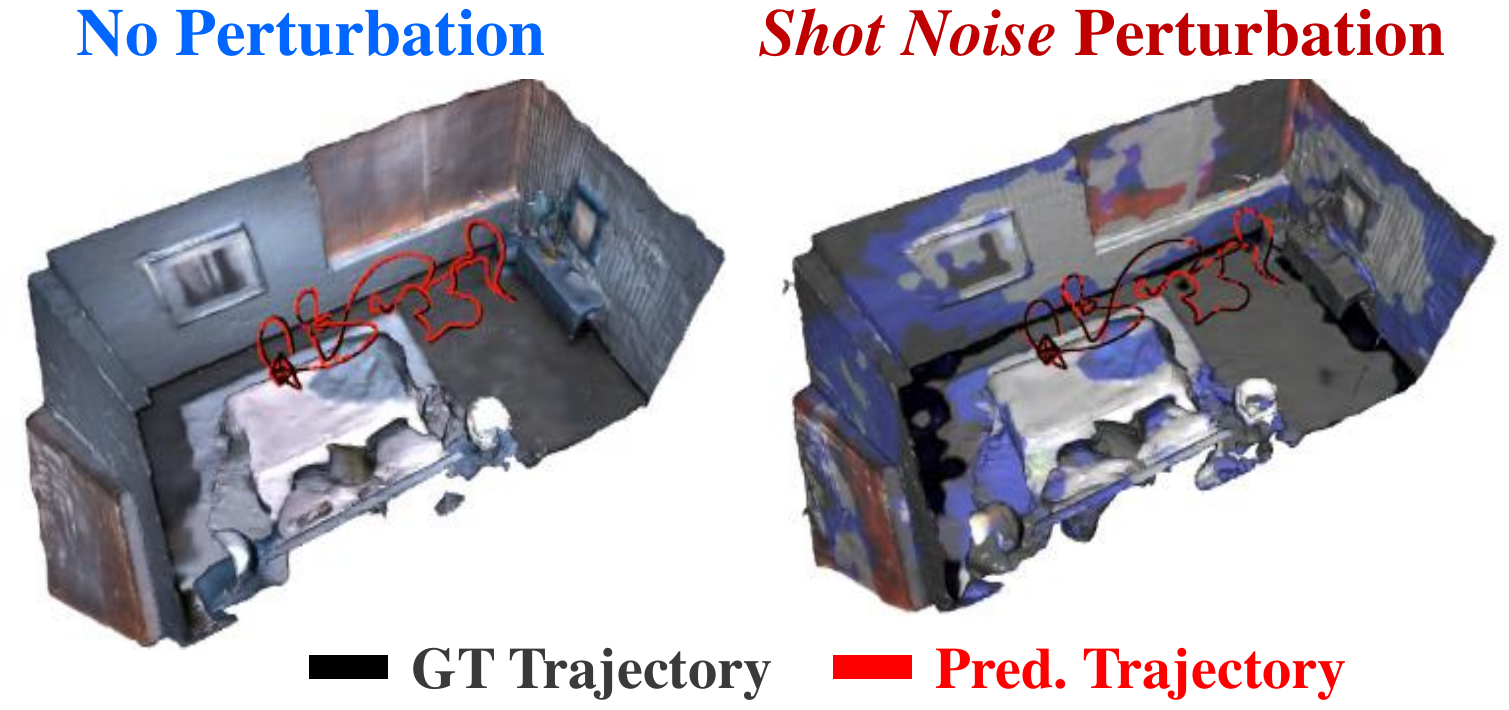}\vspace{-1mm}
\caption{{Effect of  \textit{Shot Noise} on 3D reconstruction of Nice-SLAM.}}
\label{fig:rgb_imaging_perturb}\vspace{-4.5mm}
\end{wrapfigure}
\textcolor[rgb]{0.4, 0.2, 0.6}{\textbf{3})} \textit{\textbf{Comparison across perturbations categories.}} Different perturbations
impact performance to varying degrees,
with environmental effects posing the most significant challenge, followed by sensor noise, while image post-processing perturbations like \textit{JPEG Compression} have a  minor influence. 
\textcolor[rgb]{0.4, 0.2, 0.6}{\textbf{4})} \textit{\textbf{Dynamic vs. static mode.}} Dynamic perturbations are more challenging than static ones, 
complicating model deployment in dynamic environments. Static perturbations, with consistent intensity across frames, simplify the optimization of Neural methods, whereas dynamic perturbations require cross-domain pose and map optimization. 
\textcolor[rgb]{0.4, 0.2, 0.6}{\textbf{5})} \textit{\textbf{Effect of depth.}} By comparing GO-SLAM (Mono) and GO-SLAM (RGB-D), we find that adding depth information increases robustness to RGB imaging perturbations. While the monocular version GO-SLAM exhibits slightly lower error in clean settings, the RGB-D version outperforms it under perturbed conditions. 
\textcolor[rgb]{0.4, 0.2, 0.6}{\textbf{6})} \textit{\textbf{3D reconstruction quality.}} Fig.~\ref{fig:rgb_imaging_perturb} shows that trajectory estimation accuracy does not always correlate with 3D reconstruction quality. While Nice-SLAM accurately reconstructs 3D geometry and trajectory under \textit{Shot Noise} RGB perturbation, the noise severely degrades color reconstruction. This highlights the need for \textbf{\textit{restoration mechanisms to ensure noise-tolerant and noise-free 3D reconstruction}}. Furthermore, Appendix \textbf{Theorem \ref{thm:rgb_restoration}} demonstrates that restoration mechanisms can reduce gradient variability from noisy inputs, supporting more robust  optimization. \textcolor[rgb]{0.4, 0.2, 0.6}{\textbf{7})} \textit{\textbf{Learning-based vs. heuristic representation.}} Learning-based models show strong robustness to RGB imaging perturbations for pose tracking (Table~\ref{tab:image_perturb}), while ORB-SLAM3, relying on ORB features, suffers severe tracking loss.  

\newcommand{\colorcell}[1]{%
  \ifdim #1 pt > 0.018pt 
    \cellcolor{red!50}{#1}
  \else
    \ifdim #1 pt > 0.014 pt 
      \cellcolor{red!30}{#1}
    \else  
      \ifdim #1 pt > 0.010 pt 
        \cellcolor{red!15}{#1}
      \else
        #1 
      \fi
    \fi
  \fi
}

\begin{table*}[t]\vspace{-1mm}
\caption{{Performance (ATE$\downarrow$ (m)) under static (\textbf{Top}) and dynamic (\textbf{Bottom}) RGB imaging perturbations. The best-performing methods are highlighted with \colorbox{tabfirst}{\bf \textcolor{black}{first}}, \colorbox{tabsecond}{\textcolor{black}{second}}, and \colorbox{tabthird}{\textcolor{black}{third}}.}}\vspace{-2mm}
\label{tab:image_perturb}
\centering\setlength{\tabcolsep}{0.4mm}
\resizebox{\textwidth}{!}{
\begin{tabular}{l|c|cc|cccc|cccc|cccc|cccc}
    \toprule 
    \multirow{2}{*}{\textbf{Method}} &  \multirow{2}{*}{\textbf{Clean}}&  \multicolumn{2}{|c|}{\textbf{Perturbed}}  & \multicolumn{4}{|c|}{\textbf{Blur Effect}} & \multicolumn{4}{c|}{\textbf{Noise Effect}} & \multicolumn{4}{c|}{\textbf{Environmental Interference}} & \multicolumn{4}{c}{\textbf{Post-processing}} 
    \\ \cmidrule{5-8} \cmidrule{9-12} \cmidrule{13-16} \cmidrule{17-20}
    &  & \textbf{Mean} & \textbf{Max}  & \textbf{Motion} & \textbf{Defocus} & \textbf{Gaussian} & \textbf{Glass} & \textbf{Gaussian} & \textbf{Shot} & \textbf{Impulse} & \textbf{Speckle} & \textbf{Fog} & \textbf{Frost} & \textbf{Snow} & \textbf{Spatter} & \textbf{Bright} & \textbf{Contrast} & \textbf{Jpeg}  & \textbf{Pixelate}
    \\ \midrule 
GO-SLAM (Mono) & {0.0039} & 0.0903 & {0.7207} & 0.0151 & 0.0052 & 0.0052 & 0.0089 & 0.0776 & 0.0456 & 0.0296 & 0.0190 & 0.2157 & {0.7207} & 0.1921 & 0.0859 & 0.0046 & 0.0047 & 0.0095 & 0.0046 \\ \midrule

iMAP (RGB-D)& 0.1209 & 0.1568 & 0.3831 & 0.1424 & 0.1671 & 0.1811 & 0.0672 & 0.0278 & 0.0779 & 0.1710 & 0.1087 & 0.1913 & 0.1316 & 0.1665 & 0.1473 & 0.1903 & 0.3831 & 0.1884 & 0.1669 \\
Nice-SLAM (RGB-D) & {0.0147} & \colorbox{tabthird}{0.0253} & \colorbox{tabthird}{0.0654}  & 0.0307 & 0.0151 & 0.0161 & 0.0188 & 0.0254 & 0.0377 & 0.0353 & 0.0151 & \colorbox{tabthird}{0.0186} & \colorbox{tabthird}{0.0160} & \colorbox{tabthird}{0.0323} & 0.0320 & {0.0654} & 0.0161 & 0.0150 & 0.0145 \\
CO-SLAM (RGB-D) & \colorbox{tabthird}{0.0090} & \colorbox{tabsecond}{0.0104} & \colorbox{tabfirst}{\bf 0.0125} & \colorbox{tabfirst}{\bf 0.0115} & \colorbox{tabthird}{0.0096} & \colorbox{tabthird}{0.0097} & \colorbox{tabthird}{0.0097} & \colorbox{tabsecond}{0.0125} & \colorbox{tabsecond}{0.0101} & \colorbox{tabsecond}{0.0099} & \colorbox{tabthird}{0.0105} & \colorbox{tabsecond}{0.0118} & \colorbox{tabsecond}{0.0113} & \colorbox{tabsecond}{0.0104} & \colorbox{tabsecond}{0.0098} & \colorbox{tabthird}{0.0103} & \colorbox{tabsecond}{0.0112} & \colorbox{tabthird}{0.0094} & \colorbox{tabthird}{0.0094} \\
GO-SLAM (RGB-D) & \colorbox{tabsecond}{0.0046} & 0.0574 & 0.6271 & \colorbox{tabsecond}{0.0135} & \colorbox{tabfirst}{\bf 0.0052} & \colorbox{tabsecond}{0.0052} & \colorbox{tabsecond}{0.0090} & \colorbox{tabthird}{0.0169} & \colorbox{tabthird}{0.0140} & \colorbox{tabthird}{0.0171} & \colorbox{tabsecond}{0.0100} & 0.1211 & 0.6271 & {0.0416} & \colorbox{tabthird}{0.0164} & \colorbox{tabsecond}{0.0047} & \colorbox{tabfirst}{\bf 0.0054} & \colorbox{tabsecond}{0.0065} & \colorbox{tabsecond}{0.0050} \\
SplaTAM-S (RGB-D) & \colorbox{tabfirst}{\bf 0.0045} & \colorbox{tabfirst}{\bf 0.0062} & \colorbox{tabsecond}{0.0160} & \colorbox{tabthird}{0.0160} & \colorbox{tabfirst}{\bf 0.0052}  & \colorbox{tabfirst}{\bf 0.0049} & \colorbox{tabfirst}{\bf 0.0048} & \colorbox{tabfirst}{\bf 0.0054} & \colorbox{tabfirst}{\bf 0.0050} & \colorbox{tabfirst}{\bf 0.0044} & \colorbox{tabfirst}{\bf 0.0051} & \colorbox{tabfirst}{\bf 0.0085} & \colorbox{tabfirst}{\bf 0.0063} & \colorbox{tabfirst}{\bf 0.0048} & \colorbox{tabfirst}{\bf 0.0051} & \colorbox{tabfirst}{\bf 0.0038} & \colorbox{tabthird}{0.0133} & \colorbox{tabfirst}{\bf 0.0044} & \colorbox{tabfirst}{\bf 0.0048}  \\ 
    \midrule
    \midrule
GO-SLAM (Mono)& 0.0039 & 0.0933 & 0.7395 & 0.0155 & 0.0065 & 0.0060 & 0.0090 & 0.0509 & 0.0253 & 0.0396 & 0.0158 & 0.2668 & 0.7395 & 0.2254 & 0.0474 & 0.0066 & 0.0050 & 0.0298 & 0.0044 \\ \midrule

iMAP (RGB-D) & 0.1209 & 0.1756 & 0.2873 & 0.1243 & 0.1042 & 0.2149 & 0.1221 & 0.1354 & 0.1170 & 0.1967 & 0.1576 & 0.2279 & 0.2873 & 0.2412 & 0.1528 & 0.2141 & 0.2576 & 0.1607 & 0.0955 \\
Nice-SLAM (RGB-D) & {0.0147} & {0.0214} & \colorbox{tabsecond}{0.0409}  & \colorbox{tabthird}{0.0157} & 0.0252 & 0.0359 & 0.0211 & 0.0288 & {0.0409} & {0.0146} & {0.0155} & {0.0167} & \colorbox{tabthird}{0.0211} & \colorbox{tabsecond}{0.0197} & {0.0187} & {0.0206} & {0.0155} & {0.0146} &{0.0170} \\
CO-SLAM (RGB-D) & \colorbox{tabthird}{0.0090} & \colorbox{tabsecond}{0.0105} & \colorbox{tabfirst}{\bf 0.0117} & \colorbox{tabfirst}{\bf 0.0107} & \colorbox{tabthird}{0.0095} & \colorbox{tabthird}{0.0115} & \colorbox{tabthird}{0.0093} & \colorbox{tabsecond}{0.0106} & \colorbox{tabsecond}{0.0103} & \colorbox{tabsecond}{0.0102} & \colorbox{tabsecond}{0.0098} & \colorbox{tabsecond}{0.0117} & \colorbox{tabsecond}{0.0116} & \colorbox{tabfirst}{\bf 0.0111} & \colorbox{tabsecond}{0.0109} & \colorbox{tabthird}{0.0106} & \colorbox{tabthird}{0.0111} & \colorbox{tabthird}{0.0095} & \colorbox{tabthird}{0.0097} \\
GO-SLAM (RGB-D) & \colorbox{tabsecond}{0.0046} & \colorbox{tabthird}{0.0363} & {0.2213} & \colorbox{tabsecond}{0.0130} & \colorbox{tabsecond}{0.0057} & \colorbox{tabsecond}{0.0055} & \colorbox{tabsecond}{0.0078} & \colorbox{tabthird}{0.0185} & \colorbox{tabthird}{0.0117} & \colorbox{tabthird}{0.0139} & \colorbox{tabsecond}{0.0098} & \colorbox{tabthird}{0.1685} & 0.2213 & 0.0637 & \colorbox{tabthird}{0.0166} & \colorbox{tabfirst}{\bf 0.0051} & \colorbox{tabfirst}{\bf 0.0052} & \colorbox{tabsecond}{0.0092} & \colorbox{tabsecond}{0.0049} \\
SplaTAM-S (RGB-D) & \colorbox{tabfirst}{\bf 0.0045} & \colorbox{tabfirst}{\bf 0.0080} & \colorbox{tabthird}{0.0450} & {0.0191} & \colorbox{tabfirst}{\bf 0.0053}  & \colorbox{tabfirst}{\bf 0.0052} & \colorbox{tabfirst}{\bf 0.0050} & \colorbox{tabfirst}{\bf 0.0058} & \colorbox{tabfirst}{\bf 0.0072} & \colorbox{tabfirst}{\bf 0.0044} & \colorbox{tabfirst}{\bf 0.0067} & \colorbox{tabfirst}{\bf 0.0062} & \colorbox{tabfirst}{\bf 0.0062} & \colorbox{tabthird}{0.0450} & \colorbox{tabfirst}{\bf 0.0041} & \colorbox{tabsecond}{0.0054} & \colorbox{tabsecond}{0.0096} & \colorbox{tabfirst}{\bf 0.0046} & \colorbox{tabfirst}{\bf 0.0045} \\
\bottomrule 
\end{tabular}
}\vspace{-3mm}
\end{table*}

\noindent{\textbf{Performance under depth missing or noises.}}
{Fig.~\ref{fig:neural_depth} illustrates the effect of depth perturbations.} \textcolor[rgb]{0.4, 0.2, 0.6}{\textbf{1})} \textit{\textbf{Depth missing effects.}} Dense Neural SLAM models exhibit minimal performance degradation when faced with partial depth missing. This robustness can be explained by Appendix \textbf{Theorem~\ref{thm:missing_stability}}, which demonstrates that missing depth data does not introduce noise into the optimization process. By excluding missing depth from the gradient calculation, these models maintain stable pose updates, which prevents significant errors from propagating. 
\textcolor[rgb]{0.4, 0.2, 0.6}{\textbf{2})} \textbf{\textit{Depth noise effects.}} 
Noise introduced to depth significantly impacts all models, directly interfering with the depth loss term and disrupting optimization. Appendix \textbf{Theorem~\ref{thm:noise_instability}} demonstrates that noisy depth leads to biased gradients, causing misaligned pose updates. This instability results in substantial performance degradation, as evidenced by the considerable increase in ATE and the failure of depth reconstruction, as shown in the qualitative depth rendering results of SplaTAM-S in Fig.~\ref{fig:neural_depth}.
\textcolor[rgb]{0.4, 0.2, 0.6}{\textbf{3})} \textbf{\textit{RGB vs. RGB \& noisy depth.}} GO-SLAM’s ATE rises from 0.46 cm to 3.78 cm with depth noise, making RGBD-version GO-SLAM perform worse than its monocular counterpart. This highlights the need for selective disregard of noisy inputs.

\begin{figure}[t!]
    \centering
        \begin{minipage}{0.485\textwidth}
        \centering
  \includegraphics[width=\linewidth]{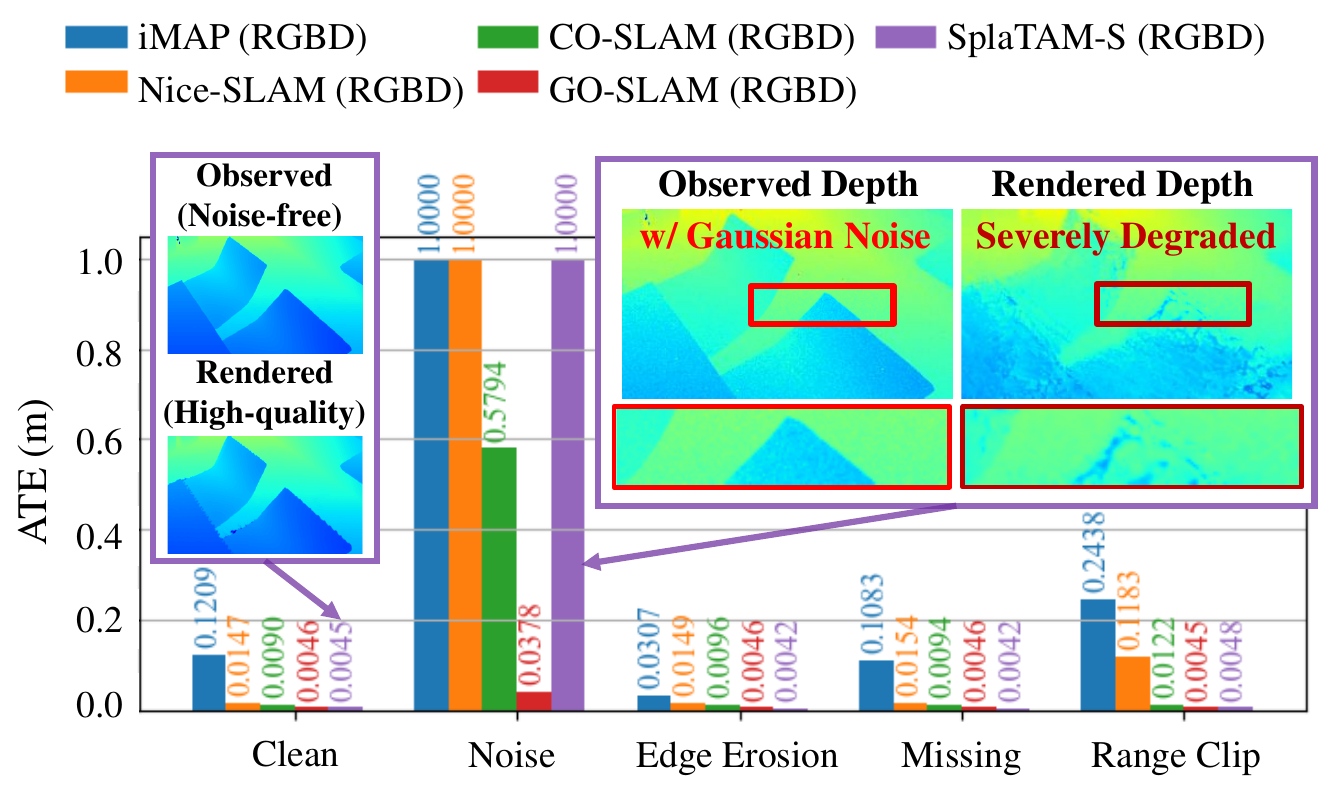}\vspace{-1mm}
        \caption{{Effect of  depth perturbations.}}
        \label{fig:neural_depth}
        \vspace{-6mm}
    \end{minipage}
    \hfill
    \begin{minipage}{0.509\textwidth}
            \centering
   \includegraphics[width=\linewidth]{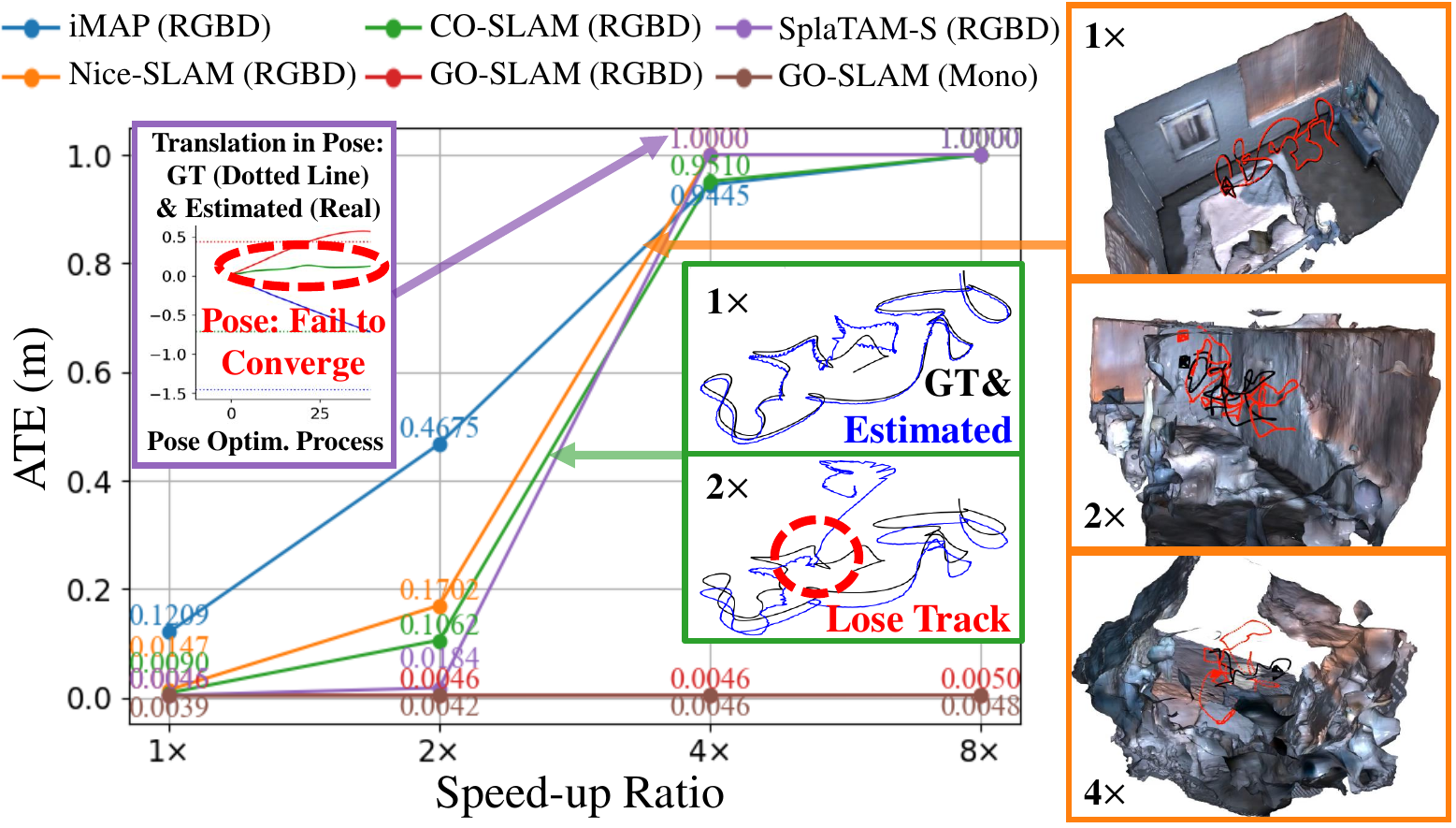} \vspace{-6mm}
	\caption{{Effect of fast motion.}}
 \label{fig:neural_fast_motion} \vspace{-6mm}
    \end{minipage}

\end{figure}

\noindent\textbf{Performance under fast motion.} 
Fig.~\ref{fig:neural_fast_motion} shows significant performance degradation in most models under fast motion. \textcolor[rgb]{0.4, 0.2, 0.6}{\textbf{1})} \textit{\textbf{Robust Neural models}}. The Neural model {GO-SLAM} performs robustly under fast motion, thanks to global optimization techniques like bundle adjustment and loop closure.  \textcolor[rgb]{0.4, 0.2, 0.6}{\textbf{2})} \textit{\textbf{Fragile Neural models.}} In contrast, other models like {Nice-SLAM}, {CO-SLAM}, and {SplaTAM-S}, which rely on  RGB-D reconstruction loss for differentiable pose and map optimization, struggle, leading to failed pose optimization and degraded map reconstruction (see Fig.~\ref{fig:neural_fast_motion}). As Appendix \textbf{{Theorem~\ref{thm:fast_motion_divergence}}} demonstrates, large pose changes amplify gradients, causing inefficient optimization and potential divergence for differentiable pose optimization. \textcolor[rgb]{0.4, 0.2, 0.6}{\textbf{3})} \textit{\textbf{Neural vs. non-neural.}} Overall, pure NeRF-based and Gaussian-Splat-based models evaluated show lower robustness to rapid motion changes compared to the non-neural model ORB-SLAM3 (whose results are in Appendix Fig.~\ref{fig:ORBSLAM-all}d), which leverages ORB descriptors for tracking, making it more resilient to such effects. Thus, to improve robustness, combining the learnable components of Neural methods with ORB-SLAM3's resilience to fast motion could offer a more balanced approach.

\begin{figure}[t!]
\vspace{-3mm}
    \centering
        \begin{minipage}{0.495\textwidth}
        \centering
\includegraphics[width=\linewidth]{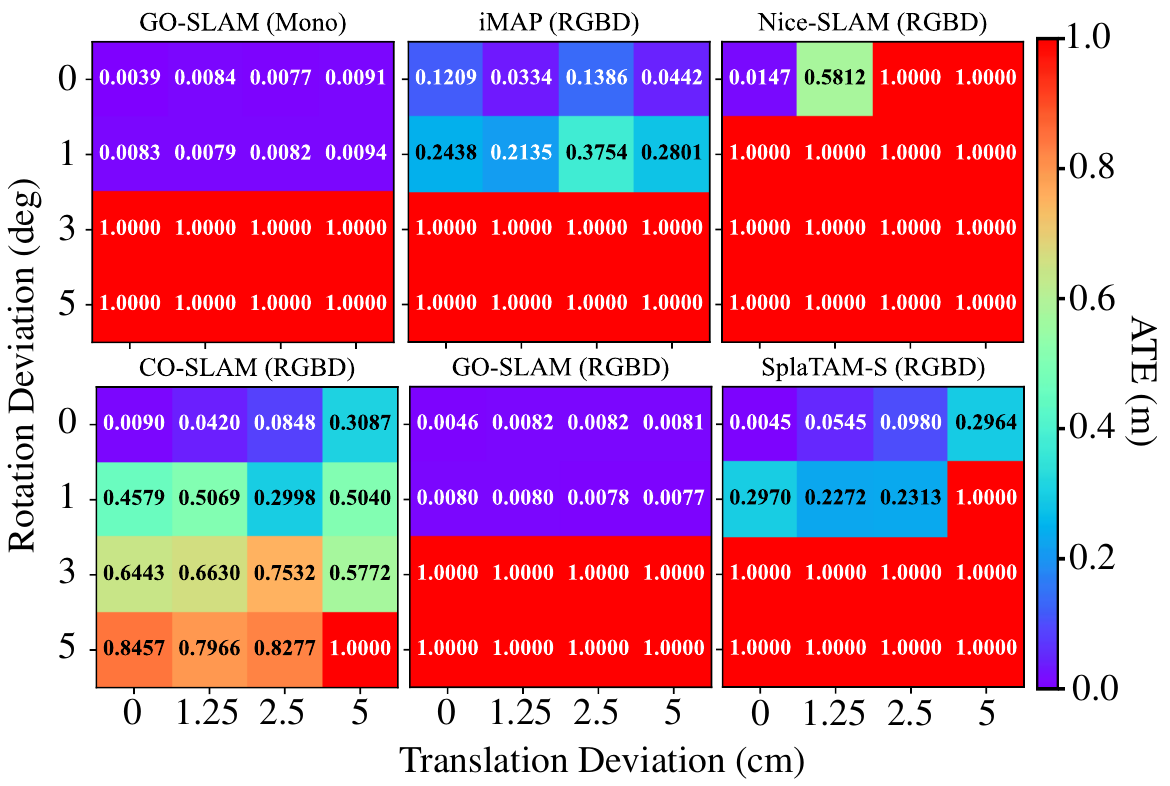}\vspace{-2mm}
        \caption{{Effect of motion deviations.}}
\label{fig:neural_motion_deviation}
  \vspace{-5.0mm}
    \end{minipage}
    \hfill
    \begin{minipage}{0.495\textwidth}
        \centering
 \includegraphics[width=\linewidth]{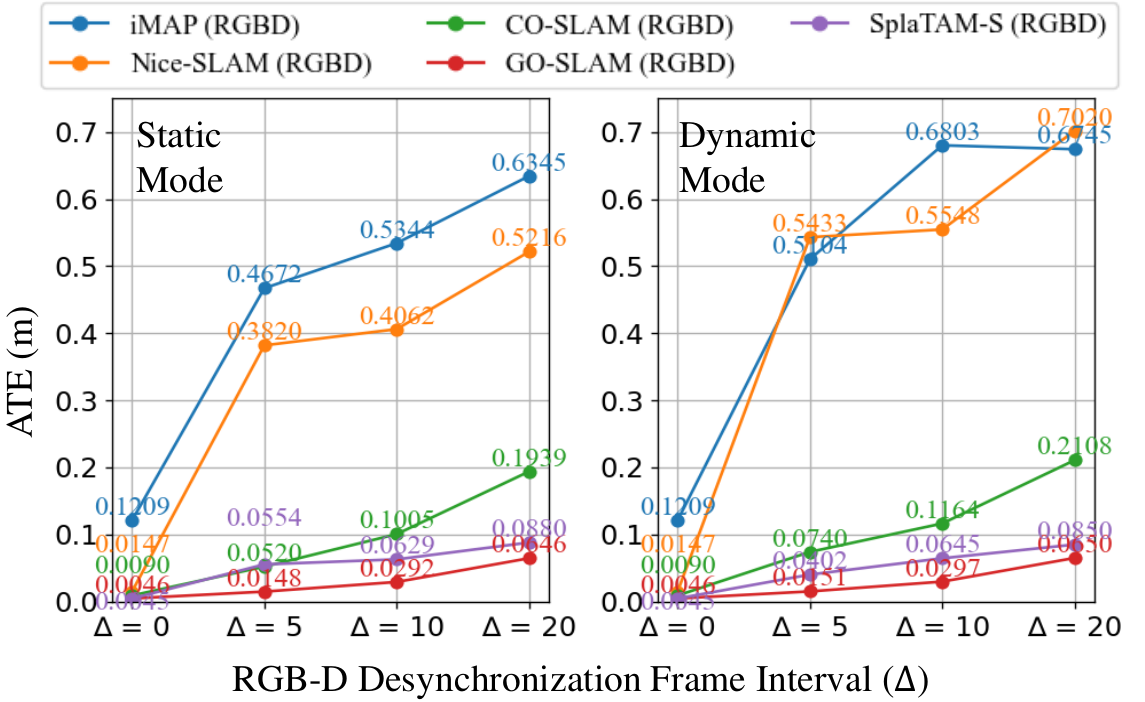}
	\caption{{Effect of RGB-D desynchronization.}}
\label{fig:sensor_descync_neural}
    \vspace{-7.0mm}
    \end{minipage}

\end{figure}

\begin{wraptable}{r!}{0.48\textwidth}
\centering
\vspace{-5mm}
\caption{{Effect of linear motion assumption on ATE$\downarrow$ (cm)) of SplaTAM on original-speed ($1\times$) clean videos. `Iter': pose optimization iteration.}} \vspace{-1mm}
\resizebox{0.48\textwidth}{!}{
    \setlength{\tabcolsep}{0.4pt}
    \renewcommand{\arraystretch}{0.8}
\label{tab:replica-res-assump}
{%
\vspace{-3mm}
\begin{tabular}{  c |c| c c c c c  c c c |c }
\toprule
Assump. & \textcolor{black}{ 
  Iter. } & \texttt{O-0} & \texttt{O-1} & \texttt{O-2} & \texttt{O-3} & \texttt{O-4} & \texttt{R-0} & \texttt{R-1} & \texttt{R-2} & \texttt{Avg.} \\
\hline
\checkmark & 40 & \colorbox{tabfirst}{\textbf{0.47}} & \colorbox{tabfirst}{\textbf{0.27}} & \colorbox{tabfirst}{\textbf{0.29}} & \colorbox{tabfirst}{\textbf{0.32}} & \colorbox{tabfirst}{\textbf{0.55}} & \colorbox{tabfirst}{\textbf{0.31}} & \colorbox{tabfirst}{\textbf{0.40}} & \colorbox{tabfirst}{\textbf{0.29}} & \colorbox{tabfirst}{\textbf{0.36}} \\ 
\ding{55} & 40 & \colorbox{tabsecond}{3.59} & \colorbox{tabthird}{2.42} & \colorbox{tabsecond}{2.48} & \colorbox{tabthird}{0.58} & \colorbox{tabsecond}{2.39} & \colorbox{tabsecond}{3.30} & \colorbox{tabsecond}{25.82} & \colorbox{tabsecond}{4.83} & \colorbox{tabsecond}{5.68} \\
\ding{55} & 200 & \colorbox{tabthird}{16.09} & \colorbox{tabsecond}{0.62} & \colorbox{tabthird}{15.09} & \colorbox{tabsecond}{0.40} & \colorbox{tabthird}{17.21} & \colorbox{tabthird}{4.27} & \colorbox{tabthird}{65.05} & \colorbox{tabthird}{4.87} & \colorbox{tabthird}{15.30} \\
\bottomrule
\end{tabular}}}
\vspace{-4mm}
\end{wraptable}
\noindent{\textbf{Performance under unstable sensor motion with deviations.}}
\textcolor[rgb]{0.4, 0.2, 0.6}{\textbf{1})} \textbf{\textit{Small deviations cause great challenge.}} 
As shown in Fig.~\ref{fig:neural_motion_deviation}, even minor pose deviations (\textit{e.g.}, 2.5 cm in translation) significantly degrade trajectory estimation across most models. Combined translation and rotation deviations amplify errors, leading to complete pose tracking failure.
\textcolor[rgb]{0.4, 0.2, 0.6}{\textbf{2})} \textit{\textbf{Failure due to invalid linear motion assumptions under dynamic motion.}} 
By leveraging the assumption of smooth, linear sensor motion, models such as SplaTAM-S and CO-SLAM \textbf{\textit{perform well on standard datasets that exhibit this data bias}}. However, this assumption does not hold in cases with abrupt movements. Table~\ref{tab:replica-res-assump} demonstrates significant performance drops when this assumption is violated. Increasing pose optimization iterations further degrades performance, highlighting the sensitivity of these methods to the linear motion assumption. Appendix \textbf{Theorem~\ref{thm:ineffective_linear_motion}} shows that assuming linear motion under non-linear conditions leads to misaligned gradients and convergence failures in differentiable pose optimization, as the models fail to capture higher-order dynamics.

{\noindent{\textbf{Performance under poorly-synced RGB-D streams.}}}
Fig.~\ref{fig:sensor_descync_neural} shows that while increased desynchronization leads to performance drops, the degradation is less severe for methods like CO-SLAM, SplaTAM-S, and GO-SLAM. \textcolor{black}{The relative performance ranking of top-performing methods remains consistent under both well-synced and desynchronized sensor settings, which suggests that enhancing performance in clean settings could potentially improve robustness under poorly-synced settings.}

{\noindent\textbf{Remark.} \textbf{\textcolor[rgb]{0.4, 0.2, 0.6}{1)} \textit{{No universally robust model.}}}} No single model handles all perturbations. The inconsistency between performance in clean and perturbed settings highlights the need for evaluating systems across diverse degraded conditions.
\textcolor[rgb]{0.4, 0.2, 0.6}{\textbf{2)}} \textit{\textbf{Synergy across methods.}} Neural models excel in pose tracking under RGB imaging perturbations, while ORB-SLAM3 with  feature matching is more robust to fast motion. This inspired our \textit{CorrGS} method, marrying correspondence-based matching with neural representation. \textbf{\textcolor[rgb]{0.4, 0.2, 0.6}{3)}} \textit{\textbf{Caution with model  and dataset assumptions.}} Assuming linear sensor motion can lead to overfitting to dataset biases, impairing the model's ability to generalize to diverse real-world scenarios. Our customizable noisy data generation pipeline helps reveal such model vulnerabilities by introducing varied perturbations, ensuring robustness beyond standard dataset biases.

\vspace{-2mm}
\subsection{{{ From  Individual to Mixed Perturbations}}}\vspace{-1mm}
We conduct case studies (see Table~\ref{tab:mixed_perturbation}) on scene \texttt{Office-0} under static mode perturbations.

 \begin{wraptable}{r!}{0.48\textwidth}
\vspace{-5.5mm}
\centering
\caption{{Mixed perturbation effect (ATE$\downarrow$ (m)).}}\vspace{-2mm}
\label{tab:mixed_perturbation}
\setlength{\tabcolsep}{2mm}
\resizebox{0.48\textwidth}{!}{
\begin{tabular}{l|c|cccccc}
\toprule
\textbf{Method} & \textbf{Clean} & \textbf{(i)} & \textbf{(ii)} & \textbf{(iii)} & \textbf{(iv)} & \textbf{(v)} & \textbf{(vi)} \\ 
\midrule
GO-SLAM & \textbf{0.004} & 0.095 & 0.512 & 0.620 & 0.356 & 0.397 & 0.627 \\ 
$\Delta$ATE & 0.000 & \textcolor[rgb]{0.75,0.0,0.0}{+0.091} & \textcolor[rgb]{0.75,0.0,0.0}{{\textbf{+0.417}}} & \textcolor[rgb]{0.75,0.0,0.0}{+0.108} & \textcolor[rgb]{0.0,0.4,1.0}{-0.264} & \textcolor[rgb]{0.75,0.0,0.0}{+0.041} & \textcolor[rgb]{0.75,0.0,0.0}{+0.230} \\ 
\midrule
SplaTAM-S & \textbf{0.005} & 0.007 & 0.011 & 0.008 & 0.007 & 0.211 & 0.269 \\ 
$\Delta$ATE & 0.000 & \textcolor[rgb]{0.75,0.0,0.0}{+0.002} & \textcolor[rgb]{0.75,0.0,0.0}{+0.004} & \textcolor[rgb]{0.0,0.4,1.0}{-0.003} & \textcolor[rgb]{0.0,0.4,1.0}{-0.001} & \textcolor[rgb]{0.75,0.0,0.0}{\textbf{{+0.204}}} & \textcolor[rgb]{0.75,0.0,0.0}{+0.058} \\ 
\bottomrule
\multicolumn{8}{l}{{\textbf{Left} \textbf{to} \textbf{Right}: We progressively add: (i) \textit{RGB Snow Effect}, (ii) \textit{RGB Motion Blur}, }} \\
\multicolumn{8}{l}{{ (iii) \textit{RGB Gaussian Noise}, (iv) \textit{RGB JPEG}, \& (v) \textit{Depth Noise}, (vi) \textit{RGB-D Desync.}}} \\
\end{tabular}}\vspace{-6mm}
\end{wraptable}

\noindent\textbf{{Mixed perturbations generally lead to degraded}}  
{\textbf{performance, though not always.}}
We find that some  perturbation combinations
 like  \textit{Snow Effect}  and \textit{Motion Blur} (\textbf{ii}) amplify errors,
  leading to higher ATE for GO-SLAM. 
  However, other combinations, like adding \textit{JPEG Compression} (\textbf{iv}), produce an effect similar to the single-perturbation setting and do not degrade performance. 

\begin{wrapfigure}{r}{0.48\textwidth}
\vspace{-2mm}
	\centering 
\includegraphics[width=0.48\textwidth]{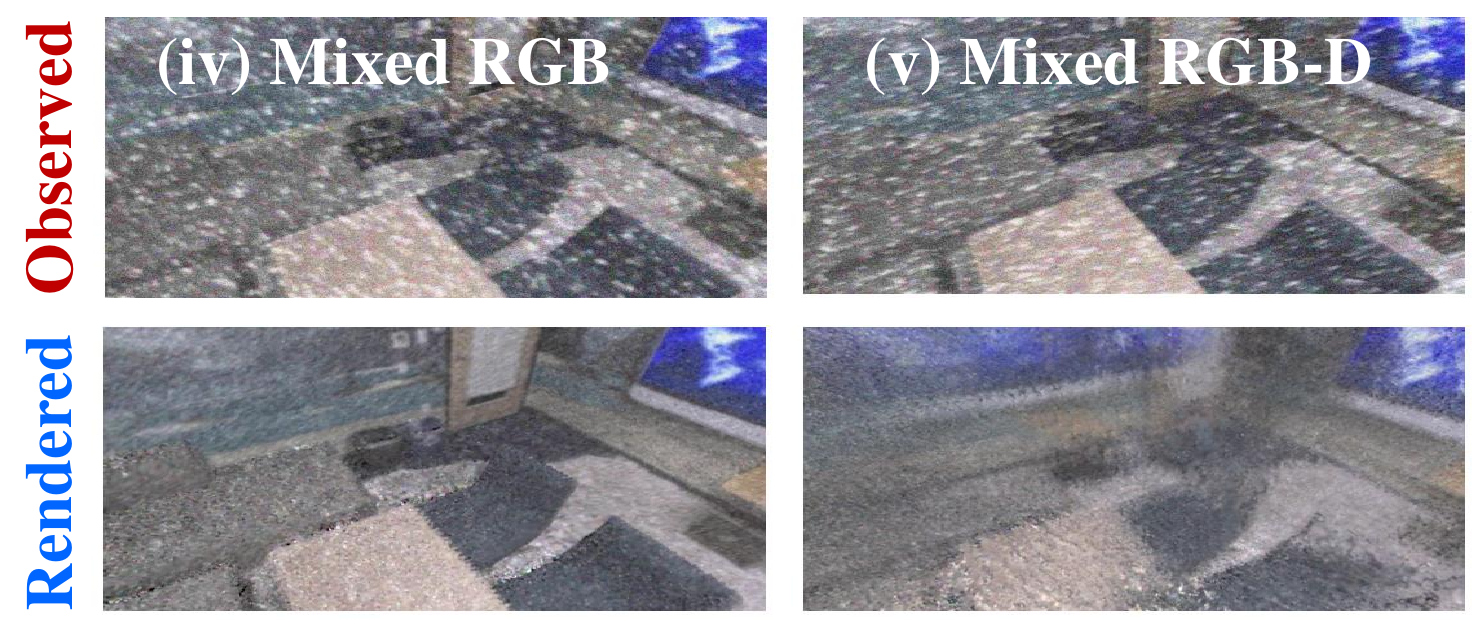}\vspace{-3mm}
	\caption{{Rendering under mixed perturbations (same as Table~\ref{tab:mixed_perturbation}) for SplaTAM-S.}}
\label{fig:mixed_effect}\vspace{-7mm}
\end{wrapfigure}

\noindent\textbf{Individual perturbations often yield similar performance degradation in isolated and mixed settings.} Introducing additional \textit{Depth Noise} (\textbf{v}) significantly impacts SplaTAM-S, increasing ATE by 0.204 m and degrading rendered RGB quality (see Fig.~\ref{fig:mixed_effect}). Likewise, coupled RGB corruptions (\textbf{ii}) strongly affect GO-SLAM. This suggests analyzing individual perturbations can expose model vulnerabilities in multi-perturbation scenarios.

\vspace{-2mm}
\subsection{{{ Towards Streamlined Benchmarking via Perturbation Proxies}}}
\vspace{-1mm}

{\noindent\textbf{Optimizing robustness benchmarking through correlation analysis.}}
Evaluating model robustness
\begin{wrapfigure}{r}{5.5cm} \vspace{-5mm} \centering 
\includegraphics[width=\linewidth]{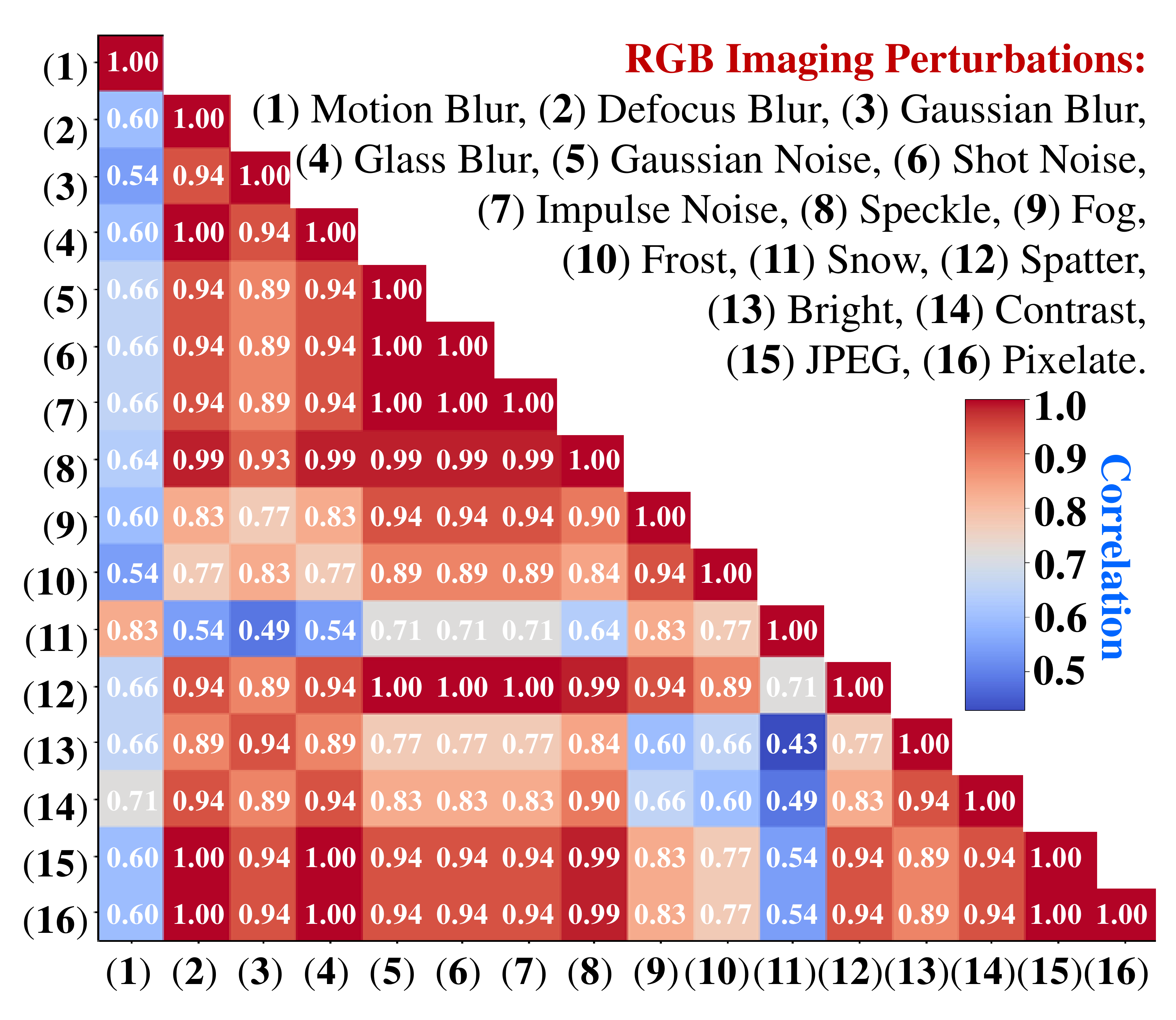} \vspace{-5mm} \caption{{Correlation of performance (ATE) across static RGB perturbations}.} \vspace{-6mm} \label{fig:correlation_static} \end{wrapfigure}
 under large-scale perturbations   can be resource-intensive, requiring numerous runs across varying   perturbation types, severity levels, scenes, and models. 
However, by understanding correlations between different perturbations, we can streamline the model evaluation process.

\vspace{-1.5mm}
\noindent{\noindent\textbf{Streamlined benchmarking using representative perturbations.}}
We use Spearman’s rank correlation~\citep{spearman1961proof} to identify perturbations that closely mirror overall performance degradation, allowing efficient evaluation by focusing on representative perturbations. Fig.~\ref{fig:correlation_static} shows that perturbations like \textit{Gaussian Noise}, \textit{Shot Noise}, \textit{Impulse Noise}, and \textit{Spatter} have high correlations with others, making them effective proxies for broader testing. While they may not capture every specific effect, they provide a rough estimate of performance under degraded RGB. 

\vspace{-2mm}
\section{CorrGS: Robust Learning under Noisy Video}

\vspace{-2mm}
Our benchmarking reveals that existing methods struggle with large motions and suffer color degradation in 3D reconstructions under RGB imaging perturbations. These insights motivate us to propose a more robust model, \textit{i.e.}, Correspondence-guided Gaussian Splatting (CorrGS), to achieve \textbf{\textit{reliable ego-motion estimation and noise-free dense 3D reconstruction, even from noisy, fast-motion videos}}.

\vspace{-2mm}
\subsection{{Method}}\label{sec:corrgs}\vspace{-1mm}

{\noindent\textbf{Overview.}} 
CorrGS employs Gaussian Splatting combined with correspondence-guided differentiable optimization to deliver robust ego-motion estimation and photorealistic 3D reconstruction. Using the Gaussian-based map representation maintained to be noise-free, CorrGS renders RGB-D images at estimated poses. It then conducts differentiable optimization by calculating the differences between these rendered images and the observed noisy RGB-D frames. The integration of correspondence guidance is pivotal, as it enhances geometric alignment for  accurate ego-motion estimation and facilitates appearance restoration  for reconstructing noise-free, photorealistic 3D. 
Please see Appendix Sec.~\ref{sec:corrgs_method_details} for details on \textit{CorrGS}, including schematic pipeline  (Fig.~\ref{fig:pose_optim_idea}) and pseudocode (Alg.~\ref{alg:corrgs}).

\noindent \textbf{Correspondence-guided Pose Learning (CPL)} uses \textbf{\textit{correspondence-initialized poses to improve the initialization for differentiable pose parameter learning}}, enabling robust ego-motion estimation.
\textcolor[rgb]{0.0,0.4,1.0}{\textbf{1})} We first calculate $N$ 3D correspondences, indexed as $i \in \{1, \ldots, N\}$,  by lifting 2D visual matches between rendered and observed RGB frames into 3D using the rendered and observed depth, respectively. This process establishes correspondences between observed noisy 3D points $\mathbf{p}_{o,i}$ and rendered 3D points $\mathbf{p}_{r,i}$. \textcolor[rgb]{0.0,0.4,1.0}{\textbf{2})} Then, we  calculate the relative pose $\mathcal{P}_{rel}=\{\mathbf{R}_{rel}, \mathbf{t}_{rel} \}$ between rendered
and observed points via Eq.~\ref{eq: pose_init}. \textcolor[rgb]{0.0,0.4,1.0}{\textbf{3})} The {{pose initialization via correspondence}} is obtained by multiplying the previous pose estimate  with the relative pose $\mathcal{P}_{rel}$. 
 \begin{equation}
     \label{eq: pose_init} \mathcal{P}_{rel}^* = \arg \min_{\mathbf{R}_{rel}, \mathbf{t}_{rel}} \sum_{i=1}^N \rho \left( \left\| \mathbf{R}_{rel} \mathbf{p}_{r,i} + \mathbf{t}_{rel} - \mathbf{p}_{o,i} \right\|^2 \right), 
\end{equation} 
\noindent where $\|\cdot\|$ denotes the Euclidean norm, and $\rho(\cdot)$ is the soft L1 loss function.  The \textbf{\textit{{computational complexity}}} of this optimization process is bounded by the image resolution, enabling low time cost.

\vspace{-1mm}
\textcolor[rgb]{0.0,0.4,1.0}{\textbf{4})} However, the initialization from Eq.~\ref{eq: pose_init} may fail \textcolor{black}{when the two viewpoints are very close together. In such scenarios, it is challenging to accurately estimate the relative pose by optimizing the transformation between the point clouds of the two viewpoints,} leading to higher RGB-D rendering error. To mitigate this, we propose a \textbf{\textit{Pose Quality Verification}} step that rejects the initialization from Eq.~\ref{eq: pose_init} 
if it results in a higher rendering loss than a naive pose propagated from previous frames. 
\textcolor[rgb]{0.0,0.4,1.0}{\textbf{5})} The final pose is further refined using the differentiable rendering-based optimization.

\noindent{\textbf{Correspondence-guided Appearance Restoration Learning (CARL).}} 
\textcolor[rgb]{0.0,0.4,1.0}{\textbf{1})} CARL \textbf{\textit{mitigates color
degradation by learning a restoration model}} $f(\cdot; \theta)$ that maps noisy colors $\mathbf{C}_{o,i}$ of observed points to their clean counterparts $\mathbf{C}_{r,i}$, rendered from historical map which is maintained to be noise-free: \vspace{-1.5mm}
\begin{equation}
\theta^* = \arg \min_{\theta} \sum_{i=1}^N \left\| f(\mathbf{C}_{o,i}; \theta) - \mathbf{C}_{r,i} \right\|^2. \vspace{-1.2mm}
\end{equation}
\noindent \textcolor[rgb]{0.0,0.4,1.0}{\textbf{2})} The learnt model is then applied to the observed image. It ensures 3D  consistency  and enhances pose tracking via a second round of CPL with the restored image.

\noindent\textbf{Theoretical insights on the benefits of \textit{CorrGS}}. \textcolor[rgb]{0.0,0.4,1.0}{\textbf{1})} CPL enhances pose learning by generating more accurate pose estimates that are closer to the ground-truth. This accuracy reduces the magnitude of gradients and improves the conditioning of the Hessian matrix, thereby facilitating more stable and robust tracking. \textcolor[rgb]{0.0,0.4,1.0}{\textbf{2})}  CARL employs a learned restoration mechanism to denoise perturbed color inputs, effectively mitigating noise-induced residual variability, which in turn stabilizes gradients and accelerates optimization convergence. See Appendix {Sec.~\ref{subsec:corrgs_theory}} for more details.

\subsection{{Pilot Study on Synthetic Noisy Sparse-view Video}} \vspace{-1mm}
\label{main_result}
{\noindent\textbf{Metrics}}. Performance is quantified using \textcolor[rgb]{0.0,0.4,1.0}{\textbf{1})} ATE for ego-motion accuracy, \textcolor[rgb]{0.0,0.4,1.0}{\textbf{2})} PSNR for simultaneous color restoration and reconstruction quality, and \textcolor[rgb]{0.0,0.4,1.0}{\textbf{3})} depth L1 loss for depth reconstruction accuracy.

\noindent\textbf{Noisy data synthesis setup.} We focus on illumination changes, leaving other perturbations for future work. \textcolor[rgb]{0.0,0.4,1.0}{\textbf{1})} To evaluate CPL's robustness under fast motion, we synthesize sparse-view videos (10× faster than~\citep{straub2019replica}). \textcolor[rgb]{0.0,0.4,1.0}{\textbf{2})} To test CARL, we create sequences with partial brightness reduction, where the first half of each video is clean and the latter half has lower brightness.

{\noindent\textbf{Implementation.}} \textcolor[rgb]{0.0,0.4,1.0}{\textbf{1})} CPL uses the RGB-based matcher from~\citep{sun2021loftr} to produce 2D correspondence, which is lift to 3D correspondence via depth. \textcolor[rgb]{0.0,0.4,1.0}{\textbf{2})} CARL uses a linear model for restoration, optimized using the Adam optimizer over 100 iterations with a learning rate of 0.2. \textcolor[rgb]{0.0,0.4,1.0}{\textbf{3})} CPL and CARL each add a total of 0.1 seconds (for 1200×680 input images), resulting in \textbf{\textit{minimal overhead while enhancing robustness}}. \textcolor[rgb]{0.0,0.4,1.0}{\textbf{4})} Hyperparameters follow the baseline~\citep{keetha2024splatam}.

\begin{figure}[t!]
\vspace{-4mm}
    \centering
    \begin{minipage}{0.58\textwidth}
        \centering
\footnotesize
    \setlength{\tabcolsep}{0.5pt}
    \renewcommand{\arraystretch}{1}
     \captionof{table}{{Comparison (ATE$\downarrow$ (cm)) on sparse-view video. }}\vspace{-2mm}
\setlength{\tabcolsep}{0pt}
\renewcommand{\arraystretch}{0.8}
\label{tab:merged_results}
\resizebox{\textwidth}{!}{%
\begin{tabular}{ l |c | c c c c c c c c | c  }
\toprule
Method & \textcolor{black}{  LC } & \texttt{O-0} & \texttt{O-1} & \texttt{O-2} & \texttt{O-3} & \texttt{O-4} & \texttt{R-0} & \texttt{R-1} & \texttt{R-2} & \texttt{Avg.} \\
\midrule
GO-SLAM  & \checkmark & {20.02} & {0.44} & {\textbf{\textcolor{black}{Fail}}} & {0.40} & {0.67} & {0.71} & {0.50} & {0.43}  & \textbf{\textcolor{black}{Fail}} \\ \midrule
CO-SLAM   && \colorbox{tabsecond}{99.38} & \colorbox{tabsecond}{60.77} & \colorbox{tabthird}{176.1} & \colorbox{tabsecond}{162.02} & \colorbox{tabsecond}{223.5} & \colorbox{tabthird}{170.5} & \colorbox{tabsecond}{140.4} & \colorbox{tabsecond}{155.15}  & \colorbox{tabsecond}{148.48} \\
{SplaTAM}  && \colorbox{tabthird}{115.55} & \colorbox{tabthird}{63.13} & \colorbox{tabsecond}{76.55} & \colorbox{tabthird}{196.95} & \colorbox{tabthird}{367.00} & \colorbox{tabsecond}{126.66} & \colorbox{tabthird}{150.66} & \colorbox{tabthird}{1006.01} & \colorbox{tabthird}{262.81} \\ 
\textbf{{CorrGS}}\textbf{ {(Default)} } & & \colorbox{tabfirst}{\textbf{1.80}} & \colorbox{tabfirst}{\textbf{2.41}} & \colorbox{tabfirst}{\textbf{1.96}} & \colorbox{tabfirst}{\textbf{0.99}} & \colorbox{tabfirst}{\textbf{1.36}} & \colorbox{tabfirst}{\textbf{0.84}} & \colorbox{tabfirst}{\textbf{17.73}} & \colorbox{tabfirst}{\textbf{1.72}} & \colorbox{tabfirst}{\textbf{3.60}} \\ \midrule

CO-SLAM-L  && {55.52} & {5.23} & {6.28} & {3.80} & \colorbox{tabthird}{10.76} & \colorbox{tabthird}{11.08} & {104.03} &{3.91} & {25.08} \\
Hero-SLAM-L & & \colorbox{tabsecond}{0.77} & \colorbox{tabthird}{1.10} & \colorbox{tabthird}{1.97} & \colorbox{tabthird}{1.33} & 82.3 & 22.5 & \colorbox{tabsecond}{1.83} & \colorbox{tabsecond}{0.80} & \colorbox{tabthird}{14.08} \\
{SplaTAM-L} & & \colorbox{tabthird}{1.59} & \colorbox{tabsecond}{0.71} & \colorbox{tabsecond}{0.39} & \colorbox{tabsecond}{0.34} & \colorbox{tabsecond}{0.52} & \colorbox{tabsecond}{0.93} & \colorbox{tabthird}{81.11} &  \colorbox{tabthird}{1.73} & \colorbox{tabsecond}{10.9} \\ 
\textbf{{CorrGS-L}\textbf{ {(Default)}} }&  & \colorbox{tabfirst}{\textbf{0.39}} & \colorbox{tabfirst}{\textbf{0.13}} & \colorbox{tabfirst}{\textbf{0.22}} & \colorbox{tabfirst}{\textbf{0.33}} & \colorbox{tabfirst}{\textbf{0.49}} & \colorbox{tabfirst}{\textbf{0.76}} & \colorbox{tabfirst}{\textbf{0.59}} & \colorbox{tabfirst}{\textbf{0.71}} & \colorbox{tabfirst}{\textbf{0.45}} \\ \midrule
{CorrGS-L (w/o PQV) }  & & 1.80 &  0.83 & 0.43 & 0.29 & 0.45 & 1.06 & 88.64  & 1.79 & 11.91  \\
\bottomrule
\multicolumn{11}{l}{{LC indicates using loop closure. `-L' indicates using the same pose optimization iteration (200).}} \\
\end{tabular}
}\vspace{-3mm}
    \end{minipage}
    \hfill
    \begin{minipage}{0.4\textwidth}
 \vspace{2mm}\includegraphics[width=\textwidth]{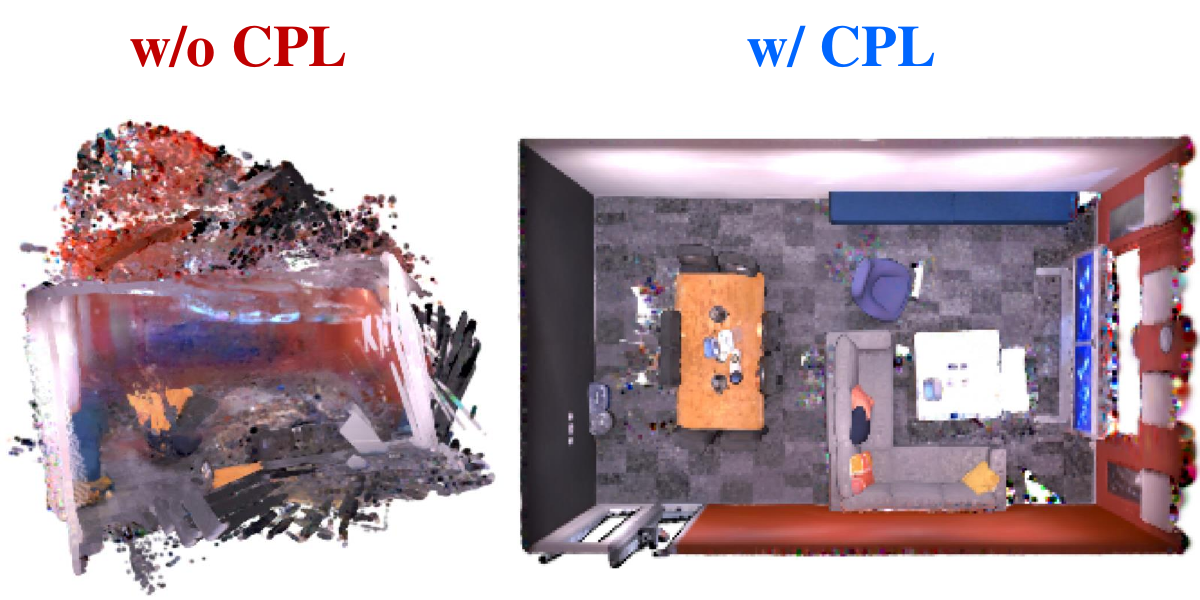}\vspace{-1mm}
\caption{{Ablation of CPL on 3D reconstruction from sparse-view video.}}
\label{fig:CPLL_mapping_ablation}
    \end{minipage}
   \vspace{-4mm}
\end{figure}

\noindent\textbf{Effect of CPL: Accelerated convergence and enhanced robustness under dynamic motion.}  
 \textcolor[rgb]{0.0,0.4,1.0}{\textbf{1})}  As shown in Table~\ref{tab:merged_results},  our CorrGS and CorrGS-L consistently achieve the lowest ATE across all sequences, outperforming top pure Neural  models like SplaTAM, SplaTAM-L, and Hero-SLAM~\citep{xin2024heroslam}, which target extreme motion scenarios, while also demonstrating greater robustness compared to GO-SLAM, which utilizes an additional loop closure mechanism.  
 \textcolor[rgb]{0.0,0.4,1.0}{\textbf{2})} CPL leads to notable acceleration in pose learning convergence (see Appendix Fig.~\ref{fig:pose_converge}),  significantly improving trajectory estimation under fast motion. 
\textcolor[rgb]{0.0,0.4,1.0}{\textbf{3})} {{Pose Quality Verification (PQV) step safeguards against potential failures}} in CPL, as seen in the performance drop without PQV (CorrGS-L w/o PQV) in Table~\ref{tab:merged_results}. \textcolor[rgb]{0.0,0.4,1.0}{\textbf{4})} Fig.~\ref{fig:CPLL_mapping_ablation}  shows that CPL enables robust tracking, resulting in high-quality 3D reconstruction, whereas the baseline fails.  
\textcolor[rgb]{0.0,0.4,1.0}{\textbf{5})} 
{{Our coupled data-model approach—using benchmark-identified weaknesses to guide model refinements—advances robustness.}} Specifically, motion perturbation benchmarks exposed pose tracking issues, leading to the development of CPL to address these limitations.

\noindent\textbf{{Effect of CARL: Online restoration learning enables consistent noise-free 3D reconstruction.}} 
Table~\ref{table:effect_CARL} quantitatively evaluates CARL's effect on ego-motion estimation, RGB restoration, and depth rendering quality under partial degraded illumination, with qualitative 3D reconstruction results in Fig.~\ref{fig:CARL_mapping_ablation}. \textcolor[rgb]{0.0,0.4,1.0}{\textbf{1})} {{CARL significantly improves pose accuracy and enables noise-free 3D reconstruction from noisy video,}} increasing tracking success from 1/8 (baseline) to 8/8, with an ATE of 0.54 cm, close to the noise-free value of 0.45 cm. \textcolor[rgb]{0.0,0.4,1.0}{\textbf{2})} Omitting the restored image in mapping (CARL-T) reduces tracking success from 8/8 to 4/8 and results in inconsistent maps with regional severe artifacts. \textcolor[rgb]{0.0,0.4,1.0}{\textbf{3})} Using CARL solely for mapping (CARL-M) improves RGB quality compared to the baseline (24.53 dB vs. 12.67 dB in PSNR on \texttt{O-1}), but loses track due to pose optimization relying on unrestored RGB-D. \textcolor[rgb]{0.0,0.4,1.0}{\textbf{4})} Thus, the full CARL method (CARL-T\&M) is essential for robust pose tracking and noise-free 3D reconstruction, achieving a high average restored RGB PSNR of 35.48 dB.

\begin{figure}[t!]
\vspace{-3mm}
    \centering
    \begin{minipage}{0.57\textwidth}
        \centering
\footnotesize
    \setlength{\tabcolsep}{2pt}
    \renewcommand{\arraystretch}{1}
     \captionof{table}{{Comparisons on noisy sparse-view video. \ding{55}: tracking failure. CARL-T\&M is default CorrGS-L model.}}\vspace{-2mm}
\centering
\footnotesize
\setlength{\tabcolsep}{2pt}
    \renewcommand{\arraystretch}{0.4}
    \resizebox{\textwidth}{!}{
\begin{tabular}{l|cccccccc|c}
\toprule
\textbf{Setting} & \texttt{O-0} & \texttt{O-1} & \texttt{O-2} & \texttt{O-3} & \texttt{O-4} & \texttt{R-0} & \texttt{R-1} & \texttt{R-2} & \textbf{Avg.} \\ \midrule
\multicolumn{10}{c}{\textbf{Trajectory Estimation Error (ATE$\downarrow$ [cm])}} \\ \midrule
Baseline & \ding{55} & \ding{55} & \ding{55} & \ding{55} & \ding{55} & \colorbox{tabthird}{0.51} & \ding{55} & \ding{55} & -- \\ 
CARL-T & \ding{55} & \ding{55} & \colorbox{tabsecond}{3.44} & \colorbox{tabsecond}{0.68} & \colorbox{tabsecond}{0.95} & \colorbox{tabsecond}{0.43} & \ding{55} & \ding{55} & -- \\ 
CARL-M & \ding{55} & \ding{55} & \ding{55} & \ding{55} & \ding{55} & {5.73} & \ding{55} & \ding{55} & -- \\ 
\textbf{{CARL-T\&M (Default)}} & \colorbox{tabfirst}{\textbf{0.30}} & \colorbox{tabfirst}{\textbf{0.12}} & \colorbox{tabfirst}{\textbf{0.22}} & \colorbox{tabfirst}{\textbf{0.31}} & \colorbox{tabfirst}{\textbf{0.41}} & \colorbox{tabfirst}{\textbf{0.35}} & \colorbox{tabfirst}{\textbf{1.08}} & \colorbox{tabfirst}{\textbf{1.55}} & \colorbox{tabfirst}{\textbf{0.54}} \\ \midrule

\multicolumn{10}{c}{\textbf{RGB Restoration Quality (PSNR$\uparrow$ [dB])}} \\ \midrule
Baseline & \ding{55} & \ding{55} & \ding{55} & \ding{55} & \ding{55} & \colorbox{tabthird}{12.67} & \ding{55} & \ding{55} & -- \\ 
CARL-T & \ding{55} & \ding{55} & \colorbox{tabsecond}{12.82} & \colorbox{tabsecond}{12.53} & \colorbox{tabsecond}{20.14} & {11.92} & \ding{55} & \ding{55} & -- \\ 
CARL-M  & \ding{55} & \ding{55} & \ding{55} & \ding{55} & \ding{55} & \colorbox{tabsecond}{24.53} & \ding{55} & \ding{55} & -- \\ 
\textbf{{CARL-T\&M (Default)}} & \colorbox{tabfirst}{\textbf{40.67}} & \colorbox{tabfirst}{\textbf{40.01}} & \colorbox{tabfirst}{\textbf{33.61}} & \colorbox{tabfirst}{\textbf{32.33}} & \colorbox{tabfirst}{\textbf{36.09}} & \colorbox{tabfirst}{\textbf{32.52}} & \colorbox{tabfirst}{\textbf{33.23}} & \colorbox{tabfirst}{\textbf{34.54}} & \colorbox{tabfirst}{\textbf{35.38}} \\ \midrule

\multicolumn{10}{c}{\textbf{Depth Rendering Quality (Depth L1$\downarrow$ [cm])}} \\ \midrule
Baseline & \ding{55} & \ding{55} & \ding{55} & \ding{55} & \ding{55} & \colorbox{tabthird}{0.76} & \ding{55} & \ding{55} & -- \\ 
CARL-T & \ding{55} & \ding{55} & \colorbox{tabsecond}{1.40} & \colorbox{tabsecond}{1.10} & \colorbox{tabsecond}{1.48} & \colorbox{tabsecond}{0.74} & \ding{55} & \ding{55} & -- \\ 
CARL-M & \ding{55} & \ding{55} & \ding{55} & \ding{55} & \ding{55} & {4.66} & \ding{55} & \ding{55} & -- \\ 
\textbf{{CARL-T\&M (Default)}} & \colorbox{tabfirst}{\textbf{0.25}} & \colorbox{tabfirst}{\textbf{0.19}} & \colorbox{tabfirst}{\textbf{0.55}} & \colorbox{tabfirst}{\textbf{0.81}} & \colorbox{tabfirst}{\textbf{0.52}} & \colorbox{tabfirst}{\textbf{0.58}} & \colorbox{tabfirst}{\textbf{0.52}} & \colorbox{tabfirst}{\textbf{0.63}} & \colorbox{tabfirst}{\textbf{0.63}} \\ 
\bottomrule
\end{tabular}}
\label{table:effect_CARL}
    \end{minipage}
    \hfill
    \begin{minipage}{0.42\textwidth}
        \centering
 \includegraphics[width=\textwidth]{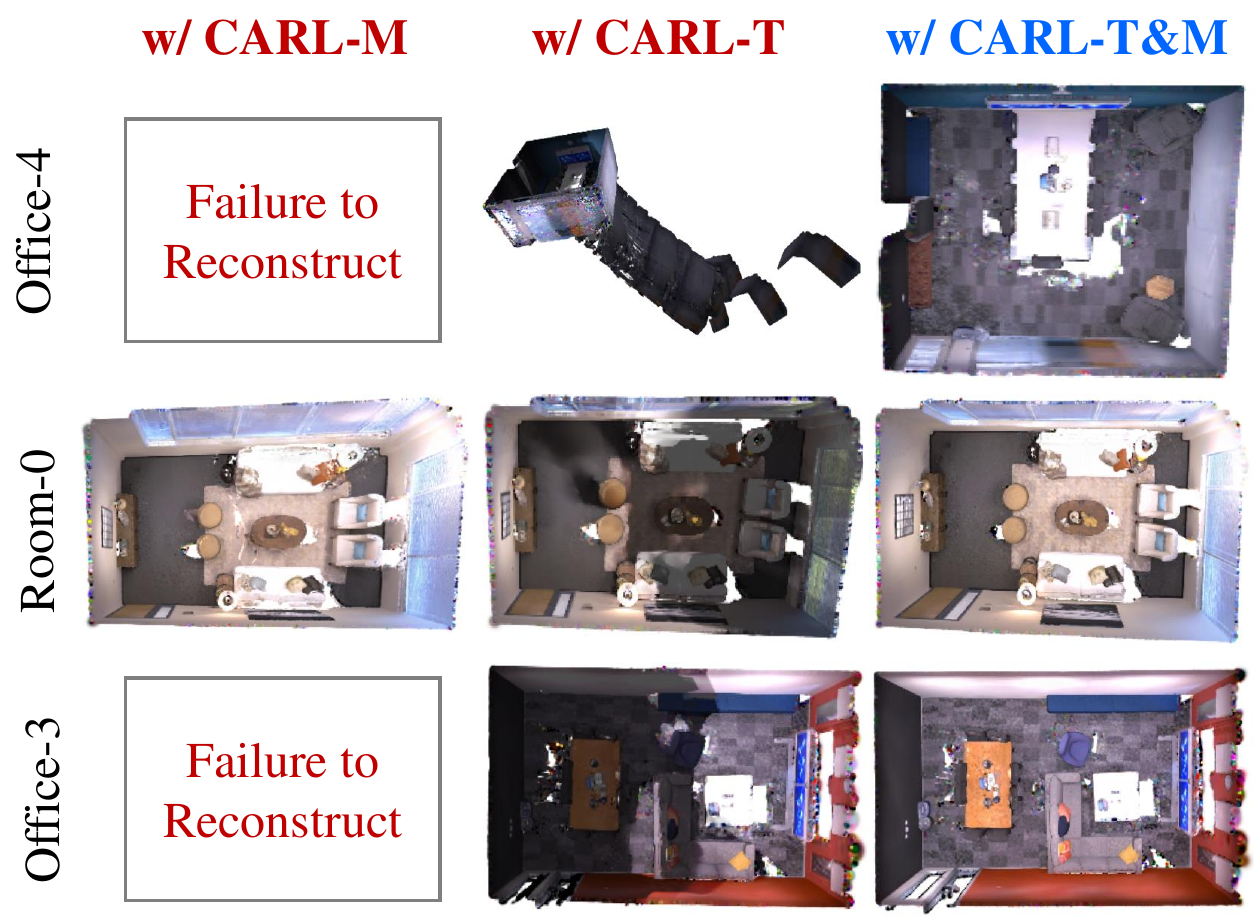}\vspace{-0mm}
\caption{{Effect of CARL on 3D reconstruction from sparse-view noisy video.}}
\label{fig:CARL_mapping_ablation}
    \end{minipage}

    \vspace{-5mm}
\end{figure}

\vspace{-2.5mm}
\subsection{{Pilot Study on Real-world Noisy Sparse-view Video}}\label{sec:real_world_study} \vspace{-1.5mm} 
To demonstrate \textit{CorrGS}'s practical applicability, we conducted a pilot study using noisy, sparse-view video captured under real-world conditions, including extreme lighting variations (\textit{e.g.}, darkness
\begin{wrapfigure}{r}{7.7cm} \centering\vspace{-4mm} \includegraphics[width=0.55\textwidth]{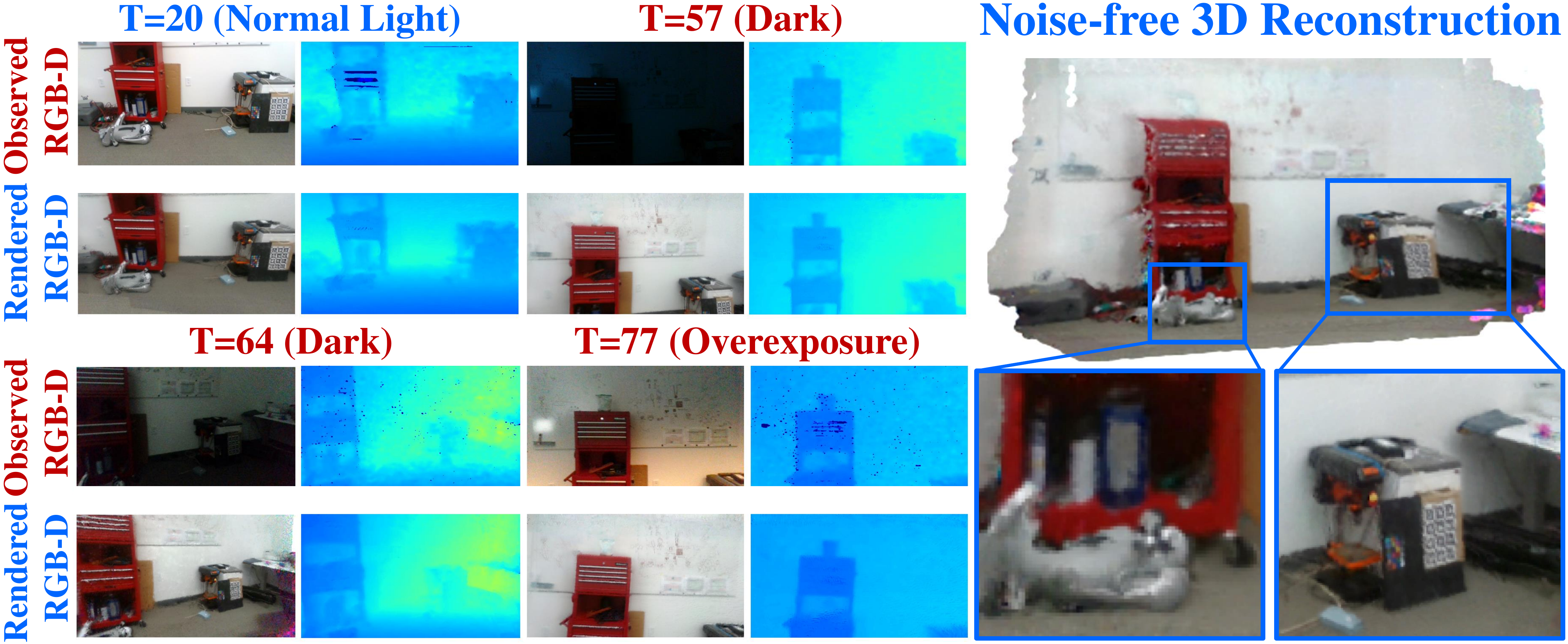} \vspace{-5mm} \caption{{RGB-D  rendering and 3D reconstruction  by \textit{CorrGS} on real, fast video with dynamic illumination.}} \label{fig:real_case}\vspace{-7mm} \end{wrapfigure}
 to overexposure), fast motion, and depth noise, recorded with a RealSense D435i sensor.
Fig.~\ref{fig:real_case} shows that \textit{CorrGS} effectively handles sensing degradation from dynamic illumination and fast motion, rendering noise-free RGB and reconstructing a clean, photorealistic 3D map from noisy RGB-D. Noisy depth in some frames (see Appendix Fig.~\ref{fig:failure_case}) suggests an opportunity to enhance the model with depth restoration alongside color restoration.

\vspace{-3mm}
\section{Conclusion and Future Work}
\label{sec:conclusions}\vspace{-1mm}
\noindent\textbf{Conclusion.} We presented a structured approach to tackle robust ego-motion estimation and photorealistic 3D reconstruction from noisy video. Our contributions are threefold: \textbf{1)} We developed a scalable noisy data synthesis pipeline incorporating comprehensive RGB-D perturbations for mobile agents, enabling realistic and diverse noisy datasets. \textbf{2)} We introduced the \textit{Robust-Ego3D} benchmark to systematically expose performance bottlenecks and guide targeted improvements. \textbf{3)} Building on insights from our benchmark analyses, we proposed \textit{CorrGS}, which significantly enhances robustness in ego-motion estimation and 3D reconstruction from noisy, sparse-view video. This work sets a new standard for assessing and advancing the robustness of dense Neural SLAM.

\textbf{Future work}. We hope to inspire exploration to \textit{embrace imperfect observations and turn them into clarity in 3D and ego-motion}, as the road ahead offers opportunities:  \textbf{1)} refining the approach to adaptively select perturbations based on real-world conditions, ensuring efficiency and relevance; \textbf{2)} leveraging generative models to produce richer, more diverse perturbations that better reflect real-world challenges; \textbf{3)} expanding \textit{Robust-Ego3D} to encompass outdoor and cross-domain environments, broadening the scope of model evaluation; and \textbf{4)} optimizing \textit{CorrGS} by implementing adaptive processing for key frames, further enhancing its performance and efficiency.

\clearpage
\bibliographystyle{iclr2025_conference}

\bibliography{main}

\clearpage

\appendix

\setcounter{table}{0}  
\setcounter{figure}{0}  
\setcounter{theorem}{0} 
\setcounter{equation}{0} 
\renewcommand{\thetable}{\thesection\arabic{table}}
\renewcommand{\thefigure}{\thesection\arabic{figure}}
\renewcommand{\thetheorem}{\Alph{theorem}}
\renewcommand{\theequation}{\thesection\arabic{equation}}
\appendix


\hypersetup{
    linkcolor=black,   
    citecolor=[RGB]{0,0,139},     
    filecolor=black,        
    urlcolor=[RGB]{0,0,139}       
}

\section*{{Table of Contents of the Appendix}}\label{sec:appendix}

\begin{itemize}
    \item \textcolor{black}{\textbf{A. \nameref{sec:perturbation-taxonomy-appendix}} \dotfill \pageref{sec:perturbation-taxonomy-appendix}}
    \begin{itemize}
        \item \textcolor{black}{\textbf{A.1  \nameref{subsec:perturbation-taxonomy-appendix-pose}}} \dotfill \pageref{subsec:perturbation-taxonomy-appendix-pose}
        \item \textcolor{black}{\textbf{A.2  \nameref{subsec:perturbation-taxonomy-appendix-rgb}}} \dotfill \pageref{subsec:perturbation-taxonomy-appendix-rgb}
        \item \textcolor{black}{\textbf{A.3  \nameref{subsec:perturbation-taxonomy-appendix-depth}}} \dotfill \pageref{subsec:perturbation-taxonomy-appendix-depth}
        \item \textcolor{black}{\textbf{A.4  \nameref{subsec:perturbation-taxonomy-appendix-rgbd}}} \dotfill \pageref{subsec:perturbation-taxonomy-appendix-rgbd}
    \end{itemize}
    \item \textcolor{black}{\textbf{B. \nameref{sec:benchmark-setup}}} \dotfill \pageref{sec:benchmark-setup}  
    \begin{itemize}
        \item \textcolor{black}{\textbf{B.1  \nameref{subsec:assumption}}} \dotfill \pageref{subsec:assumption}
        \item \textcolor{black}{\textbf{B.2  \nameref{subsec:benchmark-stat}}} \dotfill \pageref{subsec:benchmark-stat}
        \item \textcolor{black}{\textbf{B.3  \nameref{subsec:methods}}} \dotfill \pageref{subsec:methods}
        \item \textcolor{black}{\textbf{B.4  \nameref{subsec:hardware}}} \dotfill \pageref{subsec:hardware}
        \item \textcolor{black}{\textbf{B.5  \nameref{subsec:comparison_slam_benchmarks}}} \dotfill \pageref{subsec:comparison_slam_benchmarks}
    \end{itemize}
       \item \textcolor{black}{\textbf{C. {\nameref{sec:datasheet}}}} \dotfill \pageref{sec:datasheet}
    \begin{itemize}
        \item \textcolor{black}{\textbf{C.1  \nameref{subsec:datasheet-motivation}}} \dotfill \pageref{subsec:datasheet-motivation}
        \item \textcolor{black}{\textbf{C.2  \nameref{subsec:datasheet-composition}}} \dotfill \pageref{subsec:datasheet-composition}
        \item \textcolor{black}{\textbf{C.3  \nameref{subsec:datasheet-collection}}} \dotfill \pageref{subsec:datasheet-collection}
        \item \textcolor{black}{\textbf{C.4  \nameref{subsec:datasheet-preprocess}}} \dotfill \pageref{subsec:datasheet-preprocess}
        \item \textcolor{black}{\textbf{C.5  \nameref{subsec:datasheet-uses}}} \dotfill \pageref{subsec:datasheet-uses}
        \item \textcolor{black}{\textbf{C.6  \nameref{subsec:datasheet-distribution}}} \dotfill \pageref{subsec:datasheet-distribution}
        \item \textcolor{black}{\textbf{C.7  \nameref{subsec:datasheet-mainteanance}}} \dotfill \pageref{subsec:datasheet-mainteanance}
    \end{itemize}
    \item \textcolor{black}{\textbf{D. \nameref{sec:additional-results}}} \dotfill \pageref{sec:additional-results}
      \begin{itemize}
        \item \textcolor{black}{\textbf{D.1  \nameref{subsec:additional-results-analyses}}} \dotfill \pageref{subsec:additional-results-analyses} 
            \item \textcolor{black}{\textbf{D.2  \nameref{subsec:additional-results-orbslam3}}} \dotfill \pageref{subsec:additional-results-orbslam3} 
              \item \textcolor{black}{\textbf{D.2  \nameref{sec:qualitative_results}}} \dotfill \pageref{sec:qualitative_results}          
     \end{itemize}
            \item \textcolor{black}{\textbf{E. \nameref{sec:theory}}} \dotfill \pageref{sec:theory}
         \begin{itemize}
        \item \textcolor{black}{\textbf{E.1  \nameref{subsec:challenge_image_restoration}}} \dotfill \pageref{subsec:challenge_image_restoration} 
        \item \textcolor{black}{\textbf{E.2  \nameref{subsec:challenge_depth_missing}}} \dotfill \pageref{subsec:challenge_depth_missing}
        \item \textcolor{black}{\textbf{E.3  \nameref{subsec:challenge_large_pose_change}}} \dotfill \pageref{subsec:challenge_large_pose_change}
        \item \textcolor{black}{\textbf{E.4  \nameref{subsec:challenge_dynamic_motion}}} \dotfill \pageref{subsec:challenge_dynamic_motion}
        \end{itemize}
        \item \textcolor{black}{\textbf{F. \nameref{sec:detailed-result-with-error-bar}}} \dotfill \pageref{sec:detailed-result-with-error-bar}
    \item \textcolor{black}{\textbf{G. \nameref{sec:corrgs_method_details}}} \dotfill \pageref{sec:corrgs_method_details}   
     \begin{itemize}
        \item \textcolor{black}{\textbf{G.1  \nameref{sec:pipeline_psudocode}}} \dotfill \pageref{sec:pipeline_psudocode}
        \item \textcolor{black}{\textbf{G.2  \nameref{subsec:preliminary}}} \dotfill \pageref{subsec:preliminary}
        \item \textcolor{black}{\textbf{G.3  \nameref{sec:corr_based_pose_init}}} \dotfill \pageref{sec:corr_based_pose_init}
        \item \textcolor{black}{\textbf{G.4  \nameref{sec:corr_based_restoration_learning}}} \dotfill \pageref{sec:corr_based_restoration_learning}%
        \item \textcolor{black}{\textbf{G.5  \nameref{subsec:failure_corrgs}}} \dotfill \pageref{subsec:failure_corrgs}
    \end{itemize}
    \item \textcolor{black}{\textbf{H. \nameref{sec:corrgs_details}}} \dotfill \pageref{sec:corrgs_details}   
     \begin{itemize}
        \item \textcolor{black}{\textbf{H.1  \nameref{subsec:corrgs_details_sync_clean}}} \dotfill \pageref{subsec:corrgs_details_sync_clean}
        \item \textcolor{black}{\textbf{H.2  \nameref{subsec:corrgs_details_sync}}} \dotfill \pageref{subsec:corrgs_details_sync}
        \item \textcolor{black}{\textbf{H.3  \nameref{subsec:corrgs_details_real}}} \dotfill \pageref{subsec:corrgs_details_real}
        \item \textcolor{black}{\textbf{H.4  \nameref{sec:corrgs_exp_more_robustness}}} \dotfill \pageref{sec:corrgs_exp_more_robustness}
        \item \textcolor{black}{\textbf{H.5  \nameref{sec:corrgs_exp_more_robustness_in_the_wild}} \dotfill \pageref{sec:corrgs_exp_more_robustness_in_the_wild}}
         \end{itemize}
    \item \textcolor{black}{\textbf{I.  \nameref{subsec:corrgs_theory}}} \dotfill \pageref{subsec:corrgs_theory} 
        \begin{itemize}
        \item \textcolor{black}{\textbf{I.1 \nameref{subsec:pose_optim_GS}}} \dotfill \pageref{subsec:pose_optim_GS}
        \item \textcolor{black}{\textbf{I.2 \nameref{subsec:pose_optim_GS_impact}}} \dotfill \pageref{subsec:pose_optim_GS_impact}
        \item \textcolor{black}{\textbf{I.3 \nameref{subsec:pure_pose_optim_error}}} \dotfill \pageref{subsec:pure_pose_optim_error}
        \item \textcolor{black}{\textbf{I.4 \nameref{subsec:theoretical_cpl}}} \dotfill \pageref{subsec:theoretical_cpl}
        \item \textcolor{black}{\textbf{I.5 \nameref{subsec:theoretical_carl}}} \dotfill \pageref{subsec:theoretical_carl}
        \item \textcolor{black}{\textbf{I.6 \nameref{subsec:combined_impact_cpl_carl}}} \dotfill \pageref{subsec:combined_impact_cpl_carl}
        \end{itemize}
    \item \textcolor{black}{\textbf{J. \nameref{sec:more future work}}} \dotfill \pageref{sec:more future work}
     \begin{itemize}
        \item \textcolor{black}{\textbf{J.1  \nameref{sec:more future work_more_slam}}} \dotfill \pageref{sec:more future work_more_slam}
        \item \textcolor{black}{\textbf{J.2  \nameref{sec:more future work_additional}}} \dotfill \pageref{sec:more future work_additional}
    \end{itemize}
     \item \textcolor{black}{\textbf{K. \nameref{sec:broader_impact}}} \dotfill \pageref{sec:broader_impact}
     \item \textcolor{black}{\textbf{L. \nameref{sec:social-impact}}} \dotfill \pageref{sec:social-impact}
     \item \textcolor{black}{\textbf{M. \nameref{sec:availablility-maintenance}}} \dotfill \pageref{sec:availablility-maintenance}
     \item \textcolor{black}{\textbf{N. \nameref{sec:license}}} \dotfill \pageref{sec:license}
     \item \textcolor{black}{\textbf{O. \nameref{sec:public-assets}}} \dotfill \pageref{sec:public-assets}
\end{itemize}

\hypersetup{
    linkcolor=[RGB]{178,34,34},   
    citecolor=gray, 
    filecolor=black,        
    urlcolor=gray       
}

\clearpage

\section{\textcolor[rgb]{0.0,0.0,0.0}{More Details about Perturbation Taxonomy}}\label{sec:perturbation-taxonomy-appendix}

\begin{figure*}[h!]
	\centering
\includegraphics[width=\textwidth]{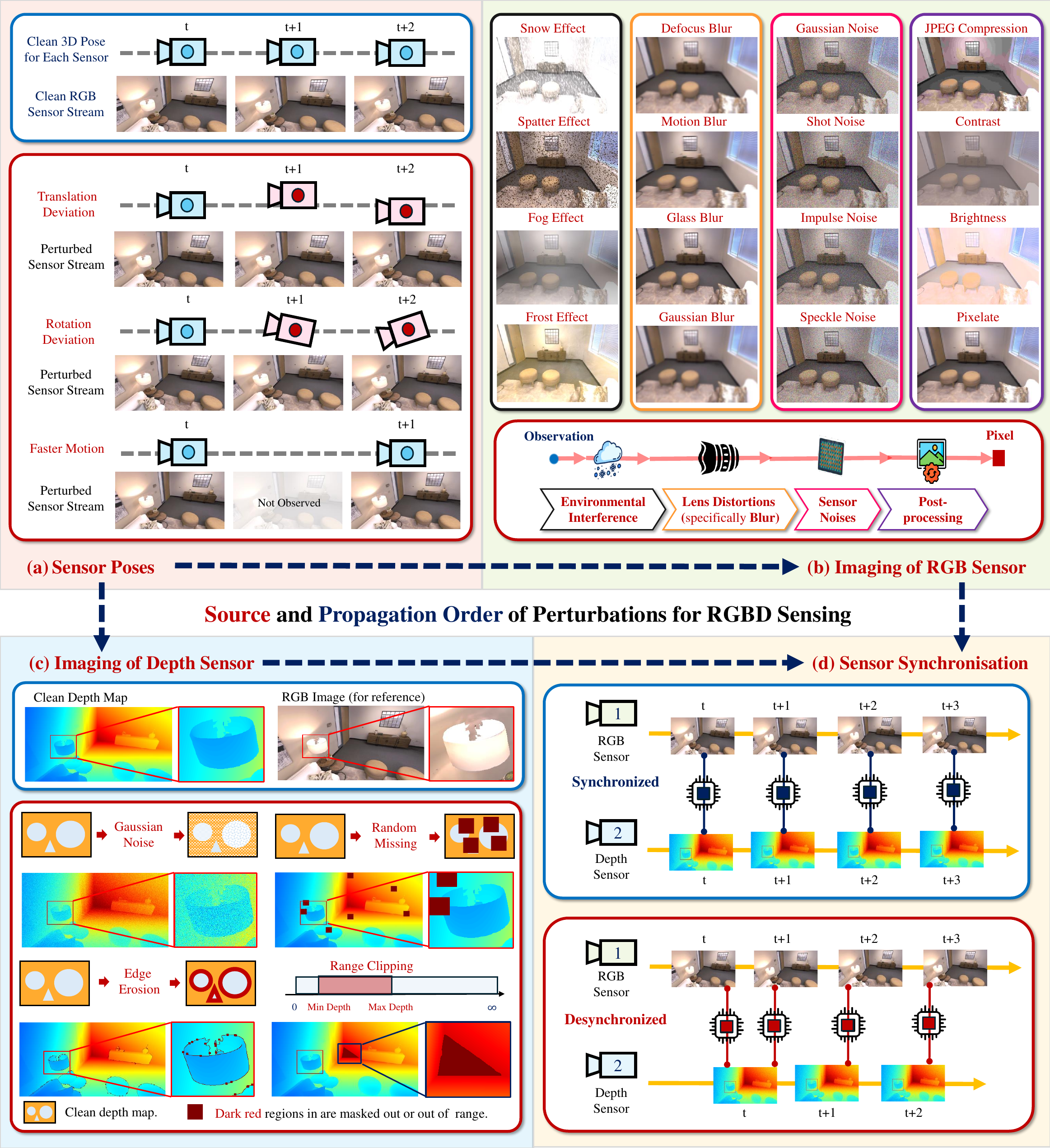} 
	\caption{\xxh{\textbf{Taxonomy of perturbations for RGB-D sensing.} The \textbf{\textit{sources of perturbations}} include: (\textbf{a}) sensor pose errors, (\textbf{b}) RGB and (\textbf{c}) depth imaging corruptions, and (\textbf{d}) RGB-D sensor synchronization errors. Dashed arrows illustrate the \textbf{\textit{propagation order}} of individual perturbations.} } 
	\label{fig:all-perturb-taxonomy_appendix}
\end{figure*}

As shown in Fig.~\ref{fig:all-perturb-taxonomy_appendix}, perturbations affecting RGB-D sensing systems for mobile agents can originate from sensor pose deviations, inaccuracies within the RGB-D imaging processes, and de-synchronization issues between RGB and depth sensors. Within a RGB-D sensing system, the order (dashed arrows of Fig.~\ref{fig:all-perturb-taxonomy_appendix}) in which perturbations occur and interact follows the sensing and processing procedure.

\clearpage

\subsection{\textcolor[rgb]{0.0,0.0,0.0}{Perturbation on Sensor Poses}}\label{subsec:perturbation-taxonomy-appendix-pose}

\begin{figure*}[h!]
	\centering
\includegraphics[width=\textwidth]{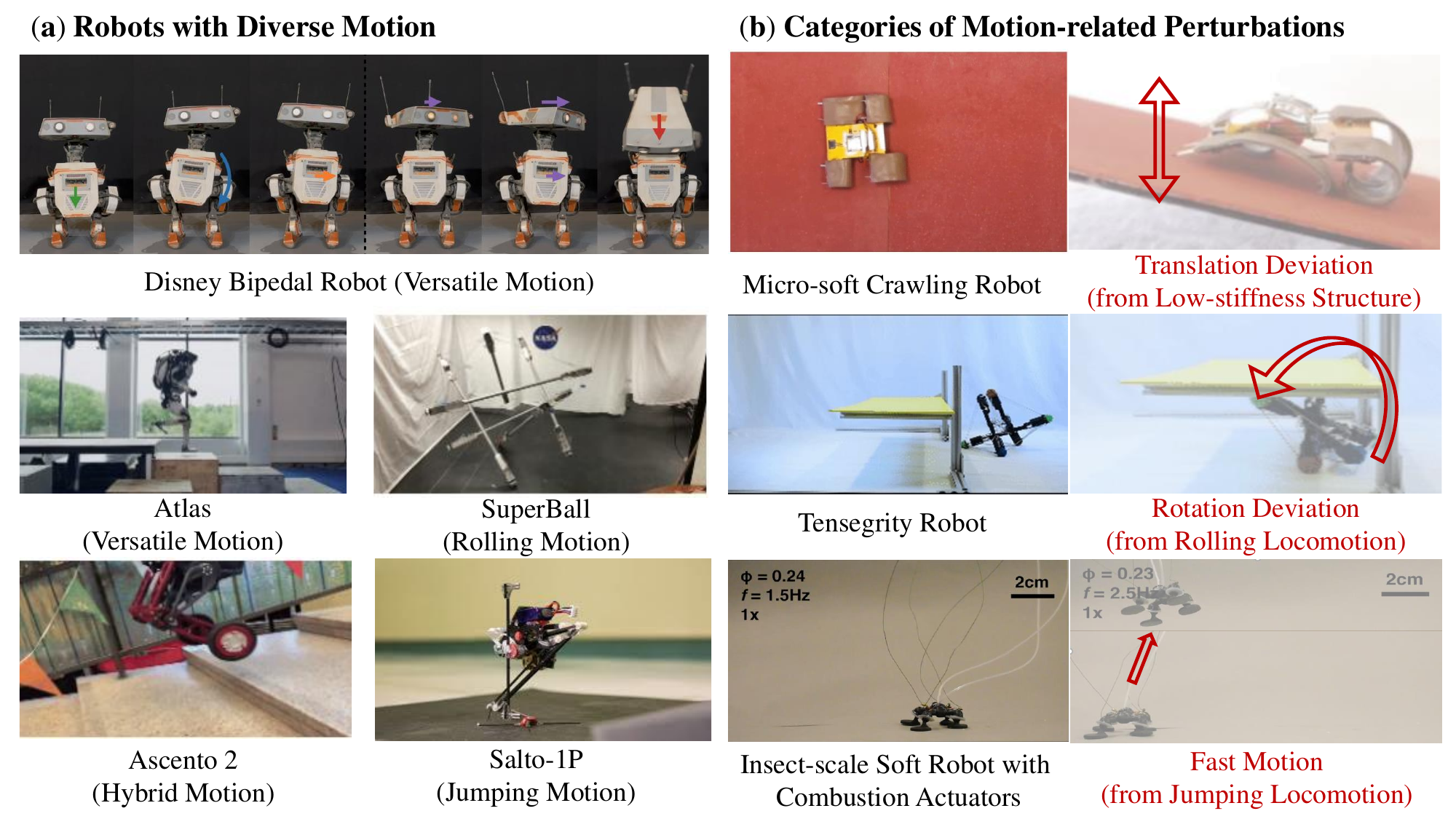} 
\caption{\textbf{Real-world mobile platforms can have  versatile motion. } (\textbf{a}) Autonomous robots exhibit highly dynamic and versatile motions, pushing the boundaries of traditional control systems.  These diverse movement types challenge SLAM systems, which must operate in these dynamic contexts. (\textbf{b}) To address these challenges, we categorize motion perturbations into three key types: translation deviations (\textit{e.g.}, caused by low-stiffness structures in soft robots~\citep{patel2023highly}), rotation deviations (\textit{e.g.}, from rolling locomotion in tensegrity robots~\citep{huang2022live}), and fast-motion deviations (\textit{e.g.}, arising from high-speed or jumping movements~\citep{aubin2023powerful}). These categories capture the full spectrum of dynamic disturbances that perception systems must handle for reliable operation.}
	\label{fig:motion_perturbation_example}
\end{figure*}

\noindent\textbf{Motivation from real-world mobile agents.} Deploying sensing systems on autonomous robots requires addressing the diverse and dynamic motions these robots experience, as shown in Fig.~\ref{fig:motion_perturbation_example}a. Examples include the Disney bipedal robot~\citep{Grandia2024} and Atlas humanoid robot~\citep{feng2014optimization}, both of which exhibit complex full-body control, as well as the SuperBall tensegrity robot~\citep{sabelhaus2015system}, which rolls, and the hybrid jumping and wheel-based locomotion of Ascento~\citep{klemm2019ascento} and Salto-1P~\citep{yim2018precision}, which excels in rapid jumping maneuvers. While these versatile motions allow for more advanced robot manipulation and locomotion, they present significant challenges when sensors are placed on the robots for dense SLAM. In real-world scenarios, autonomous agents frequently face external disturbances such as platform vibrations, uneven terrain, and dynamic movements, complicating sensor pose estimation and degrading SLAM performance. Existing benchmarks like Replica~\citep{straub2019replica} and ShapeNet~\citep{chang2015shapenet} typically assume smooth, stable sensor trajectories, which fail to capture the complexities and instabilities of real-world environments. For example, a visual SLAM system operating on a legged robot navigating rough terrain is prone to significant motion deviations and vibrations, negatively affecting both sensor pose estimation and the quality of visual data.

To better emulate real-world challenges, we introduce \textit{rigid-body motion perturbations} that simulate disturbances by perturbing the sensor’s trajectory in both rotational and translational axes. Grounded in rigid-body transformation theory, this method uses precise mathematical models to inject dynamic perturbations that mirror the unpredictable conditions encountered in realistic environments, providing a more rigorous test for SLAM systems. As shown in Fig.~\ref{fig:motion_perturbation_example}b, we categorize these motion perturbations into three main types: translation deviations (\textit{e.g.}, from low-stiffness structures in soft robots~\citep{patel2023highly}), rotation deviations (\textit{e.g.}, from rolling locomotion in tensegrity robots~\citep{huang2022live}), and fast-motion deviations (\textit{e.g.}, arising from high-speed movements or jumping~\citep{aubin2023powerful}). These categories capture the full spectrum of dynamic disturbances that SLAM systems must handle, ensuring a more robust and comprehensive evaluation of their performance and reliability in real-world scenarios.

\begin{figure*}[t!]
	\centering
	\includegraphics[width=\textwidth]{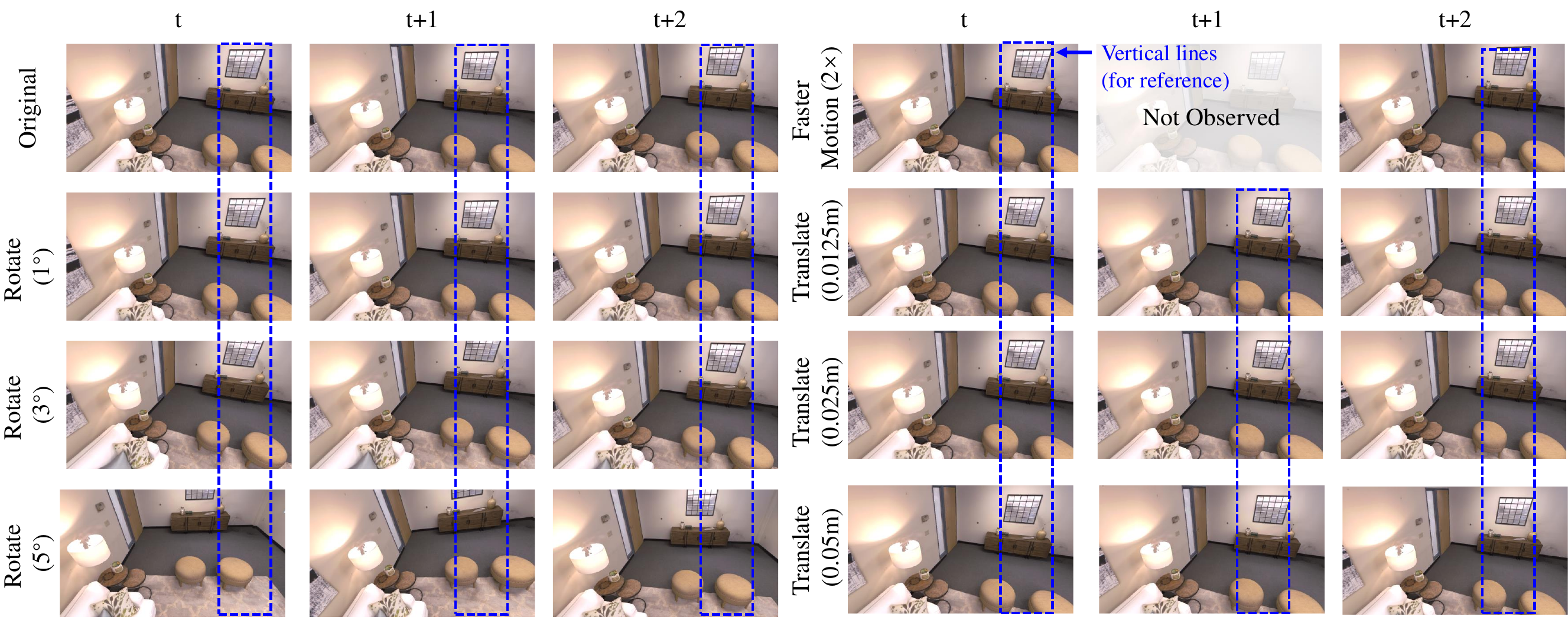} \caption{\textbf{{Rendered RGB image streams under trajectory-level perturbations}}, including translation deviations (Translate), rotation deviations (Rotate), and the faster motion effect. } 
	\label{fig:trajectory-perturbation-illustration}
\end{figure*}

\paragraph{(a) Motion deviations using rigid-body transformations.}

A rigid-body transformation in 3D space consists of both a rotational and a translational component. For each sensor pose, represented by a rotation matrix $\mathbf{R} \in \boldsymbol{SO}(3)$ and a translation vector $\mathbf{t} \in \mathbb{R}^3$, we introduce perturbations that simulate physical disturbances in both orientation and position.

\textbf{Rotation deviations.} ($\Delta \mathbf{R} \in \boldsymbol{SO}(3)$):  
The rotation of the sensor is modeled by a 3x3 orthogonal matrix $\mathbf{R}$, which belongs to the special orthogonal group $SO(3)$. To simulate realistic rotational deviations, we introduce a perturbation $\Delta \mathbf{R}$, also a member of $SO(3)$, which represents a small rotational offset. The perturbed orientation $\mathbf{R}'$ is then given by:
\begin{equation}
\mathbf{R}' = \mathbf{R} \Delta \mathbf{R}
\end{equation}
where $\Delta \mathbf{R}$ is a rotation matrix corresponding to small angular deviations around the principal axes (x, y, z). These deviations are sampled from a zero-mean multivariate Gaussian distribution $\mathcal{N}(0, \Sigma_R)$, with covariance $\Sigma_R$ governing the severity of the perturbation. Specifically, $\Delta \mathbf{R}$ can be computed as:
\begin{equation}
\Delta \mathbf{R} = \exp \left( [\omega]_\times \right)
\end{equation}
where $[\omega]_\times$ is the skew-symmetric matrix representation of the angular velocity vector $\omega = [\omega_x, \omega_y, \omega_z]^\top$, and $\omega$ is sampled from $\mathcal{N}(0, \Sigma_R)$. This formulation ensures the perturbed rotation matrix remains a valid element of $SO(3)$, preserving the properties of rigid-body rotations.

\textbf{Translation deviations.} ($\Delta \mathbf{t} \in \mathbb{R}^3$):  
The translational position of the sensor is represented by a vector $\mathbf{t}$, which defines the sensor's position in the world frame. To model translational perturbations, we add a random perturbation $\Delta \mathbf{t}$, sampled from a zero-mean Gaussian distribution $\mathcal{N}(0, \Sigma_t)$, resulting in the perturbed position:
\begin{equation}
\mathbf{t}' = \mathbf{t} + \Delta \mathbf{t}
\end{equation}
where $\Sigma_t$ is the covariance matrix that controls the magnitude and directionality of the perturbations. These translational deviations reflect small shifts caused by external physical factors such as terrain-induced vibrations or mechanical imperfections in the robot platform. 

Together, the perturbed sensor pose $(\mathbf{R}', \mathbf{t}')$ models the real-world instability experienced by mobile platforms, where both rotational and translational movements are affected by external disturbances. 

In Fig.~\ref{fig:trajectory-perturbation-illustration}, we present the rendered sensor streams under varying severity levels of trajectory-level perturbations, encompassing translational deviations, rotational deviations, and faster motion effects. Although the rotational and translational deviations we examined result in minor changes in observations between adjacent frames, these perturbations lead to significant performance degradation across the majority of benchmarking SLAM models. As depicted in Fig.~\ref{fig:taxonomy-trajectory-misalign-perturbation-simplified}, even slight trajectory-level deviations can have a substantial impact on trajectory estimation performance.

\begin{figure}[t!]
	\centering
	\includegraphics[width=\textwidth]{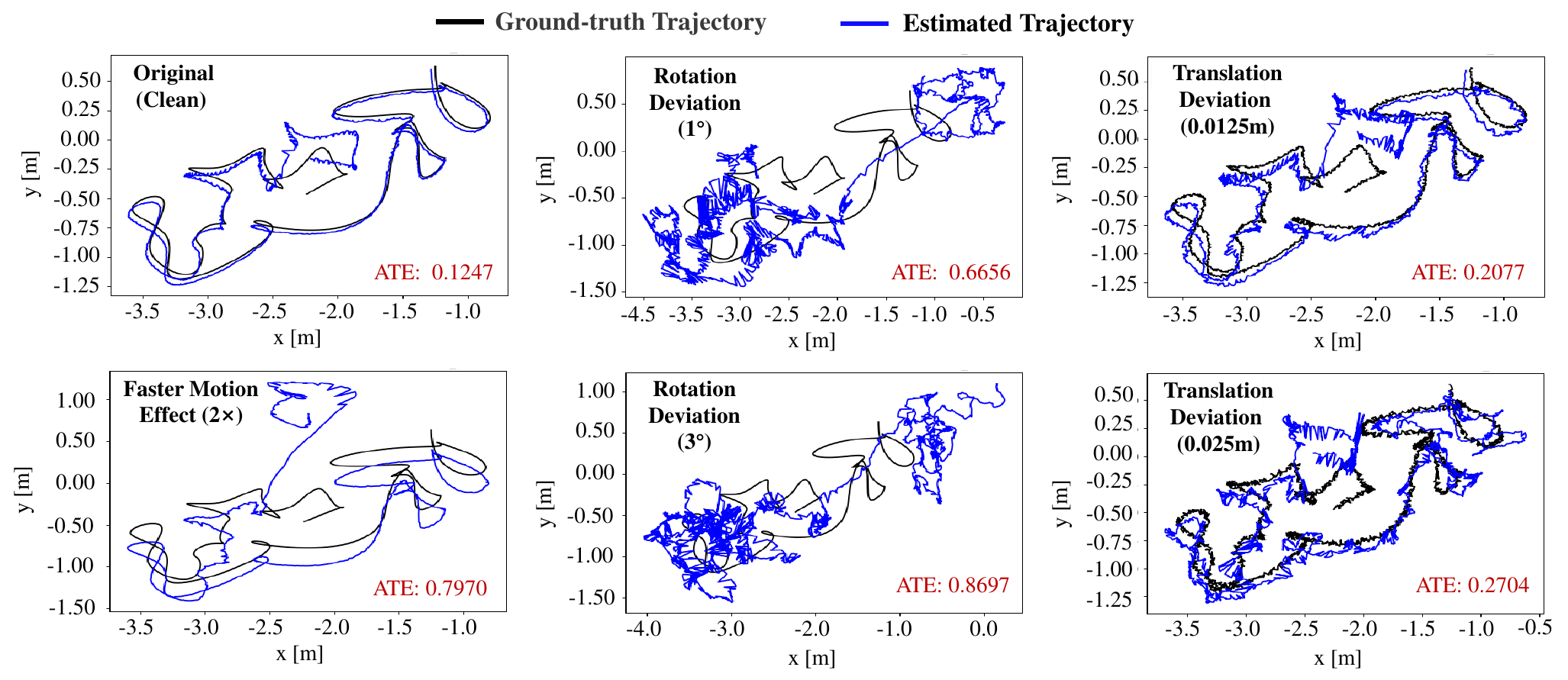} 
	\caption{\textbf{Illustrations of motion deviations and the faster motion effect.} We present the synthesized ground-truth trajectories (in black) and the estimated trajectories (in blue) obtained using the CO-SLAM~\citep{coslam} model, which shows that slight trajectory deviations can have a  significant impact on the trajectory estimation performance.}
	\label{fig:taxonomy-trajectory-misalign-perturbation-simplified}
\end{figure} 

\paragraph{(b) Faster motion effect.}

In addition to introducing pose perturbations, we evaluate the robustness of sensing systems by simulating higher-speed motion scenarios. To do this, we down-sample the sensor’s observation stream along the time axis, effectively increasing the perceived motion between consecutive frames. Let the original sensor stream have a time step $t$, and we introduce a faster motion effect by down-sampling at intervals $t' = k t$, where $k > 1$ is the down-sampling factor. This creates larger inter-frame motion, challenging SLAM algorithms to maintain accurate pose estimation despite fewer temporal observations.

\clearpage

\subsection{\textcolor[rgb]{0.0,0.0,0.0}{Perturbation on RGB Sensor Imaging}}\label{subsec:perturbation-taxonomy-appendix-rgb}

\noindent\textbf{Imaging corruptions is common in real-world data collected by mobile systems.} In real-world deployments, the perception model of autonomous systems frequently encounters imaging corruptions that degrade sensor data quality. These include motion blur, high illumination conditions, and depth data loss due to range limitations, as seen in the examples in Fig.~\ref{fig:imaging_perturbation_example}. For instance, the Blimp robot experiences motion blur and high illumination during operation, while the depth collection via RealSense D435 RGB-D sensor could suffer from severe depth missing due to depth perception limitation, highlighting the challenges sensing systems face in maintaining accurate sensor pose estimation and environmental mapping under such conditions.

\begin{figure*}[t!]
	\centering
	\includegraphics[width=\textwidth]{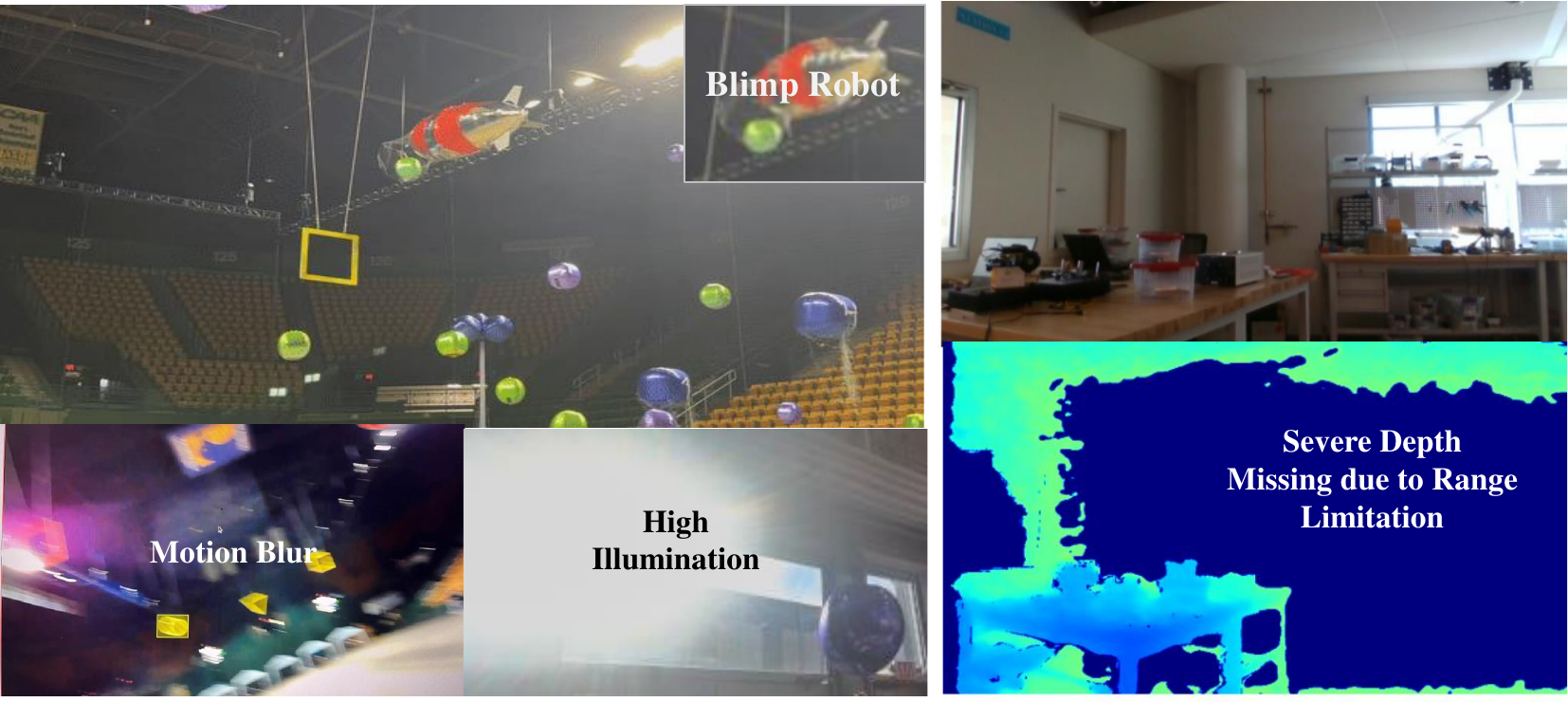} \caption{\textbf{Corruption in RGB-D imaging is a frequent issue in real-world data collection.} Challenges like motion blur and extreme illumination are common in the RGB sensor stream collected by mobile platforms (\textbf{Left}), while depth data loss due to range limitations significantly affects modeling (\textbf{Right}) } 
	\label{fig:imaging_perturbation_example}
\end{figure*}

The perturbations introduced for RGB imaging are designed to model potential error sources across the entire image formation and processing pipeline, spanning from the physical 3D scene to the final 2D image. These perturbation sources encompass environmental interferences affecting light propagation, lens-related optical distortions, sensor noise due to imperfections in image acquisition hardware, and post-processing effects applied after image capture. This comprehensive approach simulates real-world image degradations in a physically grounded manner to test the robustness of perception systems under diverse visual conditions.

\paragraph{(a) Environmental Interference}

Environmental interference, such as adverse weather conditions, affects light transmission through scattering, absorption, and occlusion. To simulate these effects, we employ \textit{alpha blending techniques} that mix a layer representing the environmental interference with the clean image, generating a composite that simulates the perturbed conditions. This method aims at approximating real-world phenomena where environmental factors interfere with direct light paths.

\begin{itemize}
    \item \textbf{Snow Effect}: Snow introduces random, localized variations in light intensity due to snowflakes scattering light. To model this, we generate a random noise layer with white spots simulating snow and blend it with the original image~\citep{von2019simulating}.
    
    \item \textbf{Frost Effect}: Frost simulates a translucent overlay that partially occludes the scene. This effect is modeled by applying a semi-transparent whitening filter to the image using a weighted combination of the clean image and a whitened version of it~\citep{michaelis2019benchmarking}.

    \item \textbf{Fog Effect}: Fog scatters light uniformly, reducing contrast and blurring distant objects. We simulate fog by linearly interpolating between the original image and a constant gray-value image to simulate the effect of reduced visibility~\citep{hendrycks2019robustness}.

    \item \textbf{Spatter Effect}: Spatter simulates droplets or streaks on the camera lens, obstructing part of the image. This effect is modeled using a semi-transparent mask applied to the image, with regions randomly designated as perturbed (spattered) or transparent. The spatter mask controls the distribution and severity of the distortion.
\end{itemize}

\paragraph{(b) Lens-Related Distortions (Blur)}

Lens imperfections or motion during image capture can introduce blurring effects, which we simulate by convolving the input image with specific blur kernels. These kernels approximate various physical processes responsible for image blur, such as defocusing or rapid motion during capture.

\begin{itemize}
    \item \textbf{Defocus Blur}: Caused by the camera lens being out of focus, this effect is simulated by convolving the image with a circular disc kernel (bokeh) to model the blurring of regions outside the camera's depth of field.

    \item \textbf{Glass Blur}: This effect emulates viewing an image through textured or distorted glass. We model this by convolving the image with an irregular blur kernel, simulating the scattering of light by the glass surface.

    \item \textbf{Motion Blur}: When the camera or objects in the scene move rapidly during image capture, it results in motion blur. We simulate this effect by convolving the image with a linear kernel oriented in the direction of motion to model the smearing caused by the movement.

    \item \textbf{Gaussian Blur}: To approximate image smoothing caused by lens imperfections, we apply Gaussian blur, which convolves the image with a Gaussian kernel. The degree of blurring is controlled by the standard deviation of the Gaussian function, simulating varying levels of softening.
\end{itemize}

\paragraph{(c) Sensor Noise}

RGB image sensors are inherently imperfect, introducing noise during image acquisition that affects image quality. We model several types of sensor noise, each representing a different physical process that affects image formation.

\begin{itemize}
    \item \textbf{Gaussian Noise}: Simulated by adding random pixel-wise noise sampled from a Gaussian distribution with zero mean, this models sensor imperfections where pixel intensities are corrupted by random fluctuations.
    
    \item \textbf{Shot Noise}: Shot noise arises from the quantum nature of light, caused by the random arrival of photons at the sensor. It is modeled using a Poisson distribution~\citep{hasinoff2014photon} to account for the discrete nature of photon interactions during image capture.

    \item \textbf{Impulse Noise}: Also known as salt-and-pepper noise, this effect randomly corrupts individual pixels. For each pixel, a random value is sampled, and if it falls within a specified range, the pixel is either set to the minimum or maximum intensity value, while other pixels remain unchanged.

    \item \textbf{Speckle Noise}: Speckle noise is modeled as multiplicative noise, commonly seen in coherent imaging systems like radar and ultrasound. The intensity of each pixel in the image is perturbed as follows:
    \begin{equation}
    I'(x,y) = I(x,y) \times (1 + \rho \times \eta)
    \end{equation}
    where $\rho$ controls the noise intensity, and $\eta$ is a Gaussian noise term.
\end{itemize}

\paragraph{(d) Post-Processing Perturbations}

Post-processing effects modify the image after it has been captured, often degrading quality through compression or global adjustments.

\begin{itemize}
    \item \textbf{Brightness}: Adjusting the brightness of the image involves adding a constant value to every pixel. This models exposure adjustments or lighting variations that affect the overall intensity of the image.

    \item \textbf{Contrast}: Contrast changes affect the variance in pixel intensities. We model contrast adjustments by linearly scaling pixel intensities about the mean intensity $\mathcal{J}$:
    \begin{equation}
    I'(x,y) = \beta \times (I(x,y) - \mathcal{J}) + \mathcal{J}
    \end{equation}
    where $\beta$ controls the level of contrast.

    \item \textbf{JPEG Compression}: JPEG compression introduces artifacts during lossy image compression. These artifacts, such as blocking effects and loss of detail, are simulated by reducing the image quality and reintroducing compression-induced distortions.

    \item \textbf{Pixelation}: This effect reduces image resolution by dividing the image into pixel blocks and replacing each block with its average value, simulating a drop in resolution commonly seen in low-quality video feeds.
\end{itemize}

\noindent\textbf{Implementation of RGB imaging perturbations.} We illustrate RGB imaging perturbations under different severity levels in Fig.~\ref{fig:sensor-corruption-full}. As shown in Table~\ref{tab:configuration_rgb_imaging}, we define five severity levels for each type of RGB imaging perturbation, following established robustness evaluation literature~\citep{hendrycks2019robustness}. 

\begin{figure*}[t]
	\centering
\includegraphics[width=\textwidth]{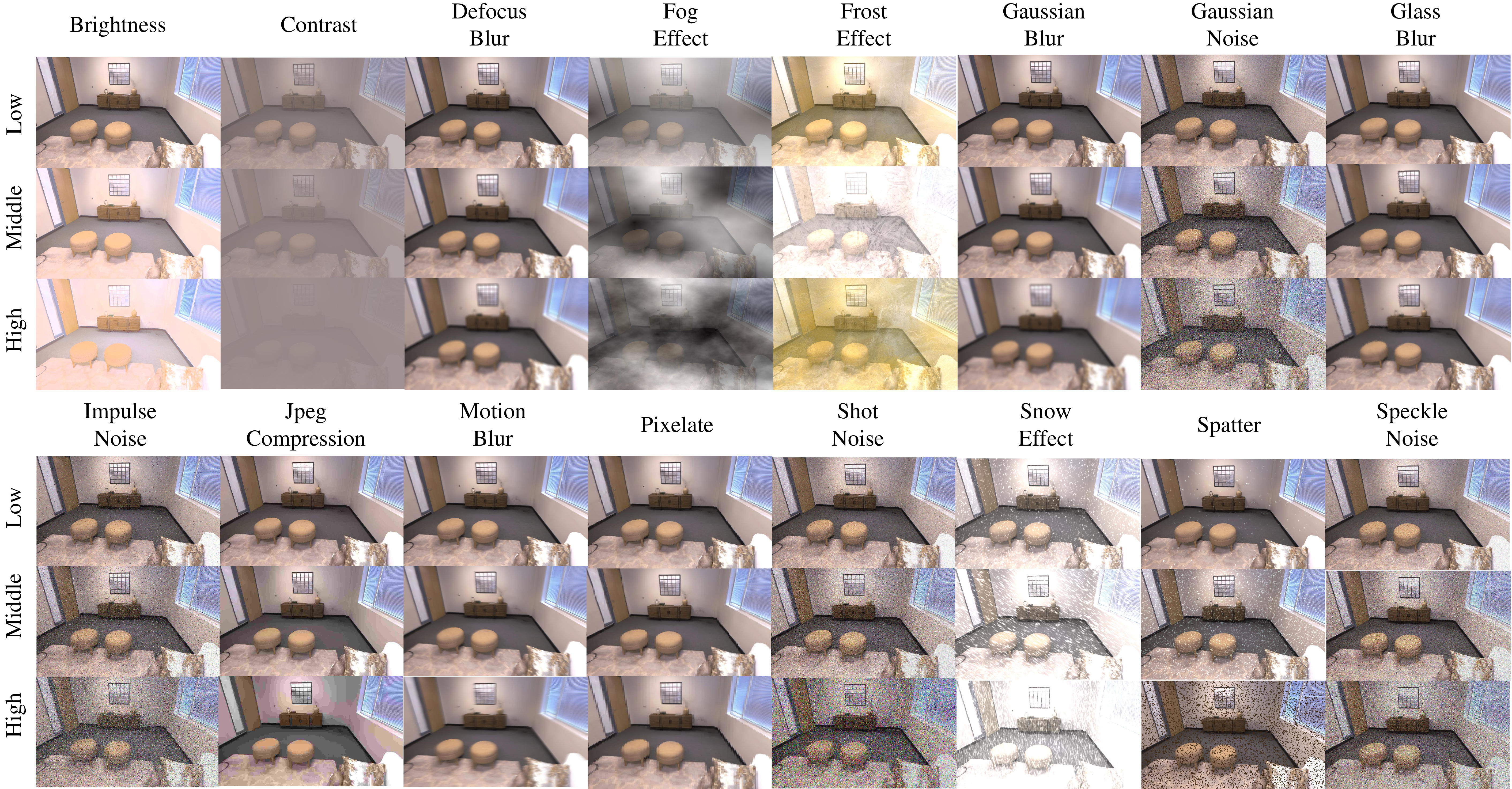} 
	\caption{
\textbf{Illustration of  RGB imaging perturbations under different severity levels}. We consider \textbf{16} common image corruption types~\citep{hendrycks2019robustness} from \textbf{\textit{4}} main categories of perturbations for  robustness evaluation: \textbf{(1)} \textbf{{noise-based distortions}}: \textit{Gaussian Noise}, \textit{Shot Noise}, \textit{Impulse Noise}, and \textit{Speckle Noise}; \textbf{(2)} \textbf{{blur-based effects}}: \textit{Defocus Blur}, \textit{Glass Blur}, \textit{Motion Blur}, and \textit{Gaussian Blur}; \textbf{(3)} \textbf{{environmental interferences}}: \textit{Snow Effect}, \textit{Frost Effect}, \textit{Fog Effect}, and \textit{Spatter Effect}. \textbf{(4)} \textbf{{post-processing manipulations}}: \textit{Brightness}, \textit{Contrast}, \textit{Pixelate}, and \textit{JPEG Compression}. Each perturbation type is further split into \textbf{\textit{3}} severity levels (low, middle, and high) for our benchmarking evaluation, which corresponds to Level 1, Level 3, and Level 5 of Table~\ref{tab:configuration_rgb_imaging}.  } 
 
	\label{fig:sensor-corruption-full}
\end{figure*}

\begin{table}[t]
\centering

\caption{Specific configurations of the RGB imaging perturbations.}
\renewcommand\arraystretch{1.2}
\resizebox{\textwidth}{!} 
{ 
\begin{tabular}{l|c|ccccc}
\toprule \toprule
\textbf{Perturbation} & \textbf{Parameter} & \textbf{Level 1} & \textbf{Level 2} & \textbf{Level 3} & \textbf{Level 4} & \textbf{Level 5} \\
\midrule

{Snow Effect} & \makecell{(Mean, std, scale, \\ threshold, blur radius, \\ blur std, blending ratio)} & \makecell{0.1, 0.3, 3.0, \\ 0.5, 10.0, 4.0, 0.8} & \makecell{0.2, 0.3, 2, \\ 0.5, 12, 4, 0.7} & \makecell{0.55, 0.3, 4, \\ 0.9, 12, 8, 0.7} & \makecell{0.55, 0.3, 4.5, \\ 0.85, 12, 8, 0.65} & \makecell{0.55, 0.3, 2.5, \\ 0.85, 12, 12, 0.55} \\

Frost Effect & (Frost intensity, texture influence) & (1.00, 0.40) & (0.80, 0.60) & (0.70, 0.70) & (0.65, 0.70) & (0.60, 0.75) \\
Fog Effect & (Thickness, smoothness) & (1.5, 2.0) & (2.0, 2.0) & (2.5, 1.7) & (2.5, 1.5) & (3.0, 1.4) \\

Spatter Effect & \makecell{(mean, standard deviation, \\ sigma, threshold,\\ scaling factor, complexity of effect)} & \makecell{(0.65, 0.3, 4, \\ 0.69, 0.6, 0)} & \makecell{(0.65, 0.3, 3, \\  0.68, 0.6, 0)} & \makecell{(0.65, 0.3, 2,\\ 0.68, 0.5, 0)} & \makecell{(0.65, 0.3, 1, \\ 0.65, 1.5, 1)} & \makecell{(0.67, 0.4, 1, \\ 0.65, 1.5, 1)} \\
\midrule
Defocus Blur & (Kernel radius, alias blur) & (3.0, 0.1) & (4.0, 0.5) & (6.0, 0.5) & (8.0, 0.5) & (10.0, 0.5) \\
Glass Blur & (Sigma, max delta, iterations) & (0.7, 1.0, 2.0) & (0.9, 2.0, 1.0) & (1.0, 2.0, 3.0) & (1.1, 3.0, 2.0) & (1.5, 4.0, 2.0) \\
Motion Blur & (Radius, sigma) & (10, 3) & (15, 5) & (15, 8) & (15, 12) & (20, 15) \\
Gaussian Blur & Sigma & 1 & 2 & 3 & 4 & 6 \\

\midrule

Gaussian Noise & Noise scale & 0.08 & 0.12 & 0.18 & 0.26 & 0.38 \\
Shot Noise & Photon number & 60 & 25 & 12 & 5 & 3 \\
Impulse Noise & Noise amount & 0.03 & 0.06 & 0.09 & 0.17 & 0.27 \\
Speckle Noise & Noise scale & 0.15 & 0.2 & 0.35 & 0.45 & 0.6 \\
\midrule

Brightness Increase & Adjustment ratio & 0.1 & 0.2 & 0.3 & 0.4 & 0.5 \\
Contrast Decrease & Adjustment of pixel mean & 0.40 & 0.30 & 0.20 & 0.10 & 0.05 \\
JPEG Compression & Compression quality & 25 & 18 & 15 & 10 & 7 \\
Pixelate & Resize factor & 0.60 & 0.50 & 0.40 & 0.30 & 0.25 \\
\bottomrule \bottomrule
\end{tabular}}
\label{tab:configuration_rgb_imaging}

\end{table}

\clearpage

\subsection{\textcolor[rgb]{0.0,0.0,0.0}{Perturbation on Depth Sensor Imaging}}\label{subsec:perturbation-taxonomy-appendix-depth}

\begin{figure}[t]
	\centering
\includegraphics[width=0.48\textwidth]{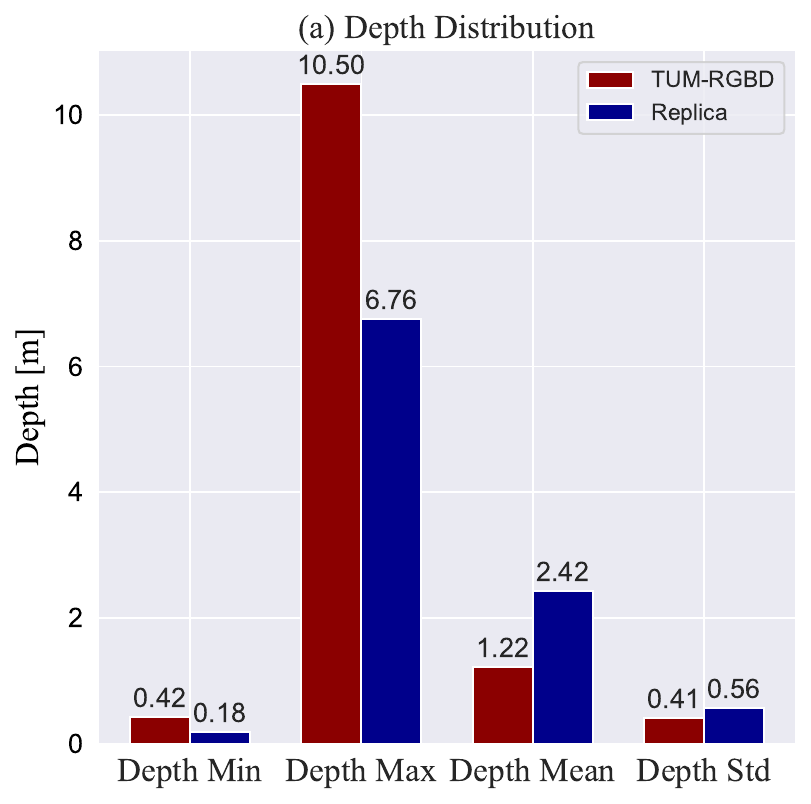}  
\includegraphics[width=0.34\textwidth]{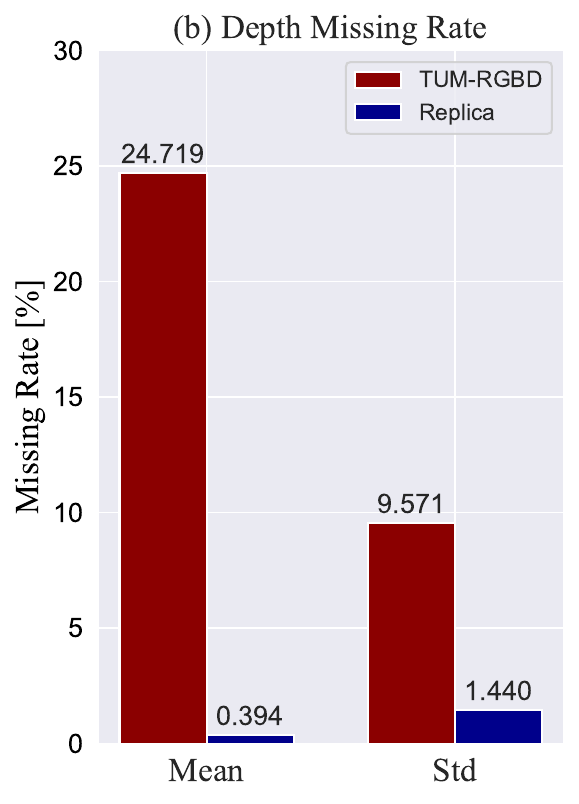} \vspace{-2mm}
	\caption{\textbf{Discrepancy in depth characteristics between  simulated Replica~\citep{straub2019replica} and real TUM-RGBD~\citep{TUM-RGBD} datasets.}.}
	\label{fig:depth_distribution_comparison}
\end{figure}

\noindent\textbf{Motivation.} As illustrated in Fig.~\ref{fig:depth_distribution_comparison}, there exists a noticeable disparity between simulated clean depth data, such as those from the Replica~\citep{straub2019replica} SLAM benchmark, and real-world noisy depth data obtained from the TUM-RGBD~\citep{schubert2018tum} SLAM benchmark. For example, while the Replica dataset has a minimum depth of approximately 0.18 m, the TUM-RGBD data exhibits a minimum depth of 0.4 m, reflecting the range limitations typical of real-world depth sensors. Additionally, real-world depth sensors tend to have a significantly higher rate of missing data, with TUM-RGBD showing approximately 25\% missing depth values compared to less than 0.4\% in Replica. These discrepancies highlight the need to introduce perturbations that model the limitations of real depth sensors, bridging the gap between simulated and real-world data.

In this section, we propose four key depth perturbations that emulate the sensing limitations and environmental constraints faced by real-world depth sensors, thereby offering a more realistic evaluation of SLAM systems.

\paragraph{(a) Gaussian Noise}

Real-world depth sensors, such as time-of-flight (ToF) and structured light systems, are subject to random measurement noise due to factors like photon counting errors and sensor precision limits. This noise is often modeled as Gaussian-distributed errors across the depth map. To simulate this, we perturb each pixel $(x, y)$ in the depth map $D$ by adding Gaussian noise $\eta \sim \mathcal{N}(0, \sigma^2)$, where $\sigma$ is the standard deviation of the noise distribution, dependent on the sensor's characteristics:
\begin{equation}
D'(x, y) = D(x, y) + \eta
\end{equation}
This models the stochastic fluctuations in depth measurements, capturing the inherent noise that arises during the depth sensing process.

\paragraph{(b) Edge Erosion}

Depth sensors, particularly ToF systems, often suffer from multi-path interference and sensor limitations near depth discontinuities, such as object edges. This results in inaccurate depth estimates along these regions. We simulate this by performing edge detection to identify the boundary pixels in the depth map and subsequently eroding a fraction of these edge pixels, representing measurement failures near complex geometries:
\begin{equation}
D'(x, y) = \begin{cases}
    \text{VOID} & \text{if } (x, y) \in \mathcal{P} \\
    D(x, y) & \text{otherwise}
\end{cases}
\end{equation}
where $\mathcal{P}$ represents the set of detected edge pixels, and \text{VOID} signifies missing data. This emulates the failure of depth sensors to capture accurate data near edges due to signal interference.

\paragraph{(c) Random Missing Depth Data}

Real-world depth sensors frequently encounter occlusions, reflective surfaces, or environmental conditions that block the sensor's line of sight, leading to missing or incomplete depth measurements. We simulate this phenomenon by applying a binary mask $M$ to the depth map, randomly occluding portions of the image to reflect areas where depth data is not captured:
\begin{equation}
D'(x, y) = D(x, y) \cdot M(x, y)
\end{equation}
where $M(x, y)$ is a binary mask generated by randomly sampling rectangular patches. Pixels within the occluded regions are set to a VOID value, representing missing data due to sensor limitations.

\paragraph{(d) Range Clipping}

Depth sensors, especially consumer-grade systems like Kinect or RealSense, have limited depth ranges, with measurements becoming unreliable beyond a specific maximum distance or too noisy below a minimum distance. We simulate this range limitation by clipping depth values outside a predefined range $[D_{min}, D_{max}]$. Depth values that fall outside this range are replaced with a VOID value:
\begin{equation}
D'(x, y) = \begin{cases}
    \text{VOID} & \text{if } D(x, y) < D_{min} \text{ or } D(x, y) > D_{max} \\
    D(x, y) & \text{otherwise}
\end{cases}
\end{equation}
This perturbation reflects the operational range limitations of real-world depth sensors, where objects too far or too close to the sensor result in unreliable or missing depth data.

\paragraph{Implementation of Depth Imaging Perturbations}

As shown in Table~\ref{tab:configuration_depth_imaging}, to better reflect real-world depth sensing conditions, we apply varying severity levels for each of the proposed perturbations, inspired by the real depth distribution of datasets such as TUM-RGBD~\citep{schubert2018tum}. The specific implementation of these depth perturbations is available through our Depth Imaging Perturbation Synthesis Toolbox. The toolbox defines five severity levels for each perturbation, ensuring a comprehensive evaluation of SLAM systems across different conditions. 

\begin{table}[t]
\centering

\caption{Specific configurations of the depth imaging perturbations.}
\renewcommand\arraystretch{1.2}
\resizebox{\textwidth}{!} 
{ 
\begin{tabular}{l|c|ccccc}
\toprule \toprule
\textbf{Perturbation} & \textbf{Parameter} & \textbf{Level 1} & \textbf{Level 2} & \textbf{Level 3} & \textbf{Level 4} & \textbf{Level 5} \\
\midrule

Gaussian Noise & Noise scale & 0.1 & 0.2 & 0.3 & 0.4 & 0.5 \\
Edge Erosion & Erosion rate & 0.015 & 0.020 & 0.025 & 0.03 & 0.035 \\
Random missing depth data & Missing rate (\%) & 10 & 15 & 20 & 25 & 30 \\
Range clipping & (Min depth, Max depth) &(0.2, 4.4) & (0.3, 4.2)& (0.4, 4.0)& (0.5, 3.8)
& (0.6, 3.6)\\

\bottomrule \bottomrule
\end{tabular}
}\label{tab:configuration_depth_imaging}
\end{table}

\subsection{\textcolor[rgb]{0.0,0.0,0.0}{Perturbation on Multi-sensor Synchronization}}  \label{subsec:perturbation-taxonomy-appendix-rgbd}

\noindent\textbf{Motivation.} While integrated RGB-D sensors like RealSense, Kinect, and ZED minimize desynchronization issues due to their tightly coupled design, many real-world applications require more flexibility in sensor selection. In this work, we address a more generalized scenario where RGB and depth sensors operate independently, rather than assuming they are synchronized. This approach provides greater adaptability for sensor configurations in autonomous systems.
For instance, lightweight ToF-based depth sensors, such as the Arducam ToF, paired with compact RGB cameras, offer advantages in weight and energy efficiency—critical factors for SLAM systems deployed on platforms with strict payload and power constraints, such as micro UAVs. However, using distributed RGB and depth sensors introduces potential inconsistencies between data streams. De-synchronization, caused by differences in signal frequencies, can lead to temporal misalignment, resulting in inaccuracies in the combined RGB-D data. Addressing these challenges is crucial for enabling robust performance in resource-constrained environments.

\xxh{To emulate sensor delays in cases where multiple sensors within an RGB-D sensing system are not synchronized, we introduce temporal misalignment between sensor streams (see Fig. 2d of the main paper). Consider two initially synchronized sensor streams, denoted as $\mathbf{S}_{1}(t)$ and $\mathbf{S}_{2}(t)$. We simulate a delay in the second stream by shifting its sensor sequence by a frame interval $\Delta$. This creates perturbed streams $\mathbf{S}_{1}'(t)= \mathbf{S}_1(t)$ and $\mathbf{S}_{2}'(t)= \mathbf{S}_2(t + \Delta)$. While one sensor stream is shifted, the poses associated with each sensor reading remain unchanged. This ensures the system is operating on data grounded in the past, reflecting the real-world scenario of misaligned sensor information.}

\clearpage
\section{\textcolor[rgb]{0.0,0.0,0.0}{More Details about \textit{Robust-Ego3D} Benchmark}}\label{sec:benchmark-setup}

\subsection{\textcolor[rgb]{0.0,0.0,0.0}{Assumptions for Benchmarking Setup}}\label{subsec:assumption}

While our customizable noisy data synthesis pipeline allows for noisy data generation from any clean 3D scene, incorporating any composition of perturbation types, with any number and configuration of RGB and depth sensors, we initialize the \textit{Robust-Ego3D} benchmark for model robustness evaluation under the following assumptions:

\vspace{1.0mm}\noindent\textbf{Task.} We focus on the standard (passive) SLAM setting, assuming the absence of active decision-making  processes.

\vspace{1.0mm}\noindent\textbf{Model.} Our analysis is centered on vision-oriented SLAM scenarios, specifically targeting monocular and RGB-D settings. We assume the use of dense depth representation as opposed to sparse depth data obtained from a LiDAR scanner. In addition, the SLAM system is presumed to have known motion and observation models. 

\vspace{1.0mm}\noindent\textbf{Perturbation.} 
Although our noisy data synthesis pipeline is capable of generating SLAM benchmarks with multiple heterogeneous perturbations, we concentrate on investigating the performance degradation caused by individual sensor or trajectory perturbations. This focused approach is designed to dissect the system's response to isolated perturbations, allowing precise quantification of their specific impacts on SLAM performance. By analyzing the degradation induced by individual perturbations, we can effectively assess the system's robustness in a controlled manner and identify the root causes of performance degradation. This knowledge is crucial for developing targeted mitigation strategies that address the most vulnerable aspects, \textit{i.e.}, \textit{Achilles' Heel}, of the whole SLAM system.
Also, we model these perturbations using simplified linear models (\textit{e.g.}, Gaussian noise assumptions), in line with precedent set by established literature~\citep{hendrycks2019robustness,wang2020tartanair,RobustNav}. While these simplified perturbations may not fully capture the complexity of real-world scenarios, they offer interpretability and facilitate analysis across different perturbation types.

\vspace{1.0mm}\noindent\textbf{3D scene}. We assume that the environment is static, meaning there are no moving or dynamically changing objects within the scene. Also, the scene is bounded, typically referring to an indoor setting with predefined boundaries or limits.

\clearpage
\subsection{\textcolor[rgb]{0.0,0.0,0.0}{{\textit{Robust-Ego3D}} Benchmark Setup Details and Statistics}}\label{subsec:benchmark-stat}

\vspace{1.0mm}\noindent{\textbf{Data source for rendering}}. We render the RGB-D sensor streams using 3D scene models sourced from the \textit{Replica} dataset~\citep{straub2019replica}, which comprises real 3D scans of indoor scenes. We select the same set of eight rooms and offices as the (clean) Replica-SLAM dataset~\citep{imap} for consistent comparison. \xxh{Each sequence has 2,000 frames at 1200×680 resolution.}

\noindent{\textbf{Benchmark sequence number distribution.}} Using our established taxonomy of perturbations for SLAM and the noisy data synthesis pipeline, we have created a large-scale SLAM robustness benchmark called \textit{Robust-Ego3D} to evaluate the robustness of monocular and  RGB-D SLAM methods by incorporating various perturbations that mimic real-world sensor and motion effects. 

Each perturbed setting of our benchmark is rendered in eight scenes from the 3D indoor scan dataset Replica~\citep{straub2019replica}. This process generates eight sequences from a single trajectory. For each perturbed setting, we calculate the average result from 24 experimental data points (eight sequences, each repeated three times). Then, we report the averaged result for each perturbed setting. Specifically, for the RGB imaging perturbation, we present the averaged result across three severity levels, under both static and dynamic perturbation modes.

\begin{figure*}[t!]
	\centering
\includegraphics[width=0.495\textwidth]{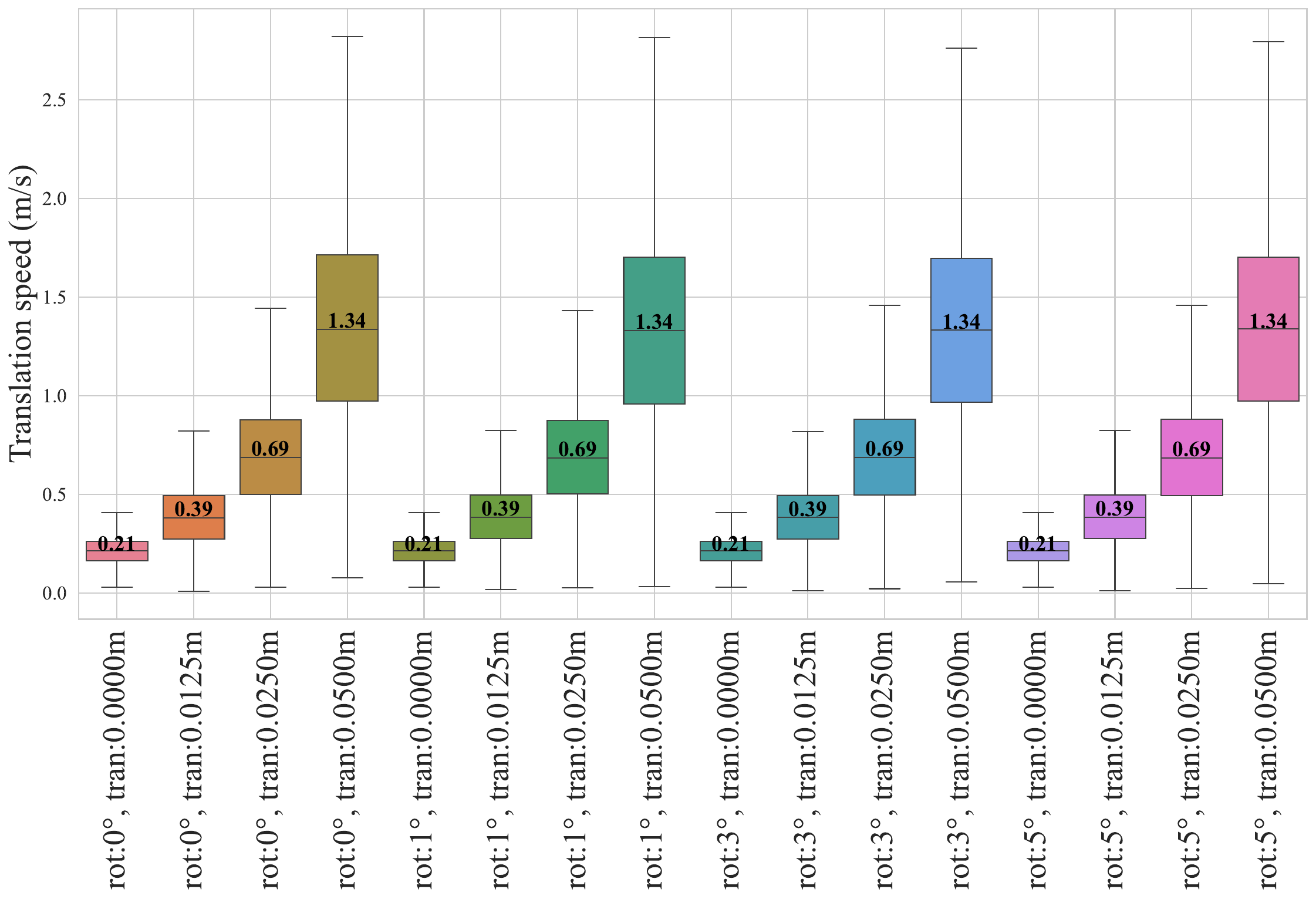}
\includegraphics[width=0.495\textwidth]{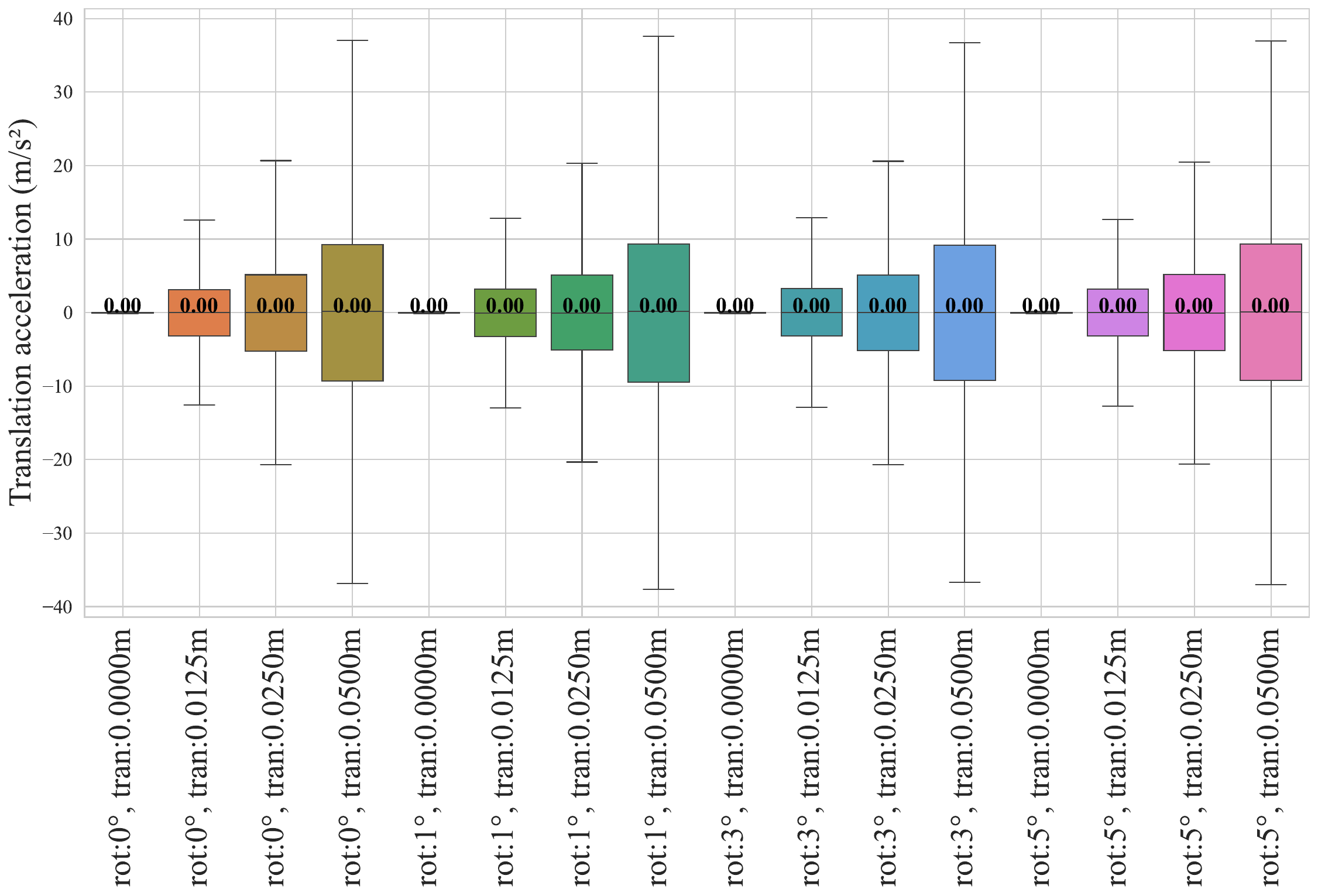}
\includegraphics[width=0.495\textwidth]{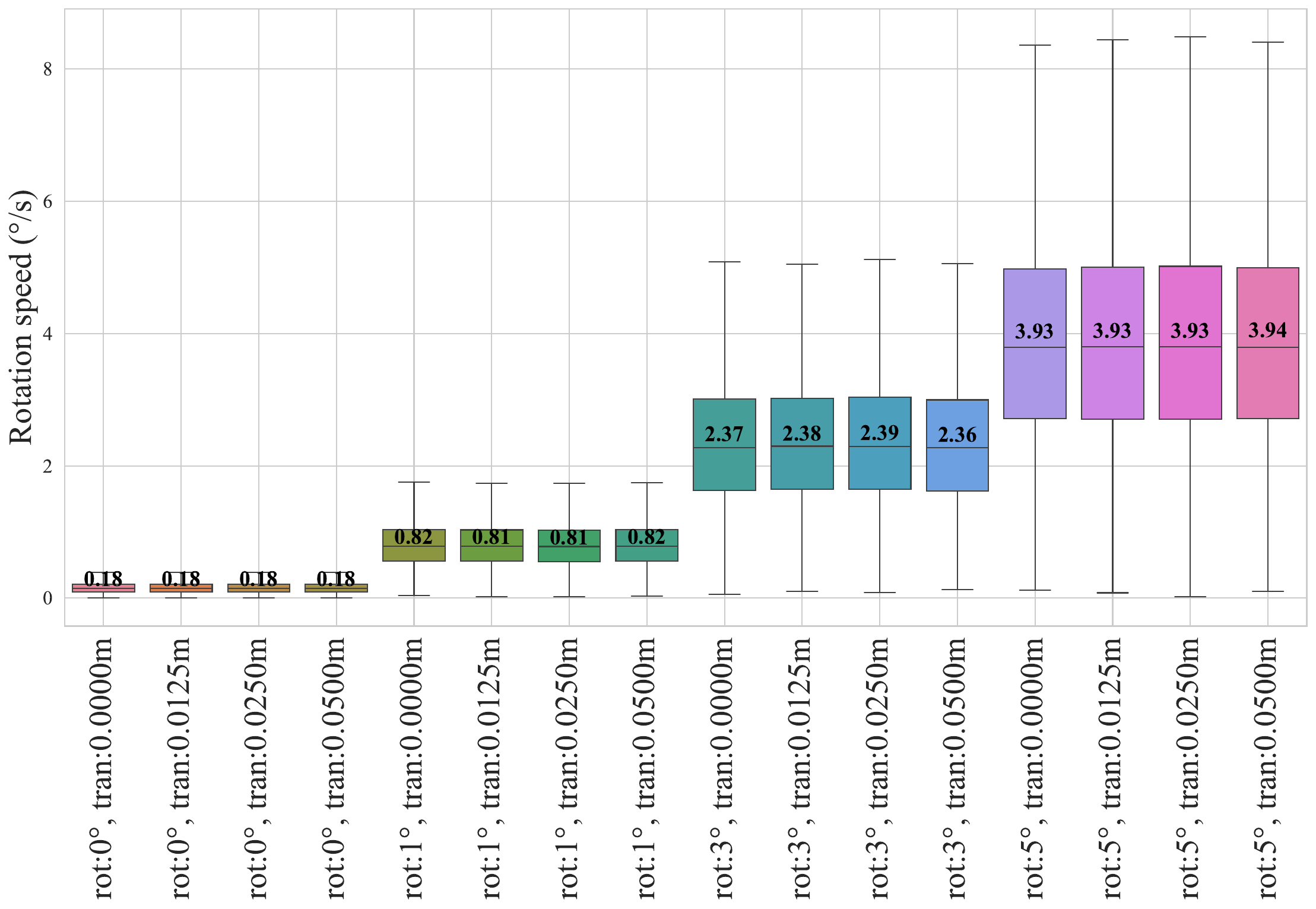}
\includegraphics[width=0.495\textwidth]{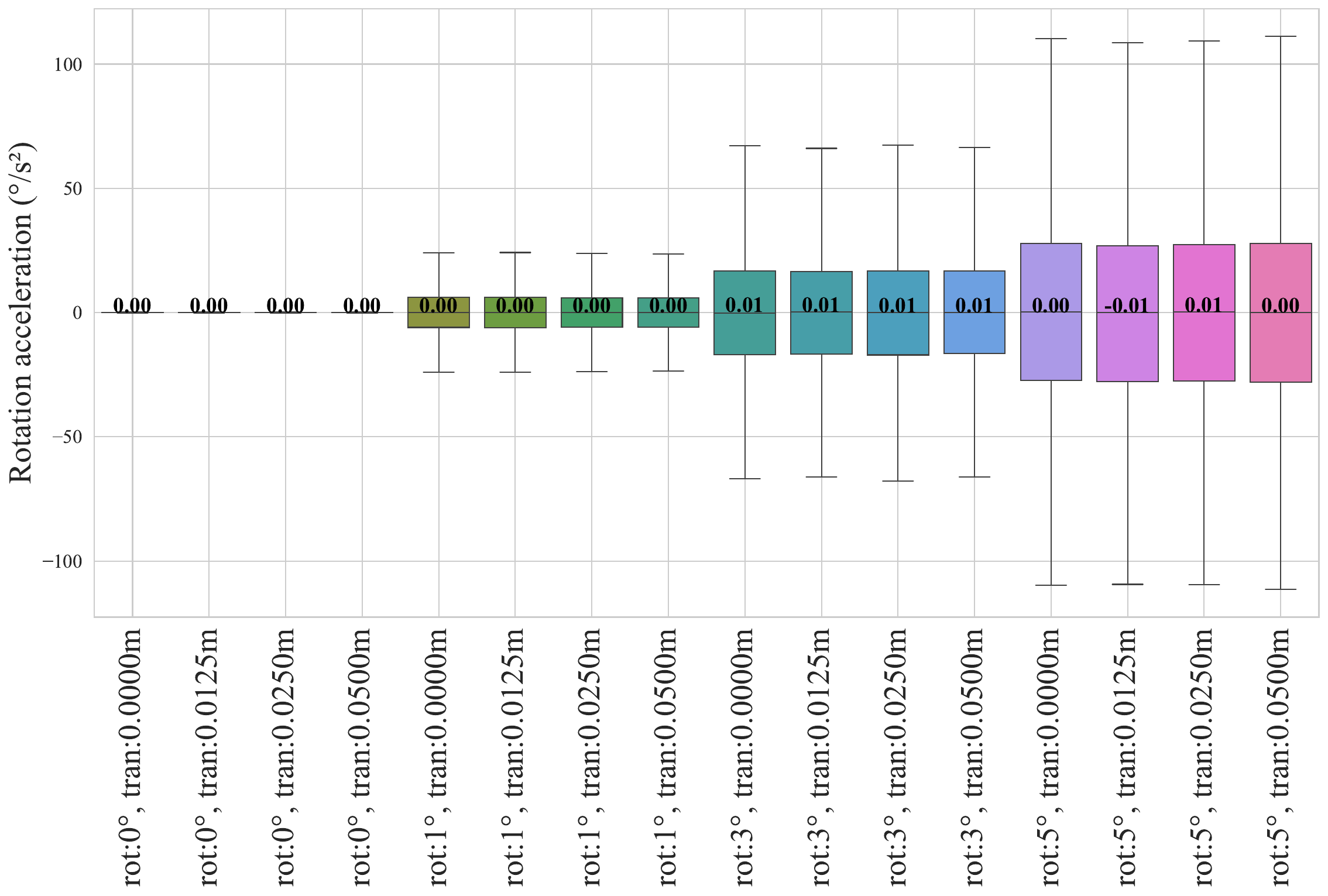}
\caption{\textbf{Motion statistics of trajectory distribution under varying combinations of translation and rotation deviations}. Assuming a frame rate of 20 frames per second for the SLAM system, \textit{i.e.}, a time interval of 0.05 seconds between neighboring pose frames, we present the motion distribution of perturbed trajectories in the proposed \textit{Robust-Ego3D} benchmark. The figures show the distribution of translation speed (\textbf{Top Left}), translation acceleration (\textbf{Top Right}), rotation speed (\textbf{Bottom Left}), and rotation acceleration (\textbf{Bottom Right}). We report the mean value of each setting.}
	\label{fig:traj-motion-distribution}
\end{figure*}

We provide details about the specific distribution and setup of each perturbed setting as follows:

\begin{itemize}
  \item \textbf{8 original clean sequences:} These sequences replicate the quality and the sequence number of the original Replica SLAM dataset~\citep{imap}.
  \item \textbf{768 sequences with RGB imaging perturbations:} We apply 16 different types of image-level perturbations at 3 severity levels (Level 1, Level3, and Level 5 of Table~\ref{tab:configuration_rgb_imaging}), both under static and dynamic conditions. 
    \item \textbf{32 sequences with depth imaging perturbations:} This category consists of 4 types of perturbations. For the depth noise, we adopt the hyperparameters of the Gaussian noise distribution as specified in previous literature~\citep{hendrycks2019robustness}. Moreover, we set the depth missing rate to 20\% and establish the depth clipping range based on the real-world depth distribution of the TUM-RGBD dataset~\citep{TUM-RGBD}. The severity strength of each depth perturbation is shown in the Level 3 column of Table~\ref{tab:configuration_depth_imaging}. 
  \item \textbf{24 sequences with faster motion effects:} These sequences involve faster speed than the original sequences, with variations of two, four, and eight times the original speed.
  \item \textbf{120 sequences with motion deviations:} This category includes pure rotation deviation, pure translation deviation, and combined transformation matrix deviation. We define three severity levels for both rotation and translation deviations, and sample the deviation from a Gaussian distribution. Specifically, for rotation deviation, we introduce random deviations in rotation around the x, y, and z axes, with mean values of zero and standard deviations of 1, 3, and 5 degrees at each pose frame. For translation deviation, we introduce random deviations in the x, y, and z axes, with mean values of zero and standard deviations of 0.0125, 0.025, and 0.05 meters at each pose frame. In Fig. \ref{fig:traj-motion-distribution}, we show the motion statistics of the perturbed trajectory sequences under varying combinations of translation and rotation deviations. This category of trajectory-deviated sequences encompasses a broad spectrum of motion speeds and accelerations, enabling a progressive evaluation of the robustness of SLAM models against increasingly challenging motion types. These insights are especially valuable for evaluating the implementation of SLAM systems in high-speed scenarios or on agile robot platforms exposed to significant vibrations.
  \item \textbf{48 sequences with RGB-D sensor de-synchronization:} We consider both static and dynamic perturbation models for multi-sensor misalignment. In the static mode, a constant time delay is synthesized between the two sensor streams, while in the dynamic perturbation model, there is a varying time delay between the streams. Specifically, the multi-sensor misalignment perturbation sequences consist of 24 sequences with a fixed cross-sensor frame delay interval ($\Delta$) of 5, 10, and 20 frames, as well as 24 sequences with dynamic perturbation where $\Delta$ deviates by 1 frame from the fixed intervals of 5, 10, and 20 frames.
\end{itemize}

Overall, this benchmark dataset enables a comprehensive evaluation of existing SLAM algorithms under simulated perturbations, providing a thorough assessment of the robustness of multi-modal SLAM systems in a wide range of challenges.

\clearpage
\subsection{\textcolor[rgb]{0.0,0.0,0.0}{More Details about  Baseline Models for Benchmarking}}
\label{subsec:methods}

\noindent\textbf{Additional descriptions about benchmarking models.} While previous SLAM robustness evaluations primarily focused on classical methods~\citep{wang2020tartanair,bujanca2021robust}, our benchmark encompasses both classical and learning-based SLAM systems. As shown in Table~\ref{tab:slam_methods}, in addition to the classical SLAM model ORB-SLAM3~\citep{orbslam3}, whose results are used for reference, we \textbf{\textit{mainly evaluate top-performing dense Neural SLAM models}} on standard benchmarks with diverse architecture design. iMAP\citep{imap} and Nice-SLAM\citep{niceslam} are representative  methods with implicit neural representations. GO-SLAM\citep{zhang2023goslam} focuses on global pose optimization, while CO-SLAM\citep{coslam} utilizes hybrid neural encoding. SplaTAM-S~\citep{keetha2024splatam} employs Gaussian-based~\citep{gaussiansplatting} map representation. 
The hyperparameters are set based on the recommendations given in the original papers or use default settings otherwise.
Below, we offer additional descriptions of these models.
\begin{itemize}
    \item \textbf{ORB-SLAM3}~\citep{orbslam3}: An extension of ORB-SLAM2~\citep{orbslam2} that incorporates a multi-map system and visual-inertial odometry, enhancing robustness and performance.
    \item \textbf{iMAP}~\citep{imap}: A neural RGB-D SLAM system that utilizes the MLP representation to achieve joint tracking and mapping.
    \item \textbf{Nice-SLAM}~\citep{niceslam}: A neural RGB-D SLAM model that employs a multi-level feature grid for scene representation, reducing computational overhead and improving scalability.
    \item \textbf{CO-SLAM}~\citep{coslam}: An advanced neural RGB-D SLAM system with a hybrid representation, enabling robust camera tracking and high-fidelity surface reconstruction in real time.
    \item \textbf{GO-SLAM}~\citep{zhang2023goslam}: A neural visual SLAM framework for real-time optimization of poses and 3D reconstruction. It supports both monocular and RGB-D input settings.
    \item \textbf{SplaTAM}~\citep{keetha2024splatam}: A neural RGB-D SLAM model that follows Gaussian Splatting~\citep{gaussiansplatting} to construct an adaptive map representation based on Gaussian kernels. Due to time and computational constraints, we evaluate the relatively more efficient SplaTAM-S model variant in our benchmark.
\end{itemize}

\begin{table*}[t]
\centering
\caption{\xxh{RGB-D} SLAM methods for evaluation.}
\label{tab:slam_methods}
\resizebox{\textwidth}{!} 
{
\begin{tabular}{l|c|c| l |c|c|r|l|c}
\toprule\toprule
\textbf{Method } & \textbf{Type} & \begin{tabular}[c]{@{}c@{}}\textbf{Modality}\\\textbf{Mono/RGB-D}\end{tabular} & \begin{tabular}[c]{@{}c@{}}\textbf{Map} \textbf{Representation}\end{tabular} & \begin{tabular}[c]{@{}c@{}}\textbf{Loop}\\\textbf{Closure}\end{tabular} & \begin{tabular}[c]{@{}c@{}}\textbf{External}\\\textbf{Data}\end{tabular}& \multicolumn{1}{|c|}{\begin{tabular}[c]{@{}c@{}}\textbf{Speed}\end{tabular}} & \textbf{Processing} & \textbf{Year}\\ 
\midrule 
ORB-SLAM3  & Classical, Sparse & $\usym{2713}$\quad/\quad$\usym{2713}$ & Keyframe+ORB, Explicit  &$\usym{2713}$ &  $\usym{2715}$& Real-time & CPU & 2020 \\
iMAP  & Neural, Dense  &  $\usym{2715}$\quad/\quad$\usym{2713}$ & NeRF-based${}^{\text{(1)}}$, Implicit &$\usym{2715}$ &  $\usym{2715}$ & Quasi Real-time & CPU+GPU & 2021 \\
Nice-SLAM & Neural, Dense & $\usym{2715}$\quad/\quad$\usym{2713}$ & NeRF-based, Implicit &$\usym{2715}$ &  $\usym{2715}$& Quasi Real-time & CPU+GPU & 2022  \\
CO-SLAM  & Neural, Dense & $\usym{2715}$\quad/\quad$\usym{2713}$ & NeRF-based, Implicit &$\usym{2715}$ &  $\usym{2715}$ & Real-time & CPU+GPU & 2023 \\
GO-SLAM  & Neural, Dense & $\usym{2713}$\quad/\quad$\usym{2713}$ & NeRF-based, Implicit &$\usym{2713}$ &  $\usym{2713}$${}^{\text{(2)}}$& Quasi Real-time & CPU+GPU & 2023 \\
SplaTAM-S & Neural, Dense & $\usym{2715}$\quad/\quad$\usym{2713}$  & Gaussian Splat, Explicit &$\usym{2715}$ &  $\usym{2715}$& Quasi Real-time & CPU+GPU & 2024  \\
\bottomrule\bottomrule
\multicolumn{9}{l}{{$(1)$ `NeRF-based' indicates methods that leverage implicit neural networks to encode the 3D scene, following the philosophy of NeRF~\citep{rosinol2022nerf}.}}\\
\multicolumn{9}{l}{{$(2)$ GO-SLAM initializes the model parameters from the DROID-SLAM model~\citep{teed2021droid} which leverages external data for model pre-training.}}\\
\end{tabular}}

\end{table*}

\noindent\textbf{Remark: Some dense Neural SLAM models are inherently `trained' on the testing distribution.} Neural SLAM models are optimized (\textit{i.e.}, `trained') on perturbed RGB-D observations, enabling continuous adaptation and updating of internal representations based on incoming data at each timestamp. This inherently includes `training with introduced perturbations', as the models adjust to variations during online operation. Neural SLAM methods like Nice-SLAM~\citep{niceslam}, CO-SLAM~\citep{coslam}, and GO-SLAM~\citep{zhang2023goslam} leverage Neural Radiance Field (NeRF) as the map representation, updating the parameters of NeRF network and the parameters of poses for each frame when new observations arrive during testing; SplaTAM~\citep{keetha2024splatam} leverages explicit Gaussian Splats~\citep{gaussiansplatting} as the map representation, updating the parameters of Gaussian kernels as well as the parameters of poses for each frame during testing.  We find that this test-time online learning mechanism provides better robustness compared to non-Neural SLAM methods without adaptation capabilities, allowing Neural SLAM models to be robust to static RGBD imaging perturbations by continuously refining their environment understanding for optimal performance.

\subsection{\textcolor[rgb]{0.0,0.0,0.0}{Details about Hardware Setup for Benchmarking Experiments}}\label{subsec:hardware}

Our experiments were primarily conducted on a GPU server equipped with two NVIDIA A6000 GPUs, each featuring 48 GB of memory. These resources were utilized for synthesizing perturbed noisy data and evaluating the robustness of RGB-D SLAM models. The operating system used was Ubuntu 22.04. Additionally, we tested the compatibility of our benchmarking code on a GPU server with four NVIDIA RTX6000 Ada GPUs, each with 48 GB of memory, and on a GPU server with two NVIDIA A100 GPUS, each with 40 GB of memory. 

It is important to note that the memory requirements of different SLAM methods vary based on the complexity of the perturbed RGB-D video sequences used for evaluation and the specific memory cost of each method. For instance, the CO-SLAM~\citep{coslam} model can run on a GPU with 12GB of memory. Meanwhile, only a GPU is required for all the SLAM methods evaluated in our study under each perturbed setting.

\subsection{\textcolor[rgb]{0.0,0.0,0.0}{Uniqueness and Advantages of Our \textit{Robust-Ego3D} Benchmark}}~\label{subsec:comparison_slam_benchmarks}
Our instantiated benchmark \textit{Robust-Ego3D} for RGB-D SLAM robustness evaluation offer several distinct advantages that can advance scalable SLAM evaluation and more robust SLAM model:

\noindent\textbf{Unparalleled diversity and controllability.} With 124 perturbation settings and an extensive dataset comprising 1,000 long video sequences and nearly 2 million image-depth pairs, our tool offers unmatched diversity and controllability. Researchers can create highly customized and challenging test scenarios, exploring a wide range of real-world conditions and pushing the boundaries of SLAM algorithms. This extensive collection of perturbations allows for a comprehensive assessment of SLAM systems under diverse environmental conditions and sensor noise profiles.

\noindent\textbf{Scalability and fair comparison.} The large size of our dataset enables statistically significant evaluations and fair comparisons between different SLAM algorithms under diverse conditions. This scalability is crucial for robust benchmarking and identifying the strengths and weaknesses of various approaches. By providing a large and diverse testing ground, our tool facilitates unbiased comparisons and promotes the development of more reliable and generalizable SLAM solutions.

\noindent\textbf{Decoupled perturbation study.} Our pipeline facilitates the decoupled study of individual and mixed perturbations, providing valuable insights into the isolated and combined effects of various noise sources. This granular understanding is essential for developing targeted strategies to enhance SLAM robustness in complex environments. By disentangling the impact of individual noise sources, researchers can gain a deeper understanding of their specific effects on SLAM performance and design algorithms resilient to specific types of perturbations.

\noindent\textbf{Standardization.} Our pipeline introduces a systematic and standardized approach to generating noisy environments, ensuring consistency and reproducibility across different studies. This standardization is crucial for facilitating meaningful comparisons and advancing the field of SLAM research. By establishing a common framework for generating and evaluating SLAM datasets with perturbations, our tool promotes collaboration and accelerates the progress of the entire research community.


\clearpage
\subsection{\textcolor[rgb]{0.0,0.0,0.0}{Uniqueness of Synthetic Datasets Compared to Real-World Data}}\label{subsec:advatnage_synthetic_dataset}

Synthesized datasets offer unique and substantial advantages over real-world datasets, particularly when it comes to SLAM evaluation under noisy or complex conditions. While real-world datasets like TUM-RGBD~\citep{TUM-RGBD} capture certain real-world complexities, such as environmental noise and dynamic motion, the flexibility, scalability, and precision provided by synthesized datasets make them invaluable for comprehensive dense Neural SLAM evaluation and beyond.

\noindent \textbf{Accurate and complete 3D ground-truth for photorealistic mapping evaluation.} 
Real-world datasets, though offering realism, often lack precise ground-truth 3D scans needed for detailed, quantitative mapping evaluation. Specifically, the absence of high-fidelity 3D geometry and appearance data limits the ability to assess mapping quality in depth. For example, datasets such as SubT~\citep{SubT} and TUM-RGBD~\citep{TUM-RGBD} provide odometry data but fail to deliver accurate 3D scans required for photorealistic mapping evaluations. In contrast, synthesized datasets, such as those rendered from high-quality 3D assets like Replica~\citep{straub2019replica} and ScanNet~\citep{dai2017scannet}, provide nearly perfect 3D ground-truth data for both odometry and geometry, making them indispensable for evaluating mapping performance. 

\noindent \textbf{Scalability and diversity for evaluating (and  training) generalizable models.}
Real-world SLAM datasets are limited in size and diversity due to the challenges of data collection, such as manual setup and difficulty capturing complex scenarios like sensor noise, motion blur, or dynamic motion. As a result, these datasets are often insufficient for training robust, generalizable SLAM models.
In contrast, synthesized datasets offer a \textbf{\textit{scalable solution}} by generating large, diverse data under controlled conditions. This allows exploration of rare and costly scenarios, such as dynamic lighting changes and sensor degradation. Synthesized data not only serves as a comprehensive evaluation platform but also helps SLAM models generalize across real-world environments.
The ability to generate unlimited data under varied conditions is key to developing SLAM systems that perform reliably in dynamic settings. \textbf{\textit{We argue that scalable synthesized datasets are essential for training models that generalize to real-world applications.}}

\noindent \textbf{Customization and control over perturbations.} 
One of the most significant advantages of synthesized datasets is their ability to inject specific, customizable perturbations in a controlled manner. Real-world datasets are inherently limited by the conditions present during data collection, which restricts the types of perturbations they can capture. Factors such as sensor noise, environmental variation, and lighting conditions are often incidental, making it difficult to systematically test how different perturbations affect SLAM performance.
Synthesized datasets, however, offer \textit{\textbf{greater flexibility in customizing and injecting perturbations}}, such as noise, motion artifacts, or sensor degradation, in a controlled and systematic way. This allows researchers to isolate specific effects and systematically evaluate model performance under a wide variety of challenging conditions. Such flexibility is invaluable for stress-testing models and understanding their limitations, especially in real-world conditions where similar noise patterns might occur but in less predictable ways.

\noindent \textbf{Bias management and enhanced explainability.} 
While all datasets, real or synthetic, have inherent biases, the controllability of synthesized datasets makes it easier to identify, manage, and mitigate these biases. Real-world datasets are often subject to hidden biases introduced by uncontrolled environmental factors, sensor characteristics, or specific collection methodologies, which can skew model performance in unanticipated ways. In contrast, \textbf{\textit{synthesized datasets allow for greater transparency and control over biases}}, as perturbations can be systematically introduced and analyzed. This not only enhances the explainability of model behavior but also allows for better tuning and optimization, leading to models that are more robust to bias-related challenges in real-world deployment.

\noindent \textbf{Bridging the gap between noise-free and real-world conditions.} 
Finally, synthesized datasets play a critical role in bridging the gap between existing noise-free SLAM datasets and the complexities of real-world environments where noise, imperfect sensing, and challenging agent motions are common. While current real-world datasets offer valuable insights, the level of control and precision offered by synthesized datasets, particularly when simulating degraded sensor data and complex motion scenarios, makes them an essential tool for SLAM evaluation. As such, \textbf{our  \textit{Robust-Ego3D} benchmark} synthesizes realistic noise and perturbations under imperfect sensing and motion conditions, offering a more comprehensive evaluation platform that brings SLAM closer to real-world robustness.

\clearpage
\section{\textcolor[rgb]{0.0,0.0,0.0}{Datasheet for \textit{Robust-Ego3D} Benchmark}}
\label{sec:datasheet}

We document the necessary information about the proposed datasets and benchmarks following the guidelines of Gebru \textit{et al}.~\citep{datasheet}.

\begin{enumerate}[label=Q\arabic*]

\vspace{-0.5em}\begin{figure}[!h]
\subsection{\textcolor[rgb]{0.0,0.0,0.0}{Motivation}}\label{subsec:datasheet-motivation}\vspace{-1em}
\end{figure}

\item \textbf{For what purpose was the dataset created?} Was there a specific task in mind? Was there a specific gap that needed to be filled? Please provide a description.

\begin{itemize}
\item Our benchmark was created to holistically evaluate the robustness of dense Neural  SLAM models under diverse perturbations. Prior to our work, dense Neural SLAM models were typically evaluated under noise-free settings. With our customizable perturbation synthesis pipeline, we evaluate the models across a wide range of RGB-D perturbations crucial for real-world deployment, including RGB imaging perturbations, depth imaging perturbations, motion-related perturbations, and RGB-D desynchronization.
\end{itemize}

\item \textbf{Who created the dataset (e.g., which team, research group) and on behalf of which entity (e.g., company, institution, organization)?}

\begin{itemize}
\item This benchmark is presented by [Anonymous Author]\footnote{Author and repository information have been anonymized in compliance with the double-blind policy and will be provided upon acceptance.}. Our aim is to advance the benchmarking, development, and deployment of more reliable and robust ego-motion estimation and photo realistic 3D reconstruction.
\end{itemize}

\item \textbf{Who funded the creation of the dataset?} If there is an associated grant, please provide the name of the grantor and the grant name and number.

\begin{itemize}
\item This work was partially supported by [Anonymous Author].
\end{itemize}

\item \textbf{Any other comments?}

\begin{itemize}
\item No.
\end{itemize}

\vspace{-0.5em}\begin{figure}[!h]
\subsection{\textcolor[rgb]{0.0,0.0,0.0}{Composition}}\label{subsec:datasheet-composition}\vspace{-1em}
\end{figure}

\item \textbf{What do the instances that comprise the dataset represent (e.g., documents, photos, people, countries)?} \textit{Are there multiple types of instances (e.g., movies, users, and ratings; people and interactions between them; nodes and edges)? Please provide a description.}

\begin{itemize}
\item Our initialized \textit{Robust-Ego3D} benchmark includes RGB-D video sequences rendered from scanned 3D scenes, the 6D trajectory at each timestamp, and the 3D scene point cloud.
\end{itemize}

\item \textbf{How many instances are there in total (of each type, if appropriate)?}

\begin{itemize}
\item The \textit{Robust-Ego3D} benchmark includes 1,000 perturbed settings, each with 2,000 RGB-D video sequences. Detailed statistics for each scenario are provided in Sec.~\ref{subsec:benchmark-stat} of the Appendix.
\end{itemize}

\item \textbf{Does the dataset contain all possible instances or is it a sample (not necessarily random) of instances from a larger set?} \textit{If the dataset is a sample, what is the larger set? Is the sample representative of the larger set (e.g., geographic coverage)? If so, please describe how this representativeness was validated/verified. If it is not representative of the larger set, please describe why not (e.g., to cover a more diverse range of instances, because instances were withheld or unavailable).}

\begin{itemize}
\item The 3D scenes in our benchmark are sourced from the existing 3D scanned indoor scene dataset Replica~\citep{straub2019replica}, and we use all possible instances from these datasets.
\end{itemize}

\item \textbf{What data does each instance consist of?} \textit{“Raw” data (e.g., unprocessed text or images) or features? In either case, please provide a description.}

\begin{itemize}
\item Each instance includes RGB-D images, trajectories, perturbation categories, and ground-truth 3D scene meshes.
\end{itemize}

\item \textbf{Is there a label or target associated with each instance?} \textit{If so, please provide a description.}

\begin{itemize}
\item RGB-D video sequences for each perturbed setting are rendered from 3D scans in the Replica dataset~\citep{straub2019replica}.
\end{itemize}

\item \textbf{Is any information missing from individual instances?} \textit{If so, please provide a description, explaining why this information is missing (e.g., because it was unavailable). This does not include intentionally removed information, but might include, e.g., redacted text.}

\begin{itemize}
\item No.
\end{itemize}

\item \textbf{Are relationships between individual instances made explicit (e.g., users' movie ratings, social network links)?} \textit{If so, please describe how these relationships are made explicit.}

\begin{itemize}
\item Each RGB-D video sequence is explicitly linked to a specific 3D scene in the Replica dataset, defined by a unique trajectory and a set of perturbations.
\end{itemize}

\item \textbf{Are there recommended data splits (e.g., training, development/validation, testing)?} \textit{If so, please provide a description of these splits, explaining the rationale behind them.}

\begin{itemize}
\item No, our benchmark is primarily intended for evaluation, as RGB-D SLAM models use online optimization/adaptation and do not require a separate training stage. However, it can be extended for model training if needed. 
\end{itemize}

\item \textbf{Are there any errors, sources of noise, or redundancies in the dataset?} \textit{If so, please provide a description.}

\begin{itemize}
\item No.
\end{itemize}

\item \textbf{Is the dataset self-contained, or does it link to or otherwise rely on external resources (e.g., websites, tweets, other datasets)?} \textit{If it links to or relies on external resources, a) are there guarantees that they will exist, and remain constant, over time; b) are there official archival versions of the complete dataset (i.e., including the external resources as they existed at the time the dataset was created); c) are there any restrictions (e.g., licenses, fees) associated with any of the external resources that might apply to a future user? Please provide descriptions of all external resources and any restrictions associated with them, as well as links or other access points, as appropriate.}

\begin{itemize}
\item The benchmark is self-contained. We provide all the details and instructions at [Anonymous Repo]. 
\end{itemize}

\item \textbf{Does the dataset contain data that might be considered confidential (e.g., data that is protected by legal privilege or by doctor–patient confidentiality, data that includes the content of individuals’ non-public communications)?} \textit{If so, please provide a description.}

\begin{itemize}
\item No. The 3D scans used in our \textit{Robust-Ego3D} benchmark are sourced from existing open-source datasets.
\end{itemize}

\item \textbf{Does the dataset contain data that, if viewed directly, might be offensive, insulting, threatening, or might otherwise cause anxiety?} \textit{If so, please describe why.}

\begin{itemize}
\item No.
\end{itemize}

\item \textbf{Does the dataset relate to people?} \textit{If not, you may skip the remaining questions in this section.}

\begin{itemize}
\item No. This dataset does not relate to people.
\end{itemize}

\item \textbf{Does the dataset identify any subpopulations (e.g., by age, gender)?}

\begin{itemize}
\item N/A.
\end{itemize}

\item \textbf{Is it possible to identify individuals (i.e., one or more natural persons), either directly or indirectly (i.e., in combination with other data) from the dataset?} \textit{If so, please describe how.}

\begin{itemize}
\item N/A.
\end{itemize}

\item \textbf{Does the dataset contain data that might be considered sensitive in any way (e.g., data that reveals racial or ethnic origins, sexual orientations, religious beliefs, political opinions or union memberships, or locations; financial or health data; biometric or genetic data; forms of government identification, such as social security numbers; criminal history)?} \textit{If so, please provide a description.}

\begin{itemize}
\item No.
\end{itemize}

\item \textbf{Any other comments?}

\begin{itemize}
\item We caution discretion on behalf of the user and call for responsible usage of the benchmark for research purposes only.
\end{itemize}

\vspace{-0.5em}\begin{figure}[!h]
\subsection{\textcolor[rgb]{0.0,0.0,0.0}{Collection Process}}\label{subsec:datasheet-collection}\vspace{-1em}
\end{figure}

\item \textbf{How was the data associated with each instance acquired?} \textit{Was the data directly observable (e.g., raw text, movie ratings), reported by subjects (e.g., survey responses), or indirectly inferred/derived from other data (e.g., part-of-speech tags, model-based guesses for age or language)? If data was reported by subjects or indirectly inferred/derived from other data, was the data validated/verified? If so, please describe how.}

\begin{itemize}
\item The 3D scenes used for SLAM data generation in our benchmark are sourced from the existing open-source dataset Replica~\citep{straub2019replica}. The details of the benchmark construction are provided in the Experiment section of the main paper, with further details in Sec.~\ref{sec:benchmark-setup} of the Appendix.
\end{itemize}

\item \textbf{What mechanisms or procedures were used to collect the data (e.g., hardware apparatus or sensor, manual human curation, software program, software API)?} \textit{How were these mechanisms or procedures validated?}

\begin{itemize}
\item No additional raw data was collected. Our contribution is the development of a perturbation taxonomy and toolbox to transform existing clean SLAM datasets and 3D scenes into noisy datasets for robustness evaluation. 
\end{itemize}

\item \textbf{If the dataset is a sample from a larger set, what was the sampling strategy (e.g., deterministic, probabilistic with specific sampling probabilities)?}

\begin{itemize}
\item No new raw data was collected. RGB-D sensor streams were rendered using 3D scene models from the Replica dataset~\citep{straub2019replica}, consisting of real indoor scene scans. We used the same eight rooms and offices as the (clean) Replica-SLAM dataset~\citep{imap} to ensure consistency in comparisons.
\end{itemize}

\item \textbf{Who was involved in the data collection process (e.g., students, crowdworkers, contractors) and how were they compensated (e.g., how much were crowdworkers paid)?}

\begin{itemize}
\item N/A. Our benchmark does not include new raw data collection.
\end{itemize}

\item \textbf{Over what timeframe was the data collected? Does this timeframe match the creation timeframe of the data associated with the instances (e.g., recent crawl of old news articles)?} \textit{If not, please describe the timeframe in which the data associated with the instances was created.}

\begin{itemize}
\item N/A. Our benchmark does not include new raw data collection.
\end{itemize}

\item \textbf{Were any ethical review processes conducted (e.g., by an institutional review board)?} \textit{If so, please provide a description of these review processes, including the outcomes, as well as a link or other access point to any supporting documentation.}

\begin{itemize}
\item N/A. Our benchmark does not include new raw data collection.
\end{itemize}

\item \textbf{Does the dataset relate to people?} \textit{If not, you may skip the remaining questions in this section.}

\begin{itemize}
\item No.
\end{itemize}

\item \textbf{Did you collect the data from the individuals in question directly, or obtain it via third parties or other sources (e.g., websites)?}

\begin{itemize}
\item N/A. Our dataset does not relate to people.
\end{itemize}

\item \textbf{Were the individuals in question notified about the data collection?} \textit{If so, please describe (or show with screenshots or other information) how notice was provided, and provide a link or other access point to, or otherwise reproduce, the exact language of the notification itself.}

\begin{itemize}
\item N/A. Our dataset does not relate to people.
\end{itemize}

\item \textbf{Did the individuals in question consent to the collection and use of their data?} \textit{If so, please describe (or show with screenshots or other information) how consent was requested and provided, and provide a link or other access point to, or otherwise reproduce, the exact language to which the individuals consented.}

\begin{itemize}
\item N/A. Our dataset does not relate to people.
\end{itemize}

\item \textbf{If consent was obtained, were the consenting individuals provided with a mechanism to revoke their consent in the future or for certain uses?} \textit{If so, please provide a description, as well as a link or other access point to the mechanism (if appropriate).}

\begin{itemize}
\item N/A. Our dataset does not relate to people.
\end{itemize}

\item \textbf{Has an analysis of the potential impact of the dataset and its use on data subjects (e.g., a data protection impact analysis) been conducted?} \textit{If so, please provide a description of this analysis, including the outcomes, as well as a link or other access point to any supporting documentation.}

\begin{itemize}
\item We discuss the limitations of our current work in the Conclusion and Future Work section of the main paper, and we plan to further investigate and analyze the impact of our benchmark in future work. We acknowledge the potential data biases and limitations of our initial benchmark and have detailed the assumptions made during benchmark construction in Sec.~\ref{subsec:assumption} of the Appendix.
\end{itemize}

\item \textbf{Any other comments?}

\begin{itemize}
\item No.
\end{itemize}

\vspace{-0.5em}\begin{figure}[!h]
\subsection{\textcolor[rgb]{0.0,0.0,0.0}{Preprocessing, Cleaning, and/or Labeling}}\label{subsec:datasheet-preprocess}
\vspace{-1em}
\end{figure}

\item \textbf{Was any preprocessing/cleaning/labeling of the data done (e.g., discretization or bucketing, tokenization, part-of-speech tagging, SIFT feature extraction, removal of instances, processing of missing values)?} \textit{If so, please provide a description. If not, you may skip the remainder of the questions in this section.}

\begin{itemize}
\item No preprocessing or labeling was performed.
\end{itemize}

\item \textbf{Was the “raw” data saved in addition to the preprocessed/cleaned/labeled data (e.g., to support unanticipated future uses)?} \textit{If so, please provide a link or other access point to the “raw” data.}

\begin{itemize}
\item N/A. No preprocessing or labeling was performed for creating the scenarios.
\end{itemize}

\item \textbf{Is the software used to preprocess/clean/label the instances available?} \textit{If so, please provide a link or other access point.}

\begin{itemize}
\item N/A. No preprocessing or labeling was performed for creating the scenarios.
\end{itemize}

\item \textbf{Any other comments?}

\begin{itemize}
\item No.
\end{itemize}

\vspace{-0.5em}\begin{figure}[!h]
\subsection{\textcolor[rgb]{0.0,0.0,0.0}{Uses}}\label{subsec:datasheet-uses}
\vspace{-1em}
\end{figure}

\item \textbf{Has the dataset been used for any tasks already?} \textit{If so, please provide a description.}

\begin{itemize}
\item Not yet. We present a new benchmark.
\end{itemize}

\item \textbf{Is there a repository that links to any or all papers or systems that use the dataset?} \textit{If so, please provide a link or other access point.}

\begin{itemize}
\item We will provide links to works that use our benchmark at [Anonymous Repo].
\end{itemize}

\item \textbf{What (other) tasks could the dataset be used for?}

\begin{itemize}
\item Our benchmark primarily studies the robustness of dense dense Neural SLAM models under perturbations.
\item Its customizable nature allows for future research on the robustness of various SLAM systems and other RGB-D sensing-related tasks.
\end{itemize}

\item \textbf{Is there anything about the composition of the dataset or the way it was collected and preprocessed/cleaned/labeled that might impact future uses?} \textit{For example, is there anything that a future user might need to know to avoid uses that could result in unfair treatment of individuals or groups (e.g., stereotyping, quality of service issues) or other undesirable harms (e.g., financial harms, legal risks)? If so, please provide a description. Is there anything a future user could do to mitigate these undesirable harms?}

\begin{itemize}
\item No.
\end{itemize}

\item \textbf{Are there tasks for which the dataset should not be used?} \textit{If so, please provide a description.}

\begin{itemize}
\item No.
\end{itemize}

\item \textbf{Any other comments?}

\begin{itemize}
\item No.
\end{itemize}

\vspace{-0.5em}\begin{figure}[!h]
\subsection{\textcolor[rgb]{0.0,0.0,0.0}{Distribution and License}}\label{subsec:datasheet-distribution}
\vspace{-1em}
\end{figure}

\item \textbf{Will the dataset be distributed to third parties outside of the entity (e.g.,
 company, institution, organization) on behalf of which the dataset was created?} \textit{If so, please provide a description.}

\begin{itemize}
\item Yes, this benchmark has been open-sourced.
\end{itemize}

\item \textbf{How will the dataset be distributed (e.g., tarball on website, API, GitHub)?} \textit{Does the dataset have a digital object identifier (DOI)?}

\begin{itemize} 
\item Our benchmark and the code used for evaluation are available at [Anonymous Repo].
\end{itemize}

\item \textbf{When will the dataset be distributed?}

\begin{itemize}
\item Sep, 2024, and onward.
\end{itemize}

\item \textbf{Will the dataset be distributed under a copyright or other intellectual property (IP) license, and/or under applicable terms of use (ToU)?} \textit{If so, please describe this license and/or ToU, and provide a link or other access point to, or otherwise reproduce, any relevant licensing terms or ToU, as well as any fees associated with these restrictions.}

\begin{itemize}
\item The 3D scene dataset Replica and the benchmarking methods used in our benchmark are sourced from existing open-source repositories, as illustrated in Sec.~\ref{sec:public-assets}. The license associated with them is followed accordingly.
\item Our code is released under the {Apache-2.0} license.
\end{itemize}

\item \textbf{Have any third parties imposed IP-based or other restrictions on the data associated with the instances?} \textit{If so, please describe these restrictions, and provide a link or other access point to, or otherwise reproduce, any relevant licensing terms, as well as any fees associated with these restrictions.}

\begin{itemize}
\item We release it under the {Apache-2.0} license.
\item Copyright for the original 3D scenes used for rendering is not owned by us.
\end{itemize}

\item \textbf{Do any export controls or other regulatory restrictions apply to the dataset or to individual instances?} \textit{If so, please describe these restrictions, and provide a link or other access point to, or otherwise reproduce, any supporting documentation.}

\begin{itemize}
\item No.
\end{itemize}

\item \textbf{Any other comments?}

\begin{itemize}
\item No.
\end{itemize}

\vspace{-0.5em}\begin{figure}[!h]
\subsection{\textcolor[rgb]{0.0,0.0,0.0}{Maintenance}}\label{subsec:datasheet-mainteanance}
\vspace{-1em}
\end{figure}

\item \textbf{Who will be supporting/hosting/maintaining the dataset?}

\begin{itemize}
\item Anonymous Author
\end{itemize}

\item \textbf{How can the owner/curator/manager of the dataset be contacted (e.g., email address)?}

\begin{itemize}
\item Anonymous Author
\end{itemize}

\item \textbf{Is there an erratum?} \textit{If so, please provide a link or other access point.}

\begin{itemize}
\item There is no erratum for our initial release. Errata will be documented as future releases on the benchmark website.
\end{itemize}

\item \textbf{Will the dataset be updated (e.g., to correct labeling errors, add new instances, delete instances)?} \textit{If so, please describe how often, by whom, and how updates will be communicated to users (e.g., mailing list, GitHub)?}

\begin{itemize}
\item Yes, our benchmark will be updated. We plan to expand scenarios, metrics, and models to be evaluated.
\end{itemize}

\item \textbf{If the dataset relates to people, are there applicable limits on the retention of the data associated with the instances (e.g., were individuals in question told that their data would be retained for a fixed period of time and then deleted)?} \textit{If so, please describe these limits and explain how they will be enforced.}

\begin{itemize}
\item N/A. Our dataset does not relate to people.
\end{itemize}

\item \textbf{Will older versions of the dataset continue to be supported/hosted/maintained?} \textit{If so, please describe how. If not, please describe how its obsolescence will be communicated to users.}

\begin{itemize}
\item We will host other versions.
\end{itemize}

\item \textbf{If others want to extend/augment/build on/contribute to the dataset, is there a mechanism for them to do so?} \textit{If so, please provide a description. Will these contributions be validated/verified? If so, please describe how. If not, why not? Is there a process for communicating/distributing these contributions to other users? If so, please provide a description.}

\begin{itemize}
\item Users may contact us by reporting an issue on our benchmark GitHub page or directly contacting the author of this project ([Anonymous Author]) to request adding new scenarios, metrics, or models.
\end{itemize}

\item \textbf{Any other comments?}

\begin{itemize}
\item No.
\end{itemize}

\end{enumerate}

\clearpage
\section{\textcolor[rgb]{0.0,0.0,0.0}{Additional Results and Analyses on \textit{Robust-Ego3D} Benchmark}}\label{sec:additional-results}

\subsection{\textcolor[rgb]{0.0,0.0,0.0}{More Benchmarking Analyses}}\label{subsec:additional-results-analyses}

\begin{figure*}[h!]
	\centering
\includegraphics[width=0.62\textwidth]{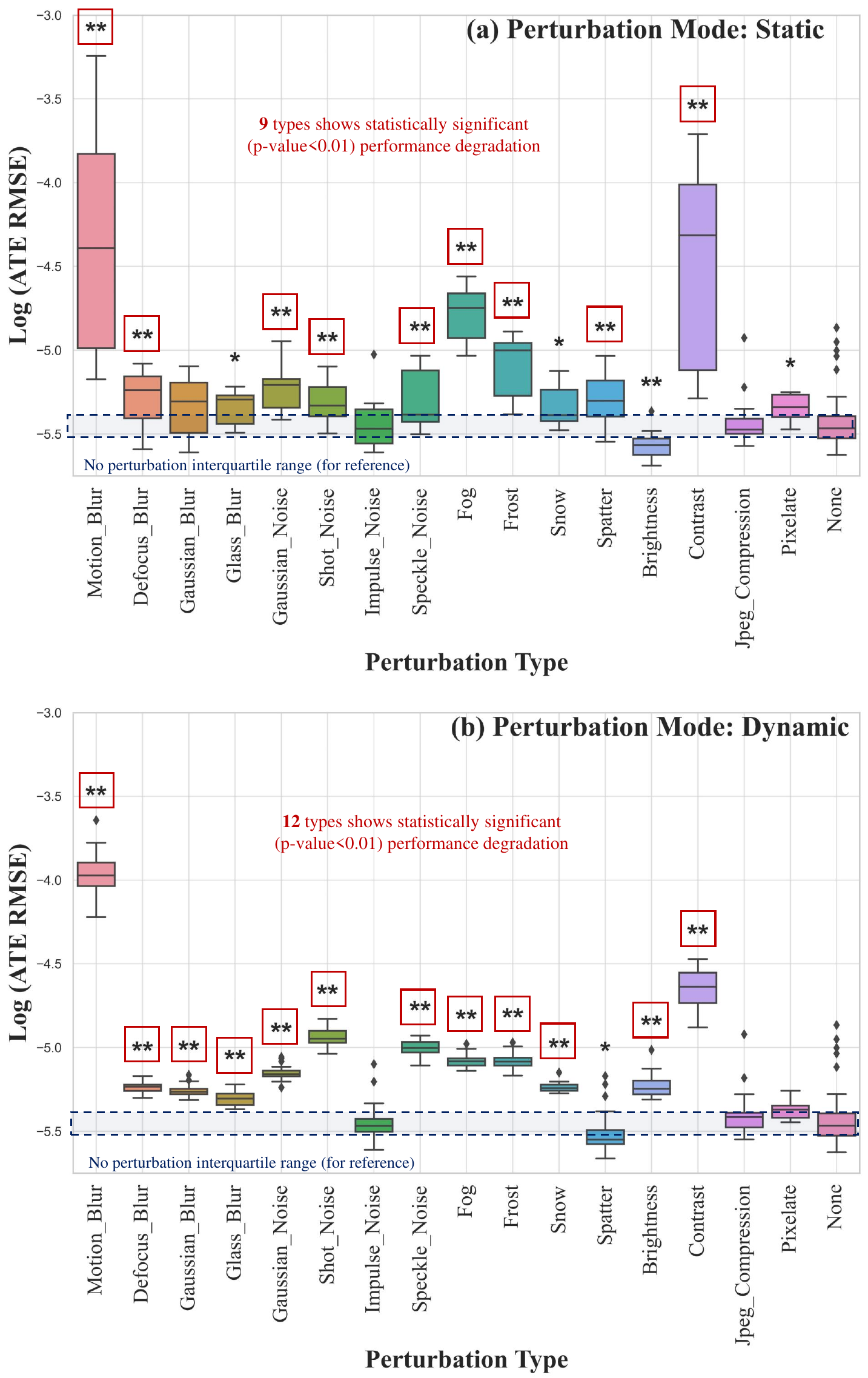}
	\caption{{Effect of each RGB imaging perturbation type
  on the trajectory estimation performance of SplaTAM-S~\citep{keetha2024splatam} model}, which shows the best overall performance under RGB imaging perturbations among all the benchmark methods. The t-test~\citep{kim2015t} is performed to compare the performance distribution between each perturbed setting and the perturbation-free setting (which is denoted as \textit{None} in the last column of each sub-figure). $**$ and $*$ indicate a significant distribution difference between the pair at the $0.01$ and $0.05$ significance level, respectively.}
	\label{fig:sensor-perturb-splatam-compare}
\end{figure*}

\noindent{{\textbf{More analyses on the most robust SLAM model under RGB imaging perturbation, \textit{i.e.}, SplaTAM-S.}}} Even the top-performing SplaTAM-S model, which achieves the best average performance under RGB imaging perturbations, experiences a more substantial decrease in trajectory estimation accuracy under dynamic conditions, with statistically significant differences observed for most of the tested perturbation types (see Fig.~\ref{fig:sensor-perturb-splatam-compare}). Interestingly, increased brightness, while slightly beneficial under static conditions, leads to significant errors under dynamic conditions.  In Fig.~\ref{fig:splatam_severity_relationship}, we examine the impact of perturbation strength on SplaTAM-S. While the trajectory estimation error and geometry rendering quality (measured by Depth L1) remain mostly stable, the rendered RGB image quality (measured by PSNR, MS-SSIM, and LPIPS) degrades significantly as perturbation severity increases.

\begin{figure}[t]
\includegraphics[width=\textwidth]{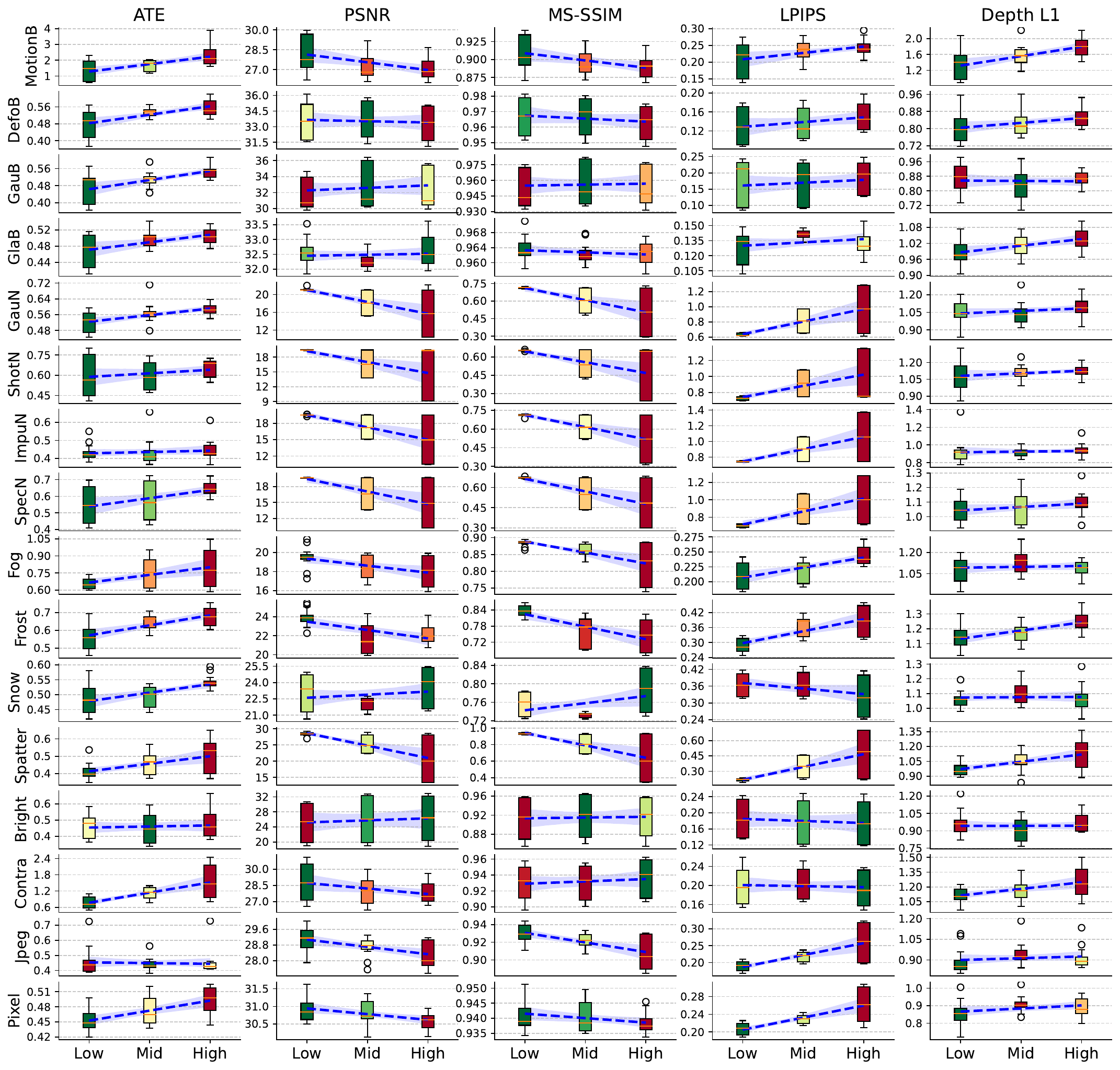} 
\caption{{Effect of varied perturbation severity of static RGB imaging perturbations on performance of trajectory estimation (measured by ATE$\downarrow$ (cm)), RGB rendering quality (measured by PSNR$\uparrow$, MS-SSIM$\uparrow$, and LPIPS$\downarrow$), and depth rendering quality (measured by Depth L1 loss$\downarrow$ (cm)) for SplaTAM-S. The fitted trend lines are illustrated in \textcolor{blue}{blue}.}}
\label{fig:splatam_severity_relationship} 
\end{figure}

\clearpage

\textbf{{Effect of RGB imaging perturbations on the geometry quality of mapping.}} We follow the mapping quality evaluation protocol in~\citep{coslam} to assess 3D reconstruction using Accuracy (ACC) [cm], Completion (Comp.) [cm], and Completion Ratio (Comp. R.) [\%] with a 5 cm threshold. Table~\ref{tab:3d-metric} details the definition for each of these metrics. Note that only certain dense SLAM models can produce 3D reconstruction results for further evaluation of mapping quality. In Table~\ref{tab:mapping-coslam}, we evaluate the impact of image perturbations on the mapping quality of the CO-SLAM~\citep{coslam} model, which shows a strong robustness to most of the image-level corruptions. The results reveal a direct correlation between perturbation severity and both 3D reconstruction error and completion error. Specifically, the clean setting achieves the highest accuracy (2.08 cm) and the lowest completeness score (2.17 cm), while the high perturbation severity setting exhibits the highest errors in ACC (2.39 cm) and completion (2.89 cm). Overall, our analysis shows that increasing severity levels of perturbation lead to larger errors in the reconstructed 3D map.

\begin{table}[t]
\caption{\textbf{3D metrics for evaluation of mesh reconstruction quality} of the reconstructed 3D mesh $P$ when given the ground-truth 3D mesh $Q$ (in the scale of meter [m]). We follows the 3D reconstruction metrics defined in the CO-SLAM~\citep{coslam} paper.}
\begin{center}
\resizebox{0.8\textwidth}{!}{
\begin{tabular}{l|c}
\toprule \toprule
\textbf{3D Reconstruction Metric} & \textbf{Definition} \\
\midrule
Accuracy (ACC) & $\frac{1}{|P|} \sum_{p \in P} \left(\min_{q \in Q} {{||p - q||{}^{2}}}\right)$ \\
Completion (Comp.) & $\frac{1}{|Q|} \sum_{q \in Q} \left(\min_{p \in P} {{||p - q||{}^{2}}}\right)$ \\
Completion Ratio (Comp. R.) & $\frac{1}{|Q|} \sum_{q \in Q} \left(\min_{p \in P} {{||p - q||{}^{2}}} \leq {0.05}\right)$ \\
\bottomrule \bottomrule
\end{tabular}\label{tab:3d-metric}}
\end{center}
\end{table}

\begin{table}[t]
\caption{{Effects of RGB imaging perturbation on mapping} for CO-SLAM.}
\label{tab:mapping-coslam}
\centering 
\setlength{\tabcolsep}{0.8mm}
\resizebox{0.8\textwidth}{!}{
\begin{tabular}{l|c|ccc|c}
    \toprule \toprule   
    \multirow{2}{*}{\textbf{Metrics}} &     {\textbf{Clean}}
     &  {\textbf{Low}}   & {\textbf{Middle}} & 
  {\textbf{High}}  & {\textbf{Perturb.}}     \\  
 &  \textbf{Mean}& \textbf{Severity} & \textbf{Severity} & \textbf{Severity} & \textbf{Mean}  \\ \midrule
   ACC$\downarrow$ [cm] & $\textbf{2.08}$ & $2.11$ & $2.12$ & $2.39$ & $2.21$ 
    \\
    Comp.$\downarrow$ [cm] & $\textbf{2.17}$ & $2.19$ & $2.20$ & $2.89$  & $2.43$ 
    \\
    Comp. R.$\uparrow$ [\%] & $\textbf{93.13}$ & $93.07$ & $93.04$ & $92.34$  & $92.82$ \\
    \bottomrule \bottomrule

    \end{tabular}
}
\end{table}

\clearpage

\noindent\textbf{{There exists SLAM models  that can {perceive} perturbed observations.}}
We conduct a case study to explore the ability of SplaTAM-S model to perceive the severity of perturbations. Notice that SplaTAM-S optimizes the map and 3D pose by minimizing the 2D reconstruction loss for the RGB and depth maps during inference. In Fig.~\ref{fig:recon-fast-motion} and Fig.~\ref{fig:correlation-fast-motion}, we observe a strong correlation between the accuracy of the final trajectory estimation and the RGB-D reconstruction loss. This indicates that when the model produces a larger reconstruction loss for a certain sensor stream, it is likely that the trajectory estimation is also inaccurate.  While this doesn't provide exact localization, it serves as a valuable indicator of potential observation degradation and model failure. This suggests SLAM systems could self-monitor performance using internal indicators, enabling real-time failure mitigation in safety-critical applications.

\begin{figure}[t!]
\includegraphics[width=\textwidth]{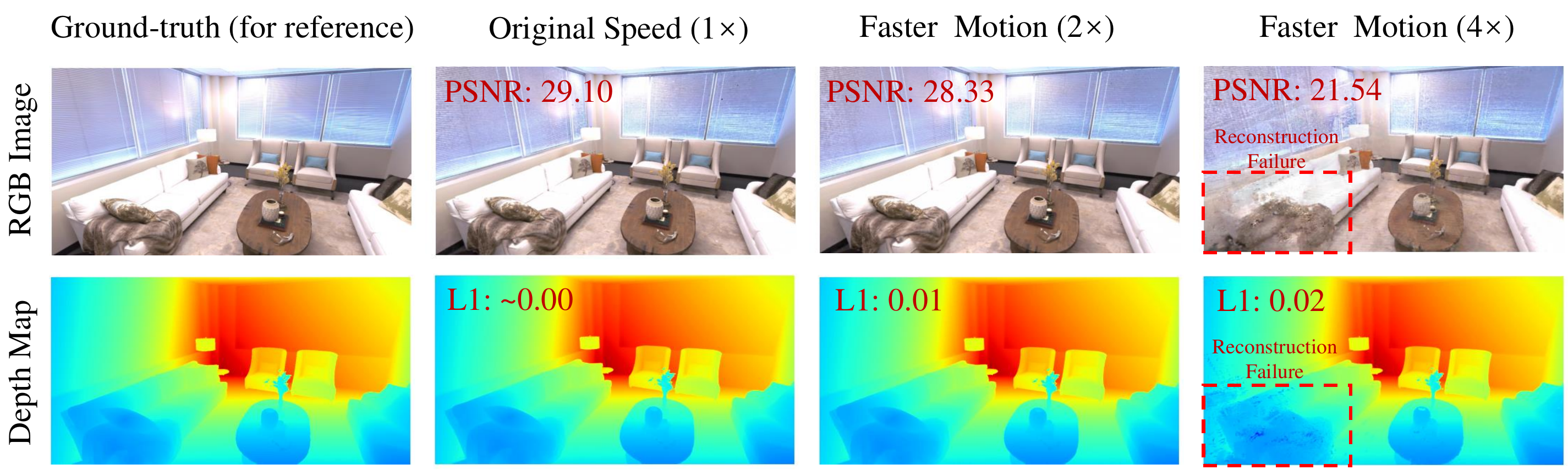} 
	\caption{{Effect of {faster motion}}  \xxh{on the 2D reconstruction losses} of RGB images (\textbf{Left}) and depth maps (\textbf{Right}), which are measured via PSNR and Depth L1 loss, for SplaTAM-S~\citep{keetha2024splatam}.}
	\label{fig:recon-fast-motion}
 \vspace{5mm}
     \centering
     \includegraphics[width=\textwidth]{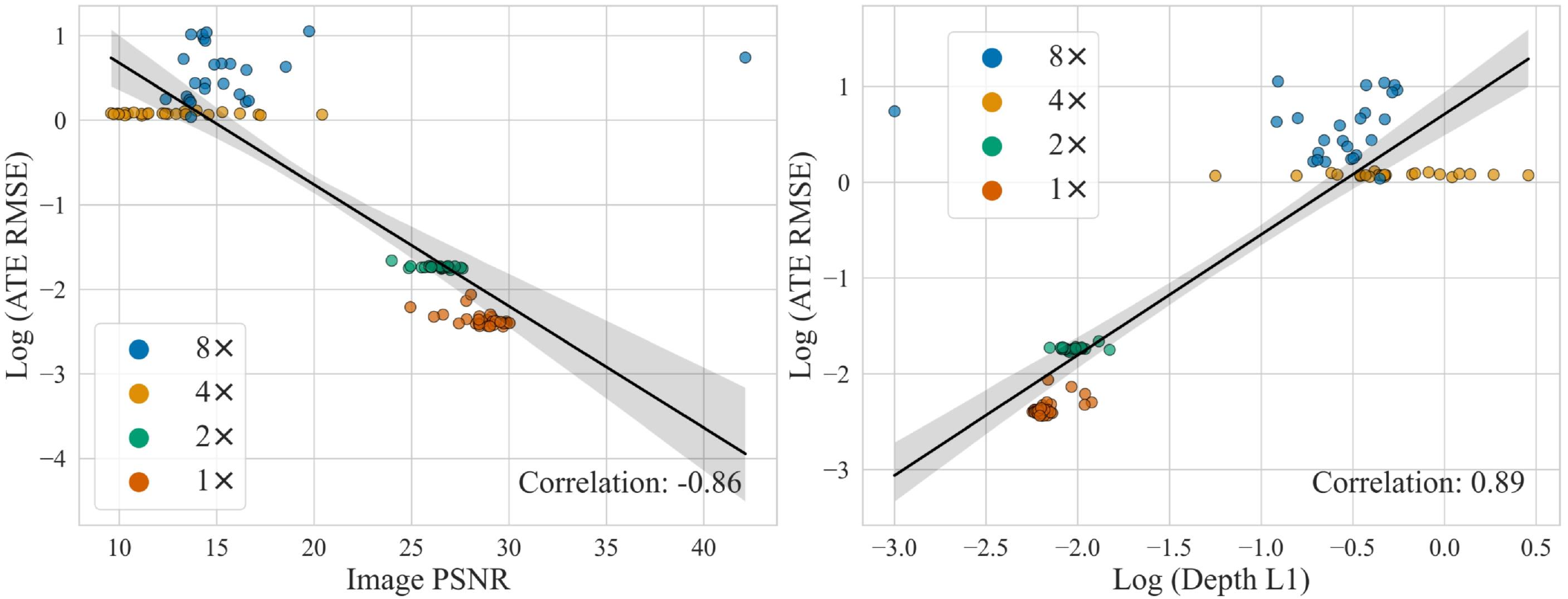} 
	\caption{Correlation between ATE (logarithm form) and \xxh{the 2D reconstruction losses of RGB images (\textbf{Left}) and depth maps (\textbf{Right}), which are used for model optimization,} under faster motion effects for SplaTAM-S~\citep{keetha2024splatam}. Pearson correlation coefficient~\citep{cohen2009pearson} is reported in the bottom-right corner. }
	\label{fig:correlation-fast-motion}\centering
\end{figure}

\clearpage
\subsection{\textcolor[rgb]{0.0,0.0,0.0}{Classical SLAM ORB-SLAM3 Results (for Reference)}}\label{subsec:additional-results-orbslam3}

To better understand the strengths and weaknesses of learning-based dense SLAM models, we also include results from the classical SLAM model ORB-SLAM3 \citep{orbslam3} for reference and comparison. We evaluate its performance using Absolute Trajectory Error (ATE), Relative Pose Error (RPE), and Success Rate (SR). SR is defined as the ratio of the cumulative sum of Euclidean distances between consecutive estimated poses to the cumulative sum of Euclidean distances between consecutive ground truth poses. 
Formally, the SR metric is defined as:
\begin{equation}
SR = \frac{\sum_{i=1}^{N} ||p_{i}^{\mathrm{est}} - p_{i-1}^{\mathrm{est}}||}{\sum_{i=1}^{N} ||p_{i}^{\mathrm{gt}} - p_{i-1}^{\mathrm{gt}}||}
\end{equation}
where $|| \cdot ||$ denotes the Euclidean distance.

Lower ATE\&RPE ($\downarrow$) and higher SR ($\uparrow$) indicate better performance, with ORB-SLAM3's results under various perturbations presented in Fig~\ref{fig:ORBSLAM-all}. Since ORB-SLAM3 lacks photorealistic 3D mapping capability, it is excluded from the main comparison, which focuses on dense Neural SLAM models designed for high-fidelity reconstructions.

\begin{figure*}[t]
	\centering
\includegraphics[width=\textwidth]{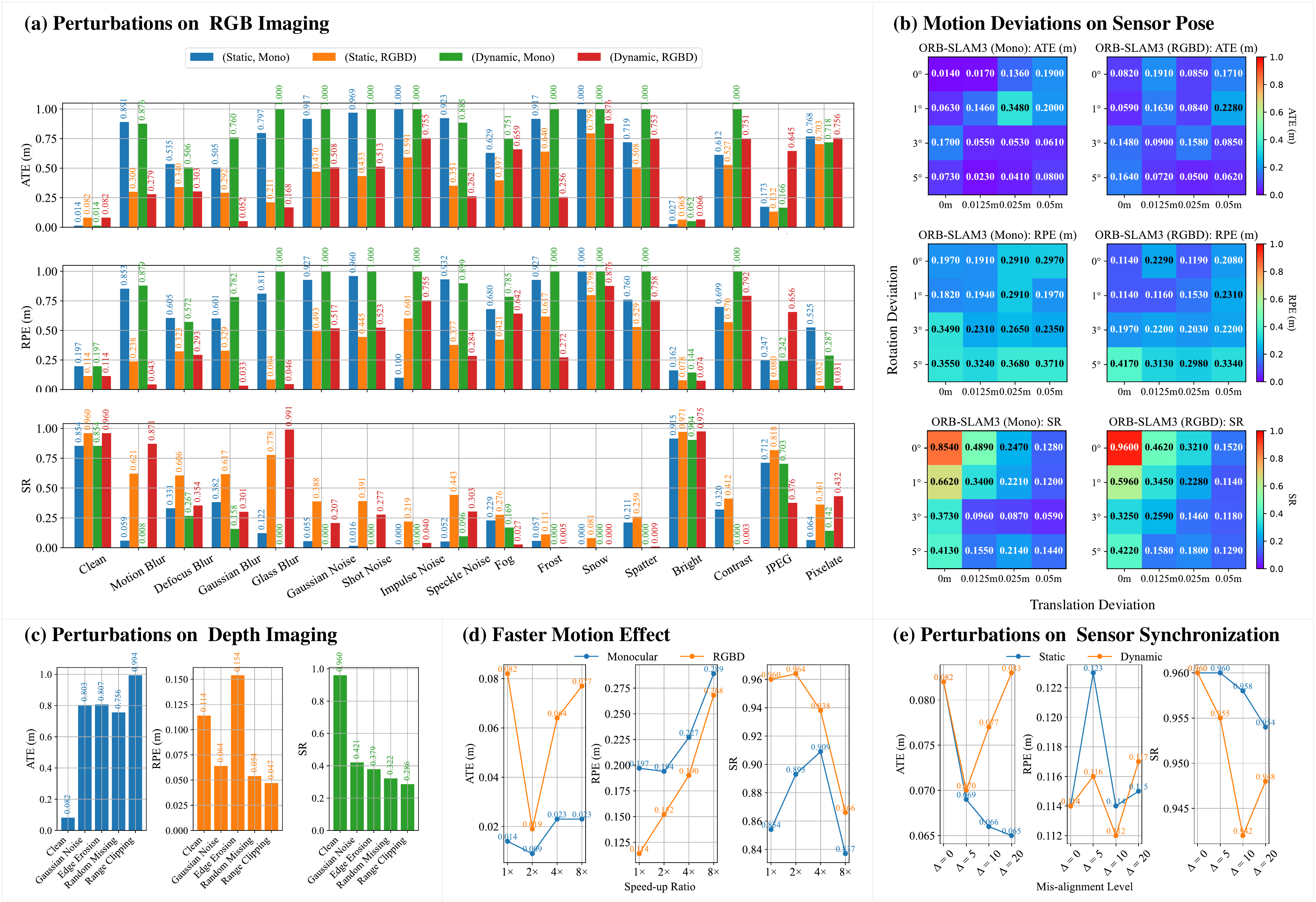}
	\caption{\xxh{Performance (measured by ATE$\downarrow$ (m), RPE$\downarrow$ (m), and SR$\uparrow$) of ORB-SLAM3~\citep{orbslam3} under diverse perturbations. For visualization, sequences resulting in failure are assigned an ATE/RPE value of 1.0 and a Success Rate of 0. }}
	\label{fig:ORBSLAM-all}
\end{figure*}

\clearpage
\subsection{\textcolor[rgb]{0.0,0.0,0.0}{More Qualitative Analyses of Model under Perturbations}}\label{sec:qualitative_results}

\noindent\textbf{Nice-SLAM excels in 3D geometry reconstruction under shot noise but fails in color detail reconstruction and loses tracking under rapid motion.} \textbf{1}) Fig.~\ref{fig:niceslam-succeed} showcases the 3D reconstruction and trajectory estimation results of the Nice-SLAM~\citep{niceslam} model under varying levels of shot noise perturbation on the RGB image. Nice-SLAM consistently produces high-quality geometry reconstructions even when subjected to high severity levels of shot noise, which we attribute to the nearly error-free, unperturbed depth map aiding geometry reconstruction. However, we observe that the model struggles to accurately predict and reconstruct appearance details. Consequently, as the noise in the RGB images intensifies, color reconstruction quality diminishes. \textbf{2})  Fig.~\ref{fig:niceslam-fail}, Fig.~\ref{fig:niceslam-fail2}, Fig.~\ref{fig:niceslam-fail3}, and Fig.~\ref{fig:niceslam-fail4}  demonstrates the complete failure of the Nice-SLAM model in reconstructing 3D geometry and maintaining tracking under rapid motion.

\noindent\textbf{SplaTAM-S shows strong robustness in 3D reconstruction under motion blur but fails under severe contrast reduction, losing tracking and reconstruction capabilities.} \textbf{1}) Fig.~\ref{fig:splatam-succeed} presents the qualitative results of the SplaTAM-S~\citep{keetha2024splatam} model under different severity levels of motion blur image-level perturbations. The trajectory estimation reveals that, in the absence of perturbation or with low levels of motion blur, the model produces smooth trajectories. However, as perturbation severity increases to a moderate or high level, the predicted trajectory exhibits more deviations. Notably, the 3D reconstruction consistently maintains high quality despite the increasing blurring caused by observation degradation. \textbf{2})  Fig.~\ref{fig:splatam-fail} and Fig.~\ref{fig:splatam-fail-2} depict failure instances of the SplaTAM-S~\citep{keetha2024splatam} model. These failures occur when subjected to varying levels of contrast decrease image-level perturbation under both static and dynamic perturbation modes. Higher severity levels of perturbation result in complete tracking loss and reconstruction failure.

\noindent\textbf{NeRF as a locally noise-tolerant representation (for certain perturbations like \textit{Spatter Effect}) for iMAP and Nice-SLAM.} \textbf{1})   The RGB-D rendering and 3D reconstruction results depicted in Fig.~\ref{fig:imap-denoise}, Fig.~\ref{fig:niceslam-denoise}, Fig.~\ref{fig:imap-denoise-recon}, and Fig.~\ref{fig:niceslam-denoise-recon}  demonstrate the impressive local de-noising capabilities of iMAP~\citep{imap} and Nice-SLAM~\citep{niceslam} under severe spatter noise in RGB imaging. Both methods generate RGB images and reconstruct 3D meshes that are nearly free of spatter, showcasing their robustness in handling local disturbances.
\textbf{2)} iMAP, as a NeRF-based SLAM system, leverages the power of neural representations to effectively capture scene features despite significant noise. Its feed-forward network (FFN) learns compact and noise-robust representations, allowing adaptive refinement and selective artifact suppression. This capability highlights the importance of neural representation in achieving effective local de-noising.
\textbf{3)} Similarly, Nice-SLAM also shows strong local noise-tolerant capabilities, benefiting from the strengths of the iMAP model. Both methods exemplify the role of neural representations in maintaining clean visual data under challenging conditions.

\noindent\textbf{ORB-SLAM3's reliance on handcrafted ORB feature detection makes it highly sensitive to image perturbations, leading to increased trajectory estimation error (ATE) compared to the more robust, learned features used by Neural SLAM methods.} While ORB-SLAM3 demonstrates resilience to certain types of image corruption such as brightness changes and defocus blur (Fig.\ref{fig:orbslam3-succeed}), its performance significantly degrades under perturbations that impair ORB feature detection. As shown in Fig.~\ref{fig:orbslam3-fail}, increasing levels of Gaussian blur result in a noticeable reduction in the number of detected ORB features. This reduction correlates with a rise in ATE, as depicted in Fig.~\ref{fig:orbslam-gaussian-blur}, indicating that the degradation of feature descriptors directly impairs trajectory estimation accuracy. Noise-related perturbations have an even more pronounced effect, often causing complete tracking failure due to the susceptibility of handcrafted keypoints to noise (Fig.~\ref{fig:correlation-orb-accuracy}). In contrast, dense Neural SLAM methods employ learned features that adapt better to noise and maintain robustness under challenging visual conditions, highlighting a key limitation of traditional feature-based SLAM models like ORB-SLAM3.

\begin{figure*}[ht!]
	\centering
\includegraphics[width=\textwidth]{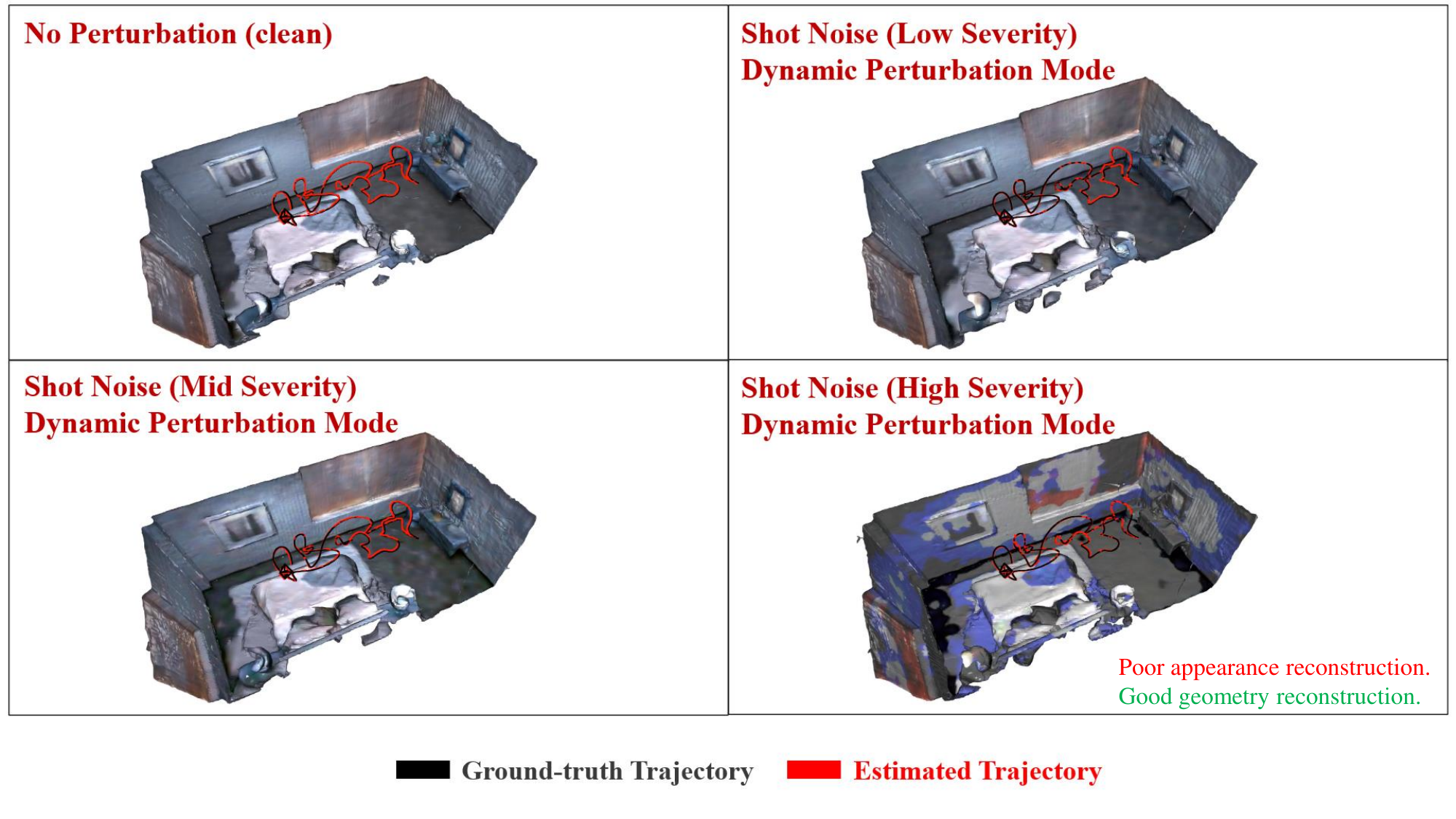} 
\caption{\textbf{Nice-SLAM~\citep{niceslam} excels in 3D geometry reconstruction under \textit{Shot Noise} RGB imaging perturbations but fails in color detail reconstruction under high-severity of perturbations.}}
\label{fig:niceslam-succeed} 
\end{figure*}

\clearpage

\begin{figure*}[ht!]
	\centering
\includegraphics[width=\textwidth]{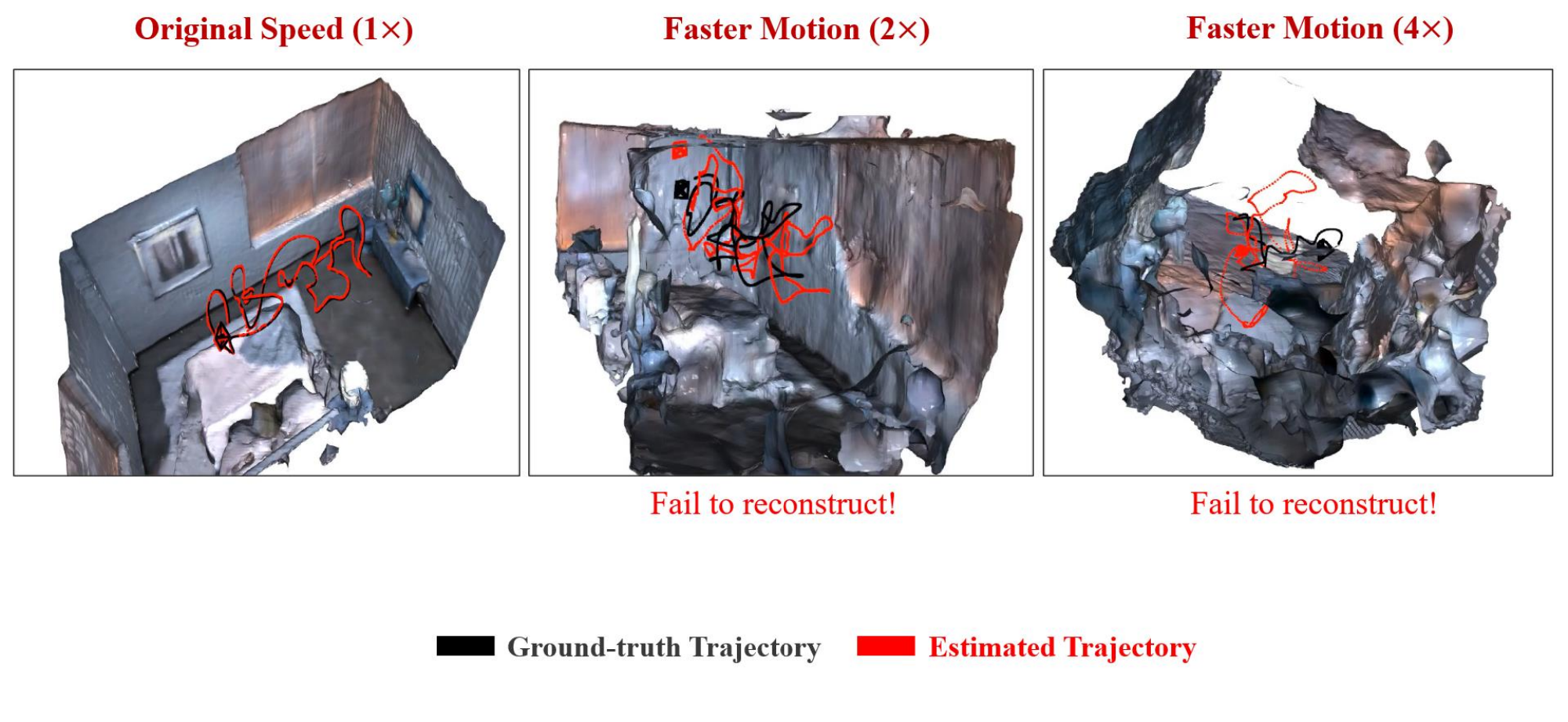} 
\caption{\textbf{Qualitative failure results of Nice-SLAM~\citep{niceslam} under fast motion.}}
\label{fig:niceslam-fail} 
\end{figure*}

\begin{figure*}[ht!]
	\centering
\includegraphics[width=\textwidth]{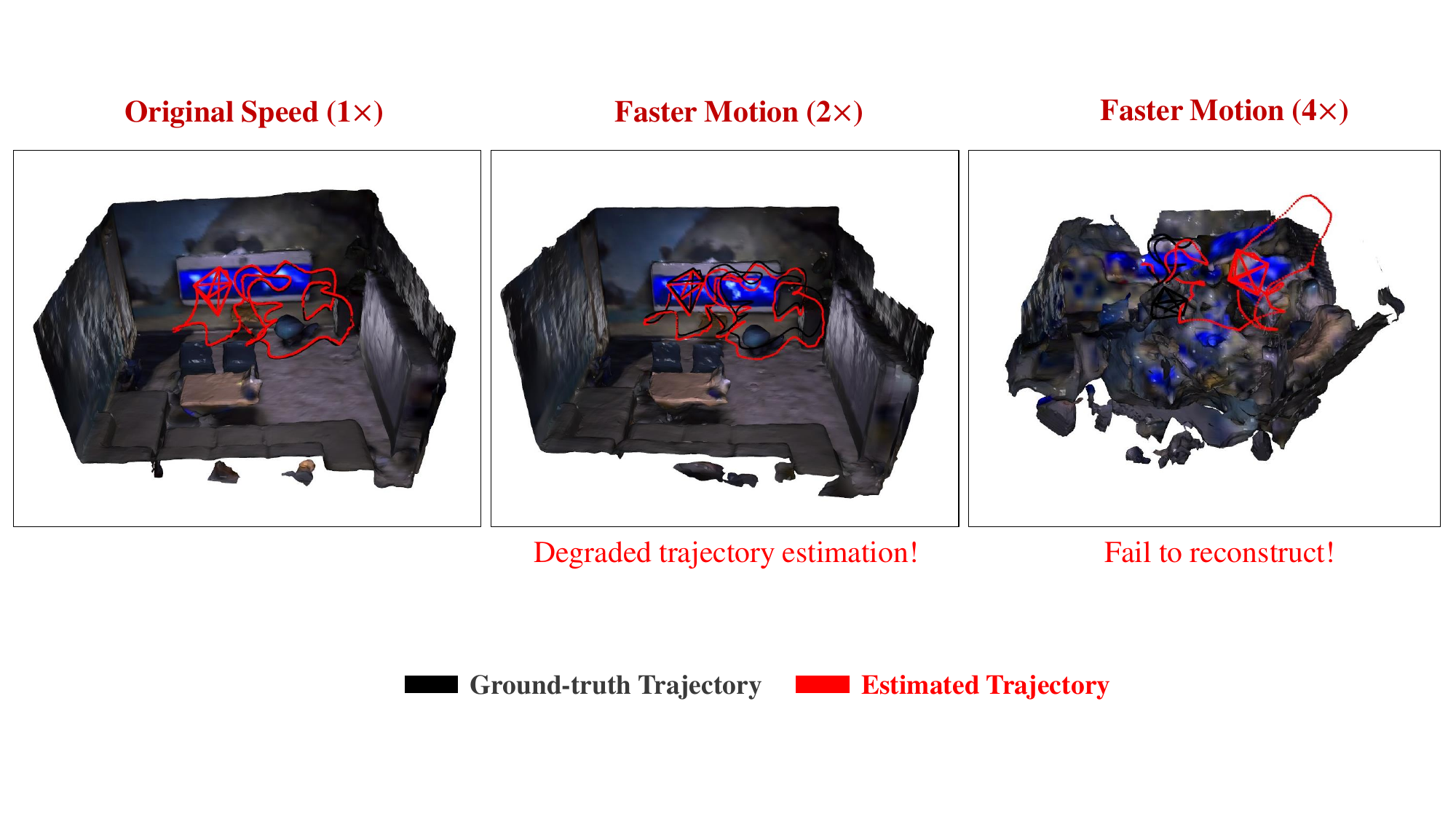} 
\caption{\textbf{Qualitative failure results of Nice-SLAM~\citep{niceslam} under fast motion.}}
\label{fig:niceslam-fail2} 
\end{figure*}
\begin{figure*}[ht!]
	\centering
\includegraphics[width=\textwidth]{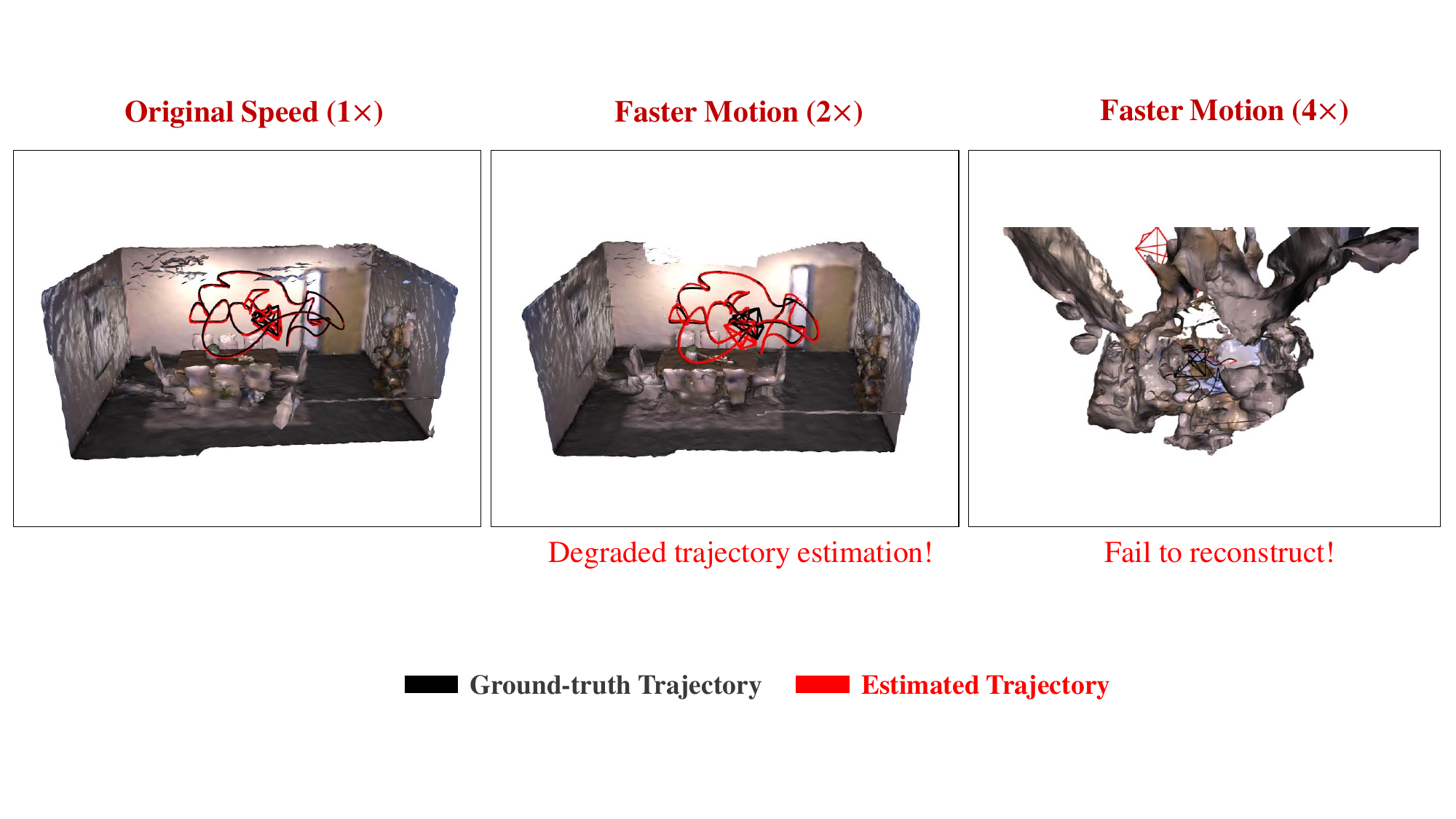} 
\caption{\textbf{Qualitative failure results of Nice-SLAM~\citep{niceslam} under fast motion.}}
\label{fig:niceslam-fail3} 
\end{figure*}
\begin{figure*}[ht!]
	\centering
\includegraphics[width=\textwidth]{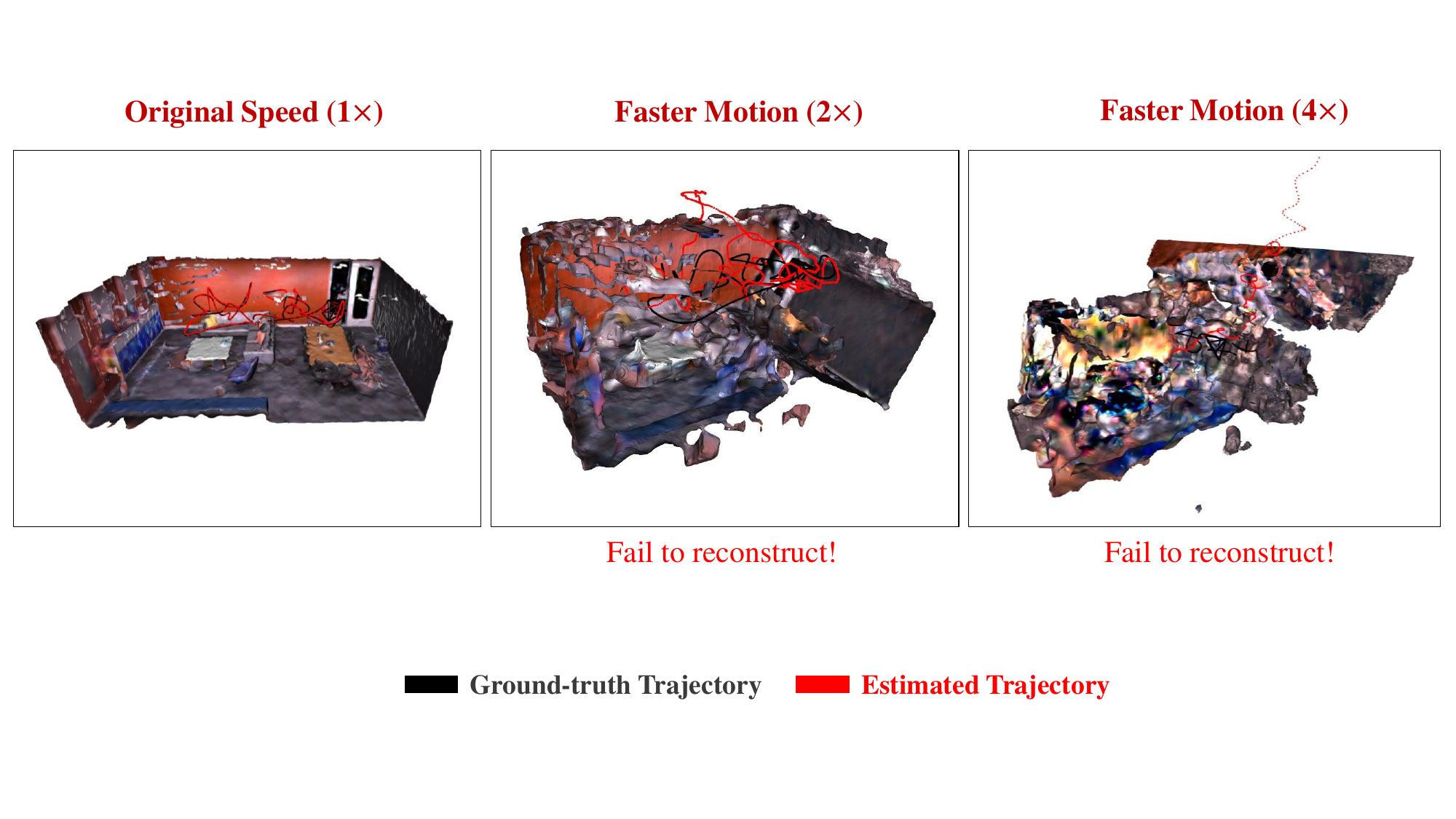} 
\caption{\textbf{Qualitative failure results of Nice-SLAM~\citep{niceslam} under fast motion.}}
\label{fig:niceslam-fail4} 
\end{figure*}

\begin{figure*}[ht!]
	\centering
\includegraphics[width=\textwidth]{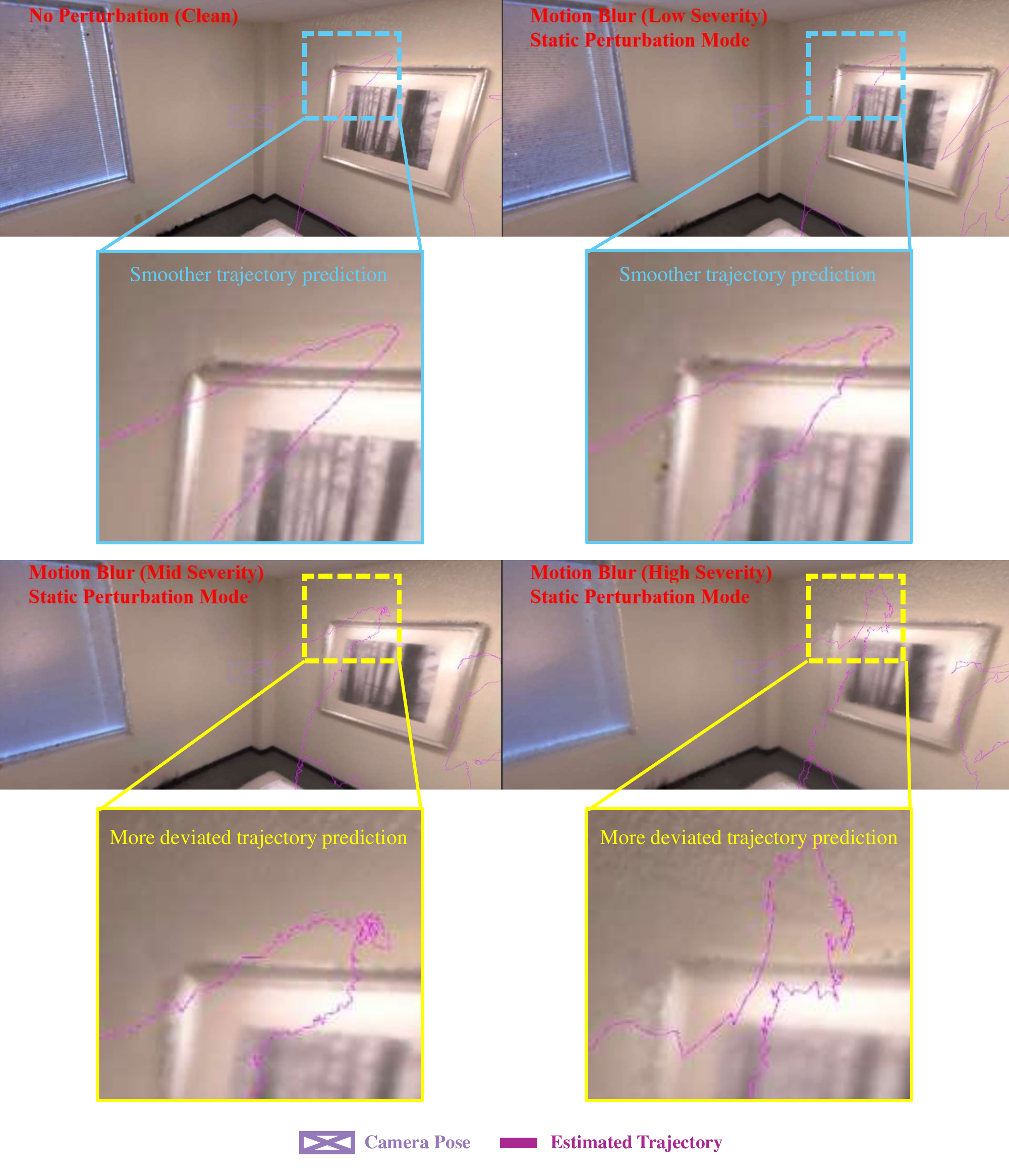} 
\caption{\textbf{Qualitative results of successful cases of SplaTAM-S model~\citep{keetha2024splatam} with  RGB-D input.}}
\label{fig:splatam-succeed} 
\end{figure*}

\begin{figure*}[ht!]
	\centering
\includegraphics[width=\textwidth]{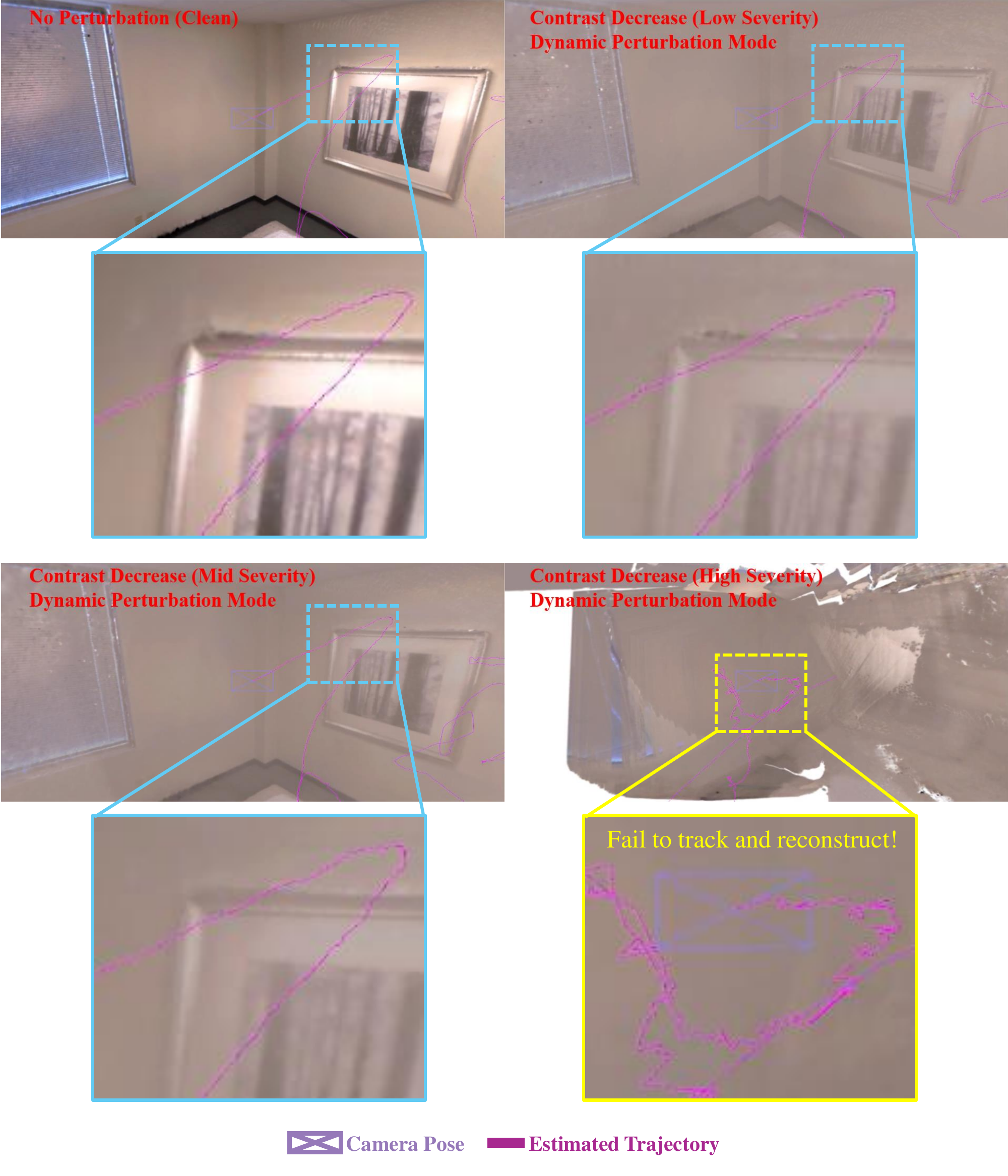} 
\caption{\textbf{Qualitative results of the failure cases of SplaTAM-S model~\citep{keetha2024splatam} with RGB-D input.}}
\label{fig:splatam-fail} 
\end{figure*}

\begin{figure*}[ht!]
	\centering
\includegraphics[width=\textwidth]{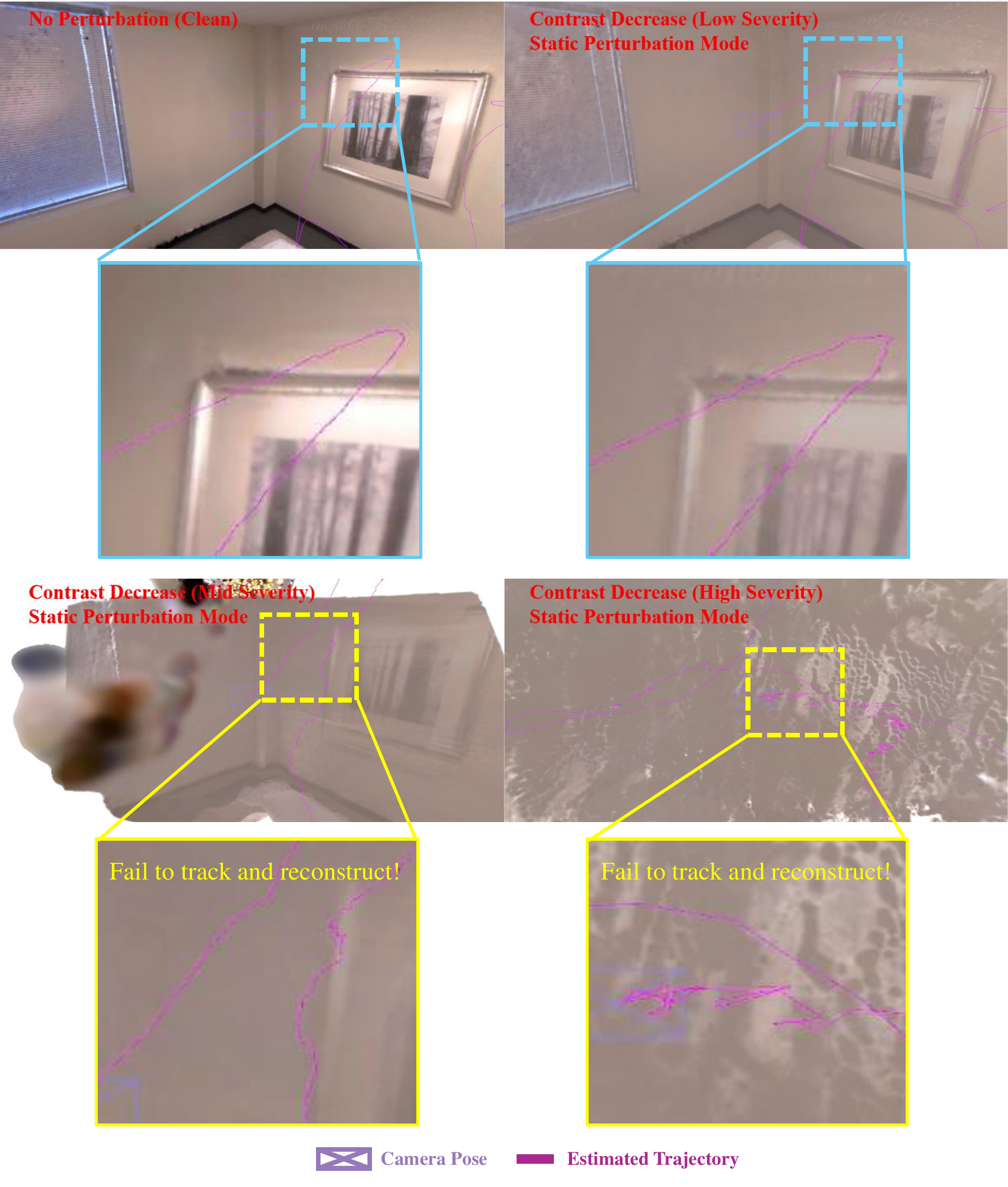} 
\caption{\textbf{Qualitative results of the failure cases of SplaTAM-S model~\citep{keetha2024splatam} with  RGB-D input.}}
\label{fig:splatam-fail-2} 
\end{figure*}

\begin{figure*}[ht!]
	\centering
\includegraphics[width=\textwidth]{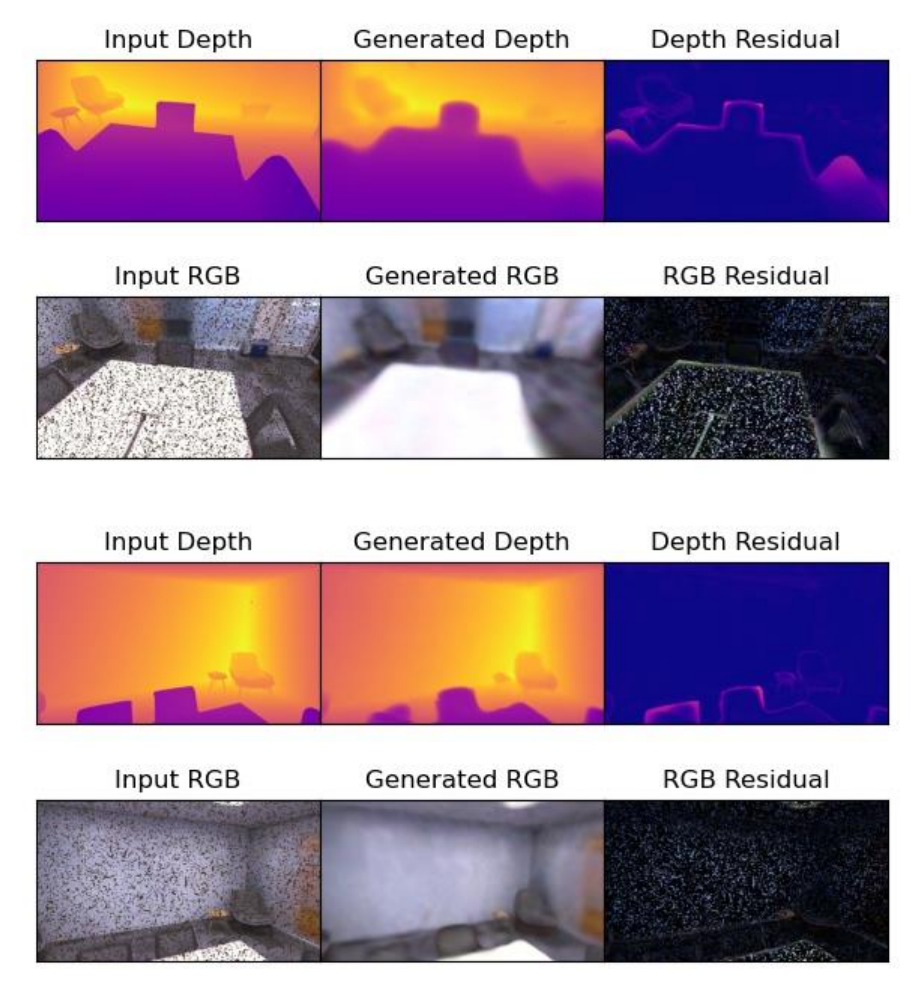} 
\caption{{\textbf{Intriguing local de-noising capability of iMAP}~\citep{imap} under severe Spatter effect perturbations in RGB imaging. The generated RGB, rendered from the implicit representation, is nearly free of spatter noise.}}
\label{fig:imap-denoise} 
\end{figure*}

\begin{figure*}[ht!]
	\centering
\includegraphics[width=\textwidth]{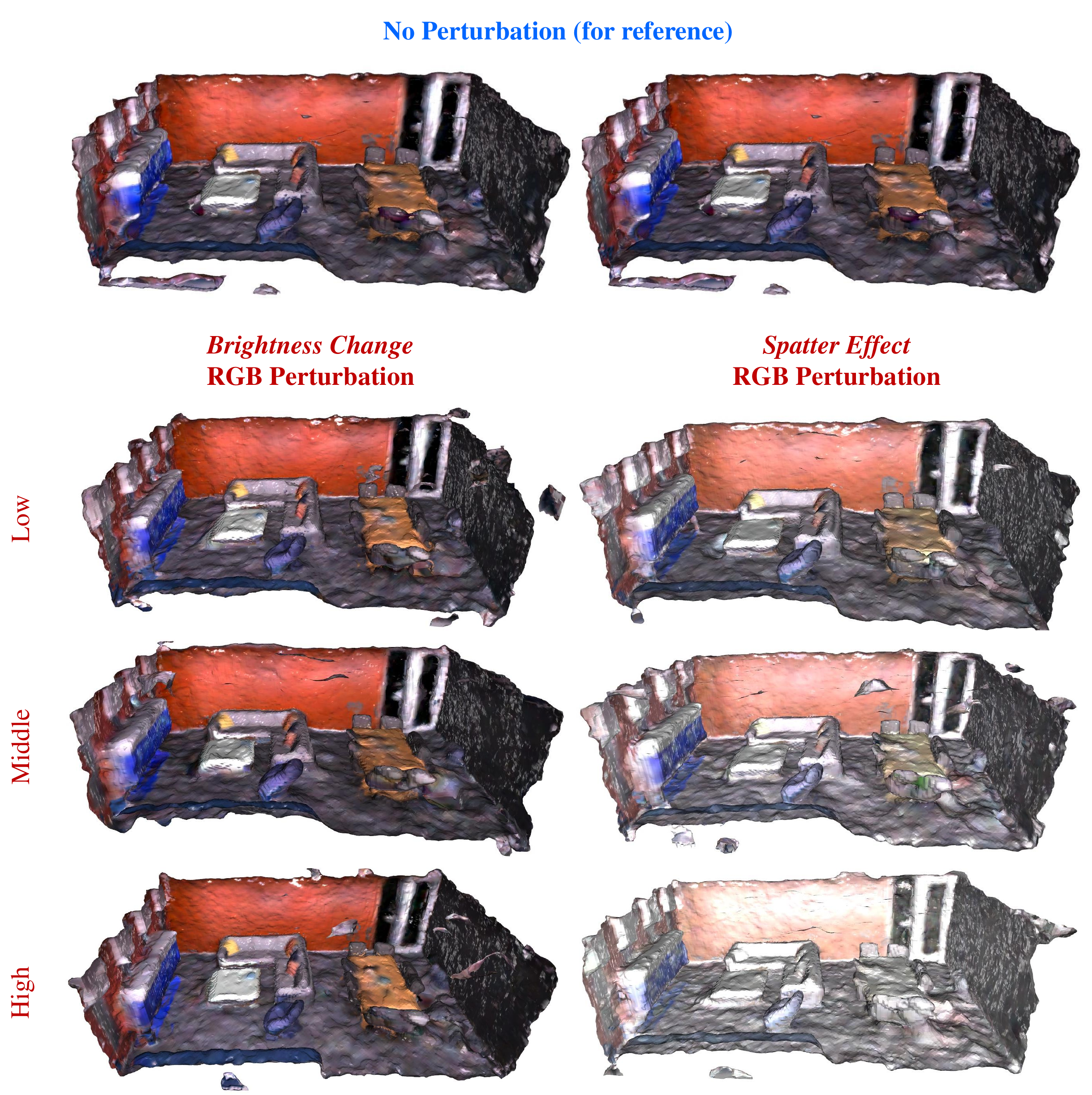} 
\caption{\textbf{Local de-noising capability of iMAP~\citep{imap} under different severities of  \textit{Spatter Effect} perturbations.} The reconstruction is nearly spatter-free (\textbf{Left}), but global perturbations like brightness changes (\textbf{Right}) cause color shifts in the reconstructed 3D  mesh.}
\label{fig:imap-denoise-recon} 
\end{figure*}

\begin{figure*}[ht!]
	\centering
\includegraphics[width=\textwidth]{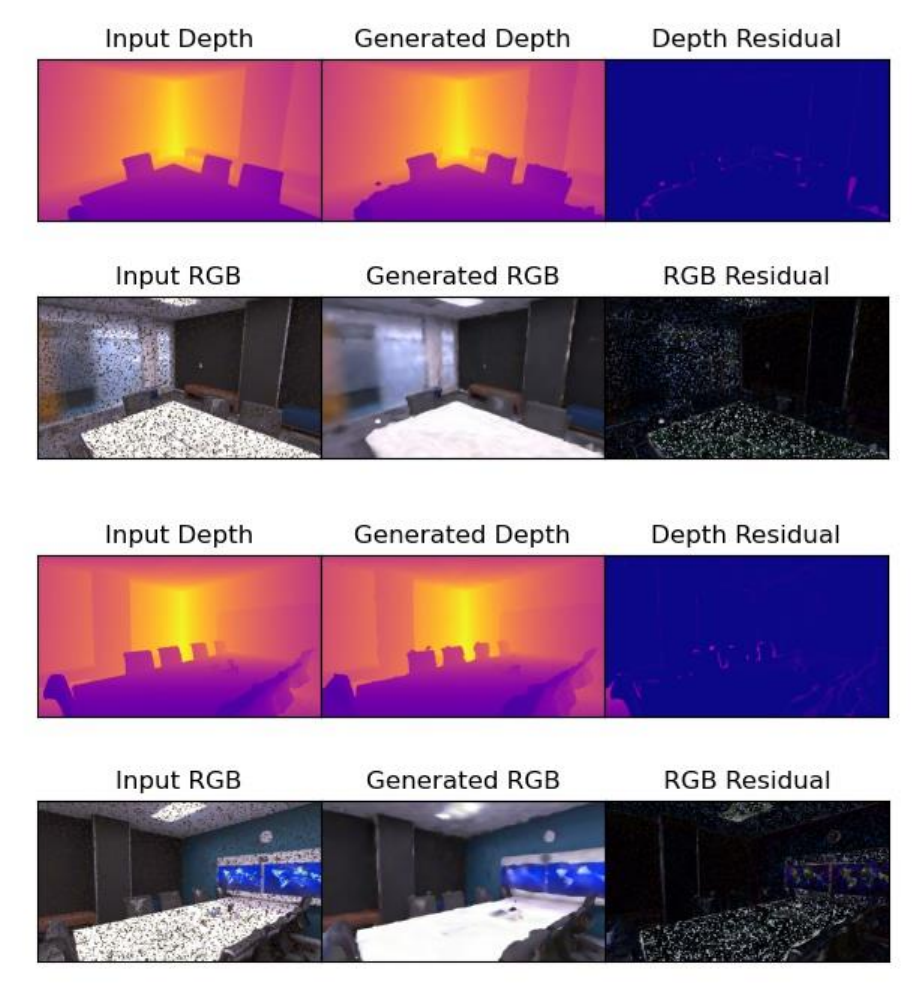} 
\caption{{\textbf{Intriguing local de-noising capability of Nice-SLAM}~\citep{niceslam} (similar to iMAP) under severe Spatter effect perturbations in RGB imaging. The generated RGB, rendered from the implicit representation, is nearly free of spatter noise.}}
\label{fig:niceslam-denoise} 
\end{figure*}

\begin{figure*}[ht!]
	\centering
\includegraphics[width=\textwidth]{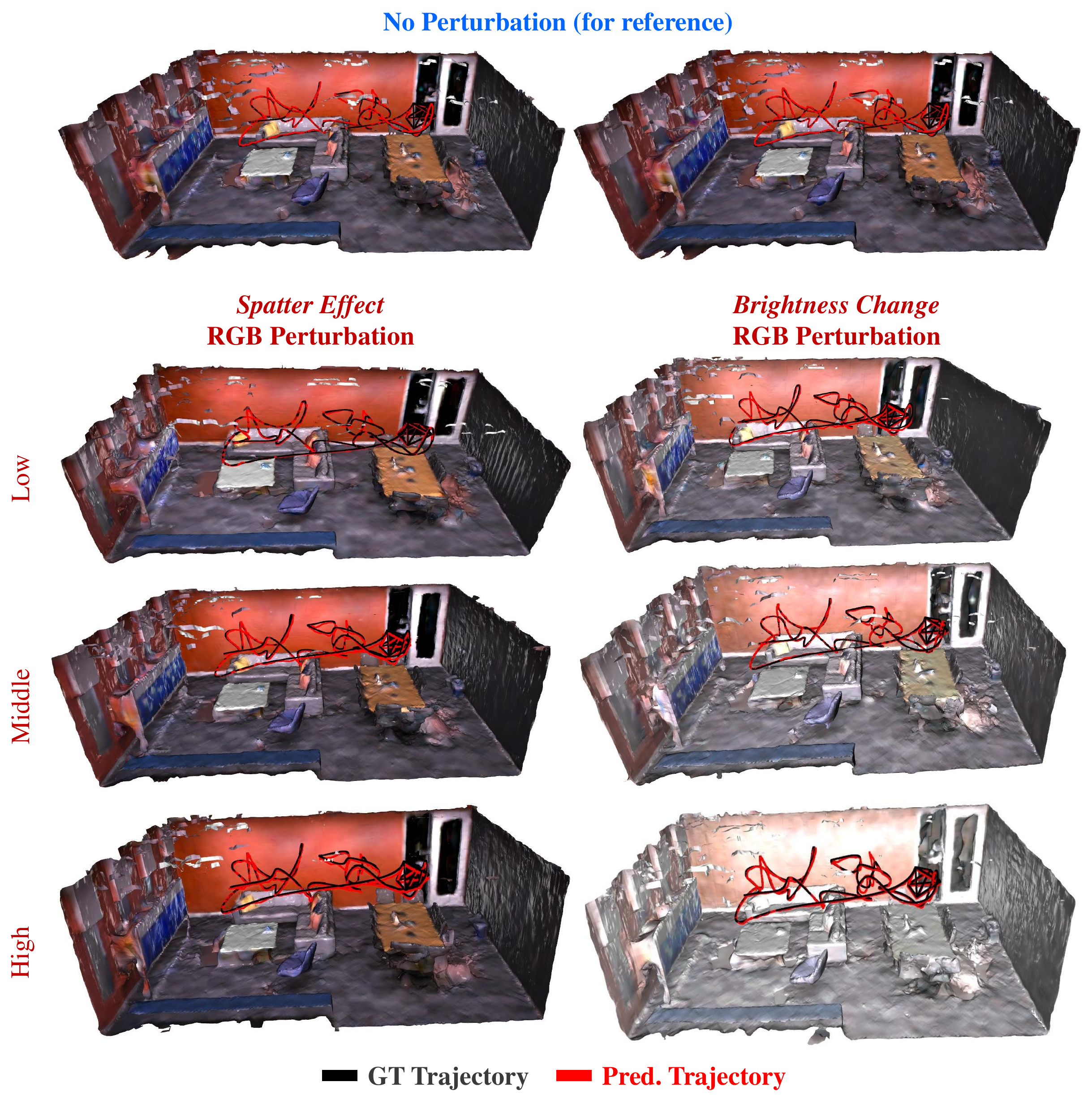} 
\caption{\textbf{Local de-noising capability of Nice-SLAM}~\citep{niceslam} (similar to iMAP) under different severities of  \textit{Spatter Effect} perturbations. The reconstruction is nearly spatter-free (\textbf{Left}), but global perturbations like brightness changes (\textbf{Right}) cause color shifts in the reconstructed 3D  mesh.}
\label{fig:niceslam-denoise-recon} 
\end{figure*}

\begin{figure*}[ht!]
	\centering
\includegraphics[width=\textwidth]{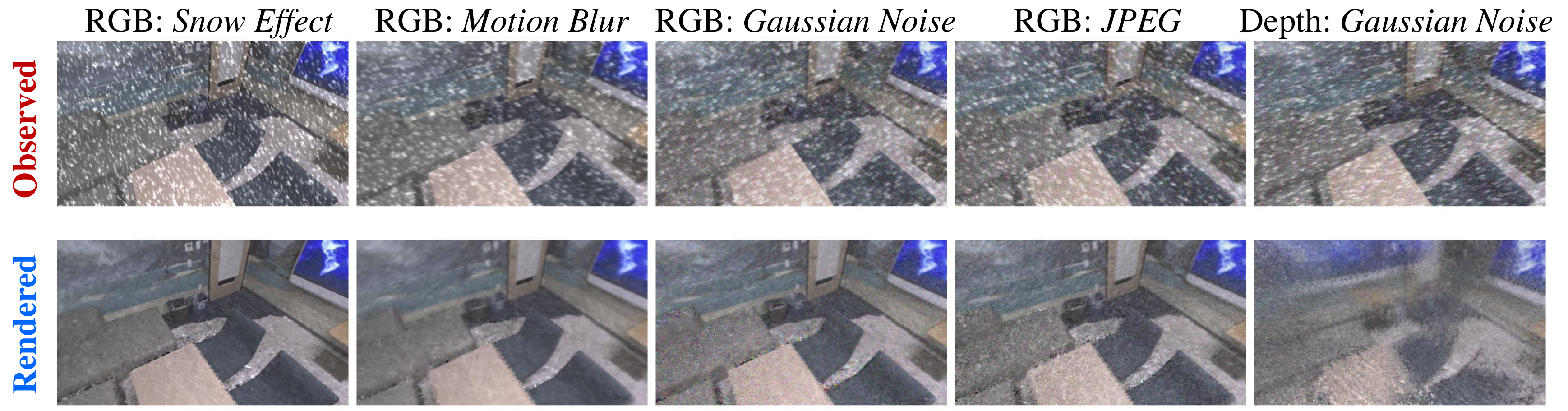} 
\caption{{\textbf{Intriguing local de-noising capability of SplaTAM-S}~\citep{keetha2024splatam}  under mixed RGB and depth imaging perturbations (composed from left to right). The  RGB frames rendered from the explicit neural representation, \textit{i.e.}, 3D Gaussian Splatting~\citep{gaussiansplatting}, is with less noise than the original noisy observations.}}
\label{fig:splatam-denoise} 
\end{figure*}

\begin{figure*}[ht!]
	\centering
\includegraphics[width=\textwidth]{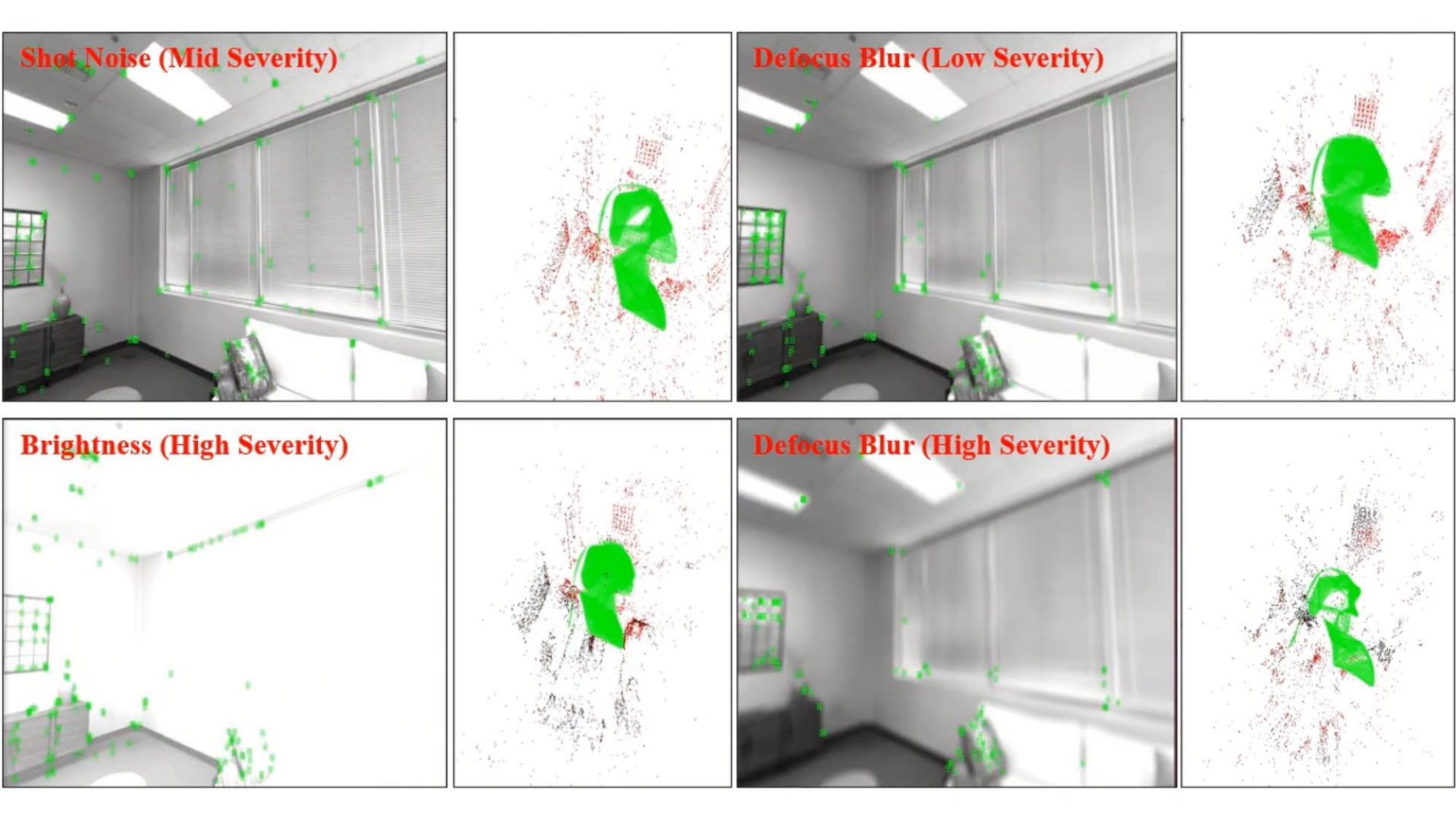} 
\caption{\textbf{Qualitative results of the successful cases of ORB-SLAM3 model~\citep{orbslam3} with  RGB-D input.}}
\label{fig:orbslam3-succeed} 
\end{figure*}

\begin{figure*}[ht!]
	\centering
\includegraphics[width=\textwidth]{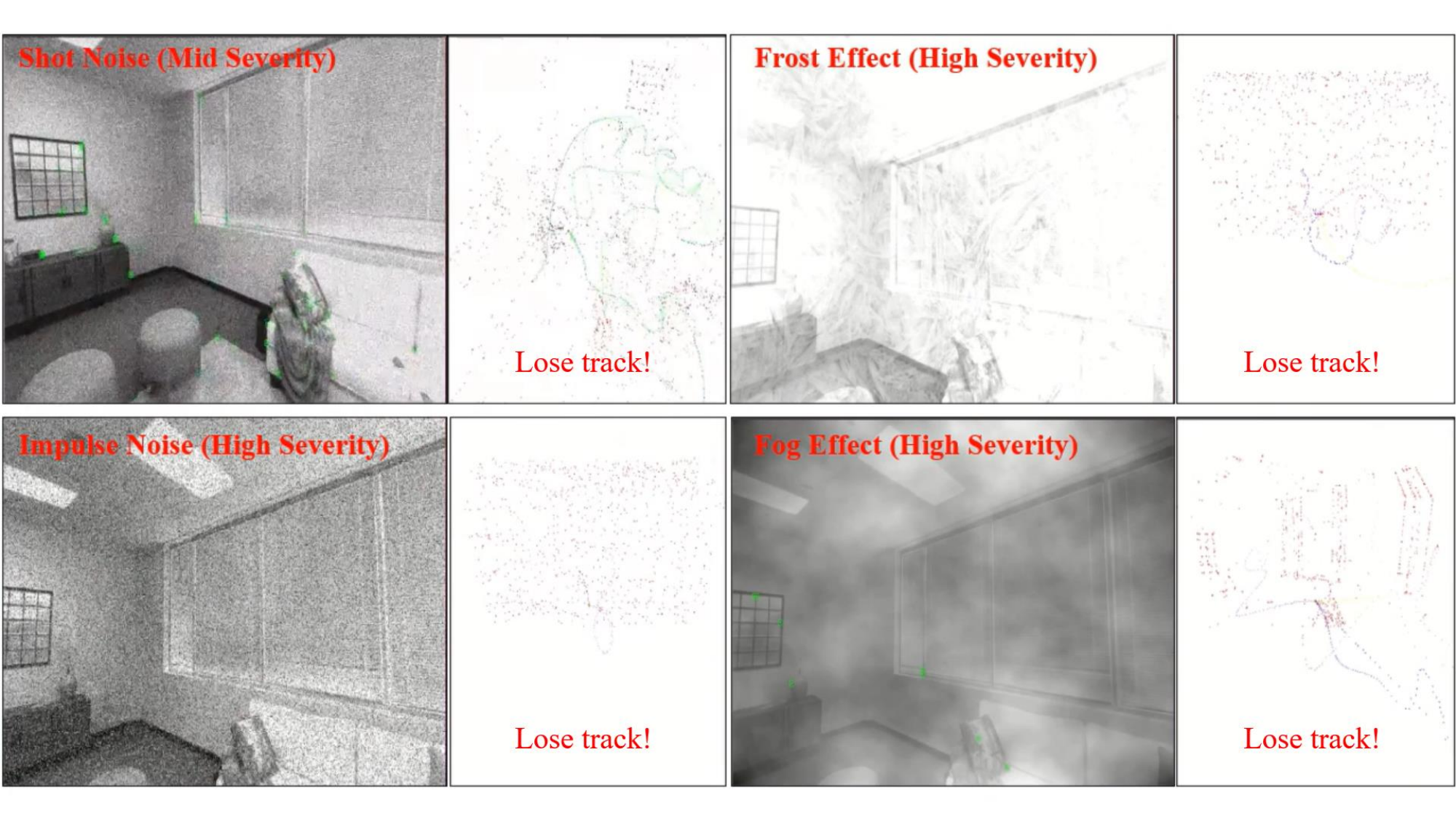} 
\caption{\textbf{Sensitivity of ORB features under perturbations in ORB-SLAM3~\citep{orbslam3}}, which can lead to complete tracking loss and failure.}
\label{fig:orbslam3-fail} 
\end{figure*}

\begin{figure*}[t]
	\centering
 \vspace{1mm}
\includegraphics[width=\textwidth]{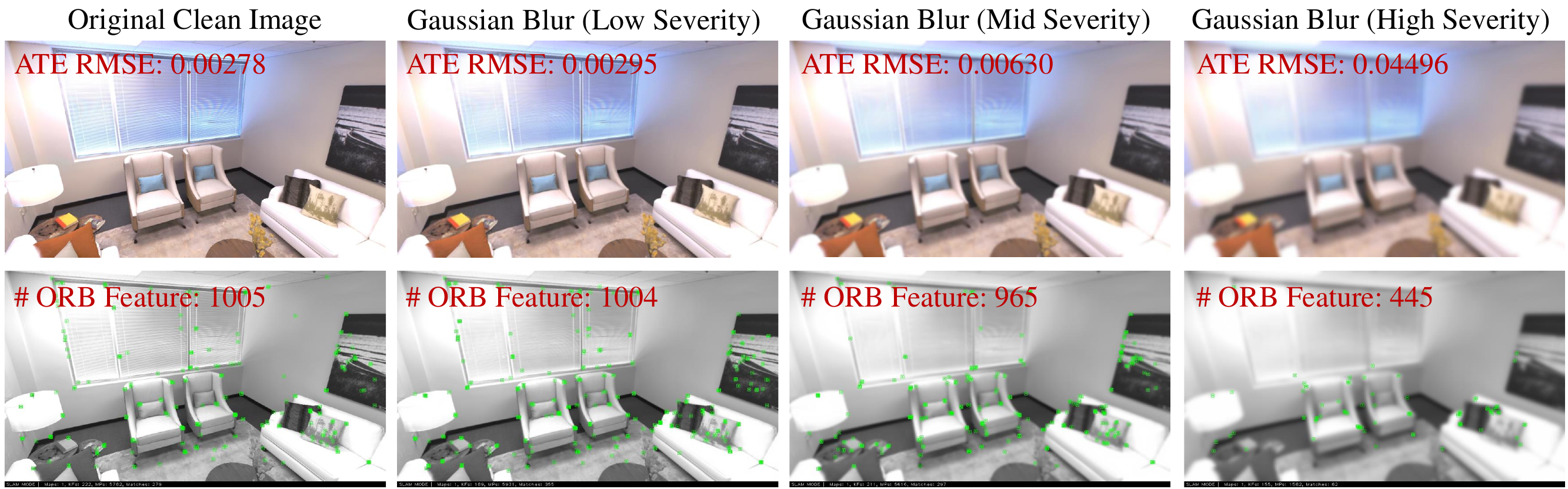} 
	\caption{\textbf{Effect of \textit{Gaussian Blur} image-level perturbation under different severity (\textbf{Top}) on the quality of detected ORB features} (\textbf{Bottom}), which are marked as green dots, for the classical SLAM model ORB-SLAM3~\citep{orbslam3}. We report the average trajectory accuracy via ATE RMSE and the average number of ORB features detected in various perturbed settings. }
	\label{fig:orbslam-gaussian-blur} 
\end{figure*}

\begin{figure*}[t]
	\centering
\includegraphics[width=0.48\textwidth]{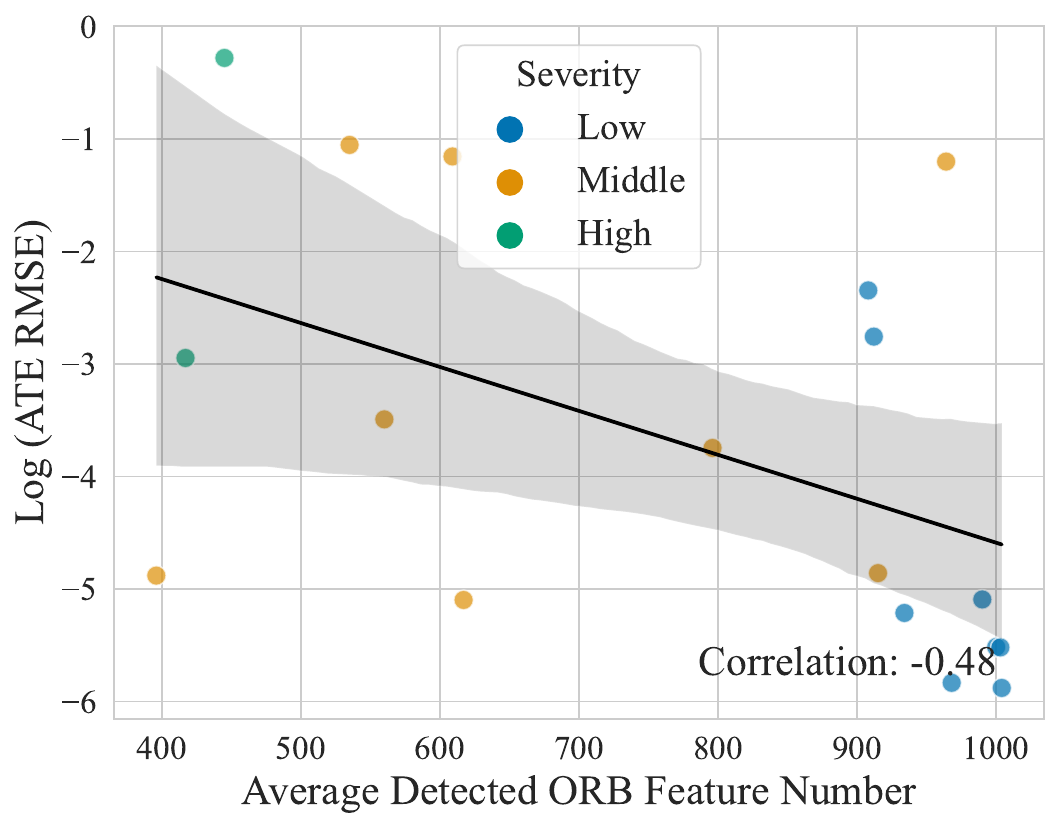} 
\includegraphics[width=0.48\textwidth]{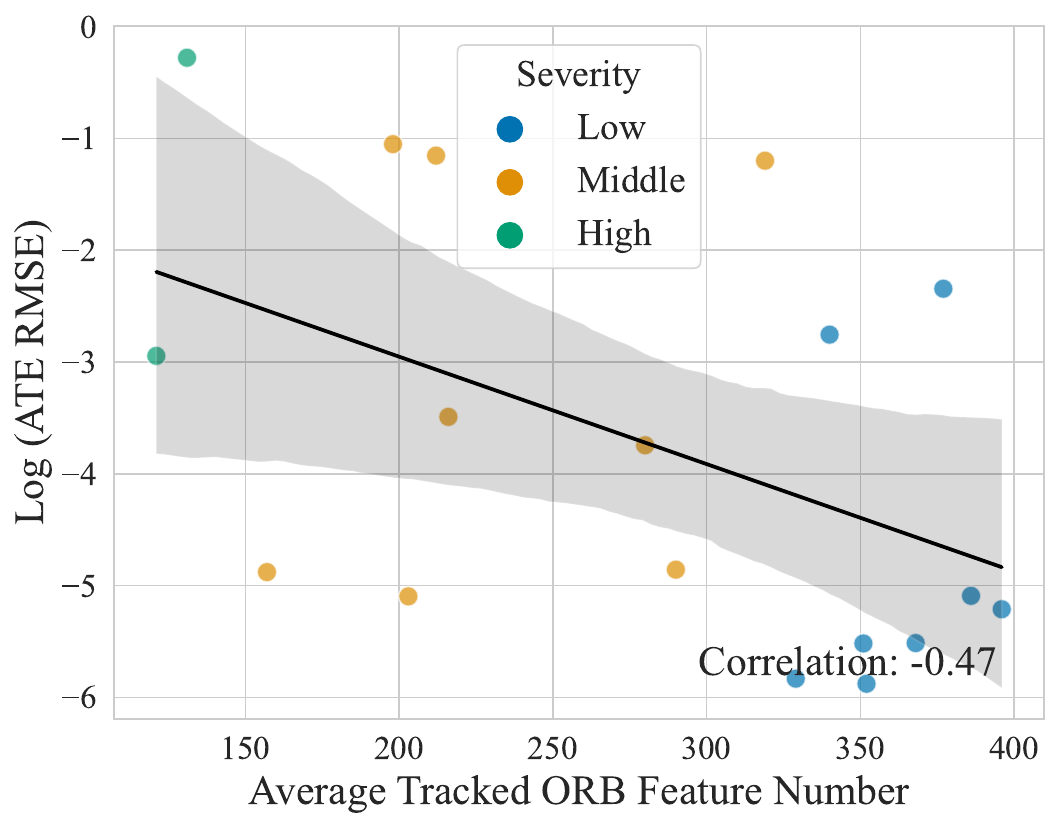} 
	\caption{\textbf{Correlation between trajectory estimation accuracy} \textbf{and the average number of detected} (\textbf{Left}) \textbf{and tracked} (\textbf{Right}) \textbf{ORB features for ORB-SLAM3}~\citep{orbslam3} model (RGB-D setting) under different severity level of Gaussian Blur image-level perturbation. We report the Pearson correlation coefficient~\citep{cohen2009pearson} at the bottom right corner. While the correlation coefficient does not indicate a significant linear correlation, there is a noticeable trend of increased trajectory estimation when fewer ORB features are detected or tracked.}
	\label{fig:correlation-orb-accuracy}
\end{figure*}

\clearpage
\clearpage
\section{\textcolor[rgb]{0.0,0.0,0.0}{Theoretical Insights on \textit{Robust-Ego3D} Benchmarking Results}}\label{sec:theory}

\subsection{\textcolor[rgb]{0.0,0.0,0.0}{Potential Benefits of Image Restoration on Optimization}}
\label{subsec:challenge_image_restoration}

We first analyze how restoring noisy RGB images affects the optimization process in differentiable rendering-based SLAM.

\begin{theorem} \label{thm:rgb_restoration} Restoring noisy RGB images reduces the variability in the residuals used in differentiable rendering-based SLAM, leading to more stable gradient computations and more efficient convergence. \end{theorem}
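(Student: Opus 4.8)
The plan is to formalize the differentiable rendering-based SLAM objective as a nonlinear least-squares photometric loss and to track explicitly how additive imaging noise propagates into the residuals and hence into the gradient. I would write the per-pixel residual as $r_i(\theta) = \hat{I}_i(\theta) - I_i^{\mathrm{obs}}$, where $\hat{I}_i(\theta)$ is the differentiably rendered intensity at pixel $i$ under the map/pose parameters $\theta$, and model the observation as $I_i^{\mathrm{obs}} = I_i^{\star} + \eta_i$ with $I_i^{\star}$ the clean intensity and $\eta_i$ a zero-mean noise term of per-pixel variance $\sigma^2$. The gradient of $L(\theta) = \sum_i r_i(\theta)^2$ then splits as $\nabla_\theta L = 2\sum_i J_i^{\top}(\hat{I}_i(\theta) - I_i^{\star}) - 2\sum_i J_i^{\top}\eta_i$, with $J_i = \nabla_\theta \hat{I}_i(\theta)$. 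The first term is the deterministic signal gradient; the second is a zero-mean stochastic perturbation whose magnitude is controlled by $\sigma^2$.

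First I would quantify the variability that the noise injects. Taking expectation over $\eta$, I would show that $\mathbb{E}[\nabla_\theta L]$ equals the clean-data gradient (the noise term is unbiased), while the gradient covariance acquires an additive term proportional to $\sigma^2 \sum_i J_i^{\top} J_i$, so the fluctuation of the gradient about its mean grows linearly in the noise power. This makes precise the claim of ``variability in the residuals'': the trace of the residual covariance, and correspondingly of the gradient covariance, scales with $\sigma^2$.

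Next I would model restoration as a map $f$ applied to the observation, imposing the defining property that it is variance-reducing, i.e. the restored per-pixel error satisfies $\sigma_f^2 = \mathbb{E}|f(I^{\mathrm{obs}})_i - I_i^{\star}|^2 < \sigma^2$; this holds for any reasonable denoiser and I would justify it via a bias-variance argument for the linear restoration model used in CARL. Substituting $f(I^{\mathrm{obs}})$ for $I^{\mathrm{obs}}$ in the residual and repeating the covariance computation yields an additive gradient-covariance term proportional to $\sigma_f^2 \sum_i J_i^{\top} J_i \prec \sigma^2 \sum_i J_i^{\top} J_i$. Hence restoration strictly shrinks the stochastic component of the gradient, establishing the ``more stable gradient computations'' part.

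The hard part will be turning reduced gradient variance into a rigorous statement about ``more efficient convergence,'' since the rendering map is nonconvex. My plan is to linearize near a local optimum in the Gauss-Newton regime, treating $J_i$ as approximately constant, so the local dynamics reduce to a quadratic problem driven by noisy gradients. For gradient descent with stochastic gradient covariance proportional to $\sigma^2$, standard results give a steady-state error floor and a per-step contraction constant that both increase monotonically with the gradient-noise variance; inserting $\sigma_f^2 < \sigma^2$ then yields a smaller asymptotic error and a tighter decrease. I would isolate this dependence as a lemma on noisy-gradient descent over a strongly convex local model and combine it with the covariance bound to conclude. The chief technical care lies in justifying the local-quadratic reduction and in ensuring the restoration bias does not reintroduce a systematic offset that cancels the variance gain, which I would control by assuming $f$ is approximately unbiased, consistent with the linear CARL model.
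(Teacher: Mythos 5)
Your proposal is correct and starts from the same place as the paper's proof---the additive Gaussian noise model $\mathbf{I}_t = \mathbf{I}_t^{\text{true}} + \mathbf{N}_t$ and the decomposition of the photometric gradient $\nabla_{\mathcal{P}_t}\mathcal{L}_C = 2(\hat{\mathbf{I}}_t - \mathbf{I}_t)^\top \nabla_{\mathcal{P}_t}\hat{\mathbf{I}}_t$ into a signal term plus a noise-driven term---but you then go considerably further than the paper does. The paper's proof is essentially qualitative: it assumes the restoration is nearly exact, $\tilde{\mathbf{I}}_t \approx \mathbf{I}_t^{\text{true}}$, substitutes this into the gradient, and concludes in words that the residual ``has reduced noise,'' so the gradient is more stable and convergence more efficient; there is no covariance computation and no convergence argument at all. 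You differ in three substantive ways: (i) you make ``variability'' precise by computing the gradient covariance and isolating the additive term proportional to $\sigma^2 \sum_i J_i^\top J_i$; (ii) you replace the paper's near-perfect-restoration assumption with the strictly weaker and more realistic requirement that restoration be variance-reducing, $\sigma_f^2 < \sigma^2$, which is what a linear CARL-style denoiser can actually be expected to satisfy; and (iii) you supply the step the paper omits entirely, converting reduced gradient variance into a convergence statement via a local Gauss--Newton quadratic model and standard results on gradient descent with noisy gradients. What the paper's route buys is brevity and a direct match to the informal theorem statement; what your route buys is that both halves of the claim (``more stable gradients'' and ``more efficient convergence'') become provable propositions rather than assertions, at the cost of explicit extra hypotheses---local strong convexity of the linearized problem, approximately constant Jacobians, and approximate unbiasedness of the restorer. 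The bias caveat you flag is a genuine issue that the paper silently avoids by assuming $\tilde{\mathbf{I}}_t \approx \mathbf{I}_t^{\text{true}}$; keeping it as an explicit assumption is the honest way to state the result.
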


\begin{proof} Let $\mathcal{P}_t = \{R_t, \mathbf{t}_t\}$ represent the camera pose at time step $t$. The photometric loss for the RGB image is: \begin{equation} \mathcal{L}_C(\mathcal{P}_t) = |\hat{\mathbf{I}}_t - \mathbf{I}_t|^2, \end{equation} where $\hat{\mathbf{I}}_t$ is the rendered image from the map $\mathcal{M}$ given the pose $\mathcal{P}_t$, and $\mathbf{I}_t$ is the observed image.

Assume the observed RGB image is corrupted by additive Gaussian noise: \begin{equation} \mathbf{I}_t = \mathbf{I}_t^{\text{true}} + \mathbf{N}_t, \quad \mathbf{N}_t \sim \mathcal{N}(0, \sigma^2 \mathbf{I}), \end{equation} where $\mathbf{I}_t^{\text{true}}$ is the true image without noise, and $\sigma^2$ is the variance of the noise.

The gradient of the RGB loss with respect to the pose is: \begin{equation} \nabla_{\mathcal{P}_t} \mathcal{L}_C = 2 (\hat{\mathbf{I}}_t - \mathbf{I}_t)^\top \nabla{\mathcal{P}_t} \hat{\mathbf{I}}_t. \end{equation} The noise $\mathbf{N}_t$ affects the residual $(\hat{\mathbf{I}}_t - \mathbf{I}_t)$, introducing variability into the gradient, which can hinder optimization due to the stochastic nature of the noise.

By applying a restoration mechanism to obtain a denoised image $\tilde{\mathbf{I}}_t$, we reduce the noise component: \begin{equation} \tilde{\mathbf{I}}_t \approx \mathbf{I}_t^{\text{true}}. \end{equation} Using the denoised image, the gradient becomes: \begin{equation} \nabla{\mathcal{P}_t} \mathcal{L}_C = 2 (\hat{\mathbf{I}}_t - \tilde{\mathbf{I}}_t)^\top \nabla{\mathcal{P}_t} \hat{\mathbf{I}}_t. \end{equation} Since $(\hat{\mathbf{I}}_t - \tilde{\mathbf{I}}_t)$ has reduced noise, the gradient is more stable and accurately reflects the discrepancy between the rendered and true images. This leads to more efficient convergence during optimization. \end{proof}

\subsection{\textcolor[rgb]{0.0,0.0,0.0}{Effect of Missing and Noisy Depth Data}}
\label{subsec:challenge_depth_missing}

Next, we examine how missing and noisy depth data affect the gradient computation and the stability of the optimization.

\begin{theorem} \label{thm:missing_stability} In differentiable rendering-based optimization for RGB-D SLAM, missing depth data does not introduce noise into the gradient computation and thus does not lead to instability in the optimization process. \end{theorem}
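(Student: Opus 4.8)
The plan is to mirror the structure of the proof of Theorem~\ref{thm:rgb_restoration}: isolate the depth term of the rendering loss, write its gradient with respect to the pose, and show that the contribution of missing pixels is \emph{identically zero} rather than a corrupted (stochastic) residual. First I would set up the masked depth loss. Following standard RGB-D SLAM practice (and the implementations benchmarked here, e.g.\ SplaTAM), missing depth is excluded from the objective via a binary validity mask $M_t(p)\in\{0,1\}$, with $M_t(p)=0$ exactly where the depth return is absent. The depth loss at time $t$ is then
\begin{equation}
\mathcal{L}_D(\mathcal{P}_t)=\sum_{p}M_t(p)\,\bigl\|\hat{D}_t(p)-D_t(p)\bigr\|^2,
\end{equation}
where $\hat{D}_t(p)$ is the depth rendered from the map $\mathcal{M}$ at pose $\mathcal{P}_t$ and $D_t(p)$ is the observed depth at pixel $p$.

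Second, I would differentiate with respect to the pose and read off the per-pixel contributions:
\begin{equation}
\nabla_{\mathcal{P}_t}\mathcal{L}_D=2\sum_{p}M_t(p)\,\bigl(\hat{D}_t(p)-D_t(p)\bigr)\,\nabla_{\mathcal{P}_t}\hat{D}_t(p).
\end{equation}
The crux is that every term carrying a missing measurement is multiplied by $M_t(p)=0$, so it drops out of the sum, and the unobserved value $D_t(p)$ never enters the gradient. This is the essential contrast with Theorem~\ref{thm:noise_instability}: missing depth sets the residual weight to zero, whereas noisy depth keeps the pixel active but replaces $D_t(p)$ by a corrupted $D_t(p)+\eta$, injecting an $\eta$-dependent stochastic term into the gradient. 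I would formalize ``no noise is introduced'' by observing that, conditioned on the valid set, $\nabla_{\mathcal{P}_t}\mathcal{L}_D$ is a deterministic function of the observed (clean) depths alone, so its expectation equals the gradient of the valid-pixel loss and the variance contribution from the missing set is exactly zero.

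Finally, I would translate this into the stability statement: since the gradient is an exact, noise-free descent direction for the reduced objective over the valid pixels, standard first-order convergence arguments apply unchanged, and the Hessian restricted to the valid pixels remains positive semidefinite, so pose updates stay well-conditioned. I expect the main obstacle to be conceptual rather than computational, namely precisely delineating \emph{missing} from \emph{noisy} and justifying the masking formulation as the faithful model of the benchmarked systems. Once the mask is adopted, the vanishing of the missing-pixel terms is immediate; the care needed is in arguing that removing constraints merely reduces information (and may raise the variance of the pose \emph{estimate}) without introducing \emph{bias or noise} into the gradient itself --- that is, separating loss of accuracy from injection of instability, which is exactly what the theorem asserts.
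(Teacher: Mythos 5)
Your proposal is correct and takes essentially the same approach as the paper: the paper restricts the depth loss to the set $\Omega$ of pixels with valid measurements, which is exactly your binary-mask formulation in different notation, and both arguments conclude that missing pixels contribute nothing to the gradient, so no noise enters the pose update. Your additional remarks on the gradient's deterministic character given the valid set and the distinction between losing information and injecting noise are a modest elaboration of, not a departure from, the paper's argument.
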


\begin{proof} Let $\Omega$ be the set of pixels where depth measurements are available. The depth loss is computed over $\Omega$: \begin{equation} \mathcal{L}_D(\mathcal{P}_t) = \frac{1}{|\Omega|} \sum_{i \in \Omega} (\hat{\mathbf{D}}_{t,i} - \mathbf{D}_{t,i})^2, \end{equation} where $\hat{\mathbf{D}}_{t,i}$ is the rendered depth at pixel $i$, and $\mathbf{D}_{t,i}$ is the observed depth.

The gradient of the depth loss with respect to the pose is: \begin{equation} \nabla_{\mathcal{P}_t} \mathcal{L}_D = \frac{2}{|\Omega|} \sum_{i \in \Omega} (\hat{\mathbf{D}}_{t,i} - \mathbf{D}_{t,i}) \nabla_{\mathcal{P}_t} \hat{\mathbf{D}}_{t,i}. \end{equation} Since missing depth data is excluded from $\Omega$, it does not contribute to the loss or its gradient. Therefore, the absence of depth measurements does not introduce any noise or instability into the gradient computation, allowing the optimization process to remain stable. \end{proof}

\begin{theorem} \label{thm:noise_instability} In differentiable rendering-based optimization for RGB-D SLAM, noisy depth inputs introduce variability into the residuals used for gradient computation, which can lead to misaligned pose updates and instability in the optimization process. \end{theorem}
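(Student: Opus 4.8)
The plan is to mirror the structure of the two preceding theorems (Theorem~\ref{thm:rgb_restoration} and especially Theorem~\ref{thm:missing_stability}) but now run the argument in reverse: instead of showing that exclusion keeps the gradient clean, I would show that \emph{inclusion} of a corrupted depth value injects a stochastic term into the residual that propagates directly into the gradient. First I would set up the same notation as in Theorem~\ref{thm:missing_stability}: the depth loss $\mathcal{L}_D(\mathcal{P}_t) = \frac{1}{|\Omega|}\sum_{i\in\Omega}(\hat{\mathbf{D}}_{t,i}-\mathbf{D}_{t,i})^2$ with gradient $\nabla_{\mathcal{P}_t}\mathcal{L}_D = \frac{2}{|\Omega|}\sum_{i\in\Omega}(\hat{\mathbf{D}}_{t,i}-\mathbf{D}_{t,i})\nabla_{\mathcal{P}_t}\hat{\mathbf{D}}_{t,i}$, keeping everything consistent so the contrast with the clean case is visible at a glance.

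Next I would model the noisy observation additively, parallel to the RGB noise model in Theorem~\ref{thm:rgb_restoration}: write $\mathbf{D}_{t,i} = \mathbf{D}_{t,i}^{\text{true}} + \epsilon_{t,i}$ with $\epsilon_{t,i}\sim\mathcal{N}(0,\sigma_D^2)$ independent across pixels. Substituting into the residual gives $(\hat{\mathbf{D}}_{t,i}-\mathbf{D}_{t,i}) = (\hat{\mathbf{D}}_{t,i}-\mathbf{D}_{t,i}^{\text{true}}) - \epsilon_{t,i}$, so the gradient splits into a ``true'' part plus a noise-driven part $-\frac{2}{|\Omega|}\sum_{i\in\Omega}\epsilon_{t,i}\nabla_{\mathcal{P}_t}\hat{\mathbf{D}}_{t,i}$. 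The two quantitative claims I would then establish are: (i) \textbf{bias/misalignment} --- taking the expectation over $\epsilon$, the noise term has zero mean but does not vanish in any single realization, so each stochastic-gradient step is displaced from the true descent direction $\nabla_{\mathcal{P}_t}\mathcal{L}_D^{\text{true}}$, meaning the realized update is systematically misaligned relative to the clean-data update; (ii) \textbf{increased variance} --- computing $\operatorname{Var}(\nabla_{\mathcal{P}_t}\mathcal{L}_D)$ under independence yields a term proportional to $\frac{4\sigma_D^2}{|\Omega|^2}\sum_{i\in\Omega}\|\nabla_{\mathcal{P}_t}\hat{\mathbf{D}}_{t,i}\|^2$, which is strictly positive and grows with $\sigma_D^2$. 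This elevated gradient variance is precisely the ``variability in the residuals'' the theorem asserts leads to instability.

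The main obstacle, and the place where I would be most careful about wording, is the word \emph{misaligned} in the statement. The additive-Gaussian model gives an \emph{unbiased} gradient in expectation, so a naive reader could object that in expectation the update is fine. I would address this by distinguishing per-step realized behavior from the expectation: the relevant quantity for optimization stability is the per-iteration gradient, whose error term $-\frac{2}{|\Omega|}\sum_i \epsilon_{t,i}\nabla_{\mathcal{P}_t}\hat{\mathbf{D}}_{t,i}$ is nonzero almost surely and correlates with the pose-dependent Jacobian $\nabla_{\mathcal{P}_t}\hat{\mathbf{D}}_{t,i}$. Because the Jacobian varies across the image, the expected squared angular deviation between the noisy and clean gradient directions is strictly positive, which is what ``misaligned pose updates'' should be taken to mean. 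I would therefore frame conclusion (i) carefully as a per-realization statement and lean on conclusion (ii) for the quantitative backbone, optionally noting that if one wanted genuine bias (not just variance) it would follow immediately from a multiplicative or signal-dependent noise model, which is in fact the more physically accurate depth-sensor model described in the appendix.

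\begin{proof}[Proof sketch]
Using the depth loss and gradient from Theorem~\ref{thm:missing_stability}, model each observed depth as $\mathbf{D}_{t,i} = \mathbf{D}_{t,i}^{\text{true}} + \epsilon_{t,i}$ with $\epsilon_{t,i}\sim\mathcal{N}(0,\sigma_D^2)$ independent across $i\in\Omega$. The residual becomes $(\hat{\mathbf{D}}_{t,i}-\mathbf{D}_{t,i}^{\text{true}})-\epsilon_{t,i}$, so
\begin{equation}
\nabla_{\mathcal{P}_t}\mathcal{L}_D = \nabla_{\mathcal{P}_t}\mathcal{L}_D^{\text{true}} - \frac{2}{|\Omega|}\sum_{i\in\Omega}\epsilon_{t,i}\,\nabla_{\mathcal{P}_t}\hat{\mathbf{D}}_{t,i}.
\end{equation}
The second term is a zero-mean but almost-surely nonzero random vector whose variance is
\begin{equation}
\frac{4\sigma_D^2}{|\Omega|^2}\sum_{i\in\Omega}\left\|\nabla_{\mathcal{P}_t}\hat{\mathbf{D}}_{t,i}\right\|^2 > 0,
\end{equation}
which grows with $\sigma_D^2$. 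Hence each realized gradient is displaced from the clean descent direction, producing misaligned pose updates and destabilizing the optimization, in contrast to the missing-data case of Theorem~\ref{thm:missing_stability}.
\end{proof}
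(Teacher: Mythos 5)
Your proposal is correct and follows essentially the same route as the paper's own proof: both assume additive Gaussian depth noise $\mathbf{D}_t = \mathbf{D}_t^{\text{true}} + \mathbf{N}_t$, substitute into the depth-loss residual, and conclude that the noise propagates into $\nabla_{\mathcal{P}_t}\mathcal{L}_D$, producing variability that misaligns pose updates. Where you go beyond the paper: its proof stops at the qualitative remark that the noisy residual ``introduces variability into the gradient'' (adding only that a large depth weight $\lambda_D$ amplifies the effect, a point you omit), whereas you make the decomposition $\nabla_{\mathcal{P}_t}\mathcal{L}_D = \nabla_{\mathcal{P}_t}\mathcal{L}_D^{\text{true}} - \tfrac{2}{|\Omega|}\sum_{i\in\Omega}\epsilon_{t,i}\,\nabla_{\mathcal{P}_t}\hat{\mathbf{D}}_{t,i}$ explicit, compute the variance of the noise term, and confront the unbiasedness objection directly by distinguishing the expected gradient from the almost-surely-perturbed realized one --- a distinction the paper's phrase ``can lead to misaligned pose updates'' quietly relies on but never articulates, so your treatment is the more defensible version of the same argument.
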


\begin{proof} Assume the observed depth map is corrupted by additive Gaussian noise: \begin{equation} \mathbf{D}_t = \mathbf{D}_t^{\text{true}} + \mathbf{N}_t, \quad \mathbf{N}_t \sim \mathcal{N}(0, \sigma_D^2 \mathbf{I}), \end{equation} where $\mathbf{D}_t^{\text{true}}$ is the true depth map, and $\sigma_D^2$ is the variance of the depth noise.

The gradient of the depth loss with respect to the pose is: \begin{equation} \nabla_{\mathcal{P}_t} \mathcal{L}_D = 2 (\hat{\mathbf{D}}_t - \mathbf{D}_t)^\top \nabla{\mathcal{P}_t} \hat{\mathbf{D}}_t. \end{equation} The noise $\mathbf{N}_t$ affects the residual $(\hat{\mathbf{D}}_t - \mathbf{D}_t)$, introducing variability into the gradient. This noisy gradient can misalign pose updates because the optimization is guided by inaccurate depth discrepancies.

When $\lambda_D$ is large, the optimization relies heavily on the depth loss. The variability in the gradient due to noisy depth can lead to incorrect pose updates, causing instability and possibly divergence in the optimization process. \end{proof}

\subsection{\textcolor[rgb]{0.0,0.0,0.0}{Effect of Large Pose Changes on Optimization}}
\label{subsec:challenge_large_pose_change}

We analyze how large pose changes impact the efficiency and convergence of gradient-based optimization in differentiable rendering.

\begin{theorem} \label{thm:fast_motion_divergence} Large pose changes in differentiable rendering-based optimization can lead to large residuals and gradients, causing inefficiencies in gradient-based optimization and increasing the risk of divergence. \end{theorem}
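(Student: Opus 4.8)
The plan is to build directly on the photometric gradient already derived in the proof of Theorem~\ref{thm:rgb_restoration}, namely $\nabla_{\mathcal{P}_t}\mathcal{L}_C = 2(\hat{\mathbf{I}}_t - \mathbf{I}_t)^\top \nabla_{\mathcal{P}_t}\hat{\mathbf{I}}_t$, and to make the dependence of both the residual and the Jacobian on the magnitude of the inter-frame motion explicit. First I would parametrize the pose change by $\Delta\mathcal{P} = \log\!\big((\mathcal{P}_t^*)^{-1}\mathcal{P}_t^0\big)\in\mathfrak{se}(3)$, where $\mathcal{P}_t^0$ is the pose initialization (e.g. propagated from the previous frame under a constant-velocity prior) and $\mathcal{P}_t^*$ is the true pose, so that the scalar $\|\Delta\mathcal{P}\|$ measures the size of the motion that must be recovered. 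Assuming an accurate map so that $\mathbf{I}_t = \hat{\mathbf{I}}(\mathcal{P}_t^*)$, the residual at initialization becomes $\hat{\mathbf{I}}(\mathcal{P}_t^0) - \hat{\mathbf{I}}(\mathcal{P}_t^*)$, a pure difference of renderings at two poses.

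Next I would expand this residual to first order in $\Delta\mathcal{P}$, giving $\hat{\mathbf{I}}(\mathcal{P}_t^0) - \hat{\mathbf{I}}(\mathcal{P}_t^*) \approx \mathbf{J}_t\,\Delta\mathcal{P}$ with $\mathbf{J}_t = \nabla_{\mathcal{P}_t}\hat{\mathbf{I}}_t$ the rendering Jacobian, so that $\|\hat{\mathbf{I}}_t - \mathbf{I}_t\|$ scales with $\|\Delta\mathcal{P}\|$ whenever the pose error is non-negligible. Substituting into the gradient formula yields $\nabla_{\mathcal{P}_t}\mathcal{L}_C \approx 2\,\mathbf{J}_t^\top \mathbf{J}_t\,\Delta\mathcal{P}$, whose norm is controlled between the smallest and largest eigenvalues of $\mathbf{J}_t^\top\mathbf{J}_t$ times $\|\Delta\mathcal{P}\|$; this establishes that a large pose change forces a proportionally large gradient (an analogous step applies to $\mathcal{L}_D$ via $\hat{\mathbf{D}}_t$). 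I would then close the argument on optimization behaviour by appealing to the standard descent-lemma condition: for a gradient step $\mathcal{P}_t^{(k+1)} = \mathcal{P}_t^{(k)} - \eta\nabla_{\mathcal{P}_t}\mathcal{L}_C$ to decrease the loss, the step size must satisfy $\eta < 2/L$ where $L$ upper-bounds the local curvature, so a large gradient combined with a fixed $\eta$ produces an update of magnitude $\eta\|\nabla\mathcal{L}_C\|$ that overshoots the basin of attraction, preventing contraction and risking divergence.

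The step I expect to be the main obstacle is the passage from the first-order (linear) regime, in which the residual-to-displacement scaling is clean, to the genuinely \emph{large}-displacement regime that the statement concerns, where the neglected higher-order terms of the renderer dominate. Because $\hat{\mathbf{I}}(\mathcal{P}_t)$ is strongly non-convex in the pose---with occlusions, depth discontinuities, and periodic texture producing spurious local minima and a narrow convergence basin---the linear surrogate $\mathbf{J}_t\Delta\mathcal{P}$ ceases to faithfully describe the landscape once $\|\Delta\mathcal{P}\|$ is comparable to the basin width. My plan to handle this is to treat the first-order analysis as establishing the \emph{onset} of instability (large gradients already for moderate displacements) and then to argue qualitatively that, beyond the basin radius, it is precisely this non-convexity that converts large gradients into convergence failure rather than faster progress. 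I would make this precise only to the extent the earlier theorems do, invoking the narrow basin of differentiable photometric alignment rather than attempting a global quantitative bound, which would require controlling the full Hessian of the renderer and lies beyond the scope of the stated claim.
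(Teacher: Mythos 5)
Your proposal is correct and follows essentially the same route as the paper's proof: a large inter-frame pose discrepancy produces large residuals, the chain-rule gradient formula converts these into large gradients, and a fixed learning rate $\eta$ then yields an update of magnitude $\eta\|\nabla_{\mathcal{P}}\mathcal{L}\|$ that overshoots and risks divergence or oscillation. If anything your version is tighter than the paper's: the paper merely asserts the residual growth (``projections of the Gaussians \ldots change drastically'') and states the overshoot informally (its quantitative permissible-step condition appears only later, in the proof of Theorem~\ref{thm:corrgs}), whereas you derive the residual-to-gradient scaling explicitly via the $\mathfrak{se}(3)$ linearization and $\mathbf{J}_t^\top\mathbf{J}_t$, invoke the descent lemma, and---unlike the paper---openly flag that the linear scaling saturates beyond the photometric convergence basin, a limitation the paper's argument shares but never acknowledges.
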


\begin{proof} Consider a camera moving with a large pose change $\Delta \mathcal{P} = \{\Delta \mathbf{R}, \Delta \mathbf{t}\}$. The rendered images $\hat{\mathbf{I}}$ and $\hat{\mathbf{D}}$ depend on the pose $\mathcal{P}$ and the map $\mathcal{M}$: \begin{equation} \hat{\mathbf{I}} = f_{\text{render}}(\mathcal{M}, \mathcal{P}), \quad \hat{\mathbf{D}} = g_{\text{render}}(\mathcal{M}, \mathcal{P}). \end{equation} With large pose changes, the projections of the Gaussians onto the image plane can change drastically, leading to large discrepancies between the rendered and observed images. This results in large residuals in the loss function: \begin{equation} \mathcal{L}(\mathcal{P}) = \lambda_C |\hat{\mathbf{I}} - \mathbf{I}|^2 + \lambda_D |\hat{\mathbf{D}} - \mathbf{D}|^2. \end{equation} Consequently, the gradients of the loss with respect to the pose parameters become large: \begin{equation} \nabla_{\mathcal{P}} \mathcal{L} = 2 \left[ \lambda_C (\hat{\mathbf{I}} - \mathbf{I})^\top \nabla_{\mathcal{P}} \hat{\mathbf{I}} + \lambda_D (\hat{\mathbf{D}} - \mathbf{D})^\top \nabla_{\mathcal{P}} \hat{\mathbf{D}} \right]. \end{equation} If the optimization uses a fixed learning rate $\eta$, large gradients can cause the pose update step to overshoot the minimum: \begin{equation} \mathcal{P}_{k+1} = \mathcal{P}_k - \eta \nabla_{\mathcal{P}} \mathcal{L}(\mathcal{P}_k). \end{equation} This overshooting can lead to divergence or oscillations in the optimization process. To handle large pose changes efficiently, adaptive step sizes or robust optimization methods (e.g., using line search or second-order methods) are required to ensure convergence. \end{proof}

\subsection{\textcolor[rgb]{0.0,0.0,0.0}{{Invalid Linear Motion Assumption under Complex Motion}}}\label{subsec:challenge_dynamic_motion}

We now introduce a theorem to support the observation that differentiable rendering optimization can fail to tackle dynamic, non-linear motion due to invalid linear motion assumptions.

\begin{theorem} \label{thm:ineffective_linear_motion} Differentiable rendering-based optimization methods that rely on linear motion assumptions can fail to converge under dynamic, non-linear motion due to the inability of linear models to accurately capture higher-order motion dynamics, leading to large residuals and misaligned gradients. \end{theorem}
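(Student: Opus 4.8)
The plan is to trace how the constant-velocity (linear) motion model used for pose initialization injects error into the rendering residual, and then invoke the divergence mechanism already established in Theorem~\ref{thm:fast_motion_divergence}. First I would model the true sensor trajectory as a twice-differentiable curve $\mathcal{P}(t)$ (working in the tangent space / Lie algebra so that additive Taylor expansion is legitimate on $\boldsymbol{SO}(3)$) and expand it about frame $t$:
\begin{equation}
\mathcal{P}(t+\Delta t) = \mathcal{P}(t) + \dot{\mathcal{P}}(t)\,\Delta t + \tfrac{1}{2}\ddot{\mathcal{P}}(t)\,\Delta t^2 + O(\Delta t^3).
\end{equation}
The constant-velocity predictor keeps only the first-order term, $\mathcal{P}_{\text{lin}}(t+\Delta t) = \mathcal{P}(t) + \dot{\mathcal{P}}(t)\,\Delta t$, so the initialization error is
\begin{equation}
\delta \mathcal{P}_t = \mathcal{P}(t+\Delta t) - \mathcal{P}_{\text{lin}}(t+\Delta t) = \tfrac{1}{2}\ddot{\mathcal{P}}(t)\,\Delta t^2 + O(\Delta t^3),
\end{equation}
which vanishes exactly when the motion is linear ($\ddot{\mathcal{P}} = 0$) but is $\Theta(\|\ddot{\mathcal{P}}(t)\|\,\Delta t^2)$ under genuine non-linear motion.

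Next I would translate this initialization error into a rendering residual. Using the rendering maps $\hat{\mathbf{I}} = f_{\text{render}}(\mathcal{M},\mathcal{P})$ and $\hat{\mathbf{D}} = g_{\text{render}}(\mathcal{M},\mathcal{P})$ from Theorem~\ref{thm:fast_motion_divergence}, a first-order expansion of the residual about the true pose gives $\hat{\mathbf{I}}(\mathcal{P}_{\text{lin}}) - \mathbf{I} \approx \nabla_{\mathcal{P}}\hat{\mathbf{I}}\,\delta\mathcal{P}_t$, so the residual magnitude scales with $\|\ddot{\mathcal{P}}(t)\|\,\Delta t^2$. Because this is precisely the large-residual regime analyzed in Theorem~\ref{thm:fast_motion_divergence}, the induced gradient $\nabla_{\mathcal{P}}\mathcal{L}$ is correspondingly large and a fixed-step update overshoots; the \emph{large-residual} half of the conclusion therefore follows immediately from the earlier result.

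The genuinely new content, and the main obstacle, is establishing that the gradients are \emph{misaligned}, not merely large. Here I would argue that the photometric rendering loss is non-convex in pose with only a finite basin of attraction of radius $r$ around the true minimum (a standard feature of image-based alignment, where textured or repetitive content induces spurious local minima). Inside this basin the Gauss--Newton linearization underlying the first-order step is accurate and descent drives the estimate toward the truth; once the initialization leaves it, that linear model is invalid and the descent direction can carry a positive component along the true pose error. The crux is to show $\tfrac{1}{2}\|\ddot{\mathcal{P}}(t)\|\,\Delta t^2 > r$ is attainable under dynamic motion, so that the constant-velocity prediction lands outside the basin. I would make this rigorous either by bounding $r$ through the spectral properties of the rendering Hessian $\nabla_{\mathcal{P}}\hat{\mathbf{I}}^{\top}\nabla_{\mathcal{P}}\hat{\mathbf{I}}$ (the convergence radius shrinks where the Jacobian varies rapidly) or by exhibiting an explicit non-linear trajectory and textured map for which the linear prediction provably exits the convex neighborhood. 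Quantifying $r$ is the delicate part, since it couples scene geometry, texture, and the second-order motion term; I expect to state it under an assumed local-convexity radius and then show the acceleration-induced error $\Theta(\|\ddot{\mathcal{P}}\|\,\Delta t^2)$ exceeds it, yielding both the misalignment and, via Theorem~\ref{thm:fast_motion_divergence}, the convergence failure.
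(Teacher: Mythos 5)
You take essentially the same route as the paper's proof: split the true inter-frame motion into a linear part plus higher-order dynamics, note that the constant-velocity predictor inherits exactly the higher-order error, push that error through the rendering map into the residual, and conclude large and misaligned gradients and hence failure to converge. The differences are refinements in rigor, and they work in your favor. Where the paper simply writes $\Delta \mathcal{P}_{\text{true}} = \Delta \mathcal{P}_{\text{linear}} + \Delta \mathcal{P}_{\text{nonlinear}}$ and leaves the nonlinear term unquantified, your Lie-algebra Taylor expansion pins it down as $\Theta\bigl(\|\ddot{\mathcal{P}}(t)\|\,\Delta t^{2}\bigr)$ and makes explicit that it vanishes precisely when the motion is linear. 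Where the paper re-derives the large-residual/large-gradient/overshoot mechanism inline, you recognize it as an instance of Theorem~\ref{thm:fast_motion_divergence} and reuse it, which is cleaner. The one place your plan is incomplete---quantifying the basin-of-attraction radius $r$ and showing the acceleration-induced error can exceed it---is precisely the misalignment claim, and there the paper's own proof offers nothing more than the assertion that the gradients ``may point in incorrect directions because the linear approximation does not align with the true motion trajectory.'' So the step you flag as the crux is a gap in the paper's argument as well; your basin-of-attraction sketch is a genuine strengthening rather than a deficiency relative to the published proof.
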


\begin{proof} In differentiable rendering-based SLAM, many methods assume that the camera motion between consecutive frames is approximately linear and smooth. This assumption simplifies the motion model and is reasonable for slow and steady movements. The pose at time $t$ is often predicted using a first-order approximation: \begin{equation} \mathcal{P}_t = \mathcal{P}_{t-1} \oplus \Delta \mathcal{P}, \end{equation} where $\oplus$ denotes the pose composition, and $\Delta \mathcal{P}$ is a small incremental motion estimated under the linear motion assumption.

Under dynamic, non-linear motion, the actual motion $\Delta \mathcal{P}_{\text{true}}$ includes higher-order dynamics such as acceleration and jerk, which are not captured by the linear model: \begin{equation} \Delta \mathcal{P}_{\text{true}} = \Delta \mathcal{P}_{\text{linear}} + \Delta \mathcal{P}_{\text{nonlinear}}, \end{equation} where $\Delta \mathcal{P}_{\text{nonlinear}}$ represents higher-order motion components.

The linear motion assumption leads to an inaccurate initial pose estimate: \begin{equation} \mathcal{P}_t^{\text{est}} = \mathcal{P}_{t-1} \oplus \Delta \mathcal{P}_{\text{linear}}. \end{equation} The discrepancy between the estimated pose $\mathcal{P}_t^{\text{est}}$ and the true pose $\mathcal{P}_t^{\text{true}}$ introduces large residuals in the loss function: \begin{equation} \mathcal{L}(\mathcal{P}_t^{\text{est}}) = \lambda_C |\hat{\mathbf{I}}(\mathcal{P}_t^{\text{est}}) - \mathbf{I}_t|^2 + \lambda_D |\hat{\mathbf{D}}(\mathcal{P}_t^{\text{est}}) - \mathbf{D}_t|^2. \end{equation} These large residuals result in large gradients: \begin{equation} \nabla_{\mathcal{P}_t} \mathcal{L} = 2 \left[ \lambda_C \nabla_{\mathcal{P}_t} \hat{\mathbf{I}}^\top (\hat{\mathbf{I}} - \mathbf{I}_t) + \lambda_D \nabla_{\mathcal{P}_t} \hat{\mathbf{D}}^\top (\hat{\mathbf{D}} - \mathbf{D}_t) \right]. \end{equation}
Due to the significant pose discrepancy, the gradients may point in incorrect directions because the linear approximation does not align with the true motion trajectory. The optimization process, relying on these misaligned gradients, can fail to converge or even diverge.

Therefore, under dynamic, non-linear motion, differentiable rendering-based optimization methods that rely on linear motion assumptions are prone to failure because they cannot accurately capture higher-order motion dynamics, leading to ineffective optimization. \end{proof}

\clearpage

\section{\textbf{Detailed Tables of Results  on \textit{Robust-Ego3D} Benchmark}}\label{sec:detailed-result-with-error-bar}
To mitigate potential randomness, for each perturbed setting, we conduct each experiment three times on eight 3D scenes (totaling 24 experiments per perturbed result) and report the averaged results. Specifically, for the RGB imaging perturbation, we present the averaged result across three severity levels, under both static and dynamic perturbation modes.
This approach reduces the impact of randomness while maintaining computational efficiency, striking a balance between mitigating randomness and ensuring feasibility.

To facilitate future quantitative comparison on our benchmark, we have provided additional detailed benchmarking tables of RGB-D SLAM methods in this section, which includes:
\begin{itemize}
    \item Neural SLAM methods under depth imaging perturbation (Table~\ref{tab:depth-perturb}).
        \item  Neural SLAM methods under the faster motion effect (Table~\ref{tab:faster-motion-perturbation}).
        \item Neural SLAM methods under motion deviations (Table~\ref{tab:trajectory-perturbation_sr_metric}).
        \item Neural SLAM methods under RGB-D desynchronization (Table~\ref{tab:sensor-misalign-perturb}).
    \item ORB-SLAM3 under RGB imaging perturbation (Table~\ref{tab:image_perturb_ate_metric},Table~\ref{tab:image_perturb_rpe_metric}, Table~\ref{tab:image_perturb_sr_metric}).
    \item ORB-SLAM3 under depth imaging perturbation (Table~\ref{tab:depth-perturb_ate_metric}, Table~\ref{tab:depth-perturb_rpe_metric}, Table~\ref{tab:depth-perturb_sr_metric}).
        \item  ORB-SLAM3 under the faster motion effect (Table~\ref{tab:faster-motion-perturbation_ate_metric}, Table~\ref{tab:faster-motion-perturbation_rpe_metric}, Table~\ref{tab:faster-motion-perturbation_sr_metric}).
        \item ORB-SLAM3 under the motion deviations (Table~\ref{tab:motion-deviation-perturb_ate_metric}, Table~\ref{tab:motion-deviation-perturb_rpe_metric},
        Table~\ref{tab:motion-deviation-perturb_sr_metric} ).                
        \item ORB-SLAM3 under RGB-D desynchronization (Table~\ref{tab:sensor-misalign-perturb_ate_metric}, Table~\ref{tab:sensor-misalign-perturb_rpe_metric}, Table~\ref{tab:sensor-misalign-perturb_sr_metric}).
\end{itemize}

Furthermore, we have noticed large performance deviations with each perturbed setting for the ORB-SLAM3 model. To reflect potential randomness, we have included the performance standard deviation of this model in our experiment results. We acknowledge these potential performance deviations, which indicate low model robustness – a phenomenon our work aims to highlight to encourage the SLAM community to develop more robust and stable SLAM systems.


\begin{table}[ht!]
\caption{Performance (ATE [m]) of dense Neural SLAM models under {depth imaging perturbation}.}
\label{tab:depth-perturb}
\centering 
\resizebox{\textwidth}{!}{
\begin{tabular}{l|c|cccc}
    \toprule \toprule   
    \multirow{2}{*}{\textbf{Method}} &    \multirow{2}{*}{\textbf{Clean}}
     &  {\textbf{Gaussian}}   & {\textbf{Edge}} & 
  {\textbf{Random}}  & {\textbf{Range}}      \\  
 & & \textbf{Noise} & \textbf{Erosion} & \textbf{Missing} & \textbf{Clipping}\\ \midrule
    iMAP (RGB-D)~\citep{imap} & $0.1209$ & $\usym{2715}$ & $0.0307$ & $0.1083$  & $0.2438$ 
    \\ 
    Nice-SLAM (RGB-D)~\citep{niceslam}& $0.0147$ & $\usym{2715}$ & $0.0149$ & $0.0154$  & $0.1183$ 
    \\
    CO-SLAM (RGB-D)~\citep{coslam}& $0.0090$ & $0.5794$ & $0.0096$ & $0.0094$  & $0.0122$ 
    \\
    GO-SLAM (RGB-D)~\citep{zhang2023goslam} & ${0.0046}$ & ${0.0378}$ & $0.0046$ & ${0.0046}$ & ${{0.0045}}$   
    \\
    SplaTAM-S (RGB-D)~\citep{keetha2024splatam}& ${0.0045}$ & $\usym{2715}$ & ${{0.0042}}$ & ${0.0042}$  & ${0.0048}$ 
    \\
    \bottomrule \bottomrule
    \multicolumn{6}{l}{{$\usym{2715}$ indicates  completely unacceptable, \textit{i.e.}, performance (ATE $\geq$ 1.0 [m])}} \\
    \end{tabular}
}
 
\end{table}

\begin{table}[t!]
\centering
\caption{Performance (ATE [m]) of dense Neural SLAM under faster motion.}
\label{tab:faster-motion-perturbation}
\resizebox{0.8\textwidth}{!}
{
\begin{tabular}{l|c|ccc}
\toprule \toprule
 {\textbf{Speed-up Ratio}}
 & {$\textbf{1}\times$} & \textbf{$\textbf{2}\times$} & \textbf{$\textbf{4}\times$} & \textbf{$\textbf{8}\times$}    \\ \midrule 
GO-SLAM (Mono)~\citep{zhang2023goslam} & ${0.0039}$ & ${0.0042}$ & ${0.0046}$ & ${0.0048}$ 
\\
\midrule 
iMAP (RGB-D)~\citep{imap}& $0.1209$ & $0.4675$ & $0.9445$ & ${1.0000}$ 
\\
Nice-SLAM (RGB-D)~\citep{niceslam} & $0.0147$ & $0.1702$ & ${1.0000}$ & ${1.0000}$  
\\
CO-SLAM (RGB-D)~\citep{coslam}& $0.0090$ & $0.1062$ & $0.9510$ & ${1.0000}$ 
\\
GO-SLAM (RGB-D)~\citep{zhang2023goslam} & ${0.0046}$ & ${{0.0046}}$ & ${{0.0046}}$ & ${0.0050}$  
\\
SplaTAM-S (RGB-D)~\citep{keetha2024splatam}& ${0.0045}$ & $0.0184$ & ${1.0000}$ & ${1.0000}$ 
\\
\bottomrule \bottomrule

\end{tabular}

}\end{table}

\begin{table}[ht]
\centering
\caption{Performance (ATE [m]) of dense Neural SLAM models under motion deviations.}
\centering\setlength{\tabcolsep}{0.6mm}
\resizebox{\textwidth}{!}{
\begin{tabular}{l|c|ccc|cccc|cccc|cccc}
\toprule \toprule
\textbf{Rotate [$\deg$]} & \multirow{2}{*}{\textbf{Clean}}  & \multicolumn{3}{c|}{\textbf{\textit{0}}} & \multicolumn{4}{c|}{\textbf{\textit{1}}} & \multicolumn{4}{c|}{\textbf{\textit{3}}} & \multicolumn{4}{c}{\textbf{\textit{5}}}
\\ \cmidrule{1-1} \cmidrule{3-5} \cmidrule{6-9} \cmidrule{10-13} \cmidrule{14-17}
\textbf{Translate  [$m$]} &  & \textit{\textbf{0.0125}} &\textit{\textbf{0.025}} & \textbf{\textit{0.05}} & \textit{\textbf{0}} & \textit{\textbf{0.0125}} &\textit{\textbf{0.025}} & \textbf{\textit{0.05}} & \textit{\textbf{0}}  & \textit{\textbf{0.0125}} &\textit{\textbf{0.025}} & \textbf{\textit{0.05}} & \textit{\textbf{0}} & \textit{\textbf{0.0125}} &\textit{\textbf{0.025}} & \textbf{\textit{0.05}} 
\\ \midrule\midrule
GO-SLAM (Mono) & ${0.0039}$ & ${0.0084}$ & ${0.0077}$ & ${0.0091}$ & ${0.0083}$ & ${0.0079}$ & ${0.0082}$ & ${0.0094}$ & $\usym{2715}$ & $\usym{2715}$ & $\usym{2715}$ & $\usym{2715}$ & $\usym{2715}$ & $\usym{2715}$ & $\usym{2715}$ & $\usym{2715}$  
\\ \midrule
iMAP (RGB-D) & $0.1209$ & $0.0334$ & $0.1386$ & $0.0442$ & $0.2438$ & $0.2135$ & $0.3754$ & $0.2801$ & $\usym{2715}$ & $\usym{2715}$ & $\usym{2715}$ & $\usym{2715}$ & $\usym{2715}$ & $\usym{2715}$ & $\usym{2715}$ & $\usym{2715}$   \\
Nice-SLAM (RGB-D)& $0.0147$ & $0.5812$ & $\usym{2715}$ & $\usym{2715}$ & $\usym{2715}$ & $\usym{2715}$ & $\usym{2715}$ & $\usym{2715}$ & $\usym{2715}$ & $\usym{2715}$ & $\usym{2715}$ & $\usym{2715}$ & $\usym{2715}$ & $\usym{2715}$ & $\usym{2715}$ & $\usym{2715}$ 
\\
CO-SLAM (RGB-D)& $0.0090$ & $0.0420$ & $0.0848$ & $0.3087$ & $0.4579$ & $0.5069$ & $0.2998$ & $0.5040$ & $0.6443$ & $0.6630$ & $0.7532$ & $0.5772$ & $0.8457$ & $0.7966$ & $0.8277$ & $\usym{2715}$ 
\\
GO-SLAM (RGB-D) & ${0.0046}$ & ${0.0082}$ & ${0.0082}$ & ${0.0081}$ & ${0.0080}$ & ${0.0080}$ & ${0.0078}$ & ${0.0077}$ & $\usym{2715}$ & $\usym{2715}$ & $\usym{2715}$ & $\usym{2715}$ & $\usym{2715}$ & $\usym{2715}$ & $\usym{2715}$ & $\usym{2715}$  
\\
SplaTAM-S (RGB-D)& ${0.0045}$ & $0.0545$ & $0.0980$ & $0.2964$ & $0.297F$ & $0.2272$ & $0.2313$ & $\usym{2715}$ & $\usym{2715}$ & $\usym{2715}$ & $\usym{2715}$ & $\usym{2715}$ & $\usym{2715}$ & $\usym{2715}$ & $\usym{2715}$ & $\usym{2715}$ 
\\ 
\bottomrule \bottomrule
\multicolumn{17}{l}{{Notation \textit{\textbf{F}} represents settings that include failure sequences where no final trajectory is generated due to tracking loss. The number in front of \textit{\textbf{F}} represents the}}\\
\multicolumn{17}{l}{{average ATE as failure sequences are set as a value of 1.0. Notation $\usym{2715}$ indicates completely unacceptable trajectory estimation performance, \textit{i.e.}, ATE $\geq$ 1.0 [m].}}\\ 
\end{tabular}
}
\label{tab:trajectory-perturbation_sr_metric}
\end{table}

\begin{table}[ht]
\caption{Performance (ATE [m]) of dense Neural SLAM models under sensor de-synchronization.}
\label{tab:sensor-misalign-perturb}
\centering 
\resizebox{\textwidth}{!}{
\begin{tabular}{l|c|ccc|ccc}
    \toprule \toprule
     \multirow{2}{*}{\textbf{Method}} & \textbf{Clean} &  \multicolumn{3}{c|}{\textbf{Static Mode}}& \multicolumn{3}{c}{\textbf{Dynamic Mode}} \\ \cmidrule{2-5} \cmidrule{6-8}
     & \textbf{$\Delta=0$} & \textbf{$\Delta=5$} & \textbf{$\Delta=10$} & \textbf{$\Delta=20$} & \textbf{$\Delta=5$} & \textbf{$\Delta=10$} & \textbf{$\Delta=20$}      \\ \midrule
    iMAP (RGB-D)~\citep{imap} & $0.1209$ & $0.4672$ & $0.5344$ & $0.6345$  & $0.5104$ & $0.6803$ & $0.6745$ 
    \\
    Nice-SLAM (RGB-D)~\citep{niceslam}& $0.0147$ & $0.3820$ & $0.4062$ & $0.5216$  & $0.5433$ & $0.5548$ & $0.7020$
    \\
     CO-SLAM (RGB-D)~\citep{coslam}& $0.0090$ & $0.0520$ & $0.1005$ & $0.1939$ & $0.0740$ & $0.1164$ & $0.2108$ \\

    GO-SLAM (RGB-D)~\citep{zhang2023goslam} & $0.0046$ & ${0.0148}$ & ${0.0292}$ & ${0.0646}$  & {${0.0151}$} & {${0.0297}$} & {${0.0650}$} 
    \\
    
    SplaTAM-S (RGB-D)~\citep{keetha2024splatam}& ${0.0045}$ & $0.0554$ & $0.0629$ & $0.0880$  & $0.0402$ & $0.0645$ & $0.0850$ 
    \\
    \bottomrule  \bottomrule
      \multicolumn{8}{l}{{  $\Delta$ denotes the misaligned frame interval between RGB and depth streams. }}\\
    \end{tabular}
}
\end{table}

\clearpage

\begin{table*}[t]
\caption{ATE [m] metric of ORB-SLAM3~\citep{orbslam3} under RGB imaging perturbations. }
\label{tab:image_perturb_ate_metric}
\centering\setlength{\tabcolsep}{0.6mm}
\resizebox{\textwidth}{!}{
\begin{tabular}{cc|c|cccc|cccc|cccc|cccc}
    \toprule \toprule
  \textbf{Perturb.} &   \textbf{Input} &  \multirow{2}{*}{\textbf{Clean}}  & \multicolumn{4}{|c|}{\textbf{Blur Effect}} & \multicolumn{4}{c|}{\textbf{Noise Effect}} & \multicolumn{4}{c|}{\textbf{Environmental Interference}}  & \multicolumn{4}{c}{\textbf{Post-processing Effect}} 
    \\ \cmidrule{4-7} \cmidrule{8-11} \cmidrule{12-15} \cmidrule{16-19}
 \textbf{Mode}  & \textbf{Modality}   &   & \textbf{Motion} & \textbf{Defocus} & \textbf{Gaussian} & \textbf{Glass} & \textbf{Gaussian} & \textbf{Shot} & \textbf{Impulse} & \textbf{Speckle} & \textbf{Fog} & \textbf{Frost} & \textbf{Snow} & \textbf{Spatter} & \textbf{Bright} & \textbf{Contra.} & \textbf{JPEG}  & \textbf{Pixelate}
    \\\midrule\midrule
    \multirow{4}{*}{\textbf{Static}} &\multirow{2}{*}{Mono} & ${0.014}$  & ${0.891}$ & ${0.535}$ & ${0.505}$ & ${0.797}$ & ${0.917}$ & ${0.969}$ & ${1.000}$ & ${0.923}$ & $0.629$ & ${0.917}$ & ${1.000}$  & ${0.719}$ & ${0.027}$ & ${0.612}$ & $0.173$ & ${0.768}$ 
\\  
   &   & $\pm{0.028}$& $\pm0.285$ & $\pm0.487$ & $\pm0.506$ & $\pm0.404$ & $\pm0.281$ & $\pm0.152$ & $\pm0.000$ & $\pm0.261$ & $\pm0.490$ & $\pm0.280$ & $\pm0.000$  & $\pm0.449$ & $\pm0.059$ & $\pm0.478$ & $\pm0.325$ & $\pm0.410$ 
    \\ \cmidrule{2-19}
    &\multirow{2}{*}{RGB-D} & ${0.082}$  & ${0.300}$ & ${0.340}$ & ${0.292}$ & ${0.211}$ & ${0.470}$ & ${0.433}$ & ${0.591}$ & ${0.351}$ & ${0.397}$ & ${0.640}$ & ${0.795}$ & ${0.508}$ & ${0.065}$ & ${0.527}$ & ${0.132}$ & ${0.703}$
    \\
        &  & ${\pm0.179}$ & $\pm0.365$ & $\pm0.408$ & $\pm0.425$ & $\pm0.335$ & $\pm0.498$ & $\pm0.490$ & $\pm0.494$ & $\pm0.469$ & $\pm0.480$ & $\pm0.474$ & $\pm0.409$ & $\pm0.503$ & $\pm0.142$ & $\pm0.492$ & $\pm0.191$ & $\pm0.354$
    \\
  \midrule\midrule
   \multirow{4}{*}{\textbf{Dynamic}} &\multirow{2}{*}{Mono}  & ${0.014}$  & $0.876$ & $0.506$ & $0.760$ & $1.000$ & $1.000$ & $1.000$ & $1.000$ & $0.885$ & ${0.751}$ & $1.000$ & $1.000$ & $1.000$ & $0.052$ & $1.000$ & ${0.166}$ & $0.718$
    \\
     &   & ${\pm0.028}$ & $\pm0.351$ & $\pm0.528$ & $\pm0.445$ & $\pm0.000$ & $\pm0.000$ & $\pm0.000$ & $\pm0.000$ & $\pm0.325$ & $\pm0.461$ & $\pm0.000$ & $\pm0.000$ & $\pm0.000$ & $\pm0.092$ & $\pm0.000$ & $\pm0.341$ & $\pm0.412$
    \\ \cmidrule{2-19}
   &\multirow{2}{*}{RGB-D} & ${0.082}$  & $0.279$ & $0.303$ & $0.052$ & $0.168$ & $0.508$ & $0.513$ & $0.755$ & $0.262$ & $0.659$ & $0.256$ & $0.876$ & $0.753$ & $0.066$ & $0.751$ & $0.645$ & $0.756$
    \\ 
     &   & ${\pm0.179}$   & $\pm0.330$ & $\pm0.435$ & $\pm0.080$ & $\pm0.245$ & $\pm0.526$ & $\pm0.520$ & $\pm0.453$ & $\pm0.455$ & $\pm0.476$ & $\pm0.459$ & $\pm0.352$ & $\pm0.458$ & $\pm0.145$ & $\pm0.462$ & $\pm0.491$ & $\pm0.294$ 
    \\
    \bottomrule   \bottomrule  
    \end{tabular}
}
\end{table*}
\begin{table*}[t]
\caption{RPE [m] metric of ORB-SLAM3~\citep{orbslam3} under RGB imaging perturbations.}
\label{tab:image_perturb_rpe_metric}
\centering\setlength{\tabcolsep}{0.6mm}
\resizebox{\textwidth}{!}{
\begin{tabular}{cc|c|cccc|cccc|cccc|cccc}
    \toprule \toprule
  \textbf{Perturb.} &   \textbf{Input} &  \multirow{2}{*}{\textbf{Clean}} & \multicolumn{4}{|c|}{\textbf{Blur Effect}} & \multicolumn{4}{c|}{\textbf{Noise Effect}} & \multicolumn{4}{c|}{\textbf{Environmental Interference}}  & \multicolumn{4}{c}{\textbf{Post-processing Effect}} 
    \\ \cmidrule{4-7} \cmidrule{8-11} \cmidrule{12-15} \cmidrule{16-19}
 \textbf{Mode}  & \textbf{Modality}   &    & \textbf{Motion} & \textbf{Defocus} & \textbf{Gaussian} & \textbf{Glass} & \textbf{Gaussian} & \textbf{Shot} & \textbf{Impulse} & \textbf{Speckle} & \textbf{Fog} & \textbf{Frost} & \textbf{Snow} & \textbf{Spatter} & \textbf{Bright} & \textbf{Contra.} & \textbf{JPEG}  & \textbf{Pixelate}
    \\\midrule\midrule
    \multirow{4}{*}{\textbf{Static}} &\multirow{2}{*}{Mono} & ${0.197}$& $0.853$ & $0.605$ & $0.601$ & $0.811$ & $0.927$ & $0.960$ & $0.100$ & $0.932$ & $0.680$ & $0.927$ & $1.000$ & $0.760$ & $0.162$ & $0.699$ & $0.247$ & $0.525$
    \\ 
     &   & ${\pm0.030}$   & $\pm0.335$ & $\pm0.408$ & $\pm0.410$ & $\pm0.376$ & $\pm0.247$ & $\pm0.194$ & $\pm0.000$ & $\pm0.230$ & $\pm0.423$ & $\pm0.246$ & $\pm0.000$ & $\pm0.383$ & $\pm0.028$ & $\pm0.373$ & $\pm0.294$ & $\pm0.485$
    \\ \cmidrule{2-19}
    &\multirow{2}{*}{RGB-D} & ${0.114}$ & ${0.238}$ & ${0.323}$ & ${0.329}$ & ${0.084}$ & ${0.493}$ & ${0.445}$ & ${0.601}$ & ${0.377}$ & ${0.421}$ & ${0.617}$ & ${0.798}$ & ${0.529}$ & ${0.078}$ & ${0.570}$ & ${0.080}$ & ${0.032}$
    \\
        &  & ${\pm0.023}$ & $\pm0.350$ & $\pm0.400$ & $\pm0.397$ & $\pm0.036$ & $\pm0.477$ & $\pm0.479$ & $\pm0.482$ & $\pm0.450$ & $\pm0.459$ & $\pm0.470$ & $\pm0.403$ & $\pm0.481$ & $\pm0.022$ & $\pm0.442$ & $\pm0.028$ & $\pm0.026$
    \\
  \midrule\midrule
   \multirow{4}{*}{\textbf{Dynamic}} &\multirow{2}{*}{Mono}  & ${0.197}$  & $0.879$ & $0.572$ & $0.782$ & $1.000$ & $1.000$ & $1.000$ & $1.000$ & $0.899$ & ${0.785}$ & $1.000$ & $1.000$ & $1.000$ & $0.144$ & $1.000$ & ${0.242}$ & $0.287$
    \\
     &   & ${\pm0.030}$   & $\pm0.342$ & $\pm0.459$ & $\pm0.403$ & $\pm0.000$ & $\pm0.000$ & $\pm0.000$ & $\pm0.000$ & $\pm0.284$ & $\pm0.398$ & $\pm0.000$ & $\pm0.000$ & $\pm0.000$ & $\pm0.034$ & $\pm0.000$ & $\pm0.307$ & $\pm0.440$
    \\ \cmidrule{2-19}
   &\multirow{2}{*}{RGB-D}  & ${0.114}$   & ${0.043}$ & ${0.293}$ & ${0.033}$ & ${0.046}$ & ${0.517}$ & ${0.523}$ & ${0.755}$ & ${0.284}$ & $0.642$ & ${0.272}$ & ${0.876}$  & ${0.758}$ & ${0.074}$ & ${0.792}$ & $0.656$ & ${0.031}$ 
\\  
   &   & $\pm{0.023}$  & $\pm0.024$ & $\pm0.437$ & $\pm0.027$ & $\pm0.021$ & $\pm0.517$ & $\pm0.511$ & $\pm0.453$ & $\pm0.443$ & $\pm0.494$ & $\pm0.450$ & $\pm0.350$  & $\pm0.448$ & $\pm0.016$ & $\pm0.393$ & $\pm0.474$ & $\pm0.011$ 
\\  
    \bottomrule   \bottomrule  

    \end{tabular}
}
\end{table*}

\begin{table*}[t]
\caption{Success rate (SR) of ORB-SLAM3~\citep{orbslam3} under RGB imaging perturbations.}
\label{tab:image_perturb_sr_metric}
\centering\setlength{\tabcolsep}{0.6mm}
\resizebox{\textwidth}{!}{
\begin{tabular}{cc|c|cccc|cccc|cccc|cccc}
    \toprule \toprule
  \textbf{Perturb.} &   \textbf{Input} &  \multirow{2}{*}{\textbf{Clean}} & \multicolumn{4}{|c|}{\textbf{Blur Effect}} & \multicolumn{4}{c|}{\textbf{Noise Effect}} & \multicolumn{4}{c|}{\textbf{Environmental Interference}}  & \multicolumn{4}{c}{\textbf{Post-processing Effect}} 
    \\ \cmidrule{4-7} \cmidrule{8-11} \cmidrule{12-15} \cmidrule{16-19}
 \textbf{Mode}  & \textbf{Modality}   &   & \textbf{Motion} & \textbf{Defocus} & \textbf{Gaussian} & \textbf{Glass} & \textbf{Gaussian} & \textbf{Shot} & \textbf{Impulse} & \textbf{Speckle} & \textbf{Fog} & \textbf{Frost} & \textbf{Snow} & \textbf{Spatter} & \textbf{Bright} & \textbf{Contra.} & \textbf{JPEG}  & \textbf{Pixelate}
    \\\midrule\midrule
    \multirow{4}{*}{\textbf{Static}} &\multirow{2}{*}{Mono} & ${0.854}$   & $0.059$ & $0.331$ & $0.382$ & $0.122$ & $0.055$ & $0.016$ & $0.000$ & $0.052$ & $0.229$ & $0.057$ & $0.000$ & $0.211$ & $0.915$ & $0.320$ & $0.712$ & $0.064$
    \\ 
     &   & ${\pm0.149}$   & $\pm0.152$ & $\pm0.385$ & $\pm0.415$ & $\pm0.307$ & $\pm0.186$ & $\pm0.076$ & $\pm0.000$ & $\pm0.183$ & $\pm0.362$ & $\pm0.197$ & $\pm0.000$ & $\pm0.352$ & $\pm0.142$ & $\pm0.405$ & $\pm0.349$ & $\pm0.089$
    \\ \cmidrule{2-19}
    &\multirow{2}{*}{RGB-D} & ${0.960}$ & ${0.621}$ & ${0.606}$ & ${0.617}$ & ${0.778}$ & ${0.388}$ & ${0.391}$ & ${0.219}$ & ${0.443}$ & ${0.276}$ & ${0.111}$ & ${0.081}$ & ${0.259}$ & ${0.971}$ & ${0.412}$ & ${0.818}$ & ${0.361}$
    \\
        &  & ${\pm0.046}$  & $\pm0.423$ & $\pm0.443$ & $\pm0.430$ & $\pm0.319$ & $\pm0.468$ & $\pm0.475$ & $\pm0.409$ & $\pm0.457$ & $\pm0.421$ & $\pm0.282$ & $\pm0.236$ & $\pm0.405$ & $\pm0.030$ & $\pm0.461$ & $\pm0.284$ & $\pm0.232$
    \\
  \midrule\midrule
   \multirow{4}{*}{\textbf{Dynamic}} &\multirow{2}{*}{Mono}  & ${0.854}$  & $0.008$ & $0.267$ & $0.158$ & $0.000$ & $0.000$ & $0.000$ & $0.000$ & $0.096$ & ${0.169}$ & $0.000$ & $0.000$ & $0.000$ & $0.904$ & $0.000$ & ${0.703}$ & $0.142$
    \\
     &   & ${\pm0.149}$ &${\pm0.270}$ & $\pm0.000$  & $\pm0.022$ & $\pm0.000$ & $\pm0.000$ & $\pm0.000$ & $\pm0.000$ & $\pm0.273$ & $\pm0.315$ & $\pm0.000$ & $\pm0.000$ & $\pm0.000$ & $\pm0.046$ & $\pm0.000$ & $\pm0.328$ & $\pm0.168$
    \\ \cmidrule{2-19}
   &\multirow{2}{*}{RGB-D}  & ${0.960}$  & ${0.871}$ & ${0.354}$ & ${0.301}$ & ${0.991}$ & ${0.207}$ & ${0.277}$ & ${0.040}$ & ${0.303}$ & $0.027$ & ${0.005}$ & ${0.000}$  & ${0.009}$ & ${0.975}$ & ${0.003}$ & $0.376$ & ${0.432}$ 
\\  
   &   & $\pm{0.046}$   & $\pm0.364$ & $\pm0.429$ & $\pm0.437$ & $\pm0.081$ & $\pm0.378$ & $\pm0.473$ & $\pm0.084$ & $\pm0.441$ & $\pm0.045$ & $\pm0.009$ & $\pm0.000$  & $\pm0.026$ & $\pm0.027$ & $\pm0.007$ & $\pm0.519$ & $\pm0.218$ 
\\  
    \bottomrule   \bottomrule  
    \end{tabular}
}
\end{table*}

\begin{table}[t]
\caption{ATE metric [m] of ORB-SLAM3~\citep{orbslam3}  with RGB-D input under depth perturbations. }
\label{tab:depth-perturb_ate_metric}
\centering 
\resizebox{\textwidth}{!}{
\begin{tabular}{p{2.495cm}<{\centering}|p{2.495cm}<{\centering}p{2.495cm}<{\centering}p{2.495cm}<{\centering}p{2.495cm}<{\centering}}
    \toprule \toprule   
       \multirow{2}{*}{\textbf{Clean}}
     &  {\textbf{Gaussian}}   & {\textbf{Edge}} & 
  {\textbf{Random}}  & {\textbf{Range}}      \\  
  & \textbf{Noise} & \textbf{Erosion} & \textbf{Missing} & \textbf{Clipping}\\ \midrule
    ${0.082}\pm 0.179$ & $0.803\pm 0.301$ & $0.807\pm 0.365$ & $0.756\pm 0.397$ & $0.994\pm 0.283$
    \\
    \bottomrule  \bottomrule 
    \end{tabular}
}
\end{table}
\begin{table}[t]
\caption{RPE metric [m] of ORB-SLAM3~\citep{orbslam3}  with RGB-D input under depth perturbations. }
\label{tab:depth-perturb_rpe_metric}
\centering 
\resizebox{\textwidth}{!}{
\begin{tabular}{p{2.495cm}<{\centering}|p{2.495cm}<{\centering}p{2.495cm}<{\centering}p{2.495cm}<{\centering}p{2.495cm}<{\centering}}
    \toprule \toprule   
       \multirow{2}{*}{\textbf{Clean}}
     &  {\textbf{Gaussian}}   & {\textbf{Edge}} & 
  {\textbf{Random}}  & {\textbf{Range}}      \\  
  & \textbf{Noise} & \textbf{Erosion} & \textbf{Missing} & \textbf{Clipping}\\ \midrule
    ${0.114}\pm 0.023$ & $0.064\pm 0.018$ & $0.154\pm 0.342$ & $0.054\pm 0.022$ & $0.047\pm 0.017$
    \\
    \bottomrule  \bottomrule 
    \end{tabular}
}
\end{table}

\begin{table}
\vspace{2mm}
\caption{Success rate (SR) of ORB-SLAM3~\citep{orbslam3}  with RGB-D input under depth perturbations. }
\label{tab:depth-perturb_sr_metric}
\centering 
\resizebox{\textwidth}{!}{
\begin{tabular}{p{2.495cm}<{\centering}|p{2.495cm}<{\centering}p{2.495cm}<{\centering}p{2.495cm}<{\centering}p{2.495cm}<{\centering}}
    \toprule \toprule   
       \multirow{2}{*}{\textbf{Clean}}
     &  {\textbf{Gaussian}}   & {\textbf{Edge}} & 
  {\textbf{Random}}  & {\textbf{Range}}      \\  
  & \textbf{Noise} & \textbf{Erosion} & \textbf{Missing} & \textbf{Clipping}\\ \midrule
    ${0.960}\pm 0.046$ & $0.421\pm 0.331$ & $0.379\pm 0.281$ & $0.322\pm 0.354$ & $0.286\pm 0.181$
    \\
    \bottomrule  \bottomrule 
    \end{tabular}
}
\end{table}

\begin{table}[t]
 \caption{ATE metric [m] of ORB-SLAM3~\citep{orbslam3} under fast motion.}
\centering 
\label{tab:faster-motion-perturbation_ate_metric}
\resizebox{\textwidth}{!}{
\begin{tabular}{p{2.495cm}|p{2.495cm}<{\centering}|p{2.495cm}<{\centering}p{2.495cm}<{\centering}p{2.495cm}<{\centering}}
    \toprule \toprule
     {\textbf{Speed-up Ratio}}
     & {$\textbf{1}\times$} & \textbf{$\textbf{2}\times$} & \textbf{$\textbf{4}\times$} & \textbf{$\textbf{8}\times$}    \\ \midrule
Monocular & ${0.014}\pm 0.028$ & ${{0.009}}\pm 0.009$ & ${0.023}\pm 0.051$ & ${{0.023}}\pm 0.053$  \\ 
    RGB-D &  ${0.082}\pm 0.179$  & ${0.019}\pm 0.023$ & ${{0.064}}\pm 0.156$ & ${{0.077}}\pm 0.125$ \\
    \bottomrule \bottomrule
    \end{tabular}
}
\end{table}

\begin{table}[t]
 \caption{RPE metric [m] of ORB-SLAM3~\citep{orbslam3} under fast motion.}
\centering 
\label{tab:faster-motion-perturbation_rpe_metric}
\resizebox{\textwidth}{!}{
\begin{tabular}{p{2.495cm}|p{2.495cm}<{\centering}|p{2.495cm}<{\centering}p{2.495cm}<{\centering}p{2.495cm}<{\centering}}
    \toprule \toprule
     {\textbf{Speed-up Ratio}}
     & {$\textbf{1}\times$} & \textbf{$\textbf{2}\times$} & \textbf{$\textbf{4}\times$} & \textbf{$\textbf{8}\times$}    \\ \midrule
Monocular & ${0.197}\pm 0.030$ & ${{0.194}}\pm 0.035$ & ${0.227}\pm 0.035$ & ${{0.289}}\pm 0.064$  \\ 
    RGB-D &  ${0.114}\pm 0.023$  & ${0.152}\pm 0.032$ & ${{0.190}}\pm 0.030$ & ${{0.268}}\pm 0.059$ \\
    \bottomrule \bottomrule
    \end{tabular}
}
\end{table}

\begin{table}
 
 \caption{Success rate (SR) of ORB-SLAM3~\citep{orbslam3} under fast motion.}
\centering 
\label{tab:faster-motion-perturbation_sr_metric}
\resizebox{\textwidth}{!}{
\begin{tabular}{p{2.495cm}|p{2.495cm}<{\centering}|p{2.495cm}<{\centering}p{2.495cm}<{\centering}p{2.495cm}<{\centering}}
    \toprule \toprule
     {\textbf{Speed-up Ratio}}
     & {$\textbf{1}\times$} & \textbf{$\textbf{2}\times$} & \textbf{$\textbf{4}\times$} & \textbf{$\textbf{8}\times$}    \\ \midrule
Monocular & ${0.854}\pm 0.149$ & ${{0.893}}\pm 0.081$ & ${0.909}\pm 0.049$ & ${{0.837}}\pm 0.115$  \\ 
    RGB-D &  ${0.960}\pm 0.046$  & ${0.964}\pm 0.012$ & ${{0.938}}\pm 0.029$ & ${{0.866}}\pm 0.129$ \\
    \bottomrule \bottomrule
    \end{tabular}
}
\end{table}

\begin{table*}[th]
    \centering
    \vspace{-2mm}
       \caption{ATE metric [m] of ORB-SLAM3~\citep{orbslam3} under motion deviations.}
             \label{tab:motion-deviation-perturb_ate_metric}
\centering\setlength{\tabcolsep}{1.0mm}
\resizebox{\textwidth}{!}{
\begin{tabular}{l|c|ccc|cccc|cccc|cccc}
    \toprule \toprule
     \textbf{Rotate [$\deg$]} & \multirow{2}{*}{\textbf{Clean}}  & \multicolumn{3}{c|}{\textbf{\textit{0}}} & \multicolumn{4}{c|}{\textbf{\textit{1}}} & \multicolumn{4}{c|}{\textbf{\textit{3}}} & \multicolumn{4}{c}{\textbf{\textit{5}}}
    \\ \cmidrule{1-1} \cmidrule{3-5} \cmidrule{6-9} \cmidrule{10-13} \cmidrule{14-17}
     \textbf{Translate  [$m$]} &  & \textit{\textbf{0.0125}} &\textit{\textbf{0.025}} & \textbf{\textit{0.05}} & \textit{\textbf{0}} & \textit{\textbf{0.0125}} &\textit{\textbf{0.025}} & \textbf{\textit{0.05}} & \textit{\textbf{0}}  & \textit{\textbf{0.0125}} &\textit{\textbf{0.025}} & \textbf{\textit{0.05}} & \textit{\textbf{0}} & \textit{\textbf{0.0125}} &\textit{\textbf{0.025}} & \textbf{\textit{0.05}} 
    \\\midrule\midrule
      \multirow{2}{*}{Monocular} & ${0.014}$ & ${0.017}$ & $0.136$ & $0.190$ & ${0.063}$ & $0.146$ & $0.348$ & ${0.200}$ & ${0.170}$ & $0.055$ & $0.053$ & $0.061$ & $0.073$ & $0.023$ & ${0.041}$ & ${0.080}$ 
     \\ 
        & $\pm0.028$ & $\pm0.027$ & $\pm0.349$ & $\pm0.339$ & $\pm0.047$ & $\pm0.196$ & $\pm0.378$ & $\pm0.355$ & $\pm0.338$ & $\pm0.057$ & $\pm0.061$ & $\pm0.061$ & $\pm0.135$ & $\pm0.039$ & $\pm0.042$ & $\pm0.108$ 
     \\ \midrule
    \multirow{2}{*}{RGB-D} & ${0.082}$ & $0.191$ & ${0.085}$ & ${0.171}$ & $0.059$ & ${0.163}$ & ${0.084}$ & $0.228$ & $0.148$ & ${0.090}$ & ${0.158}$ & ${0.085}$ & ${0.164}$ & ${0.072}$ & $0.050$ & $0.062$ 
    \\
        & $\pm0.179$ & $\pm0.369$ & $\pm0.174$ & $\pm0.337$ & $\pm0.067$ & $\pm0.280$ & $\pm0.052$ & $\pm0.361$ & $\pm0.210$ & $\pm0.072$ & $\pm0.106$ & $\pm0.037$ & $\pm0.340$ & $\pm0.100$ & $\pm0.054$ & $\pm0.076$ 
    \\
    \bottomrule \bottomrule
    \end{tabular}
}

\end{table*}%
\begin{table*}[th]
    \centering
    \vspace{-2mm}
       \caption{RPE metric [m] of ORB-SLAM3~\citep{orbslam3} under motion deviations.}
             \label{tab:motion-deviation-perturb_rpe_metric}
\centering\setlength{\tabcolsep}{1.0mm}
\resizebox{\textwidth}{!}{
\begin{tabular}{l|c|ccc|cccc|cccc|cccc}
    \toprule \toprule
     \textbf{Rotate [$\deg$]} & \multirow{2}{*}{\textbf{Clean}}  & \multicolumn{3}{c|}{\textbf{\textit{0}}} & \multicolumn{4}{c|}{\textbf{\textit{1}}} & \multicolumn{4}{c|}{\textbf{\textit{3}}} & \multicolumn{4}{c}{\textbf{\textit{5}}}
    \\ \cmidrule{1-1} \cmidrule{3-5} \cmidrule{6-9} \cmidrule{10-13} \cmidrule{14-17}
     \textbf{Translate  [$m$]} &  & \textit{\textbf{0.0125}} &\textit{\textbf{0.025}} & \textbf{\textit{0.05}} & \textit{\textbf{0}} & \textit{\textbf{0.0125}} &\textit{\textbf{0.025}} & \textbf{\textit{0.05}} & \textit{\textbf{0}}  & \textit{\textbf{0.0125}} &\textit{\textbf{0.025}} & \textbf{\textit{0.05}} & \textit{\textbf{0}} & \textit{\textbf{0.0125}} &\textit{\textbf{0.025}} & \textbf{\textit{0.05}} 
    \\\midrule\midrule
      \multirow{2}{*}{Monocular} & ${0.197}$ & ${0.191}$ & $0.291$ & $0.297$ & ${0.182}$ & $0.194$ & $0.291$ & ${0.197}$ & ${0.349}$ & $0.231$ & $0.265$ & $0.235$ & $0.355$ & $0.324$ & ${0.368}$ & ${0.371}$ 
     \\ 
        & $\pm0.030$ & $\pm0.037$ & $\pm0.289$ & $\pm0.297$ & $\pm0.071$ & $\pm0.036$ & $\pm0.294$ & $\pm0.075$ & $\pm0.270$ & $\pm0.086$ & $\pm0.105$ & $\pm0.087$ & $\pm0.052$ & $\pm0.083$ & $\pm0.067$ & $\pm0.040$ 
     \\ \midrule
    \multirow{2}{*}{RGB-D} & ${0.114}$ & $0.229$ & ${0.119}$ & ${0.208}$ & $0.114$ & ${0.116}$ & ${0.153}$ & $0.231$ & $0.197$ & ${0.220}$ & ${0.203}$ & ${0.220}$ & ${0.417}$ & ${0.313}$ & $0.298$ & $0.334$ 
    \\
        & $\pm0.023$ & $\pm0.313$ & $\pm0.028$ & $\pm0.322$ & $\pm0.039$ & $\pm0.038$ & $\pm0.046$ & $\pm0.313$ & $\pm0.046$ & $\pm0.027$ & $\pm0.035$ & $\pm0.026$ & $\pm0.239$ & $\pm0.047$ & $\pm0.052$ & $\pm0.042$ 
    \\
    \bottomrule \bottomrule
    \end{tabular}
}

  \vspace{-2mm}
\end{table*}%

\begin{table*}[th]
    \centering
       \caption{Success rate (SR)  of ORB-SLAM3~\citep{orbslam3} under motion deviations.}
                  \label{tab:motion-deviation-perturb_sr_metric}
\centering\setlength{\tabcolsep}{1.0mm}
\resizebox{\textwidth}{!}{
\begin{tabular}{l|c|ccc|cccc|cccc|cccc}
    \toprule \toprule
     \textbf{Rotate [$\deg$]} & \multirow{2}{*}{\textbf{Clean}}  & \multicolumn{3}{c|}{\textbf{\textit{0}}} & \multicolumn{4}{c|}{\textbf{\textit{1}}} & \multicolumn{4}{c|}{\textbf{\textit{3}}} & \multicolumn{4}{c}{\textbf{\textit{5}}}
    \\ \cmidrule{1-1} \cmidrule{3-5} \cmidrule{6-9} \cmidrule{10-13} \cmidrule{14-17}
     \textbf{Translate  [$m$]} &  & \textit{\textbf{0.0125}} &\textit{\textbf{0.025}} & \textbf{\textit{0.05}} & \textit{\textbf{0}} & \textit{\textbf{0.0125}} &\textit{\textbf{0.025}} & \textbf{\textit{0.05}} & \textit{\textbf{0}}  & \textit{\textbf{0.0125}} &\textit{\textbf{0.025}} & \textbf{\textit{0.05}} & \textit{\textbf{0}} & \textit{\textbf{0.0125}} &\textit{\textbf{0.025}} & \textbf{\textit{0.05}} 
    \\\midrule\midrule
      \multirow{2}{*}{Monocular} & ${0.854}$ & ${0.489}$ & $0.247$ & $0.128$ & ${0.662}$ & $0.340$ & $0.221$ & ${0.120}$ & ${0.373}$ & $0.096$ & $0.087$ & $0.059$ & $0.413$ & $0.155$ & ${0.214}$ & ${0.144}$ 
     \\ 
        & $\pm0.149$ & $\pm0.107$ & $\pm0.129$ & $\pm0.091$ & $\pm0.329$ & $\pm0.189$ & $\pm0.156$ & $\pm0.069$ & $\pm0.329$ & $\pm0.154$ & $\pm0.081$ & $\pm0.056$ & $\pm0.418$ & $\pm0.160$ & $\pm0.112$ & $\pm0.039$ 
     \\ \midrule
    \multirow{2}{*}{RGB-D} & ${0.960}$ & $0.462$ & ${0.321}$ & ${0.152}$ & $0.596$ & ${0.345}$ & ${0.228}$ & $0.114$ & $0.325$ & ${0.259}$ & ${0.146}$ & ${0.118}$ & ${0.422}$ & ${0.158}$ & $0.180$ & $0.129$ 
    \\
        & $\pm0.046$ & $\pm0.213$ & $\pm0.085$ & $\pm0.100$ & $\pm0.491$ & $\pm0.218$ & $\pm0.151$ & $\pm0.115$ & $\pm0.270$ & $\pm0.226$ & $\pm0.121$ & $\pm0.087$ & $\pm0.428$ & $\pm0.156$ & $\pm0.148$ & $\pm0.092$ 
    \\
    \bottomrule \bottomrule
    \end{tabular}
}
         \label{tab:trajectory-perturbation}
  \vspace{-2mm}
\end{table*}%

\clearpage

\begin{table}[t]
\vspace{3mm}
\caption{ATE metric [m] of ORB-SLAM3~\citep{orbslam3} with RGB-D input under desynchronization.}
\centering 
\label{tab:sensor-misalign-perturb_ate_metric}
\resizebox{\textwidth}{!}{
\begin{tabular}{p{1.495cm}|p{2.495cm}<{\centering}|p{2.495cm}<{\centering}p{2.495cm}<{\centering}p{2.495cm}<{\centering}}
    \toprule \toprule
     \multirow{2}{*}{\textbf{Perturb}} & \textbf{Clean} &  \multicolumn{3}{c}{\textbf{Misaligned Frame Interval ($\Delta$)}} \\ \cmidrule{2-5} 
   \textbf{Mode}  & \textbf{$\Delta=0$} & \textbf{$\Delta=5$} & \textbf{$\Delta=10$} & \textbf{$\Delta=20$}       \\ \midrule
    Static &   \multirow{2}{*}{${0.082}\pm 0.179$} & ${0.069}\pm 0.168$ & ${0.066}\pm 0.154$  & ${0.065}\pm 0.163$ 
    \\
    Dynamic &  & $0.070\pm 0.157$ & $0.077\pm 0.161$ & ${0.083}\pm 0.178$ 
    \\
    \bottomrule  \bottomrule
    \end{tabular}
}\vspace{-2mm}
\end{table}

\begin{table}[ht]
\caption{RPE metric [m] of ORB-SLAM3~\citep{orbslam3} with RGB-D input under desynchronization.}
\centering 
\label{tab:sensor-misalign-perturb_rpe_metric}
\resizebox{\textwidth}{!}
{
\begin{tabular}{p{1.495cm}|p{2.495cm}<{\centering}|p{2.495cm}<{\centering}p{2.495cm}<{\centering}p{2.495cm}<{\centering}}
    \toprule \toprule
     \multirow{2}{*}{\textbf{Perturb}} & \textbf{Clean} &  \multicolumn{3}{c}{\textbf{Misaligned Frame Interval ($\Delta$)}} \\ \cmidrule{2-5} 
   \textbf{Mode}  & \textbf{$\Delta=0$} & \textbf{$\Delta=5$} & \textbf{$\Delta=10$} & \textbf{$\Delta=20$}       \\ \midrule
    Static &   \multirow{2}{*}{${0.114}\pm 0.023$} & ${0.123}\pm 0.024$ & ${0.114}\pm 0.024$  & ${0.115}\pm 0.023$ 
    \\
    Dynamic &  & $0.116\pm 0.025$ & ${0.112}\pm 0.019$ & ${0.117}\pm 0.027$ 
    \\
    \bottomrule  \bottomrule
    \end{tabular}
}\vspace{-2mm}
\end{table}

\begin{table}[ht]
\caption{Success rate (SR) of ORB-SLAM3~\citep{orbslam3} with RGB-D input under desynchronization. }
\centering 
\label{tab:sensor-misalign-perturb_sr_metric}
\resizebox{\textwidth}{!}{
\begin{tabular}{p{1.495cm}|p{2.495cm}<{\centering}|p{2.495cm}<{\centering}p{2.495cm}<{\centering}p{2.495cm}<{\centering}}
    \toprule \toprule
     \multirow{2}{*}{\textbf{Perturb}} & \textbf{Clean} &  \multicolumn{3}{c}{\textbf{Misaligned Frame Interval ($\Delta$)}} \\ \cmidrule{2-5} 
   \textbf{Mode}  & \textbf{$\Delta=0$} & \textbf{$\Delta=5$} & \textbf{$\Delta=10$} & \textbf{$\Delta=20$}       \\ \midrule
    Static &   \multirow{2}{*}{${0.960}\pm 0.046$} & ${0.960}\pm 0.036$ & ${0.958}\pm 0.029$  & ${0.954}\pm 0.030$ 
    \\
    Dynamic &  & $0.955\pm 0.039$ & $0.942\pm 0.050$ & ${0.948}\pm 0.041$ 
    \\
    \bottomrule  \bottomrule
    \end{tabular}
}
\end{table}

\clearpage

\section{\textcolor[rgb]{0.0,0.0,0.0}{More Details about Correspondence-guided Gaussian Splatting (CorrGS)}}\label{sec:corrgs_method_details}

\subsection{\textcolor[rgb]{0.0,0.0,0.0}{Schematic Pipeline Overview and Pseudocode}}\label{sec:pipeline_psudocode}

\begin{figure*}[h]
\centering
\includegraphics[width=\textwidth]{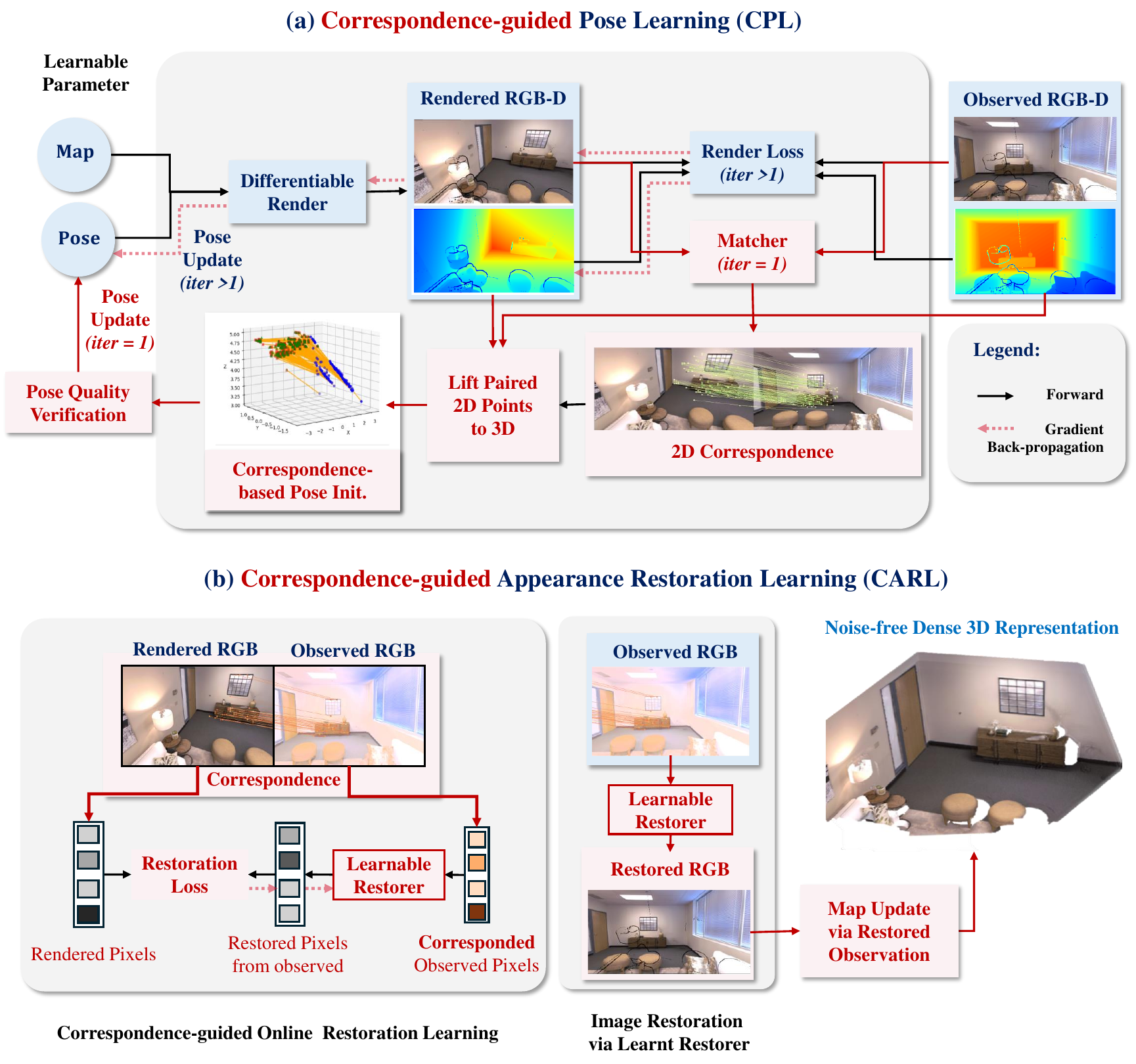}
\caption{\textbf{Main components of Correspondence-guided Gaussian Splatting (CorrGS).} (\textbf{a}) Correspondence-guided Pose Learning (CPL) for robust tracking. (\textbf{b}) Correspondence-guided Appearance Restoration  Learning (CARL) for noise-free and consistent 3D reconstruction. } 
\label{fig:pose_optim_idea}
\end{figure*}


\begin{algorithm}[t]
\caption{CorrGS: Correspondence-Guided Gaussian Splatting SLAM System(pseudocode)}
\label{alg:corrgs}
\begin{algorithmic}[1]
\REQUIRE Noisy RGB image $\mathbf{I}_t$, depth map $\mathbf{D}_t$, camera intrinsics $\mathbf{K}$, historical poses $\mathcal{P}_{<t}$, historical map $\mathcal{M}_{<t}$ (Gaussian Splat representation)
\ENSURE Refined camera pose $\mathcal{P}_t$, updated map $\mathcal{M}_{t}$

\STATE \textbf{1: Naive Pose Initialization} 
\STATE Initialize camera pose $\mathcal{P}_t$ by propagating from historical poses $\mathcal{P}_{<t}$.

\STATE \textbf{2:  Correspondence-based Pose Initialization} 
\STATE Render synthesized RGB-D images $\hat{\mathbf{I}}_t$, $\hat{\mathbf{D}}_t$ using initial pose $\mathcal{P}_t$ and historical map $\mathcal{M}_{<t}$.
\STATE Compute 2D correspondences between $\hat{\mathbf{I}}_t$ and observed image $\mathbf{I}_t$.
\STATE Lift 2D correspondences to 3D correspondences using $\mathbf{D}_t$, $\hat{\mathbf{D}}_t$, and $\mathbf{K}$.
\STATE Update pose $\mathcal{P}_t^c$ via non-linear optimization over 3D correspondences using prior pose $\mathcal{P}_t$ as initialization.

\STATE \textbf{3: Correspondence-based Pose Quality Verification}
\STATE Compute rendering losses for poses $\mathcal{P}_t^c$ and $\mathcal{P}_t$.
\STATE Select the pose with the lower loss for further processing.

\STATE \textbf{4: Correspondence-guided Appearance Restoration Learning}
\STATE Learn restoration model $f(\cdot, \theta)$ using the color data from the correspondences.
\STATE Generate restored RGB frame $\mathbf{I}_t^{R} \gets f(\mathbf{I}_t, \theta)$.
\STATE Update observed image $\mathbf{I}_t \gets \mathbf{I}_t^{R}$.  

\STATE \textbf{Repeat Steps 2--4 for 1 iteration for further refinement.}

\STATE \textbf{5: Differentiable Rendering-based Pose Refinement} 
\STATE Refine pose $\mathcal{P}_t$ using differentiable rendering-based optimization.

\STATE \textbf{6: Map Update} 
\STATE Integrate (restored) observed RGB image $\mathbf{I}_t$, depth map $\mathbf{D}_t$, and refined pose $\mathcal{P}_t$ into the historical map $\mathcal{M}_{<t}$ to produce the updated 3D map $\mathcal{M}_{t}$.

\RETURN Refined pose $\mathcal{P}_t$, updated noise-free map $\mathcal{M}_{t}$.
\end{algorithmic}
\end{algorithm}

\clearpage

\subsection{\textcolor[rgb]{0.0,0.0,0.0}{Preliminaries}}
\label{subsec:preliminary}


\paragraph{Gaussian map representation.} We represent the 3D environment using a set of oriented 3D Gaussians, capturing both geometry and appearance. Each Gaussian $\mathbf{G}_i$ is parameterized by its position $\mathbf{X}_i \in \mathbb{R}^3$, a covariance matrix $\Sigma_i \in \mathbb{R}^{3 \times 3}$ (which we simplify to isotropic Gaussians), opacity $\Lambda_i \in [0, 1]$, and RGB color $\mathbf{c}_i \in \mathbb{R}^3$. The simplified map $\mathcal{M}$ is defined as:
\begin{equation} \mathcal{M} = \{ \mathbf{G}_i : \left( \mathbf{X}_i, \Sigma_i, \Lambda_i, \mathbf{c}_i \right)  \big|  i = 1, \dots, N \}. \end{equation}

Each Gaussian $\mathbf{G}_i$ contributes to a point in 3D space $\mathbf{x} \in \mathbb{R}^3$ according to the Gaussian function weighted by its opacity:
\begin{align} f^{rend}_i(\mathbf{x}) = \Lambda_i \exp\left(-\frac{|\mathbf{x} - \mathbf{X}_i|^2}{2\mathbf{r}_i^2}\right), \end{align}
where $\mathbf{r}_i$ is the isotropic Gaussian radius. This representation supports efficient and differentiable rendering of the scene.

\paragraph{Differentiable rendering-based pose optimization.} Our approach enables differentiable rendering of color, depth, and silhouette images from the Gaussian Splat Map into any camera frame. By leveraging this differentiable process, we compute gradients with respect to both the scene representation $\mathcal{M}$ (the Gaussians) and the camera pose $\mathcal{P} = (\mathbf{R}, \mathbf{t})$, where $\mathbf{R} \in \text{SO}(3)$ is the rotation matrix and $\mathbf{t} \in \mathbb{R}^3$ is the translation vector. The rendered images can be compared to input RGB-D frames to minimize the error and refine both the scene and camera parameters.

Rendering an RGB image follows the process of Gaussian Splatting~\citep{gaussiansplatting}: given a collection of Gaussians, we first sort them from front to back and then render the image by alpha-compositing the splatted 2D projections of each Gaussian. The color of a pixel $\mathbf{p} = (\mathbf{u}, \mathbf{v})$ is computed as:
\begin{equation}\label{eq:color_render} \mathbf{I}(\mathbf{p}) = \sum_{i=1}^{N} \mathbf{c}_i {f}^{rend}_i(\mathbf{p}) \prod_{j=1}^{i-1} \left(1 - {f}^{rend}_j(\mathbf{p})\right), \end{equation}
where $\mathbf{f}_i(\mathbf{p})$ is computed by projecting the 3D Gaussians into pixel space:
\begin{equation} \mathbf{X}_i^{\text{2D}} = \mathbf{K} \frac{\mathcal{P}_t \mathbf{X}_i}{\mathbf{d}_i}, \quad \mathbf{r}_i^{\text{2D}} = \frac{\mathbf{f} \mathbf{r}_i}{\mathbf{d}_i}, \quad \mathbf{d}_i = (\mathcal{P}_t \mathbf{X}_i)_z. \end{equation}
Here, $\mathbf{K}$ is the known camera intrinsic matrix, $\mathcal{P}_t$ is the estimated pose, \textit{i.e.}, the camera extrinsic matrix for frame $t$, $\mathbf{f}$ is the focal length, and $\mathbf{d}_i$ is the depth of the $i^{th}$ Gaussian primitive.

Similarly, we render the depth image:
\begin{equation}\label{eq:depth_render} \mathbf{D}(\mathbf{p}) = \sum_{i=1}^{N} \mathbf{d}_i f^{rend}_i(\mathbf{p}) \prod_{j=1}^{i-1} \left(1 - f^{rend}_j(\mathbf{p})\right), \end{equation}

These rendered images enable the optimization of the camera pose ($\mathcal{P}$) by minimizing a loss function $\mathcal{L}(\mathcal{P})$:
\begin{equation}\label{eq:gd_pose_optim}  \mathcal{L}(\mathcal{P}) = \lambda_C \mathcal{L}_C(\hat{\mathbf{I}}, \mathbf{I}) + \lambda_D \mathcal{L}_D(\hat{\mathbf{D}}, \mathbf{D}), \end{equation}
where $\lambda_C, \lambda_D > 0$ are weighting parameters for the photometric loss terms $\mathcal{L}_C$ and $\mathcal{L}_D$ for color and depth, respectively.

The pose is iteratively updated using gradient descent: \begin{equation} \mathcal{P}_{k+1} = \mathcal{P}_k - \eta \nabla_{\mathcal{P}} \mathcal{L}(\mathcal{P}_k), \end{equation} where $\eta > 0$ is the learning rate.

\clearpage

\subsection{\textcolor[rgb]{0.0,0.0,0.0}{Correspondence-guided Pose Learning (CPL)}}\label{sec:corr_based_pose_init}

\begin{figure}[t]
\centering
\includegraphics[width=\textwidth]{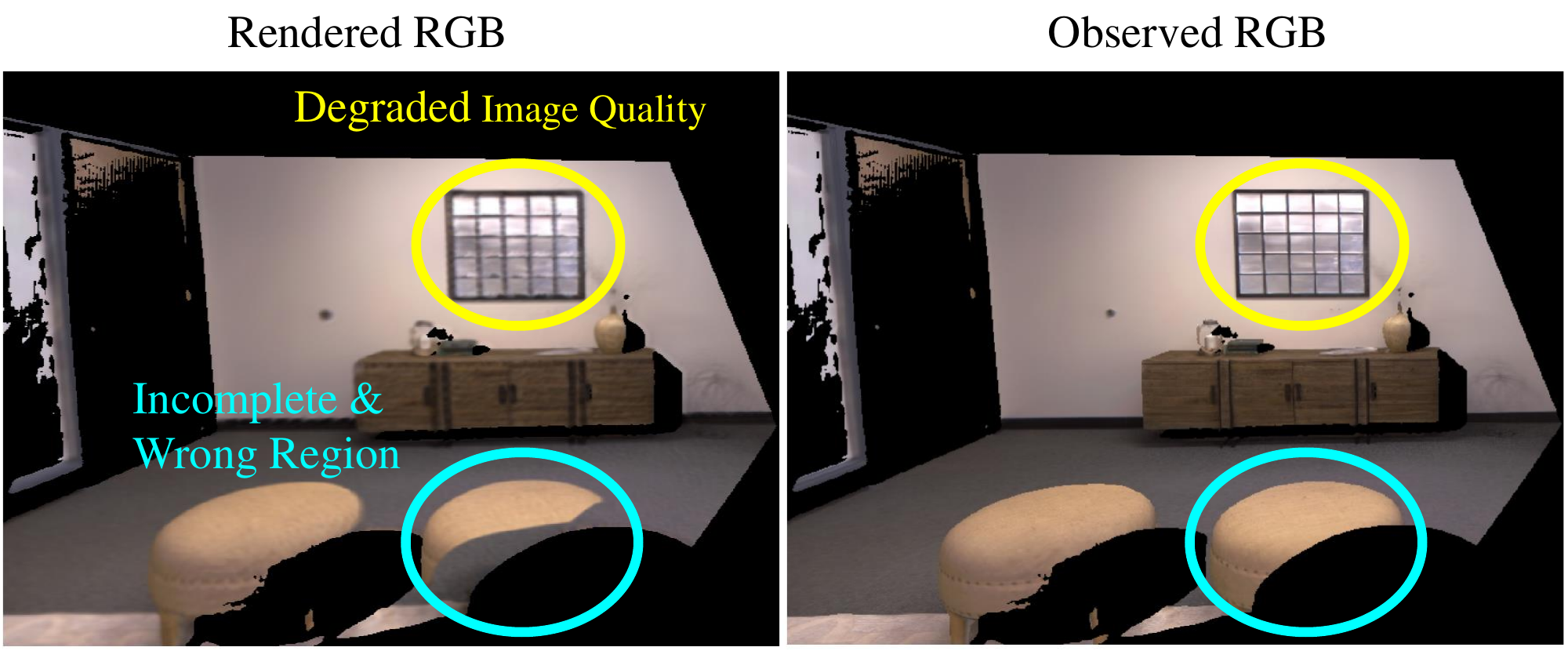}
\caption{Comparison of rendered and observed image quality. The rendered image exhibits degradation due to pose noise, impacting correspondence calculation.}
\label{fig:quality_comparison_render_observed}
\end{figure}

\subsubsection{\textcolor[rgb]{0.0,0.0,0.0}{Naive pose initialization} }

Pose learning starts by propagating the camera pose from historical poses $\mathcal{P}_{<t}$ as the naive initialization $\mathcal{P}_t$. Synthesized RGB-D images $\hat{\mathbf{I}}_t$ and $\hat{\mathbf{D}}_t$ are then rendered using $\mathcal{P}_t$ and the historical map $\mathcal{M}_{<t}$. These rendered images are used to establish correspondences with the observed RGB-D data (${\mathbf{I}}_t$ and ${\mathbf{D}}_t$), refining the pose for more accurate tracking.

\subsubsection{\textcolor[rgb]{0.0,0.0,0.0}{2D Correspondence Calculation}}

We utilize an off-the-shelf feature matcher (e.g., LoFTR~\citep{sun2021loftr}) to compute 2D correspondences between the rendered $\hat{\mathbf{I}}_t$ and the observed RGB images. These correspondences consist of pairs of 2D pixel coordinates, $(\mathbf{u}_r, \mathbf{v}_r)$ for the rendered image and $(\mathbf{u}_o, \mathbf{v}_o)$ for the observed image. However, due to feature matching inaccuracies, such as degraded quality in the rendered image (see Fig.~\ref{fig:quality_comparison_render_observed}), errors $\epsilon_{2D}$ arise: \begin{equation} (\mathbf{u}_r, \mathbf{v}_r) = (\mathbf{u}_r^*, \mathbf{v}_r^* ) + \epsilon_{2D,r}, \quad (\mathbf{u}_o, \mathbf{v}_o) = (\mathbf{u}_o^*, \mathbf{v}_o^* ) + \epsilon_{2D,o}, \end{equation} 
where $(\mathbf{u}_r^*, \mathbf{v}_r^* )$ and $(\mathbf{u}_o^*, \mathbf{v}_o^*)$ represent the true correspondences, and $\epsilon_{2D,r}$, $\epsilon_{2D,o}$ are the 2D matching errors for the rendered and observed images, respectively.

\subsubsection{\textcolor[rgb]{0.0,0.0,0.0}{Lifting 2D Correspondences to 3D}}
Given the observed depth map $\mathbf{D}_o$ and rendered depth map $\mathbf{D}_r$, we lift the corresponding 2D pixel coordinates into 3D points via: \begin{equation} \mathbf{p}_r = \mathbf{D}_r(\mathbf{u}_r, \mathbf{v}_r) \mathbf{K}^{-1} \begin{pmatrix} \mathbf{u}_r \\ \mathbf{v}_r \\ 1 \end{pmatrix}, \end{equation} \begin{equation} \mathbf{p}_o = \mathbf{D}_o(\mathbf{u}_o, \mathbf{v}_o) \mathbf{K}^{-1} \begin{pmatrix} \mathbf{u}_o \\ \mathbf{v}_o \\ 1 \end{pmatrix}, \end{equation} where $\mathbf{K}$ is the camera intrinsics matrix, and $\mathbf{p}_r$ and $\mathbf{p}_o$ are the corresponding 3D points in the rendered and observed point clouds, respectively.

Due to errors in depth measurements $\epsilon_{D,r}$ and $\epsilon_{D,o}$, the 3D points become perturbed: 

\begin{equation}\label{eq:perturb_3d_1} \mathbf{p}_r = (\mathbf{D}_r^* + \epsilon_{D,r}) \mathbf{K}^{-1} \begin{pmatrix} \mathbf{u}_r^* + \epsilon_{2D,r} \\ \mathbf{v}_r^* + \epsilon_{2D,r} \\ 1 \end{pmatrix}, \end{equation} \begin{equation} \label{eq:perturb_3d_2} \mathbf{p}_o = (\mathbf{D}_o^* + \epsilon_{D,o}) \mathbf{K}^{-1} \begin{pmatrix} u_o^* + \epsilon_{2D,o} \\ v_o^* + \epsilon_{2D,o} \\ 1 \end{pmatrix}, \end{equation} 
where $\mathbf{D}_r^*$ and $\mathbf{D}_o^*$ are the true depths, and $\epsilon_{D,r}$ and $\epsilon_{D,o}$ are the depth errors.

\subsubsection{\textcolor[rgb]{0.0,0.0,0.0}{Correspondence-based Pose Initialization}}\label{subsec:corr_pose_init}

To estimate the relative pose transformation between the two corresponded 3D point clouds, \textit{i.e.}, the rendered points ${\mathbf{p}_r}$ and the observed points ${\mathbf{p}_o}$, we leverage the correspondences to align them. By minimizing the reprojection error between the two sets of points, we iteratively estimate the relative pose transformation. This transformation consists of a rotation matrix $\mathbf{R}_{rel}$ and a translation vector $\mathbf{t}_{rel}$, which describe how the rendered point cloud should be aligned with the observed one. Once the relative transformation is computed, it is used to update the camera's pose estimation in the current frame, refining the pose incrementally based on the previous pose estimation.

\noindent\textbf{Relative transformation calculation.} We begin by solving the following non-linear least squares problem to iteratively refine the relative pose:
\begin{equation}
\min_{\mathbf{R}_{rel}, \mathbf{t}_{rel}} \sum_{i=1}^N \rho \left( \left\| \mathbf{R}_{rel} \mathbf{p}_{r,i} + \mathbf{t}_{rel} - \mathbf{p}_{o,i} \right\|^2 \right),
\end{equation}
where $\mathbf{R}_{rel} \in \mathbf{SO}(3)$ is the relative rotation matrix, $\mathbf{t}_{rel} \in \mathbb{R}^3$ is the relative translation vector, $\|\cdot\|$ denotes the Euclidean norm, and $\rho(\cdot)$ is the \texttt{soft\_l1} loss function. The optimization iteratively refines these parameters to estimate the relative pose between the two point clouds.

\begin{figure}[t]\centering\includegraphics[width=\textwidth]{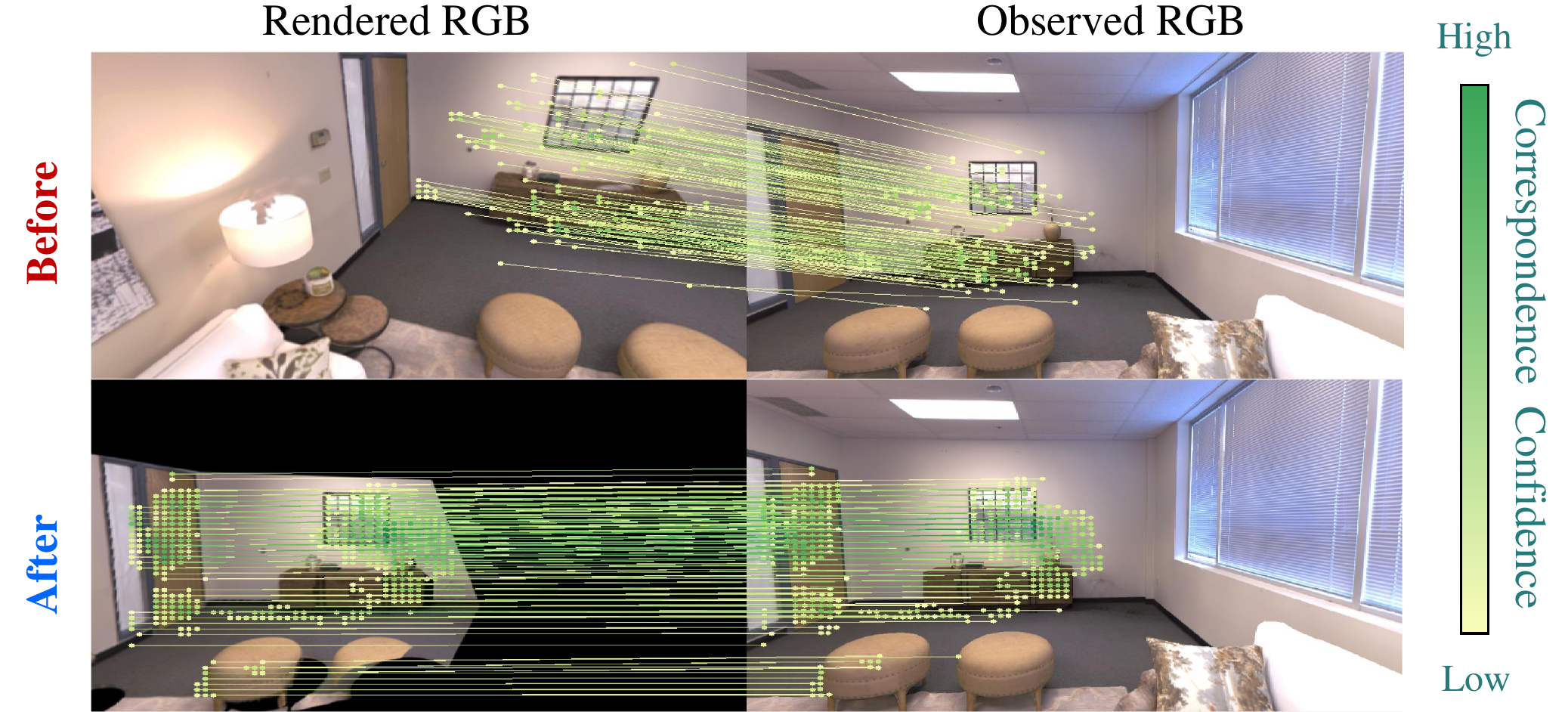}
 \caption{{Effect of one-step pose initialization via correspondence.}}\label{fig:effect_cga_render} \end{figure}

 Fig.~\ref{fig:effect_cga_render} demonstrates that, even in the case of a substantial view change, the image rendered after one step of pose initialization via correspondence closely matches the observed image. 

The \texttt{soft\_l1} loss function is defined as:
\begin{equation}
\rho(s) = 2 \left( \sqrt{1 + \frac{s}{2}} - 1 \right),
\end{equation}
where $s = \left\| \mathbf{r}_i \right\|^2$ is the squared residual. The smooth L1 loss behaves quadratically for small residuals and transitions to linear behavior for larger residuals, reducing the influence of outliers.

At the start of the optimization, the relative pose is initialized as the identity transformation, meaning no initial rotation or translation is applied. We then iteratively refine the pose by solving for $\mathbf{R}_{rel}$ and $\mathbf{t}_{rel}$ that minimize the reprojection error. The residuals at each iteration are computed as:
\begin{equation}
\mathbf{r}_i = \mathbf{R}_{rel} \mathbf{p}_{r,i} + \mathbf{t}_{rel} - \mathbf{p}_{o,i},
\end{equation}
where $\mathbf{r}_i$ is the residual for the $i$-th correspondence.

The Jacobian of the residuals, $\mathbf{J}(\mathbf{R}_{rel}, \mathbf{t}_{rel})$, is approximated using the `3-point` finite-difference method. This involves numerically perturbing $\mathbf{R}_{rel}$ and $\mathbf{t}_{rel}$ and measuring the corresponding changes in the residuals:
\begin{equation}
\mathbf{J}(\mathbf{R}_{rel}, \mathbf{t}_{rel}) = \frac{\partial}{\partial (\mathbf{R}_{rel}, \mathbf{t}_{rel})} \mathbf{r}_i.
\end{equation}
The `3-point` method improves the accuracy of Jacobian estimation, leading to better convergence without the need for explicit analytical derivatives.

Starting with the identity transformation ($\mathbf{R}_{rel} = \mathbf{I}, \mathbf{t}_{rel} = \mathbf{0}$), the optimizer iteratively updates the pose parameters to minimize the total reprojection error. The residuals and Jacobian are recalculated at each step until the pose converges to the optimal estimate. This ensures a robust and accurate relative pose estimate, even in the presence of noisy or imperfect correspondences.

\noindent\textbf{Pose update after relative transformation calculation.}
After estimating the relative pose $(\mathbf{R}_{rel}, \mathbf{t}_{rel})$, we update the current estimated pose $\mathcal{P}_t^{c}$ by applying this relative transformation to the previous pose estimation $\mathcal{P}_{t}$. The updated pose is computed as:
\begin{equation}
\mathcal{P}_t^{c} = \mathcal{P}_t \cdot \begin{pmatrix} \mathbf{R}_{rel} & \mathbf{t}_{rel} \\ \mathbf{0}^\top & 1 \end{pmatrix}.
\end{equation}

\subsubsection{\textcolor[rgb]{0.0,0.0,0.0}{Computational Complexity of Correspondence-guided Initialization}}\label{subsec:corr_pose_complexity}

Let $m$ denote the number of correspondences, which is bounded by the image resolution, i.e., $m = h \times w$, where $h$ and $w$ represent the image height and width, respectively. The number of pose parameters, $n$, is fixed at 16, corresponding to the elements of a $4 \times 4$ transformation matrix.

The computational complexity per iteration is primarily determined by two factors: evaluating the residual function and approximating the Jacobian. The total complexity can be expressed as:
\begin{equation}
\mathcal{O}(m \times n) + \mathcal{O}(n^3).
\end{equation}

Substituting $m = h \times w$ and $n = 16$, this becomes:
\begin{equation}
\mathcal{O}(16 \times h \times w + 16^3).
\end{equation}

Since $h \times w \gg 16^3$, the dominant term simplifies to:
\begin{equation}
\mathcal{O}(h \times w).
\end{equation}

Thus, the computational complexity of the correspondence-guided pose initialization is primarily linear with respect to the image resolution.

This method is implemented using the \texttt{least\_squares} function from \texttt{scipy} library~\citep{virtanen2020scipy}. For a $1200 \times 680 \times 3$ image, the average processing time is approximately 10ms on our server, demonstrating the method's efficiency even at higher resolutions. The computational complexity scales with image resolution, ensuring efficient convergence in pose estimation.


\subsubsection{\textcolor[rgb]{0.0,0.0,0.0}{Theoretical Analysis in Small and Large Motion Cases of Correspondence-based Pose Initialization}}\label{subsec:motion_analysis_corr_init}

\noindent\textbf{Theoretical analysis on the  small motion case}.
When the relative motion between the rendered and observed images is small, i.e., $ \mathbf{R} \approx I $ and $ \mathbf{t} \approx 0 $, the optimization problem becomes highly sensitive to small perturbations in $ \mathbf{R} $ and $ \mathbf{t} $. The transformed 3D points can be approximated as:
\begin{equation}
\mathbf{R} \mathbf{p}_{r,i} + \mathbf{t} \approx \mathbf{p}_{r,i} + \Delta \mathbf{R} \mathbf{p}_{r,i} + \Delta \mathbf{t},
\end{equation}
where $ \Delta \mathbf{R} $ and $ \Delta \mathbf{t} $ are small deviations from the true pose. The residual error then becomes:
\begin{equation}
\|\Delta \mathbf{R} \mathbf{p}_{r,i} + \Delta \mathbf{t}\|^2.
\end{equation}

This residual error is further influenced by the errors in 2D correspondences $ \epsilon_{2D} $ and depth measurements $ \epsilon_{D} $ (\textit{c.f.}, Eq.~(\ref{eq:perturb_3d_1}) and~Eq. (\ref{eq:perturb_3d_2}))
The resulting residual error, incorporating these perturbations, is:
\begin{equation}
\|\Delta \mathbf{R} (\mathbf{p}_{r,i} + \epsilon_{p,r}) + \Delta \mathbf{t} - (\mathbf{p}_{o,i} + \epsilon_{p,o})\|^2,
\end{equation}
where $ \epsilon_{p,r} $ and $ \epsilon_{p,o} $ encapsulate the combined effects of both 2D correspondence and depth errors. 

Thus, in the small motion case, the condition number of this optimization problem increases, making it highly sensitive to these small errors. 

\noindent\textbf{Theoretical analysis on the  large motion case}.
When there is significant relative motion between the rendered and observed images, the optimization process benefits from larger gradients, making it more stable and resistant to noise. The transformed points under large motion can be expressed as: \begin{equation} \mathbf{R} \mathbf{p}_{r,i} + \mathbf{t} = \mathbf{p}_{r,i}^{\text{Trans}} + \epsilon_{p,r}^{\text{Trans}}, \end{equation} where $\mathbf{p}_{r,i}^{\text{Trans}}$ represents the true transformed points, and $\epsilon_{p,r}^{\text{Trans}}$ includes the 2D correspondence and depth errors after transformation.

The residual error is given by: \begin{equation} \|\mathbf{p}_{r,i}^{\text{Trans}} + \epsilon_{p,r}^{\text{Trans}} - (\mathbf{p}_{o,i} + \epsilon_{p,o})\|^2. \end{equation}

In the case of large motion, the magnitude of the gradients (derived from the residual error with respect to pose parameters) increases, providing more pronounced and clearer directionality during optimization. This stabilizes the optimization process by making it less susceptible to small perturbations in the input data. Thus, larger motion enhances the robustness of the optimization, leading to more reliable convergence.

\subsubsection{\textcolor[rgb]{0.0,0.0,0.0}{Rendering-based Pose Quality Verification}}\label{subsec:pqv}
To address the potential failure of correspondence-based pose initialization in small motion scenarios (see Fig.~\ref{fig:effect_pose_init_verify}), we reject the pose $\mathcal{P}_{t}^{c}$ estimated from CPL if it produces a higher loss than the naively propagated pose $\mathcal{P}_{t}$, which is estimated using a linear motion model based on historical poses. We render RGB-D images $\hat{\mathbf{I}}_{\text{CPL}, t}, \hat{\mathbf{D}}_{\text{CPL}, t}$ from the correspondence-guided initialization and $\hat{\mathbf{I}}_{\text{naive}, t}, \hat{\mathbf{D}}_{\text{naive}, t}$ from the naively propagated pose. The losses are then computed as:
\begin{equation}
\mathcal{L}_{\text{CPL}, t}(\mathcal{P}_t^c) = \lambda_C \mathcal{L}_C(\hat{\mathbf{I}}_{\text{CPL}, t}, \mathbf{I}_t) + \lambda_D \mathcal{L}_D(\hat{\mathbf{D}}_{\text{CPL}, t}, \mathbf{D}_t),
\end{equation}
\begin{equation}\label{eq:gd_pose_optim_naive}  
\mathcal{L}_{\text{naive}, t}(\mathcal{P}_t) = \lambda_C \mathcal{L}_C(\hat{\mathbf{I}}_{\text{naive}, t}, \mathbf{I}_t) + \lambda_D \mathcal{L}_D(\hat{\mathbf{D}}_{\text{naive}, t}, \mathbf{D}_t),
\end{equation}

After computing the losses for the CPL pose $\mathcal{P}_t^c$ and the naively propagated pose $\mathcal{P}_t$, we select the pose with the lower loss for further refinement. The selection criterion is defined as:
\begin{equation}
\mathcal{P}_t^* = \arg \min_{\mathcal{P}_t^c, \mathcal{P}_t} \left( \mathcal{L}_{\text{CPL}, t}(\mathcal{P}_t^c), \mathcal{L}_{\text{naive}, t}(\mathcal{P}_t) \right)
\end{equation}

\begin{figure}[t]
\centering
\includegraphics[width=\linewidth]{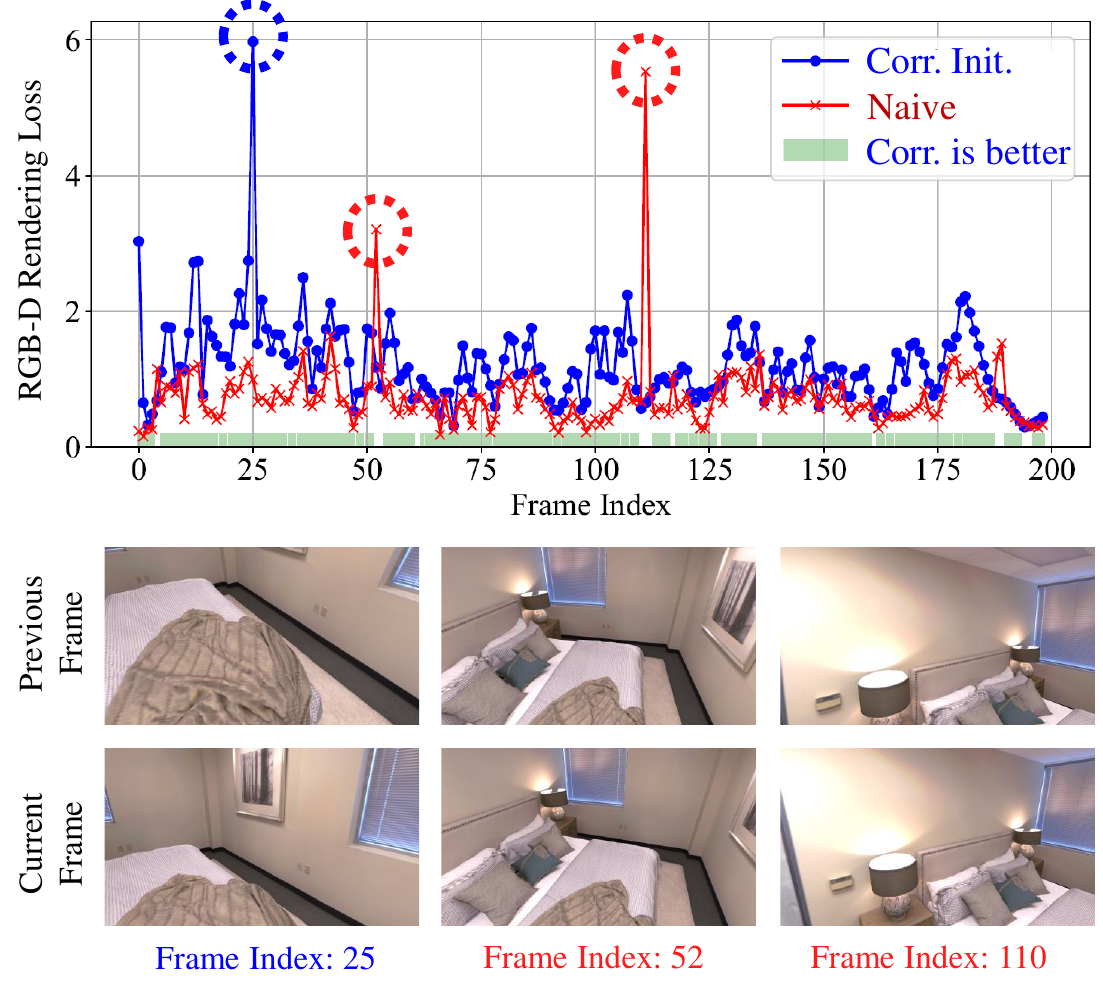}
\caption{{A pose initialized using correspondence in CPL demonstrates superior performance in scenarios involving rapid motion, effectively maintaining alignment and stability. However, it may fail to achieve similar accuracy under conditions of small motion, where the lack of distinct changes makes correspondence-based initialization less reliable.}}
\label{fig:effect_pose_init_verify}
\end{figure}

\subsection{\textcolor[rgb]{0.0,0.0,0.0}{Differentiable Rendering-based Pose Optimization with the Correspondence-guided Pose Initialization }}

After verifying the quality of the initial pose $\mathcal{P}_t^*$, it is refined through differentiable rendering-based pose optimization (see Sec.~\ref{subsec:preliminary}). The synergy between the correspondence-guided initialization and differentiable optimization enables the system to leverage both precise correspondences and photometric consistency during optimization. This two-step process improves the robustness of the final pose estimate by first aligning point correspondences and then using the differentiable rendering to capture finer details and reduce ambiguities. As a result, this combination ensures more robust pose learning, thus enhancing the accuracy of pose tracking in the dense Neural SLAM system.

\clearpage
\subsection{\textcolor[rgb]{0.0,0.0,0.0}{Correspondence-guided Appearance Restoration Learning (CARL)}}\label{sec:corr_based_restoration_learning}

\subsubsection{\textcolor[rgb]{0.0,0.0,0.0}{Motivation and Insights}}

We find that  existing matchers can  generate weak but usable correspondences in noisy cross-domain settings, such as varying illumination. These correspondences can be leveraged for correspondence-guided pose learning to improve pose estimation accuracy. For instance, as illustrated in Fig.~\ref{fig:effect_cga_render_dark}, the rendered image from the estimated pose aligns well with the observed image (bottom figure), compared to the large misalignment seen with the initial pose (top figure).

Building on this, we further utilize these weak correspondences for appearance restoration. By training a restoration model, $f(\cdot; \theta)$, CARL maps noisy colors  from the observed images to clean counterparts  derived from a progressively-updated noise-free historical map. This restored image is then used to refine the camera pose, iteratively improving both image quality and pose estimation, ensuring robustness in noisy, cross-domain environments.

\begin{figure}[t]
\centering
\includegraphics[width=\textwidth]{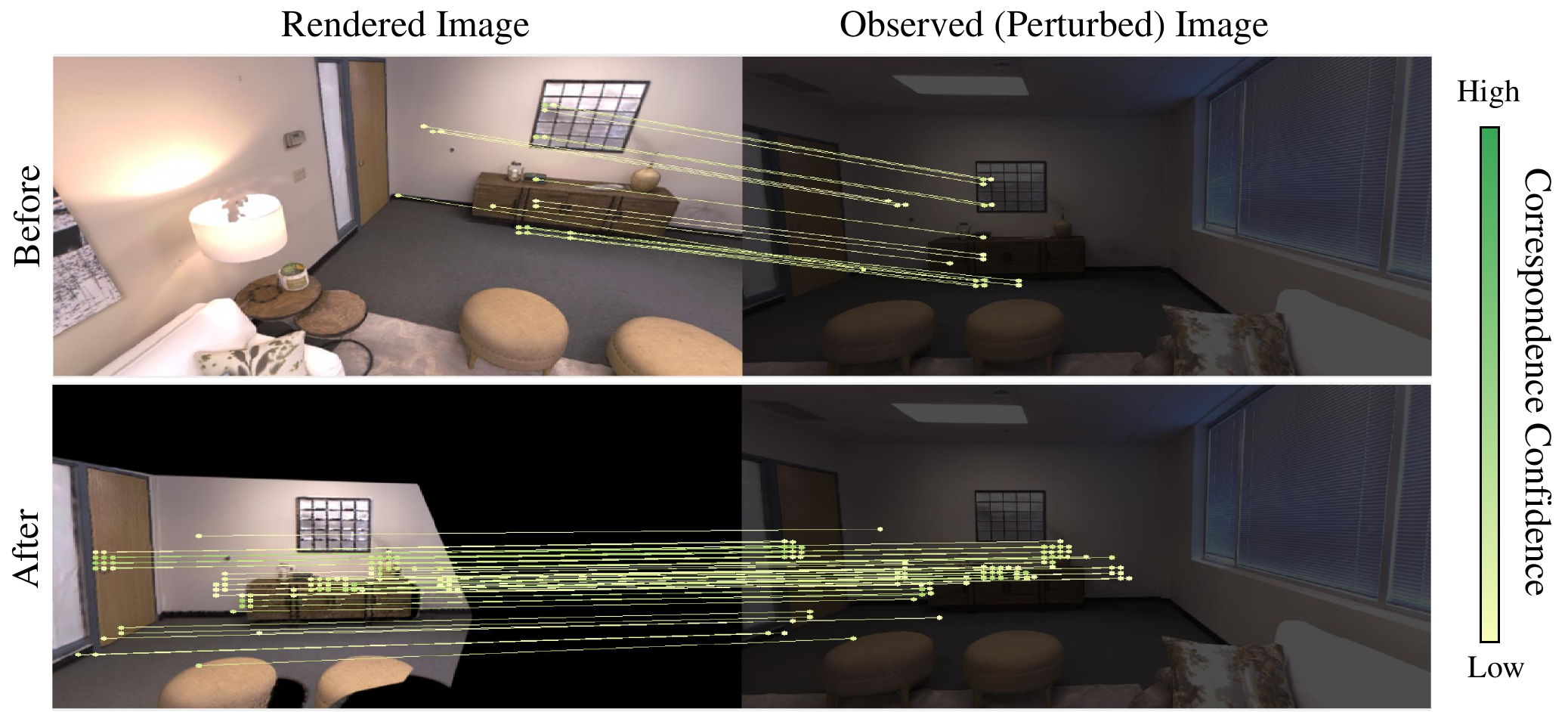}
\caption{\textbf{Existing matchers~\citep{sun2021loftr} can produce weak but usable correspondences in cross-domain settings} (e.g., varying illumination), which can be leveraged for correspondence-guided pose learning to improve pose estimation accuracy. As shown in the bottom figure, the rendered image from the estimated pose is well-aligned with the observation, compared to the top figure where the initial pose results in a rendered image with significant view misalignment.}
\label{fig:effect_cga_render_dark}
\end{figure}

\subsubsection{\textcolor[rgb]{0.0,0.0,0.0}{Restoration Model Learning}}  
The restoration model $f(\cdot; \theta)$ is learned online, using 2D correspondences obtained from the pose initialization step. The goal is to minimize the discrepancy between the noisy colors $\mathbf{C}_{o,i}$ from the noisy observed points and the clean colors $\mathbf{C}_{r,i}$ rendered from the historical map:
\begin{equation}
\theta^* = \arg \min_{\theta} \sum_{i=1}^N \left\| f(\mathbf{C}_{o,i}; \theta) - \mathbf{C}_{r,i} \right\|^2.
\end{equation}
This optimization process ensures that the restored colors closely match the clean  appearance which is maintained in the progressively-updated m3D map, providing an improved and noise-reduced version of the current frame.

\subsubsection{\textcolor[rgb]{0.0,0.0,0.0}{Restoring Perturbed Observations and Improving Tracking via CARL}}

\noindent \textbf{CARL restores (perturbed) observations for noise-free mapping.} 
Once the restoration model $ f(\cdot, \theta) $ is trained, it is applied to the entire image to reduce noise and restore the appearance of the observed scene. Let $\mathbf{I}_t$ be the observed RGB image at time $t$, which may contain noise or degradation, and $\mathbf{I}_t^R$ be the restored image. The restoration model is applied pixel-wise across the entire image:
\begin{equation}
    \mathbf{I}_t^R = f(\mathbf{I}_t; \theta)
\end{equation}
The restored image $\mathbf{I}_t^R$, depth map $\mathbf{D}_t$, and refined pose $\mathcal{P}_t$ are integrated into the historical map $\mathcal{M}_{<t}$ to update the 3D map $\mathcal{M}_t$, ensuring consistency.

\noindent \textbf{CARL enables more robust  pose tracking optimization.} 
CARL also enhances tracking robustness by improving pose estimation through a second-round of correspondence calculation and pose optimization. Once the image has been restored, the corresponding can be recomputed. 
And then, the updated observed 3D points $\mathbf{p}_{o,i}^{R}$ are then used in a second-iteration of pose optimization. The objective function for the second-round relative pose refinement (with the restored points) is:
\begin{equation}
\min_{\mathbf{R}_{rel}, \mathbf{t}_{rel}} \sum_{i=1}^N \rho \left( \left\| \mathbf{R}_{rel} \mathbf{p}_{r,i} + \mathbf{t}_{rel} - \mathbf{p}_{o,i}^{R} \right\|^2 \right),
\end{equation}

 By iteratively refining the pose using the restored 3D points, CARL ensures more robust tracking, especially in challenging environments where noise or dynamic elements are present.





\subsection{\textcolor[rgb]{0.0,0.0,0.0}{Reactive Strategy for Handling Correspondence Calculation Failures}}\label{subsec:failure_corrgs}

In real-world cases where an incoming frame is severely perturbed and the correspondence calculation fails, the system bypasses CPL and switches to the conventional differentiable rendering-based pose optimization for pose estimation. Additionally, CARL is disabled for these frames to avoid degradation in the restoration process. These \textbf{\textit{problematic frames are also excluded from the map update stage}} \textbf{\textit{to prevent error propagation}} into the 3D representation, ensuring the overall accuracy and integrity of both pose estimation and the maintained map in the dense Neural SLAM system.

\clearpage

\section{\textcolor[rgb]{0.0,0.0,0.0}{More Qualitative Results on CorrGS}}\label{sec:corrgs_details}

\subsection{\textcolor[rgb]{0.0,0.0,0.0}{More Qualitative Results on Synthetic (Clean) Sparse-view Video}}\label{subsec:corrgs_details_sync_clean}

\begin{figure}[h]
 \includegraphics[width=\textwidth]{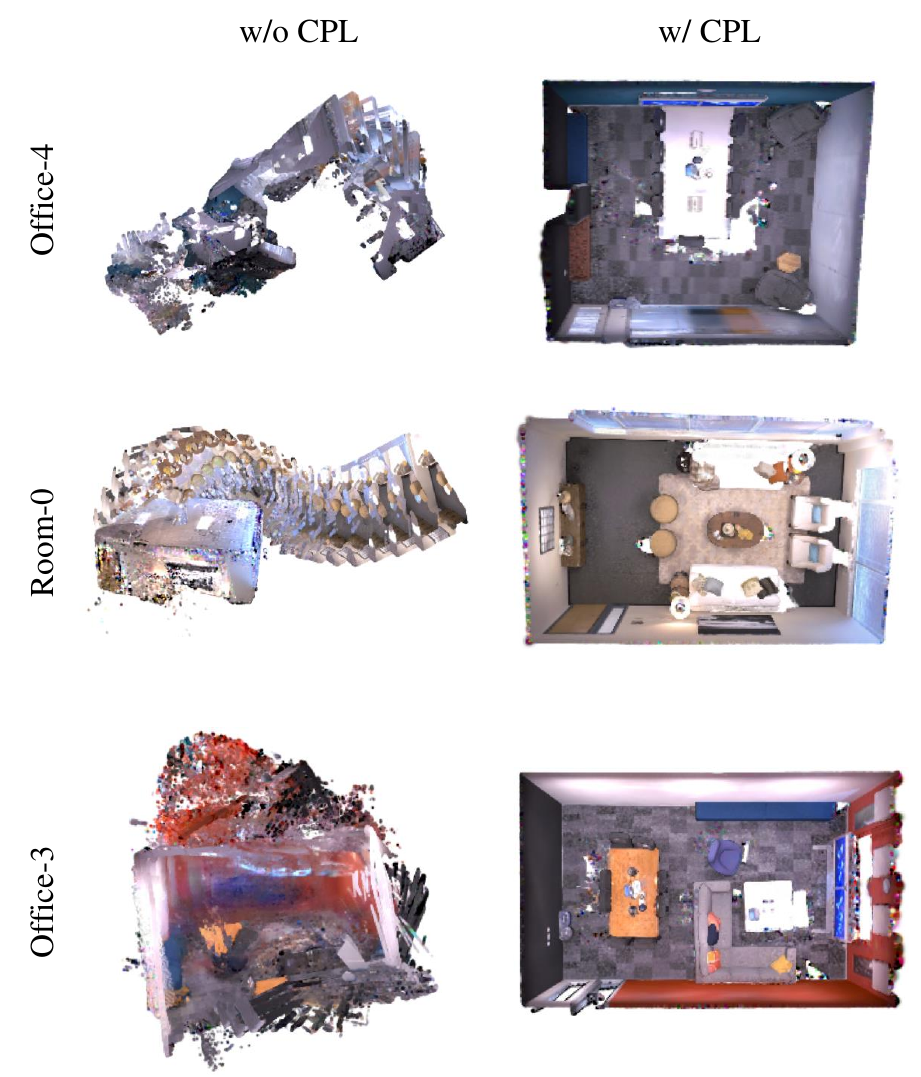}
\caption{{Qualitative ablation study of Correspondence-guided Pose Learning (CPL) on 3D reconstruction with sparse-view (clean) videos. CPL ensures robust pose tracking, enabling high-quality 3D reconstruction even under extreme fast motion.}}
\label{fig:CPLL_mapping_ablation_more}
\end{figure}

\begin{figure}[h]
\centering 
\includegraphics[width=0.65\linewidth]{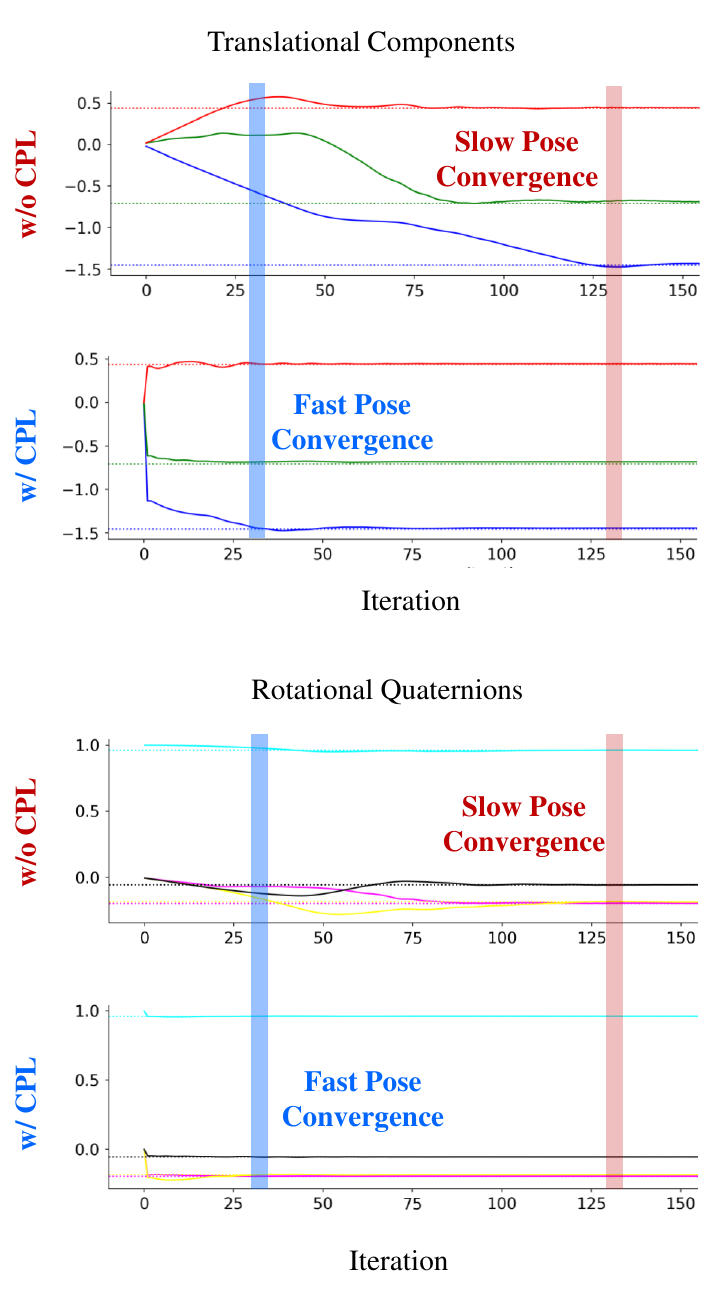}
	\caption{{CPL facilitates faster convergence of pose optimization, significantly reducing the number of iterations required to align the estimated pose with the ground-truth. Dotted lines represent the ground-truth poses, while solid lines denote the estimated poses throughout the optimization.}}
\label{fig:pose_converge}
\end{figure}

\clearpage
\subsection{\textcolor[rgb]{0.0,0.0,0.0}{More Qualitative Results on Synthetic Noisy Sparse-view Video}}\label{subsec:corrgs_details_sync}
\begin{figure*}[h]
\centering
\includegraphics[width=\textwidth]{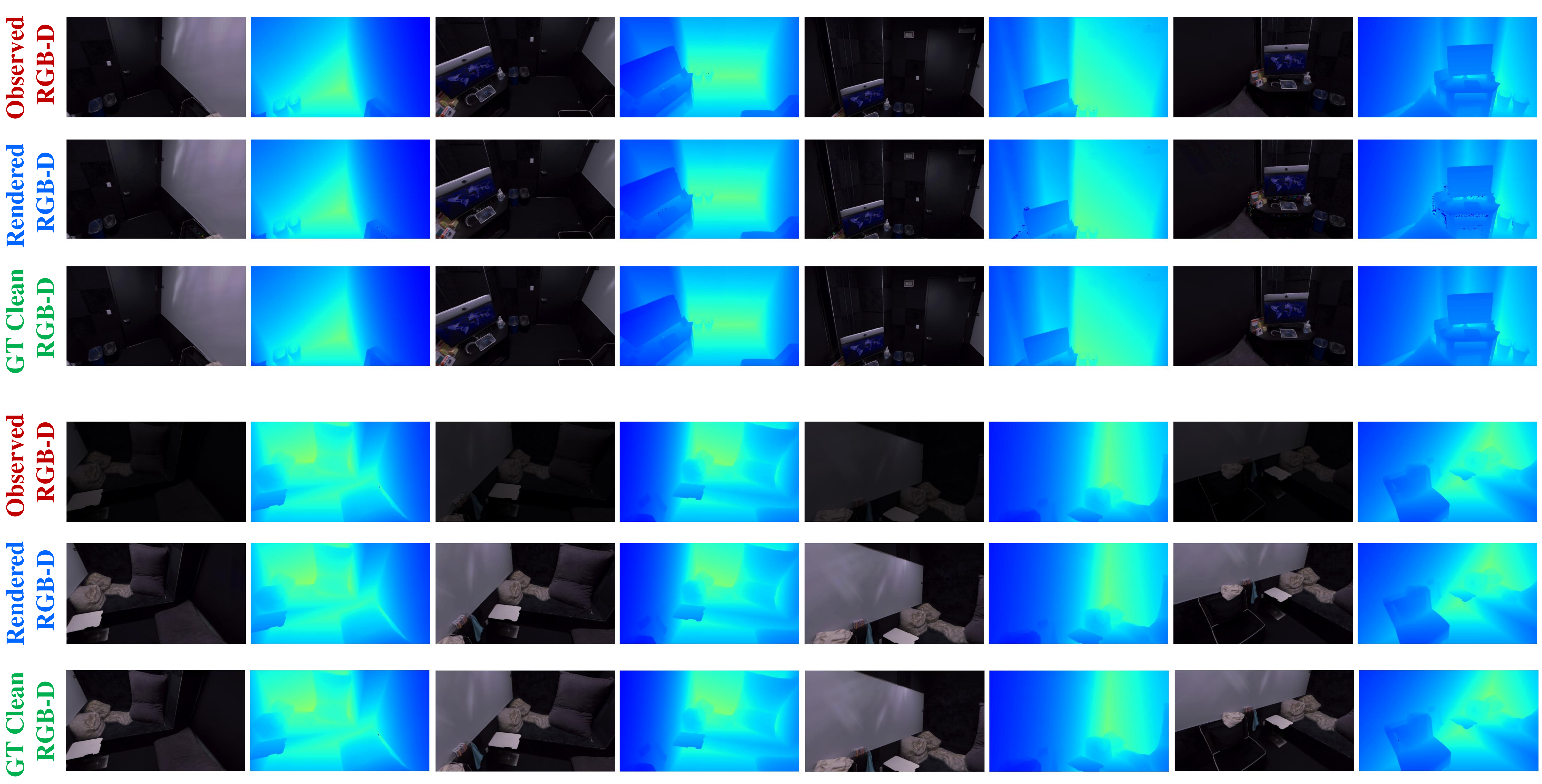}
\caption{Comparison of noise-free RGB and depth frames rendered using CorrGS with observed noisy RGB and depth frames (scene \textit{Office-1}). Ground-truth clean RGB-D frames are included for reference, demonstrating qualitative results on synthetic data under dynamic illumination changes. } 
\label{fig:syn_qualitative_more_1}
\end{figure*}

\begin{figure*}[h]
\centering
\includegraphics[width=1.0\textwidth]{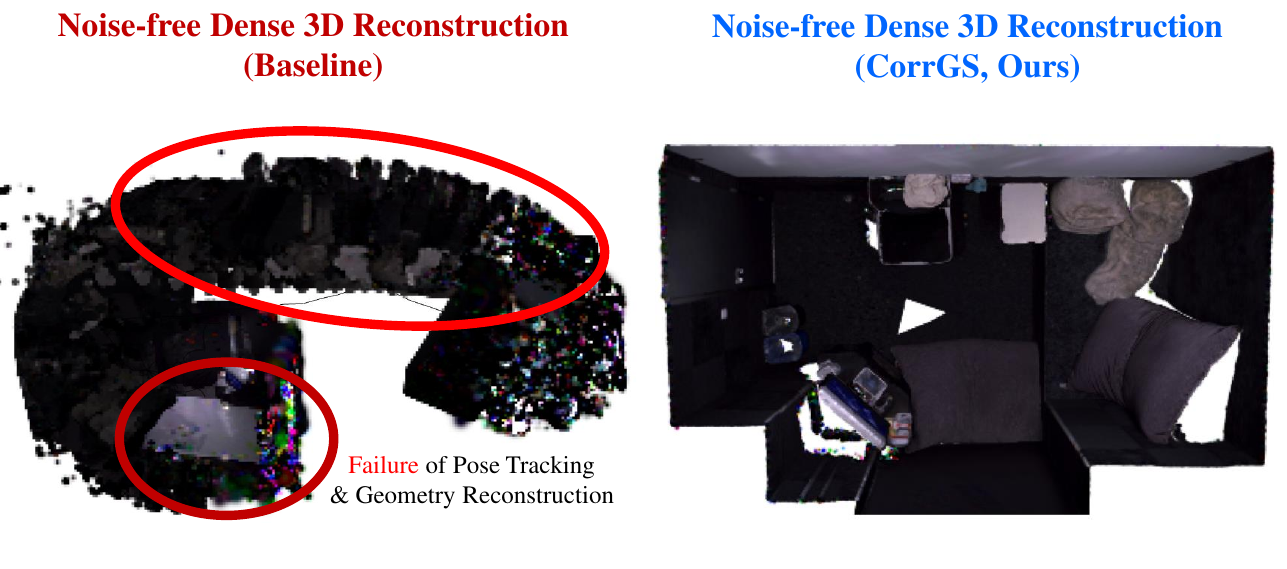}
\caption{Qualitative comparison of photorealistic 3D maps constructed using CorrGS and the baseline on synthetic, noisy, sparse-view data (scene \textit{Office-1}). The baseline struggles to accurately reconstruct both geometry and appearance, whereas our method produces a clean 3D representation.} 
\label{fig:syn_recon_more_1}
\end{figure*}

\clearpage
\begin{figure*}[h]
\centering
\includegraphics[width=\textwidth]{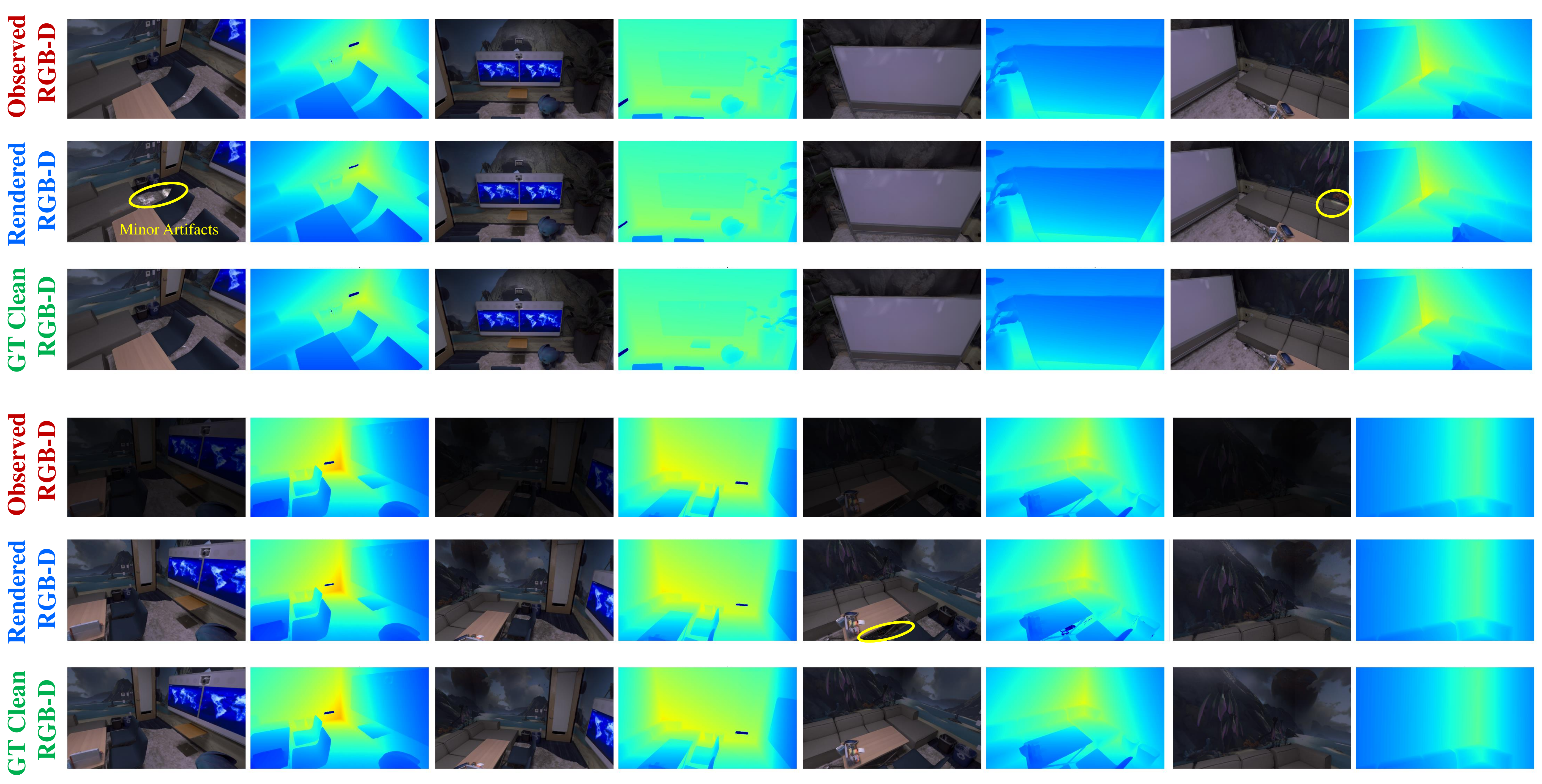}
\caption{Comparison of noise-free RGB and depth frames rendered using CorrGS with observed noisy RGB and depth frames (scene \textit{Office-0}). Ground-truth clean RGB-D frames are included for reference, demonstrating qualitative results on synthetic data under dynamic illumination changes. While there are some small local artifacts, the overall reconstruction and restoration quality is promising.} 
\label{fig:syn_qualitative_more_2}
\end{figure*}

\begin{figure*}[h]
\centering
\includegraphics[width=1.0\textwidth]{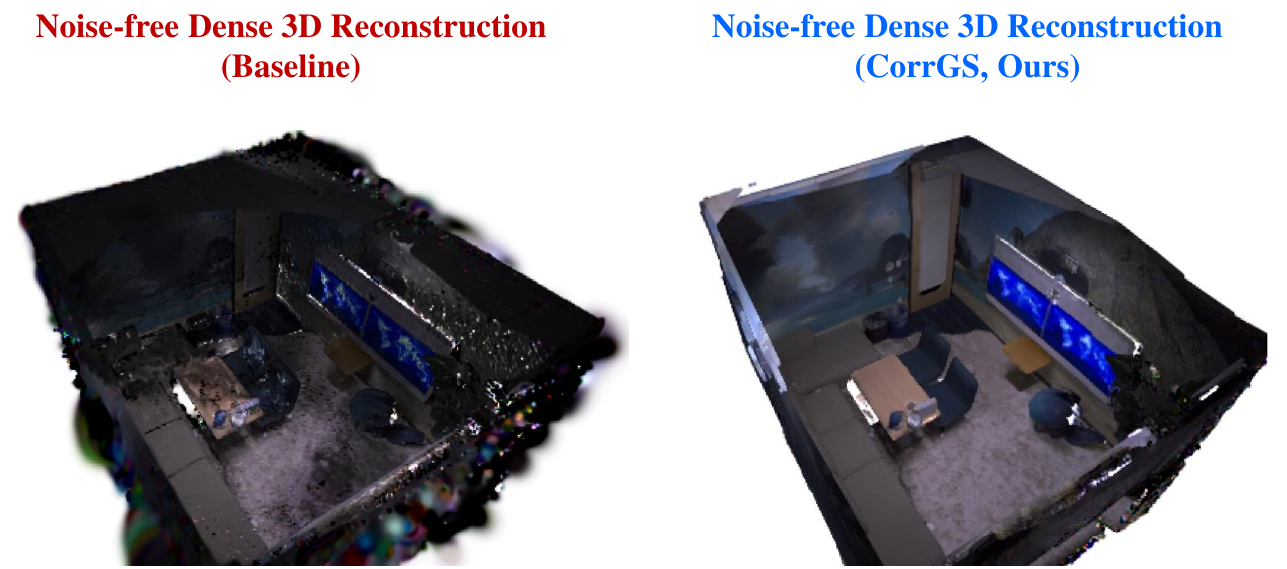}
\caption{Qualitative comparison of photorealistic 3D maps constructed using CorrGS and the baseline on synthetic, noisy, sparse-view data (scene \textit{Office-0}). The baseline struggles to accurately reconstruct both geometry and appearance, whereas our method produces a clean 3D representation.} 
\label{fig:syn_recon_more_@}
\end{figure*}

\clearpage
\subsection{\textcolor[rgb]{0.0,0.0,0.0}{More Qualitative Results on Real-world  Noisy Sparse-view Video}}\label{subsec:corrgs_details_real}

\begin{figure*}[h]
\centering
\includegraphics[width=\textwidth]{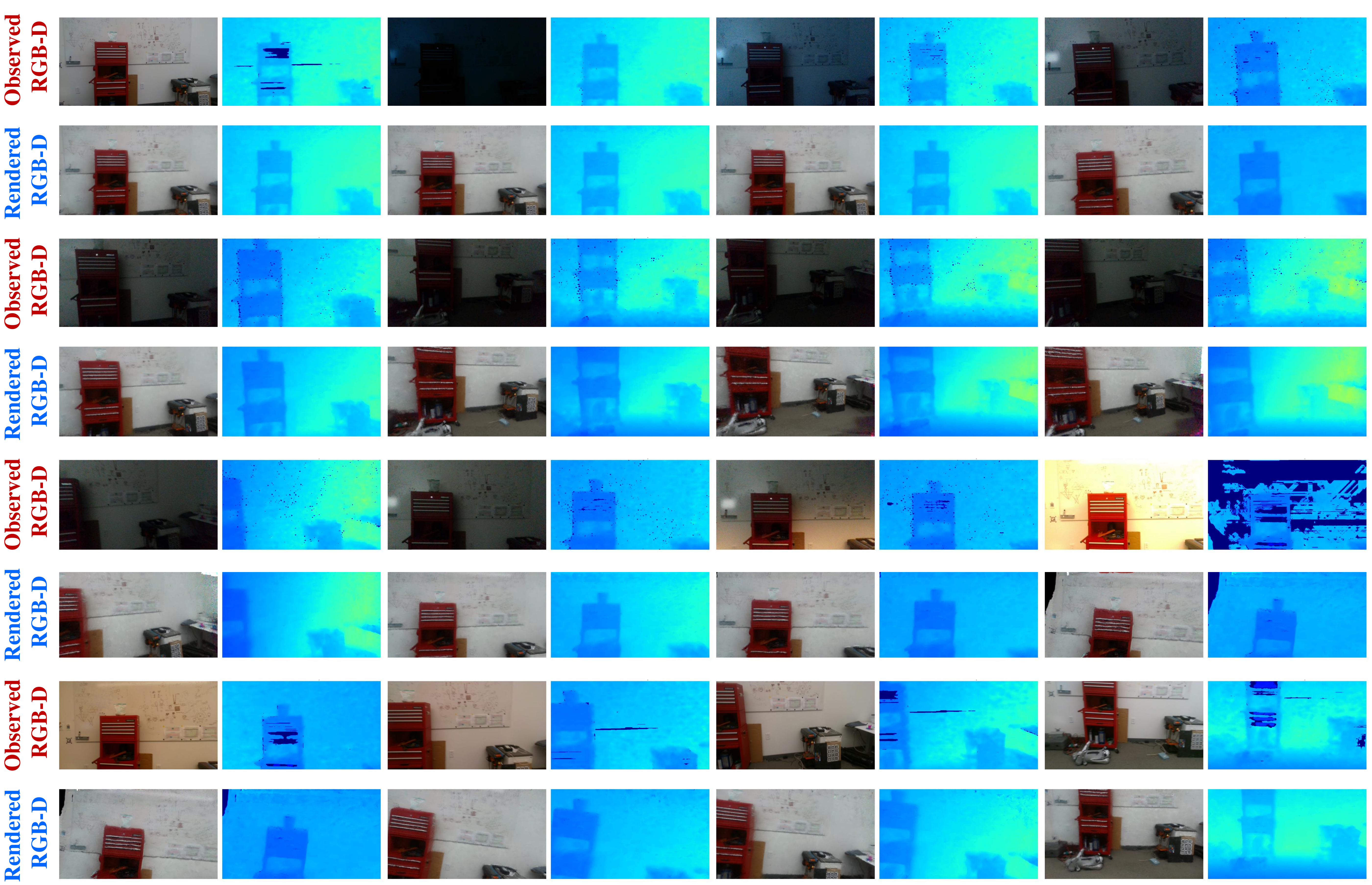}
\caption{Comparison of the rendered noise-free RGB  and depth frames (via CorrGS) with the observed noisy RGB and depth, showcasing qualitative results on real-world data with dynamic illumination changes. The sequences involve fast-motion, sparse-view scenarios, transitioning between normal light, darkness, and overexposure, demonstrating our method’s robustness in reconstructing noise-free, dense 3D maps under challenging conditions. } 
\label{fig:real_qualitative_more}
\end{figure*}

\begin{figure*}[h]
\centering
\includegraphics[width=0.88\textwidth]{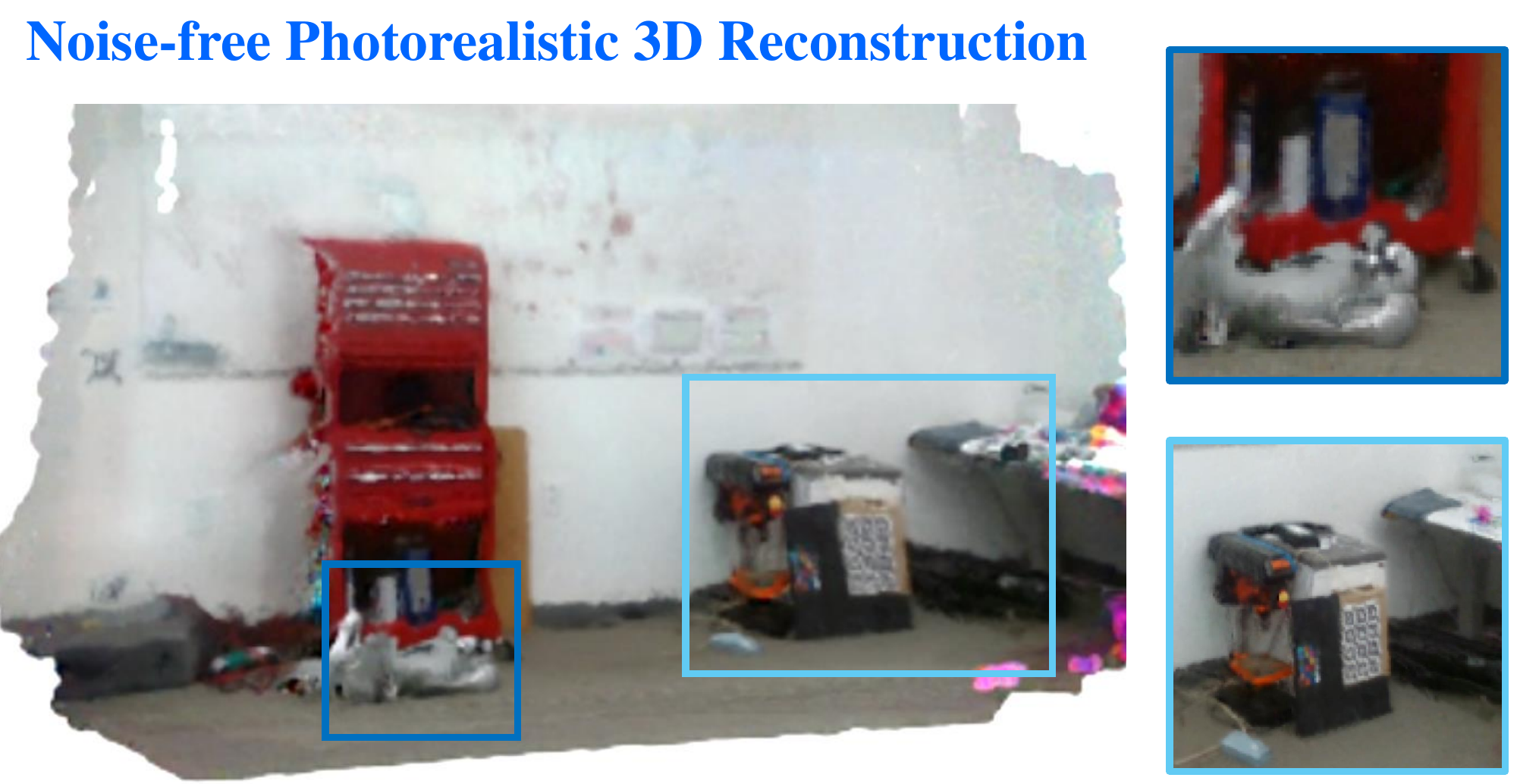}
\caption{Qualitative results of  constructed noise-free photorealistic 3D map via CorrGS on real-world noisy sparse-view data. } 
\label{fig:real_recon_more}
\end{figure*}

\begin{figure*}[ht]
\centering
\includegraphics[width=\textwidth]{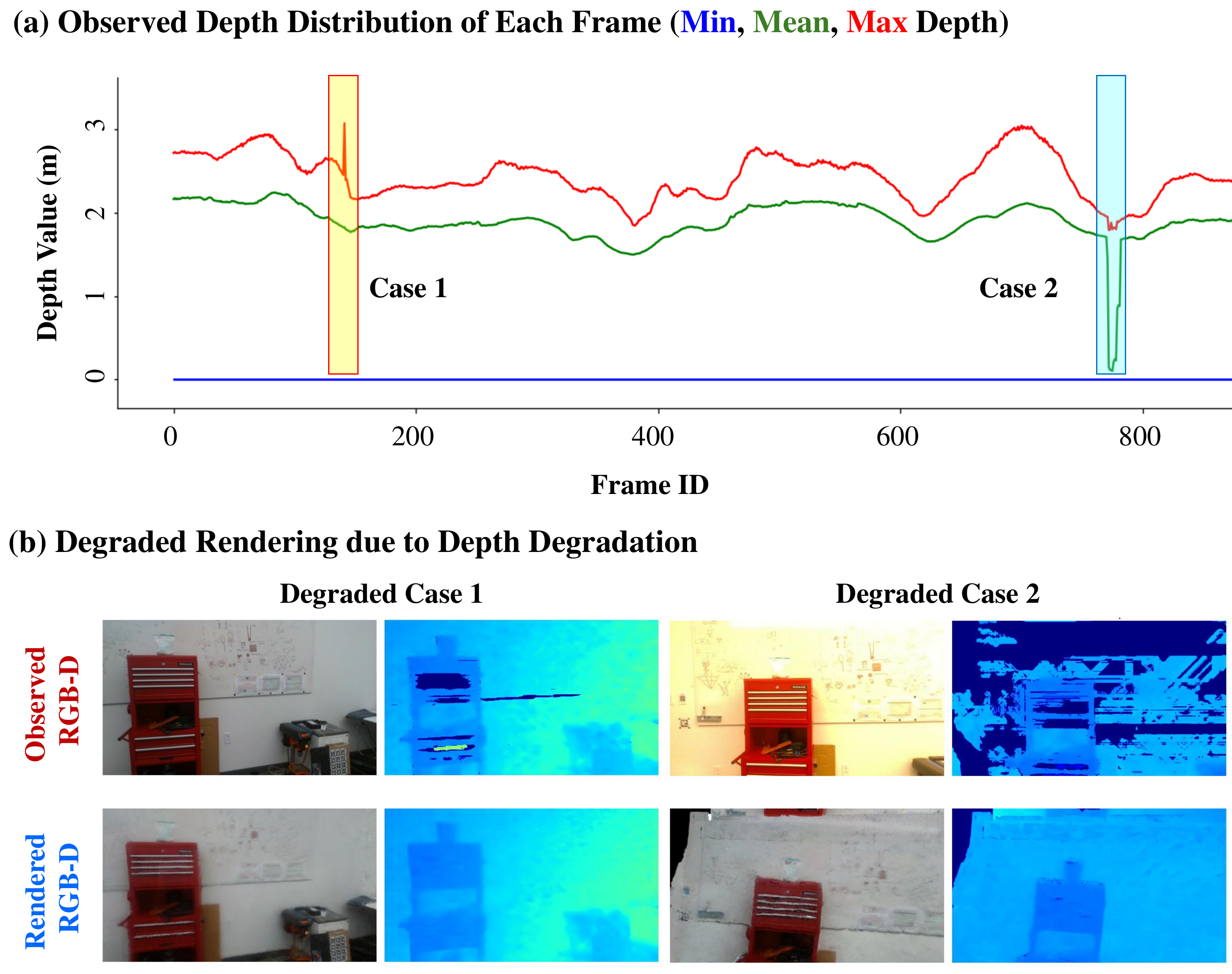}
\caption{\textbf{Failure cases} \textbf{of degraded rendering} by CorrGS on real-world noisy, sparse-view data. The highlighted regions correspond to two failure cases (Case 1 and Case 2) where noisy depth data leads to degraded RGB rendering in specific frames. Addressing depth restoration remains a valuable future direction.} %
\label{fig:failure_case}
\end{figure*}

\clearpage

\subsection{\textcolor{black}{Robustness of CorrGS to Pose Initialization Errors}}\label{sec:corrgs_exp_more_robustness}

\textcolor{black}{To evaluate the robustness of CorrGS against pose initialization errors, we tested it on 100 overlapping RGB-D image pairs from each of three room scenes in our dataset, which features diverse poses. 
 The proposed method demonstrates robust performance against significant pose initialization errors, maintaining accuracy under angular discrepancies of up to 40° and translational offsets of up to 1.0 m. Evaluations across three distinct scenes—room0, room1, and room2—highlight its resilience to both rotational and translational noise, as illustrated in Figures~\ref{fig:sensitivity_pose_rotation} and \ref{fig:sensitivity_pose_translation}.}

\textcolor{black}{Fig.~\ref{fig:sensitivity_pose_rotation} shows that PQV maintains low rotation error across all scenes, even as absolute rotation increases. Room0 exhibits the lowest error with minimal degradation, reflecting the framework's adaptability in simpler environments. While room1 and room2 show slightly higher errors due to more complex spatial arrangements, performance remains stable and consistent, emphasizing PQV’s robustness to angular noise across varying environmental complexities.}

\textcolor{black}{Fig.~\ref{fig:sensitivity_pose_translation} illustrates PQV’s response to translational noise, where translation error remains small and gradually increases with larger offsets. Room0 again achieves the lowest error, demonstrating its effectiveness in less complex conditions. In contrast, room2 exhibits higher variance, likely due to increased structural or geometric challenges. Despite this, PQV consistently delivers stable performance without significant deviations, confirming its ability to handle varying translational conditions across diverse environments.}

\begin{figure*}[h]
\centering
\includegraphics[width=\textwidth]{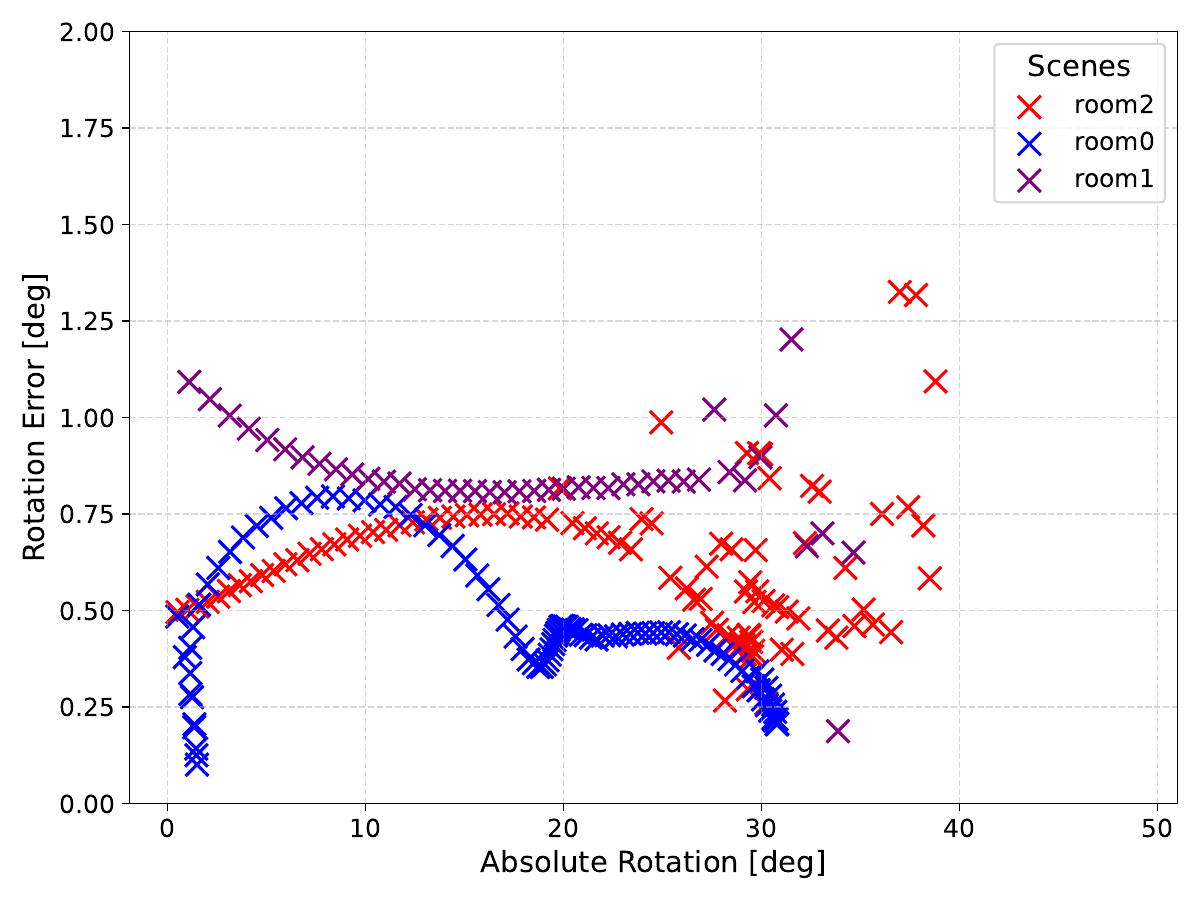}
\caption{\textcolor{black}{Rotation error under varying levels of absolute rotation, demonstrating consistent pose estimation accuracy across diverse environments  despite large angular deviations.}} 
\label{fig:sensitivity_pose_rotation}
\end{figure*}

\begin{figure*}[h]
\centering
\includegraphics[width=\textwidth]{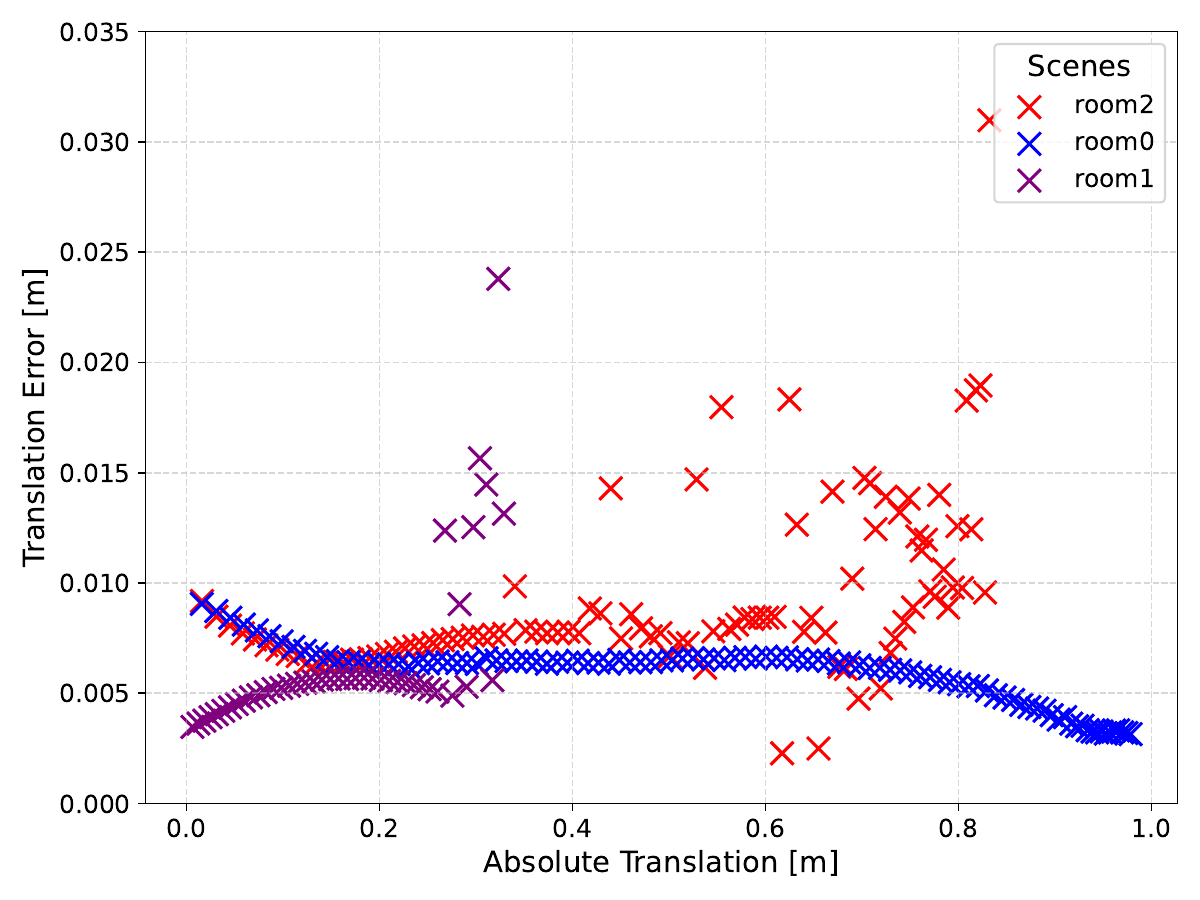}
\caption{\textcolor{black}{Translation error under increasing absolute translation, highlighting the robustness of the method across scenes  even with significant translational discrepancies.}} 
\label{fig:sensitivity_pose_translation}
\end{figure*}

\clearpage
\subsection{\textcolor{black}{Additional CorrGS Results in Outdoor Settings}}\label{sec:corrgs_exp_more_robustness_in_the_wild}

\begin{figure*}[h]
\centering
\includegraphics[width=\textwidth]{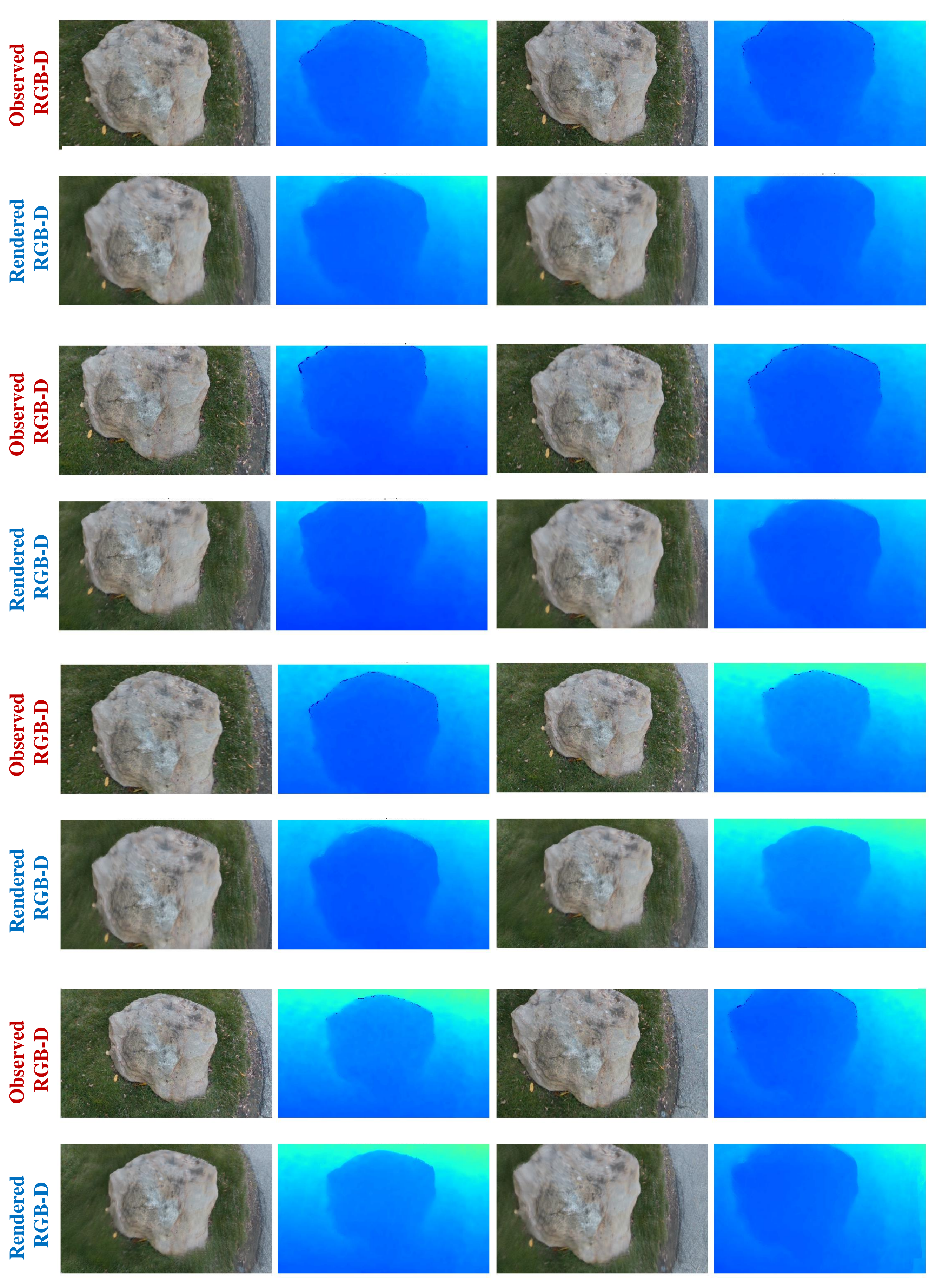}
\caption{\textcolor{black}{Rendered RGB-D results from the reconstructed map in the outdoor scene during the morning. The average PSNR is 28 [dB] and average depth L1 loss is 1.80 [cm].}} 
\label{fig:outdoor_1}
\end{figure*}

\begin{figure*}[h]
\centering
\includegraphics[width=\textwidth]{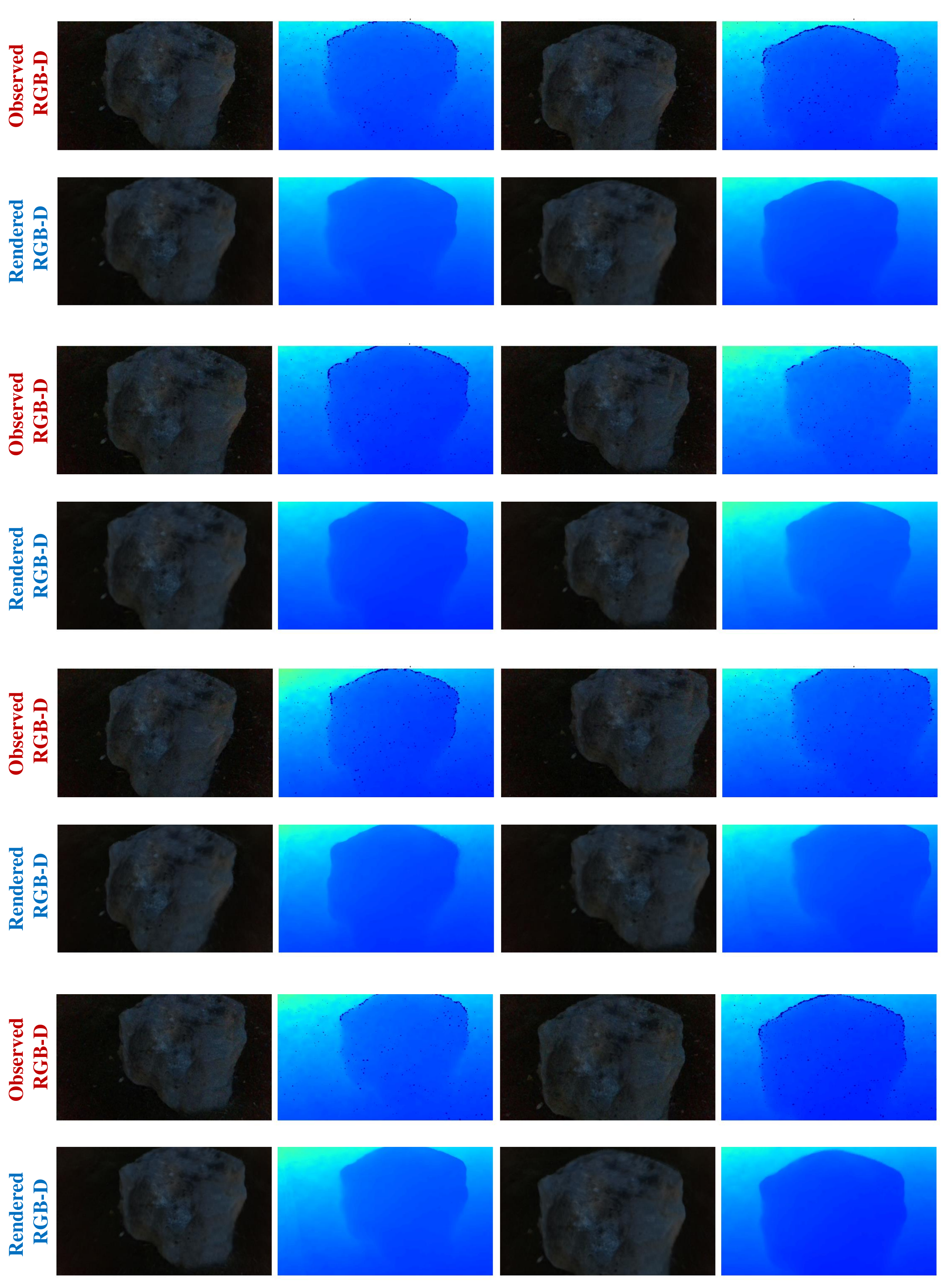}
\caption{\textcolor{black}{Rendered RGB-D results from the reconstructed map in the outdoor scene during the night. The average PSNR is 34 [dB] and average depth L1 loss is 2.17 [cm].}} 
\label{fig:outdoor_2}
\end{figure*}

\clearpage
\section{\textcolor[rgb]{0.0,0.0,0.0}{Theoretical Insights on CorrGS}}
\label{subsec:corrgs_theory}

We begin by analyzing the causes of inefficiency and failure in differentiable pose optimization, specifically for the Gaussian Splat map representation, under fast pose motion. We then develop a theorem to demonstrate that incorporating Correspondence-Guided Pose Learning (CPL) and Correspondence-Guided Appearance Restoration Learning (CARL), which constructs our CorrGS framework, enhances robustness under these challenging conditions.

\begin{theorem} \label{thm:corrgs} In differentiable rendering-based SLAM systems utilizing Gaussian Splat representation, incorporating Correspondence-Guided Pose Learning (CPL) and Correspondence-Guided Appearance Restoration Learning (CARL) mitigates the inefficiency and potential failure in pose optimization under fast motion and noisy video inputs. Specifically, CPL reduces the magnitude of the gradients and improves the conditioning of the Hessian matrix by providing better pose initialization, while CARL reduces the variability in the gradients caused by noise, leading to more stable and efficient convergence of the optimization process. \end{theorem}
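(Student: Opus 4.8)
The plan is to split the statement along its two mechanisms — CPL acting on the \emph{initialization and local geometry} of the optimization, and CARL acting on the \emph{noise-induced variance} of the gradient — and to assemble the final claim from the two results already established in this appendix, namely Theorem~\ref{thm:fast_motion_divergence} (large pose changes inflate residuals and gradients, risking divergence) and Theorem~\ref{thm:rgb_restoration} (denoising reduces residual variability). Throughout I would work with the rendering loss $\mathcal{L}(\mathcal{P}) = \lambda_C \mathcal{L}_C(\hat{\mathbf{I}}, \mathbf{I}) + \lambda_D \mathcal{L}_D(\hat{\mathbf{D}}, \mathbf{D})$ from Eq.~\ref{eq:gd_pose_optim} and the update $\mathcal{P}_{k+1} = \mathcal{P}_k - \eta \nabla_{\mathcal{P}} \mathcal{L}(\mathcal{P}_k)$, writing $\mathcal{P}^*$ for the pose at which the rendered frames match the clean observations.

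For the CPL half I would first write the gradient in residual form, $\nabla_{\mathcal{P}} \mathcal{L} = 2[\lambda_C (\hat{\mathbf{I}} - \mathbf{I})^\top \nabla_{\mathcal{P}} \hat{\mathbf{I}} + \lambda_D (\hat{\mathbf{D}} - \mathbf{D})^\top \nabla_{\mathcal{P}} \hat{\mathbf{D}}]$, and use a first-order Taylor expansion of the rendered frames about $\mathcal{P}^*$ to show that the residual, and hence $\|\nabla_{\mathcal{P}} \mathcal{L}\|$, is to leading order proportional to the pose error $\|\mathcal{P} - \mathcal{P}^*\|$. The correspondence-based initialization of Section~\ref{subsec:motion_analysis_corr_init} supplies a pose $\mathcal{P}_t^c$ with strictly smaller error than the naive propagated pose $\mathcal{P}_t$ in the large-motion regime; substituting $\mathcal{P}_t^c$ for $\mathcal{P}_t$ therefore shrinks the initial gradient magnitude and, by Theorem~\ref{thm:fast_motion_divergence}, removes the overshoot/divergence mode. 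For the Hessian claim I would expand $\mathcal{L}$ to second order: near $\mathcal{P}^*$ the Gauss--Newton term $J^\top J$ (with $J = \nabla_{\mathcal{P}} \hat{\mathbf{I}}, \nabla_{\mathcal{P}} \hat{\mathbf{D}}$) dominates and is well-conditioned, whereas the residual-weighted second-derivative term, which scales with $\|\mathcal{P} - \mathcal{P}^*\|$ and drives indefiniteness, is suppressed — so a tighter initialization lowers the condition number $\kappa(\nabla^2 \mathcal{L})$.

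For the CARL half I would reuse Theorem~\ref{thm:rgb_restoration} almost verbatim: decomposing the observation as $\mathbf{I}_t = \mathbf{I}_t^{\text{true}} + \mathbf{N}_t$ with $\mathbf{N}_t \sim \mathcal{N}(0, \sigma^2 \mathbf{I})$, the residual carries an additive noise term whose contribution to $\nabla_{\mathcal{P}} \mathcal{L}$ has variance scaling with $\sigma^2$. Replacing $\mathbf{I}_t$ by the restored frame $\tilde{\mathbf{I}}_t = f(\mathbf{I}_t; \theta^*) \approx \mathbf{I}_t^{\text{true}}$ drives the effective noise variance toward zero, so the gradient concentrates around its noise-free value and the stochastic misalignment between successive updates vanishes. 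Combining the halves, CPL controls the deterministic large-residual part of the gradient and the Hessian conditioning while CARL controls the stochastic part, and a standard descent-lemma argument then yields monotone, better-conditioned, lower-variance convergence.

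The main obstacle I anticipate is the Hessian-conditioning step: bounding $\kappa(\nabla^2 \mathcal{L})$ cleanly requires structural assumptions on the local landscape — local strong convexity (a lower bound on $J^\top J$), a Lipschitz bound on the residual Hessian, and a guarantee that $\mathcal{P}_t^c$ lands inside the basin where the Gauss--Newton term dominates. Without these the conditioning improvement remains heuristic, so I would state them explicitly as hypotheses and confine the rigorous argument to that neighborhood, treating the large-motion and small-motion regimes of Section~\ref{subsec:motion_analysis_corr_init} separately, since the Pose Quality Verification step is precisely what keeps the initialization in the favorable regime.
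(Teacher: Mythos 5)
Your proposal is correct and follows the same overall decomposition as the paper's proof: a CPL half (better initialization shrinks residuals, hence gradients, and improves Hessian conditioning), a CARL half (restoration removes the noise term from the gradient), and a combination step. The CARL half in particular matches the paper essentially verbatim --- the paper reuses the noise-decomposition argument of Theorem~\ref{thm:rgb_restoration} exactly as you propose, replacing $\mathbf{I}_t$ by $\tilde{\mathbf{I}}_t \approx \mathbf{I}_t^{\text{true}}$ so the stochastic term drops out of $\nabla_{\mathcal{P}}\mathcal{L}_C$. Where you differ is in execution of the CPL half. The paper grounds the fast-motion claim in explicit coordinate computations: it derives the projection Jacobians $\partial \mathbf{m}_i/\partial \mathbf{t}$ and the quaternion derivatives $\partial \mathbf{X}_i^c/\partial q_r, \ldots, \partial \mathbf{X}_i^c/\partial q_k$, shows these blow up under large $\Delta\mathbf{t}$, $\Delta\mathbf{R}$, and then applies the same Taylor/step-size bound $\eta < 2\lVert\nabla_{\mathcal{P}}\mathcal{L}\rVert^2 / \bigl(\nabla_{\mathcal{P}}\mathcal{L}^\top \mathbf{H}\, \nabla_{\mathcal{P}}\mathcal{L}\bigr)$ that you would import via Theorem~\ref{thm:fast_motion_divergence}; your more abstract ``residual $\propto$ pose error'' expansion about $\mathcal{P}^*$ is equivalent in substance but skips the coordinate-level detail. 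On the Hessian, the comparison reverses: the paper simply \emph{asserts} $\kappa(\mathbf{H}(\mathcal{P}_t^*)) < \kappa(\mathbf{H}(\mathcal{P}_{t-1}))$ with a qualitative remark that second derivatives shrink near the true pose, whereas your Gauss--Newton decomposition (the $J^\top J$ term dominating once the residual-weighted second-derivative term is suppressed) is a more principled justification of that claim. The ``obstacle'' you flag --- needing local strong convexity, a Lipschitz residual Hessian, and a guarantee that the correspondence-based initialization (protected by PQV) lands in the favorable basin --- is real, but it is not something the paper resolves; the paper's proof leaves those hypotheses implicit, so stating them explicitly as you propose would make your version strictly more rigorous than the published argument.
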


\begin{proof} We analyze the effects of CPL and CARL on the pose optimization problem under the Gaussian Splat representation, focusing on how they address the issues caused by large gradients and poor conditioning of the Hessian matrix under fast motion and noisy inputs.

\subsection{\textcolor[rgb]{0.0,0.0,0.0}{Pose Optimization in Gaussian Splat Representation}}\label{subsec:pose_optim_GS}

In differentiable rendering-based SLAM with Gaussian Splat representation, the camera pose $\mathcal{P} = \{\mathbf{R}, \mathbf{t}\}$ is parameterized using a unit quaternion $\mathbf{q} = [q_r, q_i, q_j, q_k]^T \in \mathbb{R}^4$ for rotation and a translation vector $\mathbf{t} \in \mathbb{R}^3$. The rotation matrix $\mathbf{R}$ is derived from the quaternion $\mathbf{q}$:
\begin{equation} \mathbf{R} = 2 \begin{pmatrix} \frac{1}{2} - (q_j^2 + q_k^2) & q_i q_j - q_r q_k & q_i q_k + q_r q_j \\ q_i q_j + q_r q_k & \frac{1}{2} - (q_i^2 + q_k^2) & q_j q_k - q_r q_i \\ q_i q_k - q_r q_j & q_j q_k + q_r q_i & \frac{1}{2} - (q_i^2 + q_j^2) \end{pmatrix}. \end{equation}

The loss function $\mathcal{L}(\mathcal{P})$ measures the discrepancy between the rendered images $(\hat{\mathbf{I}}, \hat{\mathbf{D}})$ and the observed images $(\mathbf{I}, \mathbf{D})$:
\begin{equation} \mathcal{L}(\mathcal{P}) = \lambda_C \left| \hat{\mathbf{I}}(\mathcal{P}) - \mathbf{I} \right|^2 + \lambda_D \left| \hat{\mathbf{D}}(\mathcal{P}) - \mathbf{D} \right|^2, \end{equation}
where $\lambda_C, \lambda_D > 0$ are weighting parameters.

The pose is updated iteratively using gradient descent:
\begin{equation}  \mathcal{P}_{k+1} = \mathcal{P}_k - \eta \nabla_{\mathcal{P}} \mathcal{L}(\mathcal{P}_k), \end{equation}
where $\eta > 0$ is the learning rate.

\subsection{\textcolor[rgb]{0.0,0.0,0.0}{Impact of Fast Motion on Gradients and Hessian}}\label{subsec:pose_optim_GS_impact}

Under fast motion, the change in pose $\Delta \mathcal{P} = { \Delta \mathbf{R}, \Delta \mathbf{t} }$ between successive frames is large. This leads to large residuals in the loss function and consequently large gradients and Hessian entries.

\textbf{Gradients \textit{w.r.t.} translation $\mathbf{t}$.} The projected 2D coordinates $\mathbf{m}_i$ of a 3D Gaussian $G_i$ depend on the camera pose $\mathcal{P}$. The gradient of the projected coordinates with respect to translation $\mathbf{t}$ is:
\begin{equation} \frac{\partial \mathbf{m}_i}{\partial \mathbf{t}} = \frac{\partial \mathbf{m}_i}{\partial \mathbf{X}_i^c}, \quad \text{where} \quad \mathbf{X}_i^c = \mathbf{R} \mathbf{X}_i + \mathbf{t}, \end{equation}

and $\mathbf{X}_i$ is the 3D position of the Gaussian. The gradients are given by:
\begin{equation} \frac{\partial \mathbf{m}_i}{\partial \mathbf{t}} = \begin{pmatrix} \frac{\mathbf{f}_x}{z_c} & 0 & -\frac{\mathbf{f}_x x_c}{z_c^2} \\ 0 & \frac{\mathbf{f}_y}{z_c} & -\frac{\mathbf{f}_y y_c}{z_c^2} \end{pmatrix}, \end{equation}
where $\mathbf{f}_x$ and $\mathbf{f}_y$ are the focal lengths, and $(x_c, y_c, z_c)^T = \mathbf{X}_i^c$.

Under large translations $\Delta \mathbf{t}$, the terms involving $- \frac{\mathbf{f}_x x_c}{z_c^2}$ and $- \frac{\mathbf{f}_y y_c}{z_c^2}$ become large, resulting in large gradients $\frac{\partial \mathbf{m}_i}{\partial \mathbf{t}}$. This is because significant shifts in the 3D points relative to the camera amplify the effect of translation on the projected coordinates.

\textbf{Gradients  \textit{w.r.t.} rotation $\mathbf{R}$ (Quaternion $\mathbf{q}$).}
The gradients of the transformed 3D points with respect to the quaternion components are:
\begin{equation}  \frac{\partial \mathbf{X}_i^c}{\partial q_r} = 2 \begin{pmatrix} q_r x_c + q_j z_c - q_k y_c \\ q_r y_c + q_k x_c - q_i z_c \\ q_r z_c + q_i y_c - q_j x_c \end{pmatrix}, \end{equation}

\begin{equation}  \frac{\partial \mathbf{X}_i^c}{\partial q_i} = 2 \begin{pmatrix} q_i x_c - q_j y_c - q_k z_c \\ q_i y_c + q_j x_c + q_r z_c \\ q_i z_c + q_r y_c - q_k x_c \end{pmatrix}, \end{equation}

\begin{equation}  \frac{\partial \mathbf{X}_i^c}{\partial q_j} = 2 \begin{pmatrix} q_j x_c + q_i y_c - q_r z_c \\ q_j y_c - q_i x_c - q_k z_c \\ q_j z_c + q_k y_c + q_r x_c \end{pmatrix}, \end{equation}

\begin{equation}  \frac{\partial \mathbf{X}_i^c}{\partial q_k} = 2 \begin{pmatrix} q_k x_c - q_r y_c + q_i z_c \\ q_k y_c + q_r x_c - q_j z_c \\ q_k z_c - q_i x_c - q_j y_c \end{pmatrix}. \end{equation}

Under large rotations $\Delta \mathbf{R}$, the rotation gradients become large due to significant changes in the rotation matrix $\mathbf{R}$. This leads to substantial changes in the positions of the 3D Gaussians when projected into the 2D image plane.

\textbf{Impact on Hessian matrix.}
The Hessian matrix $\mathbf{H}$ of the loss function with respect to the pose parameters includes second derivatives such as:
\begin{equation} \mathbf{H} = \nabla_{\mathcal{P}}^2 \mathcal{L} = \begin{pmatrix} \frac{\partial^2 \mathcal{L}}{\partial q_r^2} & \frac{\partial^2 \mathcal{L}}{\partial q_r \partial q_i} & \cdots \\ \frac{\partial^2 \mathcal{L}}{\partial q_i \partial q_r} & \frac{\partial^2 \mathcal{L}}{\partial q_i^2} & \cdots \\ \vdots & \vdots & \ddots \end{pmatrix}. \end{equation}

Large gradients result in large entries in the Hessian matrix, including off-diagonal elements representing interactions between different pose parameters. This indicates strong coupling between the parameters, leading to poor conditioning of the Hessian and making the optimization landscape more challenging.

\subsection{\textcolor[rgb]{0.0,0.0,0.0}{Inefficiency and Potential Failure in Pure Differentiable Pose Optimization}}\label{subsec:pure_pose_optim_error}

In gradient-based optimization, the pose update is:
\begin{equation}  \mathcal{P}_{k+1} = \mathcal{P}_k - \eta \nabla_{\mathcal{P}} \mathcal{L}(\mathcal{P}_k). \end{equation}

With large gradients due to fast motion, the update step $\eta \nabla_{\mathcal{P}} \mathcal{L}(\mathcal{P}_k)$ can be excessively large, causing the optimization to overshoot and potentially diverge.

Using a Taylor series expansion of the loss function around $\mathcal{P}_k$:
\begin{equation}  \mathcal{L}(\mathcal{P}_{k+1}) \approx \mathcal{L}(\mathcal{P}_k) - \eta \left| \nabla_{\mathcal{P}} \mathcal{L}(\mathcal{P}_k) \right|^2 + \frac{1}{2} \eta^2 \nabla_{\mathcal{P}} \mathcal{L}(\mathcal{P}_k)^\top \mathbf{H} \nabla_{\mathcal{P}} \mathcal{L}(\mathcal{P}_k). \end{equation}

For the loss to decrease, we require:
\begin{equation}  \eta < \frac{2 \left| \nabla_{\mathcal{P}} \mathcal{L}(\mathcal{P}_k) \right|^2}{\nabla_{\mathcal{P}} \mathcal{L}(\mathcal{P}_k)^\top \mathbf{H} \nabla_{\mathcal{P}} \mathcal{L}(\mathcal{P}_k)}. \end{equation}

With large gradients and poor conditioning of the Hessian, the permissible $\eta$ becomes very small. If $\eta$ is not adjusted accordingly, the optimization can diverge, or convergence can be extremely slow.

\subsection{\textcolor[rgb]{0.0,0.0,0.0}{Effect of Correspondence-Guided Pose Learning}}\label{subsec:theoretical_cpl}

CPL mitigates these issues by providing a better initial pose estimate $\mathcal{P}_t^*$ through minimizing the geometric error between correspondences:
\begin{equation} \mathcal{P}_t^* = \arg \min_{\mathcal{P}_t} \sum_{i=1}^N \left| \mathbf{R}_t \mathbf{p}_{r,i} + \mathbf{t}_t - \mathbf{p}_{o,i} \right|^2, \end{equation}
where $\mathbf{p}_{r,i}$ and $\mathbf{p}_{o,i}$ are corresponding 3D points from the rendered and observed frames, respectively.

By solving this optimization problem, CPL provides a pose estimate closer to the true pose $\mathcal{P}_t^\text{true}$. This reduces the initial residuals in the loss function:
\begin{equation} \left| \hat{\mathbf{I}}(\mathcal{P}_t^*) - \mathbf{I} \right|^2 \ll \left| \hat{\mathbf{I}}(\mathcal{P}_{t-1}) - \mathbf{I} \right|^2, \end{equation}
and similarly for $\hat{\mathbf{D}}$. Consequently, the magnitude of the gradients $\nabla_{\mathcal{P}} \mathcal{L}(\mathcal{P}_t^*)$ is reduced.

With a better initial pose, the Hessian matrix $\mathbf{H}$ is better conditioned near the true pose. The second derivatives become smaller, and the interactions between parameters are less pronounced. The condition number of the Hessian decreases:
\begin{equation} \kappa(\mathbf{H}(\mathcal{P}_t^*)) < \kappa(\mathbf{H}(\mathcal{P}_{t-1})), \end{equation}
where $\kappa(\mathbf{H})$ denotes the condition number of the Hessian matrix at pose $\mathcal{P}$. A lower condition number indicates better conditioning, which enhances the convergence properties of the optimization algorithm.

\subsection{\textcolor[rgb]{0.0,0.0,0.0}{Effect of Correspondence-Guided Appearance Restoration Learning}}\label{subsec:theoretical_carl}

Noisy video inputs introduce variability into the observed images $\mathbf{I} = \mathbf{I}^\text{true} + \mathbf{N}$, where $\mathbf{N}$ represents additive noise (e.g., Gaussian noise).

This noise affects the residuals in the loss function:
\begin{equation} \hat{\mathbf{I}}(\mathcal{P}) - \mathbf{I} = \left( \hat{\mathbf{I}}(\mathcal{P}) - \mathbf{I}^\text{true} \right) - \mathbf{N}. \end{equation}

The noise term $\mathbf{N}$ introduces variability into the gradients:
\begin{equation} \nabla_{\mathcal{P}} \mathcal{L}_C = 2 \lambda_C \nabla_{\mathcal{P}} \hat{\mathbf{I}}^\top \left( \hat{\mathbf{I}}(\mathcal{P}) - \mathbf{I}^\text{true} - \mathbf{N} \right). \end{equation}
The term involving $\mathbf{N}$ adds stochasticity to the gradient, leading to increased variance and potential misalignment in the gradient direction.

CARL mitigates this issue by learning a restoration model $f(\cdot ; \theta)$ to obtain denoised images $\tilde{\mathbf{I}} = f(\mathbf{I}; \theta) \approx \mathbf{I}^\text{true}$.

Using the denoised images in the loss function:
\begin{equation} \mathcal{L}_C(\mathcal{P}) = \lambda_C \left| \hat{\mathbf{I}}(\mathcal{P}) - \tilde{\mathbf{I}} \right|^2, \end{equation}

the residuals become:
\begin{equation} \hat{\mathbf{I}}(\mathcal{P}) - \tilde{\mathbf{I}} \approx \hat{\mathbf{I}}(\mathcal{P}) - \mathbf{I}^\text{true}, \end{equation}

eliminating the noise term $\mathbf{N}$. Consequently, the gradients:
\begin{equation} \nabla_{\mathcal{P}} \mathcal{L}_C = 2 \lambda_C \nabla_{\mathcal{P}} \hat{\mathbf{I}}^\top \left( \hat{\mathbf{I}}(\mathcal{P}) - \mathbf{I}^\text{true} \right), \end{equation}

are free from the variability introduced by noise, leading to more stable and accurate gradient computations.

\subsection{\textcolor[rgb]{0.0,0.0,0.0}{Combined Impact of CPL and CARL}}\label{subsec:combined_impact_cpl_carl}

By integrating CPL and CARL, we address both the issues of large gradients due to fast motion and the variability in gradients due to noisy inputs.

\textbf{CPL reduces initial residuals and gradients.} Providing a better initial pose estimate $\mathcal{P}_t^*$ reduces the magnitude of the gradients and improves the conditioning of the Hessian matrix, allowing for a larger permissible learning rate $\eta$ without risking divergence.

\textbf{CARL stabilizes gradient computations.} By denoising the observed images, CARL reduces the variability in the residuals and gradients caused by noise, leading to more stable and reliable updates during optimization.

\textbf{Synergistic effect.} The improved pose estimate from CPL enhances the alignment between the rendered and observed images, which in turn improves the effectiveness of CARL in restoring the appearance. Accurate pose estimation ensures that correspondences used in CARL are valid, leading to better denoising performance.

\subsection{\textcolor[rgb]{0.0,0.0,0.0}{Overall Improvement in Optimization Efficiency of CorrGS}}\label{subsec:overall_improvement_CorrGS}

The combined effect of CPL and CARL leads to:
\begin{itemize}
    \item \textbf{Reduced gradient magnitudes}, mitigating the risk of overshooting and divergence.
    \item \textbf{Improved conditioning of the Hessian matrix}, enhancing convergence rates.
    \item \textbf{Stabilized gradient directions}, ensuring consistent progress towards the optimal pose.
    \item \textbf{Enhanced robustness} under fast motion and noisy video inputs.
\end{itemize}

Therefore, incorporating CPL and CARL into differentiable rendering-based SLAM systems utilizing Gaussian Splat representation effectively mitigates the inefficiency and potential failure in pose optimization under challenging conditions, leading to more robust and efficient convergence of the optimization process.

\end{proof}

\clearpage
\section{Other Potential  Directions to Explore}\label{sec:more future work}
Towards more robust SLAM for spatial perception, we present additional potential research avenues.

\subsection{Robustness Evaluation for More Diverse SLAM Systems}\label{sec:more future work_more_slam}
\textbf{We believe it is valuable to extend the robustness study to a broader range of SLAM systems, beyond the dense RGB-D SLAM framework considered in this research.} Different SLAM architectures face distinct challenges when exposed to real-world noise and perturbations. 

To illustrate this, we present two case studies that demonstrate the fragility of stereo and multi-agent SLAM systems when subjected to perturbations. These findings emphasize the importance of future robustness evaluations across diverse SLAM configurations to ensure their resilience.

\begin{table*}[t]
\caption{\textbf{Effects of perturbing the stereo camera baseline length on trajectory estimation performance of the ORBSLAM3 model with  stereo (\textbf{Top}) and stereo-inertia (\textbf{Bottom}) SLAM setting}. We leverage three sequences, including \textit{MH01}, \textit{MH02}, and \textit{MH03}, on the EuRoC~\citep{Burri25012016} dataset. We introduce random noise $\eta$ along the axis of the stereo camera baseline. The noise is sampled from a Gaussian distribution with zero mean and variance $\sigma^2$, represented as $\eta \sim \mathcal{N}(0, \sigma^2)$ [m]. The stereo camera has a predefined baseline length of 0.1 meters.}
\centering 
\resizebox{\textwidth}{!}
{
\begin{tabular}{l|ccc|ccc|ccc}
    \toprule \toprule   
        \multirow{1}{*}{\textbf{ }}  &    \multicolumn{3}{c|}{\textit{MH01} Sequence}  &     \multicolumn{3}{c|}{\textit{MH02} Sequence}  &    \multicolumn{3}{c}{\textit{MH03} Sequence}\\ \midrule
    \multirow{1}{*}{\textbf{Metrics}}  &     {$\sigma=0.00$} &     {$\sigma=0.001$}
     &  {$\sigma=0.01$} &     {$\sigma=0.00$} &     {$\sigma=0.001$}
     &  {$\sigma=0.01$} &     {$\sigma=0.00$} &     {$\sigma=0.001$}
     &  {$\sigma=0.01$}  \\ \midrule
 \multicolumn{10}{c}{Stereo Setting}  \\ \midrule
    \textit{ATE-w/o Scale}$\downarrow$ [m]  & $\textbf{0.036}$ & $1.646$ & $8.284$ & $\textbf{0.016}$ & $2.943$ & $19.42$  & $\textbf{0.028}$ & $1.444$ & $6.211$ 
    \\
    \textit{Scale}  & ${1.006}$ & $1.129$ & $0.261$  & ${0.998}$ & $2.251$ & $0.172$  & ${0.998}$ & $0.941$ & $0.395$ 
    \\
     \textit{ATE-w/ Scale}$\downarrow$  [m] & $\textbf{0.024}$ & $1.580$ & $2.221$ & $\textbf{0.013}$ & $1.674$ & $2.415$  & $\textbf{0.027}$ & $1.429$ & $1.985$ 
    \\  \midrule
     \multicolumn{10}{c}{Stereo-inertia Setting}  \\ \midrule
   \textit{ATE-w/o Scale}$\downarrow$ [m]    & $\textbf{0.068}$ & $0.353$ & $5.470$ & $\textbf{0.050}$ & $2.370$ & $3.954$ & $\textbf{0.053}$ & $0.572$ & $673.8$ 
    \\
    \textit{Scale}   & ${1.012}$ & $0.987$ & $0.410$ & ${1.004}$ & $2.172$ & $0.491$  & ${1.004}$ & $1.018$ & $0.001$ 
    \\
     \textit{ATE-w/ Scale}$\downarrow$ [m] & $\textbf{0.046}$ & $0.349$ & $1.865$  & $\textbf{0.046}$ & $0.312$ & $1.024$ & $\textbf{0.052}$ & $0.568$ & $3.576$ 
    \\
    \bottomrule \bottomrule
    \end{tabular}
}\label{tab:stereo-slam-robustness-case-study}
\end{table*}



\noindent\textbf{Robustness analysis of stereo SLAM system under perturbation}. We conduct a robustness analysis of a stereo SLAM system, which utilizes two cameras to capture observations of the environment. In this study, we focus on the inter-camera transformation matrix, a crucial sensor parameter that describes the transformation between the left and right cameras. Our objective is to investigate the effects of introducing Gaussian noise as the perturbation to the inter-camera transformation matrix of the stereo SLAM system, which mimics the inaccuracy of the transformation matrix. This inaccuracy can be caused by factors such as noisy extrinsic matrix calibration, vibration due to a less rigid camera holder, or errors in camera placement or assembly. Specifically, we introduce random noise $\eta$ along the stereo camera baseline direction, sampled from a Gaussian distribution with zero mean and variance $\sigma^2$, denoted as $\eta \sim N(0, \sigma^2)$. In our preliminary robustness study, we utilize the ORB-SLAM3 model with the stereo camera input setting and the stereo-inertia setting. We apply random perturbations to the camera baseline length for every frame in three sequences (MH01, MH02, and NM03) from the EuRoC dataset \citep{Burri25012016}. The results of our study are summarized in Table~\ref{tab:stereo-slam-robustness-case-study}, which includes three severity levels of noises: $\sigma = 0.00$, $\sigma = 0.001$, and $\sigma = 0.01$. 
Under the clean setting ($\sigma = 0.00$).
When we introduced perturbations to the baseline axis of the inter-camera transformation matrix, increasing the noise levels (\textit{e.g.}, $\sigma = 0.001$ and $\sigma = 0.01$) resulted in a noticeable rise in trajectory estimation error, which is quantified by \textit{ATE-w/o Scale}  and  \textit{ATE-w/ Scale}  metrics. This indicates a higher susceptibility to errors and deviations from the ground truth trajectory during trajectory estimation. These findings highlight the significance of taking a holistic approach when considering the robustness of the whole SLAM system. To ensure a comprehensive evaluation of the SLAM model's robustness, it is crucial to consider not only the software (SLAM model) but also the hardware, specifically the inaccuracies in predefined or estimated sensor configurations. 



\noindent\textbf{Robustness analysis of multi-agent SLAM system under perturbation.}\label{sec:multi-agent-slam-robustness}. We investigate the robustness of the multi-agent SLAM framework COVINS-G~\citep{covins-g} under Gaussian noise image-level perturbation. We evaluate the performance of the model on the EuRoC~\citep{Burri25012016}  dataset using the ORB-SLAM3~\citep{orbslam3} model with monocular-inertia input as the front-end module. The experimental setup involves introducing varying severity levels of perturbation, measured by the parameter $\sigma$, on all the agents of a multi-agent system. The results are reported based on running five agents in sequential order on the \textit{MH} scene of EuRoC~\citep{Burri25012016}.  Two metrics, Average Trajectory Error (ATE) and Sum of Squared Errors (SSE), are used to assess the multi-agent SLAM system's performance. These metrics provide a comprehensive evaluation of trajectory estimation accuracy and the accumulation of errors, respectively.
As is shown in Table~\ref{tab:metric-covinsg}, the multi-agent SLAM system demonstrates consistent performance in the clean setting ($\sigma = 0.00$) and low perturbation severity ($\sigma = 0.02$), indicating the system's ability to tolerate low random noise. 
However, as the perturbation severity increases to $\sigma = 0.04$, the system's performance degrades noticeably. The ATE experiences a significant increase, suggesting the vulnerability of the model to higher levels of perturbation. Similarly, the SSE metric rises to 20.354 cm, reflecting the accumulation of errors caused by the higher noise level.

\begin{table}[t]
\caption{\textbf{Effects of Gaussian noise perturbation on trajectory estimation performance} for the multi-agent SLAM framework COVINS-G~\citep{covins-g} with the ORBSLAM3~\citep{orbslam3} model as the front-end are presented.  The noise is sampled from a Gaussian distribution  $\eta \sim \mathcal{N}(0, \sigma^2)$. The experiments were conducted using a mono-inertia  input setting for each agent.  }
\label{tab:metric-covinsg}
\centering 
\setlength{\tabcolsep}{6.0mm}
\resizebox{0.8\textwidth}{!}{
\begin{tabular}{l|c|cc}
    \toprule \toprule   
    \multirow{1}{*}{\textbf{Metrics}}  &     {$\sigma^2=0.00$} &     {$\sigma^2=0.02$}
     &  {$\sigma^2=0.04$}  \\ \midrule
   ATE$\downarrow$ [cm] & $\textbf{0.071}$ & $0.071$ & $0.088$ 
    \\
    SSE$\downarrow$ [cm] & $\textbf{11.545}$ & $12.517$ & $20.354$ 
    \\
    \bottomrule \bottomrule
    \end{tabular}
}
\end{table}

\subsection{Additional Future Directions}\label{sec:more future work_additional}

\noindent\textbf{Robustness evaluation in unbounded 3D scene.} Our work does not encompass the robustness of SLAM systems in unbounded scenes, such as outdoor environments~\citep{dosovitskiy2017carla}. Investigating the robustness of SLAM systems in such scenarios holds significant potential for future research. It can contribute to a better understanding of the practical applicability and generalizability of SLAM systems in complex scenes.

\noindent\textbf{Computationally-efficient robustness evaluation.} Our findings have identified discernible indicators within certain SLAM models that can reflect degraded observations, \textit{i.e.}, the reconstruction quality of RGB images and depth maps for the SplaTAM-S~\citep{keetha2024splatam} model. Future research could explore leveraging and designing robustness indicators to evaluate the robustness of SLAM systems more efficiently. By incorporating such indicators, we have the potential to enable unsupervised performance evaluation of SLAM, especially in scenarios where obtaining ground-truth annotations is challenging or costly.

\noindent\textbf{Real-world robustness evaluation.} While our work primarily focuses on synthesis-based robustness analysis, we recognize the value of real-world verification and validation for SLAM systems. Conducting extensive field tests in more challenging environments~\citep{SubT}, where SLAM systems are subjected to agile locomotion types~\citep{kaufmann2023champion}, would provide empirical validation of simulation results and uncover additional challenges. This real-world evaluation would bridge the gap between simulated environments and actual deployment conditions, ensuring the practical reliability and robustness of SLAM systems.

\noindent\textbf{Future directions for CorrGS.} While our method presents a proof-of-concept approach, there is room for further enhancement. In future work, we aim to explore the scalability of CorrGS in outdoor environments with unpredictable motion patterns and large-scale scenes. These environments present unique challenges such as dynamic lighting and large-scale motion, and further research will focus on improving generalization to handle these complexities. Additionally, we plan to explore the trade-offs between computational efficiency and robustness, optimizing CorrGS for real-time applications with limited resources through the use of lightweight architectures and GPU-based parallel processing. Furthermore, improving CorrGS’s robustness to erratic motion and extreme lighting conditions (e.g., sudden changes from sunlight to shadows) is another important avenue. Finally, we intend to enhance computational efficiency by incorporating adaptive learning mechanisms that activate only under frame quality degradation, reducing unnecessary processing.

\section{Broader Impact}\label{sec:broader_impact}

The impact of this work goes far beyond robotics and Dense Neural SLAM. By enhancing robustness under noisy sensing conditions, we address a fundamental challenge faced by autonomous systems—reliable perception and localization under noise, motion, and misaligned sensors. This improvement is crucial for safety and efficiency in systems such as self-driving cars and drones, enabling them to navigate and operate with greater confidence in unpredictable environments.

Central to this work is the first comprehensive taxonomy of RGB-D perturbations for mobile agents, which serves as a flexible tool for understanding and improving robustness across diverse applications. This taxonomy, which categorizes key disruptions like motion blur and sensor desynchronization, is broadly applicable in fields like augmented reality, 3D reconstruction, embodied AI, and surgical robotics. By providing a structured framework, it enables these systems to better handle real-world imperfections.

The Robust-Ego3D benchmark, built on this taxonomy, provides a groundbreaking platform for evaluating SLAM algorithms under real-world conditions, offering insights into model vulnerabilities and driving the development of more resilient solutions. Researchers across computer vision and AI can leverage this benchmark to refine their models in complex environments, broadening the utility of SLAM technology.

Furthermore, the CorrGS method demonstrates how visual correspondences can empower SLAM to maintain high-quality 3D reconstructions even from sparse, noisy inputs. This opens up new possibilities for deploying autonomous systems in resource-constrained or high-stakes environments, such as search and rescue, space exploration, and industrial inspection.

By pushing the frontier of robustness alongside accuracy, this work provides the tools and frameworks essential for building AI systems that can thrive in real-world, unpredictable conditions. These advances are critical to ensuring that AI can be trusted in safety-critical and mission-driven domains.

\section{Social Impact}\label{sec:social-impact}

The proposed pipeline for noisy data synthesis and the \textit{Robust-Ego3D} benchmark have the potential to advance the development of robust SLAM systems. By enabling the evaluation of SLAM models under diverse perturbations, this work can facilitate the creation of more resilient robotic systems capable of operating reliably in unstructured and challenging environments.
Robust SLAM systems can enhance safety, efficiency, and effectiveness in various domains, such as autonomous navigation, exploration, and mapping in hazardous or inaccessible areas. 

However, it is essential to acknowledge potential negative implications. The synthesized perturbations and noisy environments, while designed to evaluate robustness, might not fully capture the complexities of real-world scenarios. Overreliance on these simulated environments could lead to overlooking unforeseen challenges, highlighting the importance of complementing simulations with real-world testing and validation.
While we curated diverse perturbations, inherent biases or blind spots in the taxonomy or synthesis process may remain. We emphasize the need for continuous refinement and expansion of the perturbation taxonomy to capture a broader range of real-world disturbances and mitigate potential biases.
Additionally, the potential for unethical misuse of robust SLAM systems should be considered. Manipulated or biased data could be used to construct environments that present a distorted view of reality, leading to erroneous decision-making or harmful consequences. To mitigate this risk, robust validation, fact-checking mechanisms, and adherence to ethical guidelines must be integrated into the development and deployment processes.

\clearpage

\section{Availability and Maintenance}
\label{sec:availablility-maintenance}
The code and datasets from this study will be publicly accessible. The project repository contains the following resources:

\begin{itemize}
    \item \textbf{Robustness benchmark}, The repository includes code for SLAM robustness evaluation under customized perturbations.
    \item \textbf{Baseline models for benchmarking}. Detailed instructions are provided for running all baseline models, facilitating the reproduction of all results presented in this paper.
    \item \textbf{Experiment reproduction}. Comprehensive guidelines for reproducing all experiments can be found in the \textit{{Instructions.md}} file within the repository.
    \item \textbf{CorrGS: the proposed method}. The repository contains the implementation of the proposed CorrGS method, along with detailed instructions for running experiments that demonstrate its effectiveness in addressing robustness challenges in SLAM systems.
\end{itemize}

These resources enable researchers to utilize and extend this work.

We encourage the community to propose more robust SLAM models to further advance the frontier of robust embodied agents, ensuring their safe and reliable deployment in real-world environments.

\section{License}\label{sec:license}
The benchmark and code are licensed under Apache License 2.0.

\section{Public Resources Used}
\label{sec:public-assets}

We acknowledge the following public resources used in this work:

\begin{itemize}

\item Classification-Robustness\footnote{\url{https://github.com/hendrycks/robustness}} \dotfill Apache License 2.0
\item Replica\footnote{\url{https://github.com/facebookresearch/Replica-Dataset}.} \dotfill \href{https://github.com/facebookresearch/Replica-Dataset?tab=License-1-ov-file#readme}{Research-only License}

\item Nice-SLAM\footnote{\url{https://github.com/cvg/nice-slam}.} \dotfill Apache License 2.0

\item Co-SLAM\footnote{\url{https://github.com/HengyiWang/Co-SLAM}.} \dotfill Apache License 2.0

\item SplaTAM\footnote{\url{https://github.com/spla-tam/SplaTAM}.} \dotfill BSD 3-Clause "New" or "Revised" License

\item GO-SLAM\footnote{\url{https://github.com/youmi-zym/GO-SLAM}.} \dotfill Apache License 2.0

\item ORB-SLAM3\footnote{\url{https://github.com/UZ-SLAMLab/ORB_SLAM3}.} \dotfill GNU General Public License v3.0

\item LoFTR\footnote{\url{https://github.com/zju3dv/LoFTR}.} \dotfill Apache License 2.0

\end{itemize}
\end{document}